\def\isarxivversion{1} 

\ifdefined\isarxivversion
\documentclass[11pt]{article}
\else
\documentclass{article}
\usepackage{icml2019}
\fi

\usepackage[numbers]{natbib}
\usepackage{amsmath}
\usepackage{amsthm}
\usepackage{amssymb}
\usepackage{algorithm}
\usepackage{subfig}
\usepackage{color}
\usepackage[english]{babel}
\usepackage{graphicx}
\usepackage{grffile}
\usepackage{wrapfig,epsfig}
\usepackage{epstopdf}
\usepackage{url}
\usepackage{color}
\usepackage{epstopdf}
\usepackage{algpseudocode}
\usepackage[T1]{fontenc}
\usepackage{bbm}
\usepackage{comment}
\usepackage{dsfont}

\let\C\relax
\usepackage{tikz}
\usepackage{hyperref}  
\hypersetup{colorlinks=true,citecolor=blue,linkcolor=blue} 
\usetikzlibrary{arrows}

\ifdefined\isarxivversion
\usepackage[margin=1in]{geometry}
\else

\fi

\newtheorem{theorem}{Theorem}[section]
\newtheorem{lemma}[theorem]{Lemma}
\newtheorem{definition}[theorem]{Definition}

\newtheorem{proposition}[theorem]{Proposition}

\newtheorem{assumption}[theorem]{Assumption}

\newtheorem{fact}[theorem]{Fact}
\newtheorem{remark}[theorem]{Remark}
\newtheorem{claim}[theorem]{Claim}

\newtheorem{open}[theorem]{Open Problem}

\newcommand{\wt}{\widetilde}
\newcommand{\ov}{\overline}

\newcommand{\N}{\mathcal{N}}
\newcommand{\R}{\mathbb{R}}

\newcommand{\LHS}{\mathrm{LHS}}
\renewcommand{\d}{\mathrm{d}}

\renewcommand{\varepsilon}{\epsilon}

\DeclareMathOperator*{\E}{{\mathbb{E}}}

\DeclareMathOperator*{\C}{\mathbb{C}}

\DeclareMathOperator{\poly}{poly}

\DeclareMathOperator{\vect}{vec}

\DeclareMathOperator{\dis}{dis}
\DeclareMathOperator{\cts}{cts}

\makeatletter
\newcommand*{\RN}[1]{\expandafter\@slowromancap\romannumeral #1@}
\makeatother

\usepackage{lineno}

\begin{document}

\ifdefined\isarxivversion

\date{}
\title{Over-parametrization for Learning and Generalization in Two-Layer Neural Networks}
\author{
Zhao Song\thanks{\texttt{zhaosong@uw.edu}. Work done while visiting University of Washington and hosted by Yin Tat Lee.}
\and
Xin Yang\thanks{\texttt{yx1992@cs.washington.edu}. University of Washington.}
}

\else

\icmltitlerunning{Over-parametrization for Learning and Generalization in Two-Layer Neural Networks}


\twocolumn[
\icmltitle{Over-parametrization for Learning and Generalization in Two-Layer Neural Networks}



\icmlsetsymbol{equal}{*}

\begin{icmlauthorlist}
\icmlauthor{Aeiau Zzzz}{equal,to}
\icmlauthor{Bauiu C.~Yyyy}{equal,to,goo}
\icmlauthor{Cieua Vvvvv}{goo}
\icmlauthor{Iaesut Saoeu}{ed}
\icmlauthor{Fiuea Rrrr}{to}
\icmlauthor{Tateu H.~Yasehe}{ed,to,goo}
\icmlauthor{Aaoeu Iasoh}{goo}
\icmlauthor{Buiui Eueu}{ed}
\icmlauthor{Aeuia Zzzz}{ed}
\icmlauthor{Bieea C.~Yyyy}{to,goo}
\icmlauthor{Teoau Xxxx}{ed}\label{eq:335_2}
\icmlauthor{Eee Pppp}{ed}
\end{icmlauthorlist}

\icmlaffiliation{to}{Department of Computation, University of Torontoland, Torontoland, Canada}
\icmlaffiliation{goo}{Googol ShallowMind, New London, Michigan, USA}
\icmlaffiliation{ed}{School of Computation, University of Edenborrow, Edenborrow, United Kingdom}

\icmlcorrespondingauthor{Cieua Vvvvv}{c.vvvvv@googol.com}
\icmlcorrespondingauthor{Eee Pppp}{ep@eden.co.uk}

\icmlkeywords{Machine Learning, ICML}

\vskip 0.3in
]

\fi

\ifdefined\isarxivversion
\begin{titlepage}
  \maketitle
  \begin{abstract}
\ifdefined\isarxivversion
We improve the over-parametrization size over three beautiful results [Li and Liang' 2018], [Du, Zhai, Poczos and Singh' 2019] and [Arora, Du, Hu, Li and Wang' 2019] in deep learning theory.

\else

Over-parametrized neural networks with random initialization have been successfully applied to even non-convex and non-smooth objective function in practice,
and recent theoretical works explain this phenomenon for two-layer neural networks with ReLU activation.
In this work,
we improve the over-parametrization size over three beautiful results [Li and Liang' 2018], [Du, Zhai, Poczos and Singh' 2019] and [Arora, Du, Hu, Li and Wang' 2019] in deep learning theory,
and provide the following results:

\begin{itemize}
	\item We show that  two-layer neural networks with $m$ hidden neurons on $n$ training data,
 gradient descent can achieve zero training loss when $m=\Omega(n^2)$ under certain separation properties on training data.
 \item \text{[Arora, Du, Hu, Li and Wang' 2019]} gave a convincing explanation for different performances of true labels and random labels based on over-parametrization. 
 We improve the over-parametrized bound needed for this explanation. 
\end{itemize}
The key idea is to make use of independence at initialization together with advanced concentration tools.

\fi

  \end{abstract}
  \thispagestyle{empty}
\end{titlepage}

{\hypersetup{linkcolor=black}
\tableofcontents
}
\newpage

\else

\begin{abstract}

\end{abstract}

\fi

\section{Introduction}

Neural networks have gained great success on many applications, including image recognition~\cite{ksh12,hzrs13}, speech recognition~\cite{gmh13,aaab+16}, game playing \cite{shmc+16,sssa+17} and so on.
Over-parametrization,
which refers to using much more parameters than necessary,
is widely believed to be important in the success of deep learning ~\cite{ksh12,lss14}. 
A mysterious observation is that over-parameterized neural networks trained with first order method can fit all training data no matter whether the data is properly labeled or randomly labeled, even when the target function is non-smooth and non-convex due to modern architecture with ReLU activations~\cite{zbhrv16}. 
Another surprising phenomenon is that in practice, over-parameterized network can improve generalization~\cite{sgs15,zk16}, which is quite different from traditional VC-dimension theory.

The expressibility of over-parameterized neural networks partially explains these phenomenons,
as the networks are wide enough to ``remember'' all input labels.
Yet this does not explain why the simple (stochastic) gradient descent(GD) scheme can find the global optima with non-smooth/non-convex objective functions,
as well as why such neural networks can generalize.
To better understand the role of over-parameterization,
there is a long line (still growing very quickly) of work proving that (stochastic) gradient descent algorithm is able to find the global minimum if the network is wide enough \cite{d17,als18,als19,all18,ll18,dzps19,dllwz19,adhlw19,cb18b,jgh18}. 
Generalization in the over-parameterized setting has been studied in \cite{bgms18,mmn18,adhlw19}.

The breakthrough result by Li and Liang \cite{ll18} is the first one that is able to explain why the greedy algorithm works very well in practice for ReLU neural network from over-parameterization perspective. 
Moreover,
their results extend to the generalization if the training data is sufficiently structured. 
Formally speaking, their results show that as long as the width $m$ is at least polynomial of number of input data $n$, then (S)GD-type algorithm can work in the following sense: we first randomly pick a weight matrix to be the initialization point, update the weight matrix according to gradient direction over each iteration, and eventually find the global minimum.
There are other work relied on input data to be random \cite{bg17,t17,zsjbd17,s17,ly17,zsd17,dltps18,glm18,brw19}, however over-parameterization theory only needs to make very mild assumption on data, e.g. separable.
The state-of-the-art result for the training process of one-hidden-layer neural network with ReLU activation function is due to Du, Zhai, Poczos and Singh \cite{dzps19}. Their beautiful result proves that $m = \Omega(n^6 \poly(\log n , 1/\delta) )$ is sufficient. 
Here $\delta$ is the failure probability and the randomness is from the random initialization and also the algorithm itself, but not from data.  
Beyond minimizing training error, 
Arora, Du, Hu, Li and Wang~\cite{adhlw19} apply over-parametrization theory to obtain a network size free generalization error bound;
they also obtain a measure on the training speed, which can explain the difference of training with true labels and random labels.
Both results require the width $m=\Omega(\poly(n,1/\delta))$ where the exponent on $n$ is relatively large.
However, the training time of GD per iteration is proportional to the width $m$,
and in popular datasets like MNIST~\cite{lbbh98} and ImageNet~\cite{ddsllf09},
the size of training samples $n$ can usually be $10K$-$100K$,
hence the current over-parametrization bound does not scale well with so large training data size.
A natural question then arises:

\emph{What is the minimal over-parameterization for provable learning and generalization in two-layer neural networks?}

 It is conjectured \cite{l18} that  $m = \Omega(n \poly(\log (n/\delta)) )$ is the right answer.
In this work, we take a step towards the theoretical hypothesis by tightening the over-parameterization bound.
To be specific,
we make the following contributions:

\begin{itemize}
  \item For training neutral networks, we improve the result \cite{dzps19} from two perspectives : one is the dependence on failure probability, and the other is the dependence on the number of input data. More precisely, we show that $m = \Omega(n^4 \poly(\log (n/\delta)) )$ is sufficient via a careful concentration analysis. More interestingly, when the input data have certain property, we can improve the bound to $m = \Omega(n^2 \poly(\log (n/\delta)))$ via a more careful concentration analysis for random matrices. 
\item For the training speed as well as the generalization,
we improve the over-parametrization bound needed in \cite{adhlw19}.
We lower the exponent on the size of training samples $n$,
and we improve the dependency on the failure probability $\delta$ from $1/\delta$ to $\poly(\log(1/\delta))$.
\item We study the problem of training over-parametrized network with regularization. 
In practice, optimizing $\ell_2$-regularized loss function usually leads to a robust model with good generalization.
We show that with proper choice of the regularization factor, the training error can converge to 0 as long as the width is sufficiently large. 
\end{itemize}
Our work is built on top of the analysis in recent works \cite{dzps19,adhlw19} combined with random matrix theory.
We draw an interesting connection between deep learning theory and Matrix Chernoff bound : we can view the width of neural network as the number of independent random matrices.


The study on concentration of summation of random variables dates back to Central Limit Theorem.
 The first modern concentration bounds were probably proposed by Bernstein \cite{b24}. Chernoff bound is an extremely popular variant, which was introduced by Rubin and published by Chernoff \cite{c52}. Chernoff bound is a fundamental tool in Theoretical Computer Science and has been used in almost every randomized algorithm paper without even stating it. One common statement is the following: given a list of independent random variables $x_1, \cdots, x_m \in [0,1]$ with mean $\mu$, then
\begin{align*}
\Pr \left[ \left| \frac{1}{m} \sum_{i=1}^m x_i - \mu \right| > \epsilon \right] \leq 2 \exp( - \Omega( m \epsilon^2 ) ) .
\end{align*}

In many applications, we are not just dealing with scalar random variables. A natural generalization of the Chernoff bound appeared in the works of Rudelson \cite{r99}, Ahlswede-Winter \cite{aw02}, and Tropp \cite{t12}. They proved that a similar concentration phenomenon is true even for matrix random variables. Given a list of independent complex Hermitian random matrices $X_1, \cdots, X_m \in \C^{n \times n}$ with mean $\mu$ and $\| X_i \| \leq 1$, $\forall i \in [m]$, then
\begin{align*}
\Pr \left[ \left\| \frac{1}{m} \sum_{i=1}^m X_i - \mu \right\| > \epsilon \right] \leq 2 n \exp( - \Omega( m \epsilon^2 ) ).
\end{align*}
For a more detailed survey and recent progress on the topic Matrix Chernoff bound,
we refer readers to \cite{t15,glss18,ks18}.

\subsection{Our Results}

We start with the definition of Gram matrix, which can be found in \cite{dzps19}.
\begin{definition}[Data-dependent function $H$]\label{def:data_dependent_function}
Given a collection of data $\{x_1, \cdots, x_n \} \subset \R^d$. For any vector $w \in \R^d$, we define symmetric matrix $H(w) \in \R^{n \times n}$ as follows
\begin{align*}
H(w)_{i,j} = x_i^\top x_j {\bf 1}_{ w^\top x_i \geq 0, w^\top x_j \geq 0 }, \forall (i,j) \in [n] \times [n].
\end{align*}
Then we define continuous Gram matrix $H^{\cts} \in \R^{n \times n}$ in the following sense
\begin{align*}
H^{\cts} = \E_{w \sim {\N}(0,I)} [ H(w) ].
\end{align*}
Similarly, we define discrete Gram matrix $H^{\dis} \in \R^{n \times n}$ in the following sense
\begin{align*}
H^{\dis} = \frac{1}{m} \sum_{r=1}^m H(w_r).
\end{align*}
\end{definition}

We use ${\N}(0,I)$ to denote Gaussian distribution. We use $\E_{w}$ to denote $\E_{w \sim {\N}(0,I)}$ and $\Pr_{w}$ to denote $\Pr_{w \sim {\N}(0,I)}$. We introduce some mild data-dependent assumption. Without loss of generality, we can assume that $\| x_i \|_2 \leq 1$, $\forall i \in [n]$. 
\begin{assumption}[Data-dependent assumption]\label{ass:data_dependent_assumption}
We made the following data-dependent assumption:\\
1. Let $\lambda = \lambda_{\min} ( \E_w [ H(w) ] )$ and $\lambda \in (0,1]$. \\
2. Let $\alpha \in [0,n]$ and $\gamma \in [0,1)$ be the parameter such that \footnote{For simplicity, let us assume $\gamma = 0$.}
\begin{align*}
\Pr_w \left[ \Big\| H(w) - \E_{w}[H] \Big\| \leq \alpha \right] \geq 1 - \gamma.
\end{align*} 
3. Let $\beta \in [0,n^2]$ be the parameter such that
\begin{align*}
 \left\|  \E_{w} \left[ \Big( H(w) - \E_w [ H(w) ] \Big) \Big( H(w) - \E_w [ H(w) ] \Big)^\top \right] \right\| \leq \beta.
\end{align*}
4. Let $\theta \in [0,\sqrt{n}]$ be parameter such that
\begin{align*}
|x_i^\top x_j| \leq \theta / \sqrt{n}, \forall i\neq j.
\end{align*}
\end{assumption}
We validate our assumptions with some examples in Appendix \ref{sec:example}.
The first assumption is from \cite{dzps19}. For more detailed discussion about that assumption, we refer the readers to \cite{dzps19}. The last assumption is similar to assumption in \cite{als18,als19}, where they assumed that for $i\neq j$, $\| x_i - x_j \|_2 \geq \theta'$. If we think of $\| x_i \|_2 = 1, \forall i \in [n]$, then we know that $(\theta')^2 \leq 2 - 2 x_i^\top x_j $. It indicates $(\theta')^2 + 2 \theta/\sqrt{n} \leq 2$. The second and the third assumption are motivated by Matrix Chernoff bound. The reason for introducing these Matrix Chernoff-type assumption is, the goal is to bound the spectral norm of the sums of random matrices in several parts of the proof. One way is to relax the spectral norm to the Frobenious norm, and bound each entry of the matrix, and finally union bound over all entries in the matrix. This could potentially lose a $\sqrt{n}$ factor compared to applying Matrix Chernoff bound. We feel these assumptions can indicate how the input data affect the over-parameterization size $m$ in a more clear way.

We state our result for the concentration of sums of independent random matrices:
\begin{proposition}[Informal of Theorem~\ref{thm:3.1_matrix_chernoff}]\label{pro:main}
Assume Part 1,2 and 3 of Assumption~\ref{ass:data_dependent_assumption}.
If $m = \Omega ( (\lambda^{-2} \beta  + \lambda^{-1} \alpha ) \log (n/\delta)  )$, then
\begin{align*}
\Pr_{ w_1, \cdots, w_m \in {\N}(0,I) } [ \| H^{\dis} - H^{\cts} \|_2 \leq \lambda / 4 ] \geq 1 - \delta.
\end{align*}
\end{proposition}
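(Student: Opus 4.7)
The plan is to recognize this as a textbook application of the Matrix Bernstein (Matrix Chernoff) inequality to the i.i.d.\ mean-zero sum
\begin{align*}
S \;:=\; H^{\dis} - H^{\cts} \;=\; \frac{1}{m}\sum_{r=1}^{m} Y_r, \qquad Y_r := H(w_r) - \E_w[H(w)],
\end{align*}
where the $Y_r$ are independent, symmetric $n\times n$ random matrices with $\E[Y_r]=0$. The two ingredients the inequality needs are (i) an almost-sure operator-norm bound on each summand and (ii) a bound on the matrix variance parameter.

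First I would read off both ingredients directly from Assumption~\ref{ass:data_dependent_assumption}. Part 2 (with $\gamma=0$ per the footnote) gives $\|Y_r\| \le \alpha$ almost surely, hence $\|Y_r/m\| \le \alpha/m =: L$. Part 3 gives, via independence and $\E[Y_r]=0$,
\begin{align*}
\left\|\sum_{r=1}^{m}\E\!\left[(Y_r/m)(Y_r/m)^\top\right]\right\|
= \frac{1}{m}\left\|\E\!\left[(H(w)-\E[H])(H(w)-\E[H])^\top\right]\right\|
\le \beta/m \;=:\; \sigma^2.
\end{align*}

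Next I would invoke Matrix Bernstein in the standard form
\begin{align*}
\Pr[\|S\| > t] \;\le\; 2n\,\exp\!\left(-\,\frac{t^2/2}{\sigma^2 + Lt/3}\right),
\end{align*}
plug in $t=\lambda/4$, $\sigma^2=\beta/m$, $L=\alpha/m$, and simplify the exponent to
\begin{align*}
-\,\Omega\!\left(\frac{m\lambda^2}{\beta+\alpha\lambda}\right)
\;=\; -\,\Omega\!\left(\frac{m}{\lambda^{-2}\beta+\lambda^{-1}\alpha}\right).
\end{align*}
Demanding that this be at most $\delta$ after the leading $2n$ factor yields the claimed condition $m=\Omega((\lambda^{-2}\beta+\lambda^{-1}\alpha)\log(n/\delta))$.

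I do not expect a genuine obstacle here, since the proposition is essentially a restatement of Matrix Bernstein tailored to the quantities $\alpha$ and $\beta$. The one delicate point is that Part 2 of the Assumption only controls $\|H(w)-\E[H]\|$ with probability $1-\gamma$, whereas Matrix Bernstein wants an almost-sure bound on each $Y_r$. With the footnote's simplification $\gamma=0$ this is automatic; otherwise one would need a short truncation step---replace $Y_r$ by $\widetilde Y_r := Y_r\cdot\mathbf{1}_{\|Y_r\|\le\alpha} - \E[\,\cdot\,]$ to re-center, bound the bias by $\gamma\cdot n$, and union bound the truncation events over $r\in[m]$---and then run the argument above on $\widetilde Y_r$.
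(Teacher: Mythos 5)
Your proposal is correct and follows essentially the same route as the paper's proof of Theorem~\ref{thm:3.1_matrix_chernoff}: decompose $H^{\dis}-H^{\cts}$ as a mean-zero i.i.d.\ matrix sum, read $\|Y_r\|\le\alpha$ from Part~2 and the variance bound $\beta$ from Part~3, and apply Matrix Bernstein (Lemma~\ref{lem:matrix_bernstein}); your choice to plug $t=\lambda/4$ into the normalized sum is just an algebraic rearrangement of the paper's choice of $t=\sqrt{m\beta\log(n/\delta)}+\alpha\log(n/\delta)$ for the unnormalized sum. The remark about handling $\gamma>0$ via truncation is a reasonable extra observation, but the paper sidesteps it via the footnote's $\gamma=0$ simplification just as you note.
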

Proposition~\ref{pro:main} is a direct improvement compared to Lemma 3.1 in \cite{dzps19}, which requires $m = \Omega(\lambda^{-2} n^2 \log (n/\delta))$. Proposition~\ref{pro:main} is better when input data points have some good properties, e.g., $\beta, \alpha = o(n^2)$. However the result in \cite{dzps19} always needs to pay $n^2$ factor, no matter what the input data points are.

\begin{table*}\caption{Summary of Convergence Result. Let $m$ denotes the width of neural network. Let $n$ denote the number of input data points. Let $\delta$ denote the failure probability. Let $T$ be the number of iterations to reach $\epsilon$ training error. All the algorithms have $O(mnd)$ running time per iteration. We improve the exponent on $n$, and we also improve the dependency on $\delta$ from $1/\delta$ to $\log(1/\delta)$.}
\centering
\begin{tabular}{ | l| l| l| l| l|l|} 
\hline
{\bf Reference} & $m$ & $T$ & $\lambda$ & $\alpha$ & $\theta$ \\\hline
\cite{dzps19} & $\lambda^{-4} n^6 \poly( \log n,1/\delta)$&$\lambda^{-2}n^2\log(1/\epsilon)$ & Yes & No & No \\ \hline
Theorem~\ref{thm:main_1} & $\lambda^{-4} n^4 \log^3(n/\delta)$ & $\lambda^{-2}n^2\log(1/\epsilon)$& Yes & No & No \\ \hline
Theorem~\ref{thm:main_2} & $\lambda^{-4} n^3 \log^3(n/\delta) \cdot \alpha$ & $\lambda^{-2}\alpha n\log(1/\epsilon)$& Yes & Yes & No \\ \hline
Theorem~\ref{thm:main_3} & $\lambda^{-4} n^2 \log^3(n/\delta) \cdot \alpha (\alpha + \theta^2)$ & $\lambda^{-2}\alpha n\log(1/\epsilon)$& Yes & Yes & Yes \\ \hline
\end{tabular}
\end{table*}

We state our convergence result as follows:
\begin{theorem}[Informal of Theorem~\ref{thm:quartic}]\label{thm:main_1}
Assume Part 1 of Assumption~\ref{ass:data_dependent_assumption}. Let $m$ denote the width of neural network, let $n$ denote the number of input data points. 
If $m = \Omega( \lambda^{-4} n^4 \poly( \log (n/\delta) ) )$, then gradient descent is able to find the global minimum from a random initialization point with probability $1-\delta$.
\end{theorem}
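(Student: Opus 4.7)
The plan is to follow the analytic framework of \cite{dzps19}: show that gradient descent on the squared loss of a two-layer ReLU network exhibits linear convergence at rate $\eta \lambda / 2$, provided three conditions hold throughout training --- (i) the initial Gram matrix $H(0) = H^{\dis}$ is within $\lambda/4$ of $H^{\cts}$ in spectral norm; (ii) the initial prediction $u(0)$ satisfies $\|u(0) - y\|_2 = O(\sqrt{n \log(n/\delta)})$; and (iii) each hidden weight stays in a small ball $\|w_r(t) - w_r(0)\|_2 \leq R$ with $R = O(\sqrt{n}/(\sqrt{m}\,\lambda))$, so that the time-varying Gram matrix $H(t)$ stays within $\lambda/4$ of $H(0)$. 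Conditions (i)--(iii) together force the dynamics $u(t+1) - y = (I - \eta H(t))(u(t) - y)$ to contract with factor $1 - \eta\lambda/2$, which yields the claimed convergence; condition (iii) is maintained inductively because the total path length of each weight is controlled by the geometric sum of the shrinking loss.

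The $n^4$ bound (as opposed to the $n^6$ bound of \cite{dzps19}) comes from sharper concentration at two places. First, for condition (i), we invoke Proposition~\ref{pro:main} directly, which under the trivial bounds $\alpha \leq n$ and $\beta \leq n^2$ requires only $m = \Omega(\lambda^{-2} n^2 \log(n/\delta))$ with a clean $\log(n/\delta)$ tail --- this already removes the $1/\delta$ dependence of \cite{dzps19}. Second, for condition (iii), we replace the entry-wise analysis of \cite{dzps19}, which uses Markov's inequality on the expected number of flipping neurons and then relaxes Frobenius to spectral, by a matrix-Bernstein analysis on the per-neuron perturbation matrices. Each neuron $r$ contributes an independent indicator $\mathbf{1}\{|w_r(0)^\top x_i| \leq R\}$ of ``potentially flipping'', which by Gaussian anti-concentration has mean $O(R)$; the matrix version of Bernstein then controls $\|H(t) - H(0)\|$ without paying an additional $\sqrt{n}$ from a Frobenius relaxation. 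Condition (ii) is obtained by noting that $u(0)$ is a sum of $m$ independent sub-Gaussian contributions with symmetric distribution, so coordinate-wise concentration yields the $\sqrt{\log(n/\delta)}$ tail.

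The main obstacle will be executing the matrix-concentration argument for $H(t) - H(0)$ during training. The summands $H(w_r(t))$ depend on $w_r(t)$, which is coupled across $r$ through the training dynamics, so they are not independent random matrices. We decouple this by bounding $\|H(t) - H(0)\|$ entrywise by the \emph{worst-case} perturbation over all weight vectors in the ball of radius $R$ around each $w_r(0)$; this upper bound is a sum of independent matrix-valued functionals of the $w_r(0)$'s alone, and matrix Bernstein then applies. Balancing the two concentration requirements against the inductive bound $R = O(\sqrt{n}/(\sqrt{m}\,\lambda))$ produces the condition $m = \Omega(\lambda^{-4} n^4 \poly(\log(n/\delta)))$. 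The remaining pieces --- choosing the learning rate $\eta = O(\lambda / n^2)$, performing the discrete-time induction on $t$, and bounding the gradient-path length --- follow the template of \cite{dzps19} once the sharper perturbation bounds are in place.
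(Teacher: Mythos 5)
Your overall plan matches the paper's: establish that (i) $\|H^{\dis}-H^{\cts}\|$ is small, (ii) $\|y-u(0)\|_2=\wt O(\sqrt n)$, (iii) the Gram matrix is stable under perturbations of radius $R$, and then run the discrete induction with the balance $D<R$ to extract $m=\Omega(\lambda^{-4}n^4\poly\log(n/\delta))$. Your handling of (i) via Proposition~\ref{pro:main} with the trivial bounds $\alpha=O(n)$, $\beta=O(n^2)$ and of (ii) via sub-Gaussian concentration both give the same bounds as the paper (which uses entry-wise Hoeffding for (i) in Lemma~\ref{lem:3.1} and truncation-plus-Bernstein for (ii) in Claim~\ref{cla:yu0}).

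The gap is in your condition (iii). After majorizing $|H(t)_{i,j}-H(0)_{i,j}|$ entry-wise by the worst-case perturbation matrices $B_r$ depending only on $w_r(0)$, matrix Bernstein does apply to $\tfrac1m\sum_r B_r$ since the $B_r$ are independent. But an entry-wise bound $|H(t)_{i,j}-H(0)_{i,j}|\le (\tfrac1m\sum_r B_r)_{i,j}$ does \emph{not} transfer to $\|H(t)-H(0)\|_2\le \|\tfrac1m\sum_r B_r\|_2$: the spectral norm is not monotone under entry-wise magnitude domination. To use the entry-wise majorization you must pass through the Frobenius norm, at which point you are back to the $\|\cdot\|_F\le nR$ bound and have not gained anything over the paper's Lemma~\ref{lem:3.2} (which applies scalar Bernstein to each entry and then takes $\|\cdot\|\le\|\cdot\|_F$). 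Without a data-dependent assumption, the bound $nR$ can be tight --- e.g., if the $x_i$ are nearly collinear the perturbation matrix is essentially rank one with all entries of size $\Theta(R)$, so $\|H(t)-H(0)\|_2\approx nR$ --- which is exactly why the paper defers matrix concentration to the cubic and quadratic theorems, where Parts~2--4 of Assumption~\ref{ass:data_dependent_assumption} supply the structure that makes it pay.

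Your narrative is also internally inconsistent on this point. If matrix Bernstein really did give $\|H(t)-H(0)\|_2\lesssim \sqrt n\,R$, the same saving would apply to the $H^\perp(k)$ terms in Claims~\ref{cla:C2} and~\ref{cla:C3}, so you could take $R=\Theta(\lambda/\sqrt n)$; with $D=\Theta(n/(\sqrt m\,\lambda))$ and $D<R$ that would produce $m=\Omega(\lambda^{-4}n^3)$ --- a cubic bound, not the quartic one you claim to derive. The fact that you land on $n^4$ is evidence that the advertised $\sqrt n$ was never actually realized; the binding constraint that yields $R=\Theta(\lambda/n)$, and hence $m=\Omega(\lambda^{-4}n^4\poly\log)$, is the Frobenius-type perturbation bound of Lemma~\ref{lem:3.2} together with Claims~\ref{cla:C2}--\ref{cla:C3}, exactly as in the paper, and the true source of the improvement over \cite{dzps19} is the replacement of Markov by scalar Bernstein/Hoeffding in Lemmas~\ref{lem:3.1} and~\ref{lem:3.2} and in Claim~\ref{cla:yu0}, not a Frobenius-to-spectral saving.
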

Theorem~\ref{thm:main_1} is a direct improvement compared to Theorem 4.1 in \cite{dzps19}, which requires $m = \Omega(\lambda^{-4} n^6 \poly(\log n , 1/\delta))$. %
We improve the exponent on $n$.
Moreover,
we improve the dependency on failure probability $\delta$ from $\poly(1/\delta)$ to $\poly(\log(1/\delta))$,
which is exponentially better.

If we also allow Part 2 of Assumption~\ref{ass:data_dependent_assumption}, we can slightly improve Theorem~\ref{thm:main_1} from $n^4$ to $n^3$,
\begin{theorem}[Informal of Theorem~\ref{thm:cubic}]\label{thm:main_2}
Assume Part 1 and 2 of Assumption~\ref{ass:data_dependent_assumption}. 
If $m = \Omega( \lambda^{-4} n^3 \alpha \poly( \log (n/\delta) ) )$, then gradient descent is able to find the global minimum from a random initialization point with probability $1-\delta$.
\end{theorem}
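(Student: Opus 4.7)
The plan is to follow the three-step framework of \cite{dzps19}: (i) show that at random initialization $\lambda_{\min}(H^{\dis}(0)) \geq 3\lambda/4$ with high probability; (ii) show that along the gradient descent trajectory the weights stay close to initialization, so that $\lambda_{\min}(H^{\dis}(t)) \geq \lambda/2$ uniformly in $t$; (iii) conclude linear convergence of the squared loss by the usual Rayleigh-quotient argument. The improvement from $n^4$ (Theorem~\ref{thm:main_1}) down to $n^3\alpha$ comes entirely from exploiting Part 2 of Assumption~\ref{ass:data_dependent_assumption} when controlling the Gram-matrix perturbation in step (ii).

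For step (i), I would apply Proposition~\ref{pro:main} with the crude replacement $\beta\leq\alpha^2$, valid because $\|H(w)-\E_w H(w)\|\leq\alpha$ almost surely. The resulting requirement $m=\Omega((\lambda^{-2}\alpha^2+\lambda^{-1}\alpha)\log(n/\delta))$ is dominated by the target width since $\alpha\leq n$. Step (ii) is the crux. The standard weight-movement bound from the gradient descent recursion gives a uniform radius $\|w_r(t)-w_r(0)\|_2\leq R=\tilde O(n/(\sqrt{m}\lambda))$ for all $r$ and $t$, and writing $\Delta_r:=H(w_r(t))-H(w_r(0))$ the task reduces to bounding $\|\tfrac{1}{m}\sum_r \Delta_r\|_2$ by $\lambda/4$. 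The proof of Theorem~\ref{thm:main_1} proceeds entrywise and then converts to spectral norm by paying an $n$ factor. Instead, I plan to apply a Matrix Bernstein inequality to the centered sum, using Part 2 to bound the spectral contribution of each summand in terms of $\alpha$ rather than the trivial $n$; this replaces one factor of $n$ with $\alpha$ in the final width requirement.

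The main obstacle is the trajectory dependence of the flip pattern: whether $\Delta_r\ne 0$ depends on the entire gradient-descent history, so the matrices fed into Matrix Bernstein are not jointly independent under the initialization randomness. I plan to decouple by induction on iterations: assuming $\lambda_{\min}(H^{\dis}(\tau))\geq\lambda/2$ for $\tau<t$ yields a deterministic bound on the per-step weight increment, and hence on the cumulative radius $R$. On the purely initialization-measurable event that $|S_i(R)|=|\{r:|w_r(0)^\top x_i|\leq R\}|=\tilde O(Rm)$ for all $i$ (which holds with high probability by Gaussian anti-concentration and a scalar Chernoff bound), one can identify a superset of the set of potentially flipped neurons that is measurable with respect to initialization only, at which point Matrix Bernstein applies cleanly. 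Closing the induction and feeding the resulting $\lambda_{\min}$ lower bound into step (iii) yields $\|u(t)-y\|_2^2\leq(1-\eta\lambda/2)^t\|u(0)-y\|_2^2$, which combined with the width requirement completes the proof.
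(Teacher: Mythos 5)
Your plan correctly reuses the Matrix Bernstein step at initialization (your step (i), with $\beta\le\alpha^2$, matches Theorem~\ref{thm:3.1_matrix_chernoff}), but your step (ii) misidentifies where the factor of $n$ is actually saved, and I do not think your proposed route closes the gap. In the paper's proof of Theorem~\ref{thm:cubic} the allowed radius $R$ is \emph{not} increased: it stays at $R=\Theta(\lambda/n)$, because the induction needs not only $\lambda_{\min}(H(k))\ge\lambda/2$ (your $C_1$-type step) but also the ``flipped-neuron'' bounds $\|H(k)^{\bot}\|\lesssim nR$ (Claim~\ref{cla:C2}) and the corresponding $C_3$ estimate, and these force $R\lesssim\lambda/n$ regardless of how tightly one controls $\|H(k)-H(0)\|$. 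So even a successful Matrix Bernstein bound on $\frac1m\sum_r\Delta_r$ would relax only one of several constraints on $R$, and with $R$ still pinned at $\lambda/n$ and $D$ kept at the quartic value $\wt O(n/(\sqrt m\lambda))$ you would land back at $m=\wt\Omega(\lambda^{-4}n^4)$.

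The actual improvement in the paper comes from a place your proposal never touches: the per-neuron gradient. Writing $X_r$ for the $d\times n$ matrix with columns ${\bf 1}_{w_r^\top x_i\ge0}\,x_i$, one has $X_r^\top X_r=H(w_r)$, so Part~2 of Assumption~\ref{ass:data_dependent_assumption} gives $\|X_r\|\le\sqrt\alpha$ and hence
\begin{align*}
\Bigl\|\tfrac{\partial L}{\partial w_r}\Bigr\|_2 \;=\; \tfrac1{\sqrt m}\,\|X_r(y-u)\|_2 \;\le\; \tfrac{\sqrt\alpha}{\sqrt m}\,\|y-u\|_2,
\end{align*}
replacing the naive $\sqrt n/\sqrt m$ obtained from the triangle inequality. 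This single spectral observation is what tightens the actual movement to $D=O\!\bigl(\sqrt\alpha\,\|y-u(0)\|_2/(\sqrt m\lambda)\bigr)$ (Lemma~\ref{lem:3.3_chernoff}), tightens $C_3$ (Claim~\ref{cla:C3-chernoff}) and $C_4$ (Claim~\ref{cla:C4-chernoff}), and lets the step size grow to $\eta=\Theta(\lambda/(\alpha n))$; combining $D<R=\Theta(\lambda/n)$ with $\|y-u(0)\|_2^2=\wt O(n)$ then gives $m=\wt\Omega(\lambda^{-4}n^3\alpha)$. Your decoupling-via-measurable-supersets idea for Matrix Bernstein on $\Delta_r$ is not wrong in spirit, but it attacks the wrong quantity, and by itself it neither improves $D$ and $\eta$ nor removes the $R\lesssim\lambda/n$ constraint imposed by $C_2$ and $C_3$; to replace a factor of $n$ by $\alpha$ you must exploit Part~2 at the level of the gradient/weight-movement, as the paper does.
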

Besides the bound on $m$, Theorem 4.1 in \cite{dzps19} requires step size $\eta$ to be $\Theta(\lambda/n^2)$. Theorem~\ref{thm:main_2} only needs step size $\eta$ to be $\Theta(\lambda/(\alpha n))$. 

Further, if we also allow Part 4 of Assumption~\ref{ass:data_dependent_assumption}, we can slightly improve Theorem~\ref{thm:main_1} from $n^4$ to $n^2$,
\begin{theorem}[Informal of Theorem~\ref{thm:quadratic}]\label{thm:main_3}
Assume Part 1, 2 and 4 of Assumption~\ref{ass:data_dependent_assumption}. 
If 
\begin{align*}
m = \Omega ( \lambda^{-4} n^2 \alpha (\theta^2+\alpha) \poly( \log(n/\delta) ) ),
\end{align*} 
the gradient descent is able to find the global minimum from a random initialization point with probability $1-\delta$.
\end{theorem}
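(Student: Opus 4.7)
The plan is to follow the three-stage NTK-style framework of \cite{dzps19} used in the proofs of Theorem~\ref{thm:main_1} and Theorem~\ref{thm:main_2}, but replace every spectral-norm bound in the argument by a sharper Matrix-Chernoff / Matrix-Bernstein bound that exploits Part 4 of Assumption~\ref{ass:data_dependent_assumption}. The three stages are: (i) at random initialization, $\lambda_{\min}(H^{\dis}(0)) \geq \lambda/2$; (ii) if the weights $W(t)$ stay in a ball of radius $R$ around $W(0)$, then $\lambda_{\min}(H(t)) \geq \lambda/4$ and gradient descent contracts the loss at rate $1-\eta\lambda/2$; (iii) the bound on $\|y-u(t)\|_2$ implied by linear convergence keeps $\|w_r(t)-w_r(0)\|_2$ small enough for all $t$, closing an induction.

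First I would handle the initialization. From Part 4, $|x_i^\top x_j| \leq \theta/\sqrt n$ for $i\neq j$ and $\|x_i\|_2\leq 1$, so entrywise the fluctuations $H(w)_{i,j}-\E_w H(w)_{i,j}$ are $O(1)$ on the diagonal and $O(\theta/\sqrt n)$ off the diagonal. A direct computation then shows that the implicit $\beta$ of Part 3 satisfies $\beta = O(\theta^2+\alpha)$: the diagonal contribution is absorbed into the $\alpha$-bound, while each off-diagonal block in $\E_w[(H-\E H)(H-\E H)^\top]$ inherits an extra $(\theta/\sqrt n)^2$ factor that cancels the trivial $n$ from summing. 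Plugging this $\beta$ and Part 2 into Proposition~\ref{pro:main} gives $m = \Omega(\lambda^{-2}(\alpha+\theta^2+\alpha)\log(n/\delta))$ for the initialization step, which is dominated by the final bound.

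Next I would redo the key perturbation lemma $\|H(t)-H(0)\|_2\leq \lambda/4$. As in Theorem~\ref{thm:main_2}, conditioning on the induction hypothesis $\|w_r(t)-w_r(0)\|_2\leq R$, only an $O(R)$ fraction of neurons change their activation pattern on each data point (standard Gaussian small-ball estimate at the hyperplane $w^\top x_i=0$), so the random-sign pattern differences $s_{r,i}(t)-s_{r,i}(0)$ are Bernoulli with parameter $O(R)$. The perturbation matrix is then $\frac1m\sum_{r=1}^m D_r$, a sum of $m$ independent matrices whose $(i,j)$ entries equal $x_i^\top x_j$ times a product of such Bernoullis. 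Here Part 4 enters again: off-diagonal entries carry the extra $\theta/\sqrt n$ factor, so the matrix variance proxy of $D_r$ scales as $\alpha(\theta^2+\alpha)R$ rather than $n\alpha R$. Applying Matrix Bernstein to this sum shaves a $\sqrt n$ factor compared to the Frobenius-style bound used in Theorem~\ref{thm:main_2} and yields $\|H(t)-H(0)\|_2 \leq O(\sqrt{\alpha(\theta^2+\alpha)R\log(n/\delta)/m}) + O(\alpha R)$, which is at most $\lambda/4$ as long as $m = \Omega(\lambda^{-2} nR\cdot\alpha(\theta^2+\alpha)\log(n/\delta))$ after substituting the induction bound $R = O(\sqrt n\,\|y-u(0)\|_2/(\sqrt m\,\lambda))$.

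Finally I would close the induction exactly as in Theorem~\ref{thm:main_2}: linear convergence at rate $1-\eta\lambda/2$ with $\eta = \Theta(\lambda/(\alpha n))$ bounds $\sum_{t\geq 0}\|y-u(t)\|_2$ by $O(\sqrt n/\lambda)$ with high probability over initialization (since $\|y-u(0)\|_2 = O(\sqrt n)$ w.h.p.), hence $R = O(\sqrt n/(\sqrt m\,\lambda))$; combining this with the condition on $m$ from the perturbation step gives the stated $m = \Omega(\lambda^{-4} n^2\alpha(\theta^2+\alpha)\poly(\log(n/\delta)))$. The main obstacle is the sharpened perturbation lemma: naively using $\|\cdot\|_F$ costs an unavoidable $\sqrt n$ and leads back to the $n^3$ bound of Theorem~\ref{thm:main_2}, so the whole improvement hinges on setting up the perturbation as a sum of $m$ independent matrices with a matrix variance that genuinely scales with $\theta^2+\alpha$, and then applying Matrix Bernstein to it. A secondary bookkeeping issue is that the $\gamma$ failure probability from Part 2 and the Bernoulli activation-flip event must be intersected across all $m$ neurons; this only contributes additional $\log(n/\delta)$ factors and is absorbed into the polylogarithmic term.
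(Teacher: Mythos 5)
Your overall three-stage framework (initialize so $\lambda_{\min}(H^{\dis})\geq\lambda/2$; show $\lambda_{\min}(H(t))\geq\lambda/2$ while $\|w_r(t)-w_r(0)\|_2\leq R$; close the induction by showing the actual movement $D<R$) matches the paper, and you correctly identify that Part~4 must enter through a sharpened perturbation estimate on $H(t)-H(0)$ and through the analogous estimate on $H(k)^{\bot}$. Your resulting bound $m=\Omega(\lambda^{-4}n^2\alpha(\theta^2+\alpha)\poly\log)$ is also the right one.

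However, your route through the key perturbation lemma is genuinely different from the paper's, and the technical step you rely on is not established. The paper (Lemma~\ref{lem:3.2-quadratic} and Claim~\ref{cla:C2-quadratic}) does \emph{not} appeal to Matrix Bernstein for the perturbation: it stays with the Frobenius norm and simply splits it into a diagonal block and an off-diagonal block,
\begin{align*}
\|H(w)-H(\wt w)\|_F^2 ~=~ \underbrace{\sum_i |H_{ii}-\wt H_{ii}|^2}_{\leq\,O(nR^2)} ~+~ \underbrace{\sum_{i\neq j}|x_i^\top x_j|^2\,|\cdots|^2}_{\leq\,\frac{\theta^2}{n}\sum_{i\neq j}|\cdots|^2\,\leq\,O(n\theta^2 R^2)},
\end{align*}
and then runs scalar Bernstein plus a union bound over $(i,j)$, yielding $\|H(w)-H(\wt w)\|_F\lesssim\sqrt{n(1+\theta^2)}\,R$. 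In other words, the $\sqrt{n}$ you set out to remove is already gone once you notice the off-diagonal entries are forcibly small; no matrix concentration is needed.

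Your replacement step --- applying Matrix Bernstein to $\frac1m\sum_r D_r$ with claimed variance proxy $\alpha(\theta^2+\alpha)R$ --- has a gap. If one carries out the variance computation for $Z_r$ with $Z_{r,ij}=x_i^\top x_j\,s_{r,ij}$, the $(i,k)$ entry of $\E[(Z_r-\E Z_r)(Z_r-\E Z_r)^\top]$ is $\sum_j\E[(Z_r-\E Z_r)_{ij}(Z_r-\E Z_r)_{kj}]\lesssim R\sum_j|x_i^\top x_j||x_k^\top x_j|$. For $i\neq k$ this is still $\Theta(\theta^2 R)$ (the $j\notin\{i,k\}$ terms contribute $n\cdot\theta^2/n$), so the variance matrix has $\Theta(n^2)$ off-diagonal entries each of size $\Theta(\theta^2 R)$, and its spectral norm can be as large as $\Theta(n\theta^2 R)$. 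There is no $\alpha$ in this calculation, and no obvious mechanism by which the $n$ disappears. Moreover, $\E D_r\neq 0$, and $\|\E\sum_r D_r\|$ must be controlled separately; your proposal does not address this term, which in general is of order $\sqrt{n(1+\theta^2)}\,R$ in Frobenius norm (you cannot improve on this via Matrix Bernstein, which only handles the centered fluctuation). So the claimed bound $\|H(t)-H(0)\|_2\lesssim\sqrt{\alpha(\theta^2+\alpha)R\log(n/\delta)/m}+\alpha R$ does not follow from what you describe. A secondary omission: the bound on $\|H(k)^{\bot}\|$ (paper's Claim~\ref{cla:C2-quadratic}) needs the same diagonal/off-diagonal split --- this is where the $\sqrt{1+\theta^2}$ vs. $\sqrt{\alpha}$ trade-off in the choice of $R$ actually comes from --- and is missing from your plan entirely. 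The correct and simpler route is the paper's pure Frobenius split; Matrix Bernstein is reserved for the initialization step (Theorem~\ref{thm:3.1_matrix_chernoff}), not for the perturbation.
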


\begin{table*}\caption{Summary of Convergence Result. Let $m$ denotes the width of neural network. Let $n$ denote the number of input data points. Let $\delta$ denote the failure probability. 
 Let $\kappa$ be the variance of weights at initialization. We improve the exponent on $n$, and we also improve the dependency on $\delta$ from $1/\delta$ to $\log(1/\delta)$.}
\centering
\begin{tabular}{ | l| l| l| l| l|} 
\hline
{\bf Reference} & $m$ & $\lambda$ & $\alpha$ & $\theta$ \\\hline
\cite{adhlw19} & $\lambda^{-4} \kappa^{-2}n^7\poly( 1/\delta)$ & Yes & No & No \\ \hline
Theorem~\ref{thm:generalization_main} & $\lambda^{-4} \kappa^{-2}n^6\poly( \log n,\log(1/\delta))$ & Yes & No & No  \\ \hline
Theorem~\ref{thm:generalization_main_assumption_2} & $\lambda^{-4}\kappa^{-2} n^4\alpha^2\poly( \log n,\log(1/\delta))$ & Yes & Yes & No  \\ \hline
\end{tabular}
\end{table*}

We can also use over-parametrization theory to explain the difference between training with true labels and training with random labels.
Write the eigen-decomposition of $H^{\cts}$ as $H^{\cts}=\sum_{i=1}^n \lambda_i v_i v_i^\top$ where $v_i\in \mathbb{R}^n$ are the eigenvectors,
and $\lambda_i>0$ are the corresponding eigenvalues.
For labels $y\in \mathbb{R}^n$,
\cite{adhlw19} relate the training error with the quantity $\left( \sum_{i=1}^n(1-\lambda_i)(v_i^\top y)^2 \right)$,
and conjecture that the true labels align well with eigenvectors with large eigenvalues,
which explains the phenomenon that neutral networks converges faster with true labels in practice.
We improve the bound of over-parametrization in two ways: we lower the exponent on the number of samples $n$,
and we improve the dependency of $\delta$ from polynomially in $1/\delta$ to polynomially in $\log(1/\delta)$.
Informally,
our result is
\begin{theorem}[Informal of Theorem~\ref{thm:generalization_main}]\label{thm:generalization_main_1}
Assume Part 1 of Assumption~\ref{ass:data_dependent_assumption}. Let $m$ denote the width of neural network, let $n$ denote the number of input data points,
let $\eta$ be the step size,
and let $\kappa$ be the variance to initialize weights.
If $m = \Omega( \lambda^{-4} \kappa^{-2}n^6 \poly( \log (n/\delta) ) )$, then with probability $1-\delta$,
after training $k$ steps,
the training error is close to $\left( \sum_{i=1}^n(1-\eta\lambda_i)^k (v_i^\top y)^2 \right)^{1/2}$.
\end{theorem}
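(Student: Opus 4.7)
The plan is to follow the analytical framework of \cite{adhlw19} while upgrading two crucial concentration steps to sharper matrix-valued and sub-Gaussian bounds. In \cite{adhlw19}, the closeness of the empirical Gram matrix $H^{\dis}$ to the continuous one $H^{\cts}$ is proved by an entrywise Hoeffding bound combined with a union bound over $n^2$ coordinates, which inflates $m$ by a factor of $n^2$ and yields polynomial dependence on $1/\delta$. I would replace this step by Proposition~\ref{pro:main}, obtaining $\|H^{\dis}-H^{\cts}\|_2\le \lambda/4$ with $m$ depending only poly-logarithmically on $n/\delta$. Analogously, the norm of the prediction vector at initialization $\|u(0)\|_2$ would be controlled by a chi-squared type tail bound scaling like $\kappa\sqrt{n\,\log(n/\delta)}$, rather than by the Markov-type argument of \cite{adhlw19} that forced a $1/\delta$ factor.

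The core of the argument is to unroll the gradient descent recursion
\begin{equation*}
u(k+1) - y = (I - \eta H(k))(u(k) - y) + \xi(k),
\end{equation*}
where $H(k)$ is the time-$k$ empirical Gram matrix and $\xi(k)$ captures the error coming from neurons whose \ReLU\ activation patterns flip during that step. I would establish three ingredients: (i) $\|u(0)\|_2 = O(\kappa\sqrt{n\,\log(n/\delta)})$; (ii) $\|H(k)-H^{\cts}\|_2 \le \lambda/2$ uniformly for all $k\le T = O(\log(1/\epsilon)/(\eta\lambda))$; and (iii) an $\ell_2$ bound on $\xi(k)$ proportional to the number of flipping neurons in step $k$. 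Both (ii) and (iii) reduce to showing that the weights stay close to initialization throughout training, which I would handle by induction on $k$: granting (ii) and (iii) up to step $k$ yields geometric decay of $\|u(k)-y\|_2$, which in turn bounds $\|W(k)-W(0)\|_F$ in terms of the cumulative training error and closes the induction.

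Unrolling the recursion in the eigenbasis $\{v_i\}_{i=1}^n$ of $H^{\cts}$ and using $u(0)\approx 0$ then yields the desired form $\bigl(\sum_{i=1}^n (1-\eta\lambda_i)^k(v_i^\top y)^2\bigr)^{1/2}$, up to an additive error made negligible by the polylogarithmic slack in the width bound. The main obstacle I anticipate is controlling the cumulative effect of $\xi(k)$ over all $T$ iterations: each flipping neuron contributes, and the aggregated error can be amplified by roughly $(\eta\lambda)^{-1}$ when summed across steps, so keeping this term below $o(1)\|y\|_2$ is what ultimately pins down the $n^6$ exponent. Achieving both the tighter exponent (improving $n^7$ of \cite{adhlw19} to $n^6$) and the exponentially better failure probability dependence requires invoking the Matrix Chernoff saving not only in the initial closeness of $H^{\dis}$ to $H^{\cts}$ but also in the per-step flip-count estimate, and replacing every Markov-type inequality in \cite{adhlw19} with its sub-Gaussian or matrix-concentration counterpart so that the $\log(1/\delta)$ dependence propagates end-to-end.
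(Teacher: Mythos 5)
Your overall plan matches the paper's proof of Theorem~\ref{thm:generalization_main}: set up the recursion $u(k+1)-y = (I-\eta H^{\cts})(u(k)-y) + \xi(k)$ with $\xi(k)$ absorbing both the $H(k)-H^{\cts}$ mismatch and the pattern-flipping neurons, unroll in the eigenbasis of $H^{\cts}$, control $\|u(0)\|_2 = O(\kappa\sqrt{n}\cdot\mathrm{polylog})$ by a sub-Gaussian argument, and show by induction that the weights remain in a small ball so the flip counts stay $O(mR)$. You also correctly locate where the $n^6$ arises: the per-step error $\|\xi(k)\|_2$ is amplified by roughly $(\eta\lambda)^{-1}$ under the geometric summation, forcing $m\gtrsim n^6$ to keep the accumulated term $o(1)$.

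However, there is one concrete mismatch. You propose to invoke Proposition~\ref{pro:main} (the Matrix Chernoff bound) to control $\|H^{\dis}-H^{\cts}\|_2$, but Proposition~\ref{pro:main} requires Parts~1, 2, and~3 of Assumption~\ref{ass:data_dependent_assumption}, whereas Theorem~\ref{thm:generalization_main_1} only assumes Part~1. Under Part~1 alone you have no control on $\alpha$ or $\beta$, so Matrix Chernoff cannot be applied. The paper's proof of the $n^6$ bound instead uses only scalar concentration: Lemma~\ref{lem:3.1_para} (entrywise Hoeffding plus a union bound over $n^2$ pairs, giving $\|H^{\dis}-H^{\cts}\|_F = O(n\sqrt{\log(n/\delta)/m})$) and Lemma~\ref{lem:HZ_norm_bound} (scalar Bernstein for the per-step flip counts). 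The improvement from $\poly(1/\delta)$ to $\poly(\log(1/\delta))$ over \cite{adhlw19} comes entirely from replacing the Markov-inequality step with these scalar sub-Gaussian/Bernstein bounds; Matrix Chernoff enters only in the stronger theorem (Theorem~\ref{thm:generalization_main_assumption_2}) which additionally invokes Part~2. If you swap your intended use of Proposition~\ref{pro:main} for Lemma~\ref{lem:3.1_para} and replace ``matrix concentration in the per-step flip count'' with plain Bernstein, the rest of your outline is sound and coincides with the paper's argument.
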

Similarly,
we can slightly improve Theorem \ref{thm:generalization_main_1} with stronger assumptions.
\begin{theorem}[Informal of Theorem~\ref{thm:generalization_main_assumption_2}]\label{thm:generalization_main_2}
Assume Part 1 and Part 2 of Assumption~\ref{ass:data_dependent_assumption}. Let $m$ denote the width of neural network, let $n$ denote the number of input data points,
let $\eta$ be the step size,
and let $\kappa$ be the variance of the initial weights.
If $m = \Omega( \lambda^{-4}\kappa^{-2} n^4\alpha^2  \poly( \log (n/\delta) ) )$, then with probability $1-\delta$,
after training $k$ steps,
the training error is close to $\left( \sum_{i=1}^n(1-\eta\lambda_i)^k (v_i^\top y)^2 \right)^{1/2}$.
\end{theorem}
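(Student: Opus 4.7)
The plan is to follow the gradient descent dynamics analysis of Theorem~\ref{thm:generalization_main_1} (which in turn follows \cite{adhlw19}), but to swap out the crude entrywise concentration of $H^{\dis}$ around $H^{\cts}$ in favor of the sharp Matrix Chernoff bound from Proposition~\ref{pro:main}. Under Part~2 of Assumption~\ref{ass:data_dependent_assumption} we have $\|H(w)-H^{\cts}\|\le \alpha$ with overwhelming probability, which also yields $\beta \le O(\alpha^2)$; plugging this into Proposition~\ref{pro:main} shows that $m=\Omega(\lambda^{-2}\alpha^2\log(n/\delta))$ already forces $\|H^{\dis}(W_0)-H^{\cts}\|\le \lambda/4$ at initialization. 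This is what lets us trade one explicit factor of $n^2$ for $\alpha^2$ in the final width.

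The steps, in order, are as follows. First, at initialization, invoke Proposition~\ref{pro:main} to obtain a high-probability spectral bound on $H^{\dis}(W_0)-H^{\cts}$; in particular all eigenvalues of $H^{\dis}(W_0)$ lie in $[3\lambda/4,\,\lambda_{\max}+\lambda/4]$. Second, write the ReLU network prediction vector at iteration $k$ as $u_k$ and expand the one-step update
\begin{align*}
u_{k+1}-y = (I-\eta H_k^{\dis})(u_k-y) + \xi_k,
\end{align*}
where $H_k^{\dis}$ is the data-dependent Gram matrix at the weights $W_k$, and $\xi_k$ collects the second-order (nonlinearity) terms. Third, track the weight movement $\|W_k-W_0\|_{2,\infty}$ inductively and show that it stays $O(\sqrt{n}/(\sqrt{m}\lambda))$ throughout training, so that the fraction of neurons flipping activation pattern between $W_0$ and $W_k$ is small; this yields $\|H_k^{\dis}-H^{\cts}\|\le \lambda/2$ along the whole trajectory. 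Fourth, iterate the recursion to obtain
\begin{align*}
u_k-y \;\approx\; (I-\eta H^{\cts})^k(u_0-y),
\end{align*}
with an error controlled by $\sum_{j<k}\|(I-\eta H^{\cts})^{k-1-j}\xi_j\|_2$. The $\kappa^{-2}$ in the final width requirement enters because $u_0$ has coordinate-variance $O(\kappa^2)$, so a smaller $\kappa$ forces a wider network to keep the $u_0$-induced error negligible. Finally, rotate to the eigenbasis of $H^{\cts}$ to read off $\|u_k-y\|_2 \approx \bigl(\sum_i (1-\eta\lambda_i)^{2k}(v_i^\top y)^2\bigr)^{1/2}$.

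The main obstacle, and where the improved exponent actually comes from, is step three: controlling $\|H_k^{\dis}-H^{\cts}\|$ (not its Frobenius norm or entries) along the training trajectory. The naive approach — bounding each entry via scalar Hoeffding/Chernoff and taking a union bound — introduces an extra $\sqrt{n}$ loss per factor and ultimately yields $n^6$ as in \cite{adhlw19}. Instead I would decompose $H_k^{\dis}-H^{\dis}(W_0)$ as an average of $m$ independent rank-one matrices indexed by neurons whose activation pattern flipped, verify a uniform operator-norm bound (which is exactly what Part~2 of Assumption~\ref{ass:data_dependent_assumption} delivers through $\alpha$), and apply a Matrix Bernstein inequality of the flavor used in Proposition~\ref{pro:main}. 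Combined with the $W_0$-concentration from step one this yields $\|H_k^{\dis}-H^{\cts}\|\le\lambda/2$ under $m=\Omega(\lambda^{-4}\kappa^{-2}n^4\alpha^2\poly(\log(n/\delta)))$, and the $\poly\log(1/\delta)$ dependence (rather than $\poly(1/\delta)$) is inherited automatically from the exponential tails of Matrix Chernoff, which is the second advertised improvement. Once spectral stability along the trajectory and the initialization concentration are in hand, the rest of the argument is the standard linearization-and-telescoping computation from \cite{adhlw19}.
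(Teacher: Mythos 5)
Your proposal misidentifies where the $\alpha^2$ savings actually originate. You locate the improvement in applying a Matrix Bernstein inequality to the trajectory concentration $\|H_k^{\dis}-H^{\dis}(W_0)\|$, decomposing the difference into an average of $m$ ``independent rank-one matrices.'' This is not the mechanism the paper uses, and I do not think it can be made to give the stated bound. The paper's improvement comes from a purely deterministic spectral-norm argument: under Part~2 of Assumption~\ref{ass:data_dependent_assumption}, the masked data matrix $X_r\in\R^{d\times n}$ (columns $\mathbf{1}_{w_r^\top x_i\ge 0}\cdot x_i$) satisfies $X_r^\top X_r = H(w_r)$, hence $\|X_r\|\le\sqrt{\alpha}$. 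This replaces the Cauchy--Schwarz bound $\bigl\|\sum_j(u_j-y_j)a_r x_j\mathbf{1}_{w_r^\top x_j\ge 0}\bigr\|_2\le\sqrt{n}\,\|u-y\|_2$ by $\sqrt{\alpha}\,\|u-y\|_2$. That one swap propagates into \emph{two} places that the paper's proof depends on: the weight-movement radius $R$ shrinks from $\Theta(\sqrt{n}\|y-u(0)\|_2/(\sqrt{m}\lambda))$ to $\Theta(\sqrt{\alpha}\|y-u(0)\|_2/(\sqrt{m}\lambda))$ (via the improved gradient-norm bound, as in Lemma~\ref{lem:3.3_chernoff} and Theorem~\ref{thm:cubic}), and the linearization error $\|\epsilon(k)\|_2$ picks up a $\sqrt{\alpha}$ in place of $\sqrt{n}$ (via Eq.~\eqref{eq:eps_k_norm_bound} replaced as in the remark). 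Each factor is $\sqrt{n/\alpha}$, and since $m$ enters through $\sqrt{m}$, these two factors together give exactly the $n^6\to n^4\alpha^2$ improvement. Your step three, by contrast, is still done in the paper via the Frobenius/entrywise Bernstein argument (Lemma~\ref{lem:HZ_norm_bound}); Proposition~\ref{pro:main} is used only at initialization.

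There are also concrete obstacles to your proposed Matrix Bernstein step: the summands $H(w_r(k))-H(w_r(0))$ are not rank-one (each is the Gram matrix of the subset of data points whose activation flipped for neuron $r$), and they are not independent because $w_r(k)$ depends on the whole training trajectory including every other $w_{r'}(0)$. The paper avoids the independence issue by working with a worst-case $w$ in a fixed ball around $w_r(0)$, so that the relevant indicator events $A_{i,r}$ depend only on $w_r(0)$ — but then the resulting operator-norm bound on each term is not controlled by $\alpha$ alone; it is controlled by how many coordinates flip, which is governed by $R$. So even repaired, the concentration-along-the-trajectory route does not deliver an $\alpha^2$-versus-$n^2$ trade-off; you need the $\|X_r\|\le\sqrt{\alpha}$ gradient bound to get it.
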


We also improve the over-parametrization size bound in \cite{adhlw19} for generalization.
\begin{theorem}[Informal of Theorem \ref{thm:generalization}]
Assume the training data is sampled some distribution with good properties.
Let $\eta$ be the step size,
and let $\kappa$ be the variance of the initial weights.
If $m=\Omega(\kappa^{-2}(n^{14}\poly(\log m,\log(1/\delta),\lambda^{-1})))$,
then with probability $1-\delta$,
the neural network generalizes well.
\end{theorem}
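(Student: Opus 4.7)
The plan is to follow the NTK-based generalization framework of~\cite{adhlw19} but to replace every concentration estimate they use with the sharper Matrix Chernoff style bounds developed earlier in this paper. Recall the~\cite{adhlw19} strategy: show that gradient descent on a sufficiently wide two-layer ReLU network produces predictions that closely match those of kernel regression with kernel $H^{\cts}$, bound the Rademacher complexity of the resulting effective function class by $O(\sqrt{y^\top (H^{\cts})^{-1} y / n})$, and combine via symmetrization. My job is to redo each of these steps so that the width requirement scales as $\poly(\log(1/\delta))$ rather than $\poly(1/\delta)$, and so the exponent on $n$ drops accordingly.

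First, I would invoke Theorem~\ref{thm:generalization_main_1} to guarantee that after $k$ steps the prediction vector $u(k)$ satisfies $\|u(k) - (I - \eta H^{\cts})^k y\|_2 \le \eps$ for $\eps$ polynomially small in $1/n$, with probability $1-\delta$. This already imports the improved $\poly(\log(1/\delta))$ dependence, since Theorem~\ref{thm:generalization_main_1} is built on Proposition~\ref{pro:main}. Second, I would bound $\|W(t)-W(0)\|_F$ and $\|W(t)-W(0)\|_{2,\infty}$ uniformly over $t$ by a radius $R$ of order $\sqrt{y^\top (H^{\cts})^{-1} y / m}$, using: (i) Gaussian concentration (not second moment) of the initial network outputs, (ii) the spectral concentration of $H^{\dis}$ to $H^{\cts}$ via Proposition~\ref{pro:main}, and (iii) a Bernstein-type count of how many ReLU activation patterns flip along the trajectory. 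Confining the trajectory to such an $R$-ball, the standard covering-number argument yields the Rademacher complexity bound $O(\sqrt{y^\top (H^{\cts})^{-1} y / n})$ on the induced class.

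Third, I would combine the training-error approximation of Step~1, the Rademacher complexity bound of Step~2, and the Lipschitz property of the loss through the usual symmetrization inequality to produce a generalization bound of the form $O(\sqrt{y^\top(H^{\cts})^{-1}y / n}) + o(1)$ with probability $1-\delta$. Choosing $m$ as in the statement then ensures simultaneously that (a) the high-probability events of Steps~1 and~2 all hold, (b) the approximation error $\eps$ is dominated by the Rademacher term, and (c) the higher-order perturbation terms that arise from replacing the true trajectory by its NTK linearization are absorbed.

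The main obstacle is bookkeeping: ensuring that no $\poly(1/\delta)$ factor re-enters from any intermediate step. Each use of Markov's inequality or a second-moment argument in~\cite{adhlw19} must be replaced by a genuine concentration inequality, and several of these are concentrations of sums of independent random matrices rather than scalars — this is precisely where Proposition~\ref{pro:main} is essential, as a scalar union-bound approach would cost an extra $\sqrt{n}$ per application and destroy the improved exponent. A secondary technical point is that Step~1's approximation of the trajectory to kernel regression must be tight enough to be swallowed by the Rademacher term without amplifying lower-order errors, which is what forces the exponent $n^{14}$ rather than something smaller, matching the scaling of the radius $R$ and the induced covering numbers.
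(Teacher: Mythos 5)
Your plan follows the paper's proof of Theorem~\ref{thm:generalization} in the same order: approximate $u(k)-y$ by its NTK linearization $-(I-\eta H^{\cts})^k y + e(k)$ (Eq.~\eqref{eq:uk_y_recursion}), bound the total trajectory motion $\|W(k)-W(0)\|_F \le (y^\top(H^{\cts})^{-1}y)^{1/2}+o(1)$ (Lemma~\ref{lem:total_movement}), feed that into the Rademacher complexity bound of Lemma~\ref{lem:rademacher_upper_bound}, and close with the symmetrization inequality of Theorem~\ref{thm:sample_to_generalization}, union-bounding over the five failure events. So the route is essentially the same. Two corrections are in order, though: (i) the Frobenius-norm radius $B$ is of order $(y^\top(H^{\cts})^{-1}y)^{1/2}$, with no $1/\sqrt m$ factor; only the per-neuron bound $\max_r\|w_r(k)-w_r(0)\|_2$ scales as $O(1/\sqrt m)$, and conflating the two would make the Rademacher term vanish with $m$, which it does not. (ii) You credit Proposition~\ref{pro:main} (Matrix Chernoff) as the essential concentration tool, but Theorem~\ref{thm:generalization} assumes only Part~1 of Assumption~\ref{ass:data_dependent_assumption} and its proof never invokes Matrix Chernoff; every concentration step it relies on (Claim~\ref{cla:yu0}, Lemma~\ref{lem:HZ_norm_bound}, Lemma~\ref{lem:3.1_para}) is an entrywise scalar Bernstein or Hoeffding argument followed by a union bound and a pass to the Frobenius norm. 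The matrix-variance machinery enters only in the cubic/quadratic refinements that additionally assume Parts~2--4. Relatedly, the paper's stated improvement for this theorem is in the $\delta$-dependence ($\poly(1/\delta)\to\poly(\log(1/\delta))$) and in making the exponent explicit, not in lowering the exponent on $n$.
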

Here, we give the explicit exponent of $n$,
and we improve the dependency on failure probability $\delta$ from $\poly(1/\delta)$ to $\poly(\log(1/\delta))$.

\subsection{Technical Overview}

We follow the exact same optimization framework as Du, Zhai, Poczos and Singh \cite{dzps19} and Arora, Du, Hu, Li and Wang \cite{adhlw19}. We improve the bound on $m$ by doing a careful concentration analysis for random variables without changing the high-level optimization framework.

We briefly summarize the optimization framework here: the minimal eigenvalue $\lambda$ of $H^{\cts}$, as introduced in \cite{dzps19},
turns out to be closely related with the convergence rate.
As time evolves,
the weights $w$ in the network may vary;
however if $w$ stay in a ball of radius $R$ that only depends on the number of data $n$ and $\lambda$,
and particularly does not depend on the number of neurons $m$,
then we are still able to lower bound the minimal eigenvalue of $H(w)$.
On the other hand,
we want to upper bound $D$,
the actual move of $w$,
 with high probability.
 It turns out $D$ is proportional to $\frac {1}{\sqrt{m}}$.
We require $D<R$ in order to control the convergence rate.
In this way we derive a lower bound of $m$.

Next we cover the concentration techniques we use in this work.
In order to bound $\| H \|$, \cite{dzps19} relax it to Frobenius norm and then relax it to entry-wise L1 norm, 
\begin{align*}
\| H \| \leq \| H \|_F \leq \| H \|_1.
\end{align*}
Then they can bound each term of $H_{i,j}$ individually via Markov inequality.

One key observation is that $\|H\|_1$ is a quite loose bound for $\|H\|_F$,
in the sense that $\|H\|_1=\|H\|_F$ holds only if $H$ contains at most 1 non-zero entry.
This means we can work on the Frobenius norm directly, and we shall be able to obtain a tighter estimation.
By definition of $H$, it can be written as a summation of $m$ independent matrices $A_1, A_2, \cdots, A_m \in \R^{n \times n}$,
\begin{align*}
H = \frac{1}{m} \sum_{r=1}^m A_r
\end{align*}
In order to bound $\|H\|_F$,
for each $i,j$, 
we regard each $H_{ij}$ as summation of  $m$ independent random variables, 
then apply Bernstein bound to obtain experiential tail bound on the concentration of $H_{ij}$.
Finally, by taking a union bound over all the $n^2$ pairs we obtain a tighter bound for $\| H \|_F$.

We shall mention that $\|H\|_F$ is also a loose upper bound of $\|H\|$,
i.e.,
$\|H\|_F=\|H\|$ only if $H$ is a rank-1 matrix.
Hence,
if the condition number of $H$ is small,
which may happen as a property of the data,
then we may benefit from bounding $\|H\|$ directly.
We achieve this by apply matrix Chernoff bound,
which states the spectral norm of summation of $m$ independent matrices concentrates under certain conditions.

We shall stress that mutually independence plays a very important role in our argument.
Throughout the whole paper we are dealing with summations of the form $\sum_{r=1}^m y_r$ where $\{y_m\}_{r=1}^{m}$ are independent random variables.
Previous argument mainly applies Markov inequality,
which pays a factor of $1/\delta$ around the mean for error probability $\delta$.
But we can obtain much tighter concentration bound by taking advantage of independence as in Bernstein inequality and Hoeffding inequality.
This allows us to improve the dependency on $\delta$ from $1/\delta$ to $\log (1/\delta)$.

We also make use of matrix spectral norm to deal with summation of the form $\|\sum_{i=1}^n a_i x_i\|_2$ where $\{a_i\}_{i=1}^n$ are scalars and $\{x_i\}_{i=1}^n$ are vectors.
Naively applying triangle inequality leads to an upper bound proportional to $\|a\|_1$,
which can be as large as $\sqrt{n}\|a\|_2$.
Instead,
we observe that the matrix formed by $\begin{pmatrix} x_1 &\cdots& x_n\end{pmatrix}:=X$ has good singular value property,
which allows us to obtain the bound $\|X\| \cdot \|a\|_2$.
Therefore,
this bound does not rely on number of inputs explicitly.

\subsection{Open Problems}
It is interesting whether our results can be further sharpened.
Here we list some open problems for future research, which are proposed by Yin Tat Lee~\cite{l18}.
We are the first to write them down explicitly.

\begin{open}
Is it possible to show over-parametrization result for Neural Network with ReLU activation when $m= \Omega^*(n\poly(\log (n/\delta)))$?
Here $\Omega^*(\cdot)$ hides data-dependent quantities like $\lambda$.
\end{open}
Note that the above statement is true for linear activation function \cite{dh19} in the sense that the over-parametrization bound is linear in $n$.

\begin{open}
Let $d$ be the dimension of input data.
Is it possible to prove over-parametrization result when $md= \Omega^*(n\poly(\log (n/\delta)))$?
\end{open}

\paragraph{Roadmap}
We provide some basic definitions in the next paragraph. 
We introduce the probability tools we use in Appendix \ref{sec:probability_tools}. We define the optimization problem in Section~\ref{sec:problem}. We present our quartic result in Section~\ref{sec:quartic_suffices}. We improve it to cubic and quadratic in Section~\ref{sec:cubic_suffices} and Section~\ref{sec:quadratic_suffices}.
We present our over-parameterization bound for the training speed in Appendix~\ref{sec:training_speed}.
We present our over-parameterization bound for the generalization in Appendix~\ref{sec:generalization}.
We present our result of training with regularization in Appendix ~\ref{sec:regularization}.

\paragraph{Notation}
We use $[n]$ to denote $\{1,2,\cdots, n\}$. We use $\phi$ to denote ReLU activation function, i.e., $\phi(x) = \max\{x,0\}$. For an event $f(x)$, we define ${\bf 1}_{f(x)}$ such that ${\bf 1}_{f(x)} = 1$ if $f(x)$ holds and ${\bf 1}_{f(x)} = 0$ otherwise. For a matrix $A$, we use $\| A \|$ to denote the spectral norm of $A$. We define $\| A \|_F = ( \sum_{i} \sum_{j} A_{i,j}^2 )^{1/2}$ and $\| A \|_1 = \sum_{i} \sum_{j} |A_{i,j}|$.   


\section{Problem Formulation}\label{sec:problem}

Our problem formulation is the same as \cite{dzps19}. We consider a two-layer ReLU activated neural network with $m$ neurons in the hidden layer:
\begin{align*}
f (W,x,a) = \frac{1}{ \sqrt{m} } \sum_{r=1}^m a_r \phi ( w_r^\top x ) ,
\end{align*}
where $x \in \R^d$ is the input, $w_1, \cdots, w_m \in \R^d$ are weight vectors in the first layer, $a_1, \cdots, a_m \in \R$ are weights in the second layer.  For simplicity, we only optimize $W$ but not optimize $a$ and $W$ at the same time. 

Recall that the ReLU function $\phi(x)=\max\{x,0\}$.
Therefore for $r\in [m]$,
we have
\begin{align}\label{eq:relu_derivative}
\frac{f (W,x,a)}{\partial w_r}=\frac{1}{ \sqrt{m} } a_r x{\bf 1}_{ w_r^\top x \geq 0 }.
\end{align}

We define objective function $L$ as follows
\begin{align*}
L (W) = \frac{1}{2} \sum_{i=1}^n ( y_i - f (W,x_i,a) )^2 .
\end{align*}

We apply the gradient descent to optimize the weight matrix $W$ in the following standard way,
\begin{align}\label{eq:w_update}
W(k+1) = W(k) - \eta \frac{ \partial L( W(k) ) }{ \partial W(k) } .
\end{align}

We can compute the gradient of $L$ in terms of $w_r$
\begin{align}\label{eq:gradient}
\frac{ \partial L(W) }{ \partial w_r } = \frac{1}{ \sqrt{m} } \sum_{i=1}^n ( f(W,x_i,a_r) - y_i ) a_r x_i {\bf 1}_{ w_r^\top x_i \geq 0 }.
\end{align}

We consider the ordinary differential equation defined by
\begin{align}\label{eq:wr_derivative}
\frac{\d w_r(t)}{\d t}=-\frac{ \partial L(W) }{ \partial w_r }.
\end{align}

At time $t$,
let $u(t)=(u_1(t),\cdots,u_n(t))\in \mathbb{R}^n$ be the prediction vector where each $u_i(t)$ is defined as
\begin{align}\label{eq:ut_def}
u_i(t)=f(W(t),a,x_i).
\end{align}

\begin{algorithm} 
\caption{Training neural network using gradient descent.}
\label{alg:main} 
\begin{algorithmic}[1]
	\Procedure{NNTraining}{$\{(x_i,y_i)\}_{i\in [n]}$}
	\State $w_r(0) \sim \N(0,I_d)$ for $r\in [m]$.
	\For{$t=1 \to T$}
		\State $u(t) \leftarrow \frac{1}{ \sqrt{m} } \sum_{r=1}^m a_r \sigma(w_r(t)^\top X) $ \Comment{$u(t) = f(W(t),x,a) \in \R^n$, it takes $O(mnd)$ time}
		\For{$r = 1 \to m$}
			\For{$i = 1 \to n$} 
				\State $Q_{i,:} \leftarrow \frac{1}{\sqrt{m}}a_r\sigma'(w_r(t)^\top x_i)x_i^\top$ \Comment{$Q_{i,:} = \frac{\partial f(W(t),x_i,a)}{\partial w_r}$, it takes $O(d)$ time}
			\EndFor
			\State $\text{grad}_r \leftarrow - Q^\top (y - u(t) ) $
			\Comment{ $Q = \frac{\partial f} {\partial w_r } \in \R^{n \times d}$, it takes $O(nd)$ time }
			\State $w_r(t+1) \leftarrow w_r(t) - \eta \cdot \text{grad}_r $
		\EndFor
	\EndFor
	\State \Return $W$
	\EndProcedure
\end{algorithmic}
\end{algorithm}


\section{Quartic Suffices}\label{sec:quartic_suffices}

\subsection{Bounding the difference between continuous and discrete }

In this section, we restate a result from \cite{dzps19}, showing that when the width $m$ is sufficiently large,
then the continuous version and discrete version of the gram matrix of input data is close in the spectral sense.
\begin{lemma}[Lemma 3.1 in \cite{dzps19}]\label{lem:3.1}
We define $H^{\cts}, H^{\dis} \in \R^{n \times n}$ as follows
\begin{align*}
H^{\cts}_{i,j} = & ~ \E_{w \sim \N(0,I)} \left[ x_i^\top x_j {\bf 1}_{ w^\top x_i \geq 0, w^\top x_j \geq 0 } \right] , \\ 
H^{\dis}_{i,j} = & ~ \frac{1}{m} \sum_{r=1}^m \left[ x_i^\top x_j {\bf 1}_{ w_r^\top x_i \geq 0, w_r^\top x_j \geq 0 } \right].
\end{align*}
Let $\lambda = \lambda_{\min} (H^{\cts}) $. If $m = \Omega( \lambda^{-2} n^2\log (n/\delta) )$, we have 
\begin{align*}
\| H^{\dis} - H^{\cts} \|_F \leq \frac{ \lambda }{4}, \mathrm{~and~} \lambda_{\min} ( H^{\dis} ) \geq \frac{3}{4} \lambda.
\end{align*}
hold with probability at least $1-\delta$.
\end{lemma}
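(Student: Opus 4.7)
The plan is to prove the lemma by a direct entry-wise concentration argument followed by a union bound. The key observation is that for each fixed pair $(i,j) \in [n] \times [n]$, the entry $H^{\dis}_{i,j}$ is an average of $m$ independent, identically distributed random variables
\begin{align*}
Z_r^{(i,j)} := x_i^\top x_j \, \mathbf{1}_{w_r^\top x_i \geq 0, w_r^\top x_j \geq 0},
\end{align*}
where $w_r \sim \N(0,I_d)$ are independent. Each $Z_r^{(i,j)}$ is bounded: $|Z_r^{(i,j)}| \leq |x_i^\top x_j| \leq \|x_i\|_2 \|x_j\|_2 \leq 1$, and its expectation is exactly $H^{\cts}_{i,j}$. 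So each entry of $H^{\dis}$ concentrates around the corresponding entry of $H^{\cts}$.

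Concretely, the first step is to apply Hoeffding's inequality (or equivalently Bernstein, since we have bounded variables with controlled variance) to the sum $H^{\dis}_{i,j} - H^{\cts}_{i,j} = \frac{1}{m}\sum_{r=1}^m (Z_r^{(i,j)} - \E Z_r^{(i,j)})$. For any target accuracy $\tau > 0$, Hoeffding gives
\begin{align*}
\Pr\left[\, |H^{\dis}_{i,j} - H^{\cts}_{i,j}| > \tau \,\right] \leq 2\exp(-\Omega(m \tau^2)).
\end{align*}
The second step is to choose $\tau = \lambda/(4n)$ and take a union bound over all $n^2$ pairs $(i,j)$. For the failure probability to be at most $\delta$, it suffices that $m \tau^2 \gtrsim \log(n^2/\delta)$, i.e. $m = \Omega(\lambda^{-2} n^2 \log(n/\delta))$, which is exactly the hypothesis.

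The third step is to convert the entry-wise bound to the Frobenius norm bound: on the good event,
\begin{align*}
\|H^{\dis} - H^{\cts}\|_F^2 = \sum_{i,j} (H^{\dis}_{i,j} - H^{\cts}_{i,j})^2 \leq n^2 \cdot \left(\frac{\lambda}{4n}\right)^2 = \frac{\lambda^2}{16},
\end{align*}
giving $\|H^{\dis}-H^{\cts}\|_F \leq \lambda/4$. Finally, since the spectral norm is dominated by the Frobenius norm, Weyl's inequality yields $\lambda_{\min}(H^{\dis}) \geq \lambda_{\min}(H^{\cts}) - \|H^{\dis}-H^{\cts}\| \geq \lambda - \lambda/4 = 3\lambda/4$.

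I do not expect any genuine obstacle here — this is essentially the template proof from \cite{dzps19}. The only mildly subtle point is the choice of entry-wise tolerance $\tau = \lambda/(4n)$, which is dictated by the Frobenius aggregation step and is precisely what forces the $n^2$ factor in the required width. In fact, as the introduction of the paper emphasizes, this is exactly the looseness the paper improves upon later via Proposition~\ref{pro:main} (matrix Chernoff) and by working directly with $\|\cdot\|_F$ via per-entry Bernstein with variance control; for the present lemma, however, the naive Hoeffding-plus-union-bound suffices.
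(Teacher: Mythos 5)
Your proof is correct and is essentially the paper's own argument: per-entry Hoeffding bound on $H^{\dis}_{i,j}-H^{\cts}_{i,j}$, union bound over the $n^2$ pairs, aggregation into the Frobenius norm, and Weyl's inequality for the eigenvalue conclusion. The only cosmetic difference is that the paper fixes the per-entry tolerance in terms of $m$ and $\delta$ and then solves for $m$, whereas you fix $\tau=\lambda/(4n)$ up front and solve for $m$ from the union-bound requirement; these are equivalent reparametrizations of the same calculation.
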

The proof can be found in Appendix \ref{sec:missing_proof}.

We define the event
\begin{align*}
A_{i,r} = \left\{ \exists u : \| u - \wt{w}_r \|_2 \leq R, {\bf 1}_{ x_i^\top \wt{w}_r \geq 0 } \neq {\bf 1}_{ x_i^\top u \geq 0 } \right\}.
\end{align*}
Note this event happens if and only if $| \wt{w}_r^\top x_i | < R$. Recall that $\wt{w}_r \sim \N(0,I)$. By anti-concentration inequality of Gaussian (Lemma~\ref{lem:anti_gaussian}), we have
\begin{align}\label{eq:Air_bound}
\Pr[ A_{i,r} ] = \Pr_{ z \sim \N(0,1) } [ | z | < R ] \leq \frac{ 2 R }{ \sqrt{2\pi} }.
\end{align}

\subsection{Bounding changes of $H$ when $w$ is in a small ball}
We improve the Lemma 3.2 in \cite{dzps19} from the two perspective : one is the probability, and the other is upper bound on spectral norm.

\begin{lemma}[perturbed $w$]\label{lem:3.2}
Let $R \in (0,1)$. If $\wt{w}_1, \cdots, \wt{w}_m$ are i.i.d. generated ${\N}(0,I)$. For any set of weight vectors $w_1, \cdots, w_m \in \R^d$ that satisfy for any $r\in [m]$, $\| \wt{w}_r - w_r \|_2 \leq R$, then the $H : \R^{m \times d} \rightarrow \R^{n \times n}$ defined
\begin{align*}
    H(w)_{i,j} =  \frac{1}{m} x_i^\top x_j \sum_{r=1}^m {\bf 1}_{ w_r^\top x_i \geq 0, w_r^\top x_j \geq 0 } .
\end{align*}
Then we have
\begin{align*}
\| H (w) - H(\wt{w}) \|_F < 2 n R,
\end{align*}
holds with probability at least $1-n^2 \cdot \exp(-m R /10)$.
\end{lemma}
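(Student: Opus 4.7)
The plan is to bound $\|H(w) - H(\wt{w})\|_F$ entrywise by counting, for each pair $(i,j)$, the number of neurons whose joint sign pattern flips under the perturbation, and then apply a Chernoff-type concentration to this count. The main gain over the earlier argument in \cite{dzps19} is to exploit independence across $r$ rather than invoking Markov's inequality on an $\ell_1$ relaxation.

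First I would reduce to an entrywise bound. Writing
\begin{align*}
H(w)_{i,j} - H(\wt{w})_{i,j} = \frac{x_i^\top x_j}{m} \sum_{r=1}^m \left( {\bf 1}_{w_r^\top x_i \geq 0,\, w_r^\top x_j \geq 0} - {\bf 1}_{\wt{w}_r^\top x_i \geq 0,\, \wt{w}_r^\top x_j \geq 0} \right),
\end{align*}
each summand lies in $\{-1,0,1\}$ and vanishes unless the sign of either $w_r^\top x_i$ or $w_r^\top x_j$ differs from that of its tilde counterpart. Since $\|w_r - \wt{w}_r\|_2 \leq R$ and $\|x_i\|_2 \leq 1$, such a flip forces $|\wt{w}_r^\top x_i| \leq R$ (or the analogous inequality for $j$), which is exactly the event $A_{i,r}$ (respectively $A_{j,r}$). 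Hence
\begin{align*}
|H(w)_{i,j} - H(\wt{w})_{i,j}| \leq \frac{|x_i^\top x_j|}{m} T_{i,j}, \quad \text{where} \quad T_{i,j} := \sum_{r=1}^m {\bf 1}_{A_{i,r} \cup A_{j,r}}.
\end{align*}
The crucial point is that the bound on $T_{i,j}$ depends only on the Gaussians $\wt{w}_r$, so it holds uniformly over all admissible $w$.

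Next I would control $T_{i,j}$ by concentration. The Bernoulli indicators $\{{\bf 1}_{A_{i,r} \cup A_{j,r}}\}_{r=1}^m$ are independent across $r$ because each depends only on $\wt{w}_r$, and by a union bound with \eqref{eq:Air_bound} their common mean satisfies $\Pr[A_{i,r} \cup A_{j,r}] \leq \Pr[A_{i,r}] + \Pr[A_{j,r}] \leq 4R/\sqrt{2\pi}$, so $\E[T_{i,j}] \lesssim 1.6\, mR$. A multiplicative Chernoff (or Bernstein) bound with an appropriately tuned slack above the mean then yields
\begin{align*}
\Pr\!\left[ T_{i,j} \geq 2 m R \right] \leq \exp(-mR/10).
\end{align*}
A union bound over the $n^2$ pairs $(i,j)$ guarantees $T_{i,j} < 2mR$ simultaneously for all $(i,j)$ with probability at least $1 - n^2 \exp(-mR/10)$. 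On this event, using $|x_i^\top x_j| \leq 1$,
\begin{align*}
\|H(w) - H(\wt{w})\|_F^2 \leq \sum_{i,j} \frac{(x_i^\top x_j)^2\, T_{i,j}^2}{m^2} < 4 R^2 \sum_{i,j} (x_i^\top x_j)^2 \leq 4 n^2 R^2,
\end{align*}
and taking square roots gives the claim.

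The main obstacle is purely quantitative: carefully tracking the constant in the Chernoff exponent so that the tail comes out to $\exp(-mR/10)$ while maintaining $T_{i,j} < 2mR$. Conceptually, the key move is to abandon the loose chain $\|\cdot\|_F \leq \|\cdot\|_1$ together with a Markov bound on the expected total count used in \cite{dzps19}, and instead apply an entrywise exponential-tail concentration that exploits mutual independence of the $m$ neurons. This is precisely what upgrades the failure probability from polynomial to logarithmic in $1/\delta$, matching the broader theme of the paper.
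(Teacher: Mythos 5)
Your proposal follows the paper's own argument: bound each entry $|H(w)_{i,j} - H(\wt{w})_{i,j}|$ by an average of $m$ independent indicators that depend only on the $\wt{w}_r$, apply a Bernstein/Chernoff tail to each of the $n^2$ entries, and union bound. Replacing the paper's signed summands $s_{r,i,j}$ by the nonnegative envelope $T_{i,j}=\sum_{r=1}^m {\bf 1}_{A_{i,r}\cup A_{j,r}}$ is a cleaner packaging of the same idea, and as a bonus it bounds $\bigl|\sum_r s_{r,i,j}\bigr|$ directly and makes the uniformity over the adversarially-chosen $w$ manifest, whereas the paper's write-up only exhibits the upper tail of $\sum_r s_{r,i,j}$. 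The one step that does not go through as written is $\Pr[T_{i,j}\geq 2mR]\leq \exp(-mR/10)$: since $\E[T_{i,j}]$ can be as large as $\approx 1.6mR$ in the worst case, a $2mR$ threshold is only about $1.25\times$ the mean, and Bernstein gives an exponent closer to $mR/22$. You can either raise the threshold to $3mR$ (which is in fact what the paper's own computation uses, yielding $\|H(w)-H(\wt{w})\|_F < 3nR$ rather than $2nR$) or accept a larger constant than $10$; either way the discrepancy is a constant-factor rescaling of $R$ or $m$ and does not affect the downstream $m=\Omega(\cdot)$ conclusion.
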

\begin{proof}

The random variable we care is

\begin{align*}
& ~ \sum_{i=1}^n \sum_{j=1}^n | H(\wt{w})_{i,j} - H(w)_{i,j} |^2 \\
\leq & ~ \frac{1}{m^2} \sum_{i=1}^n \sum_{j=1}^n \left( \sum_{r=1}^m {\bf 1}_{ \wt{w}_r^\top x_i \geq 0, \wt{w}_r^\top x_j \geq 0} - {\bf 1}_{ w_r^\top x_i \geq 0 , w_r^\top x_j \geq 0 } \right)^2 \\
= & ~ \frac{1}{m^2} \sum_{i=1}^n \sum_{j=1}^n  \Big( \sum_{r=1}^m s_{r,i,j} \Big)^2 ,
\end{align*}

where the last step follows from for each $r,i,j$, we define
\begin{align*}
s_{r,i,j} :=  {\bf 1}_{ \wt{w}_r^\top x_i \geq 0, \wt{w}_r^\top x_j \geq 0} - {\bf 1}_{ w_r^\top x_i \geq 0 , w_r^\top x_j \geq 0 } .
\end{align*} 

We consider $i,j$ are fixed. We simplify $s_{r,i,j}$ to $s_r$.

Then $s_r$ is a random variable that only depends on $\wt{w}_r$.
Since $\{\wt{w}_r\}_{r=1}^m$ are independent,
$\{s_r\}_{r=1}^m$ are also mutually independent.

If   $\neg A_{i,r}$ and $\neg A_{j,r}$ happen,
then 
\begin{align*}
\left| {\bf 1}_{ \wt{w}_r^\top x_i \geq 0, \wt{w}_r^\top x_j \geq 0} - {\bf 1}_{ w_r^\top x_i \geq 0 , w_r^\top x_j \geq 0 } \right|=0.
\end{align*}
If   $A_{i,r}$ or $A_{j,r}$ happen,
then 
\begin{align*}
\left| {\bf 1}_{ \wt{w}_r^\top x_i \geq 0, \wt{w}_r^\top x_j \geq 0} - {\bf 1}_{ w_r^\top x_i \geq 0 , w_r^\top x_j \geq 0 } \right|\leq 1.
\end{align*}
So we have 
\begin{align*}
 \E_{\wt{w}_r}[s_r]\leq \E_{\wt{w}_r} \left[ {\bf 1}_{A_{i,r}\vee A_{j,r}} \right] 
 \leq & ~ \Pr[A_{i,r}]+\Pr[A_{j,r}] \\
 \leq & ~ \frac {4 R}{\sqrt{2\pi}} \\
 \leq & ~ 2 R,
\end{align*}
and 
\begin{align*}
    \E_{\wt{w}_r} \left[ \left(s_r-\E_{\wt{w}_r}[s_r] \right)^2 \right]
    = & ~ \E_{\wt{w}_r}[s_r^2]-\E_{\wt{w}_r}[s_r]^2 \\
    \leq & ~ \E_{\wt{w}_r}[s_r^2]\\
    \leq & ~\E_{\wt{w}_r} \left[ \left( {\bf 1}_{A_{i,r}\vee A_{j,r}} \right)^2 \right] \\
     \leq & ~ \frac {4R}{\sqrt{2\pi}} \\
     \leq  &~ 2 R .
\end{align*}
We also have $|s_r|\leq 1$.
So we can apply Bernstein inequality (Lemma~\ref{lem:bernstein}) to get for all $t>0$,{\small
\begin{align*}
    \Pr \left[\sum_{r=1}^m s_r\geq 2m R +mt \right]
    \leq & ~ \Pr \left[\sum_{r=1}^m (s_r-\E[s_r])\geq mt \right]\\
    \leq & ~ \exp \left( - \frac{ m^2t^2/2 }{ 2m R   + mt/3 } \right).
\end{align*}}
Choosing $t = R$, we get
\begin{align*}
    \Pr \left[\sum_{r=1}^m s_r\geq 3mR  \right]
    \leq & ~ \exp \left( -\frac{ m^2  R^2 /2 }{ 2 m  R + m  R /3 } \right) \\
     \leq & ~ \exp \left( - m R / 10 \right) .
\end{align*}
Thus, we can have
\begin{align*}
\Pr \left[ \frac{1}{m} \sum_{r=1}^m s_r \geq 3  R \right] \leq \exp( - m R /10 ).
\end{align*}
Therefore, we complete the proof.
\end{proof}

\begin{table*}\caption{Table of Parameters for the $m = \wt{\Omega}(n^4)$ result in Section~\ref{sec:quartic_suffices}. {\bf Nt.} stands for notations. 
 $m$ is the width of neural network. $n$ is the number of input data points. $\delta$ is the failure probability.}
\centering
\begin{tabular}{ | l| l| l| l| } 
\hline
{\bf Nt.} & {\bf Choice} & {\bf Place} & {\bf Comment} \\\hline
$\lambda$ & $:= \lambda_{\min}(H^{\cts}) $ & Assumption~\ref{ass:data_dependent_assumption} & Data-dependent \\ \hline
$R$ & $\lambda/n$ & Eq.~\eqref{eq:choice_of_eta_R} & Maximal allowed movement of weight \\ \hline
$D_{\cts}$ & $\frac{ \sqrt{n} \| y - u(0) \|_2 }{ \sqrt{m} \lambda }$ & Lemma~\ref{lem:3.3} & Actual moving distance of weight, continuous case  \\ \hline
$D$ & $\frac{ 4\sqrt{n} \| y - u(0) \|_2 }{ \sqrt{m} \lambda }$ & Lemma~\ref{lem:4.1} & Actual moving distance of weight, discrete case  \\ \hline
$\eta$ & $\lambda/n^2$ & Eq.~\eqref{eq:choice_of_eta_R} & Step size of gradient descent \\ \hline
$m$ & $\lambda^{-2} n^2 \log(n/\delta)$ & Lemma~\ref{lem:3.1} & Bounding discrete and continuous \\ \hline
$m$ & $\lambda^{-4} n^4 \log^3(n/\delta)$  & Lemma~\ref{lem:3.4} and Claim~\ref{cla:yu0} & $D < R$ and $\| y - u(0) \|_2^2 = \wt{O}(n)$ \\ \hline
\end{tabular}
\end{table*}

\subsection{Loss is decreasing while weights are not changing much}

For simplicity of notation, we provide the following definition.
\begin{definition}
For any $s \in [0,t]$, we define matrix $H(s) \in \R^{n \times n}$ as follows
\begin{align*}
H(s)_{i,j} = \frac{1}{m} \sum_{r=1}^m x_i^\top x_j {\bf 1}_{ w_r(s)^\top x_i \geq 0, w_r(s)^\top x_j \geq 0 }.
\end{align*} 
\end{definition}
With $H$ defined,
it becomes more convenient to write the dynamics of predictions (proof can be found in Appendix \ref{sec:missing_proof}).
\begin{fact}\label{fact:dudt}
$
\frac{\d}{\d t} u(t)= H(t) \cdot (y-u(t)) .
$
\end{fact}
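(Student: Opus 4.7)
The plan is to differentiate $u_i(t)$ with respect to $t$ using the chain rule, substitute the weight dynamics from \eqref{eq:wr_derivative} and the gradient formulas already given in \eqref{eq:relu_derivative} and \eqref{eq:gradient}, and then recognize the resulting double sum as exactly the $(i,j)$-entry of $H(t)$ acting on $y-u(t)$.

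Concretely, I would start from \eqref{eq:ut_def} and write
\begin{align*}
\frac{\d u_i(t)}{\d t}=\sum_{r=1}^m \Bigl\langle \frac{\partial f(W(t),x_i,a)}{\partial w_r},\; \frac{\d w_r(t)}{\d t}\Bigr\rangle.
\end{align*}
Then I would plug in \eqref{eq:relu_derivative} for $\partial f/\partial w_r$, and plug in \eqref{eq:wr_derivative} together with \eqref{eq:gradient} for $\d w_r/\d t$. This expresses the right-hand side as a triple sum over $r\in[m]$ and $j\in[n]$ of the quantity
\begin{align*}
\frac{1}{m}\,a_r^2\, x_i^\top x_j\, {\bf 1}_{w_r(t)^\top x_i\geq 0}\, {\bf 1}_{w_r(t)^\top x_j\geq 0}\,(y_j-u_j(t)).
\end{align*}
Since the second-layer weights satisfy $a_r\in\{-1,+1\}$ (the standard initialization convention used throughout \cite{dzps19}, on which this formulation is based), we have $a_r^2=1$, and the inner sum over $r$ collapses to exactly $H(t)_{i,j}$ by its definition. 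Swapping the order of summation then yields $\frac{\d u_i(t)}{\d t}=\sum_{j=1}^n H(t)_{i,j}(y_j-u_j(t))$, which is the $i$-th coordinate of $H(t)(y-u(t))$.

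The calculation is essentially bookkeeping, so there is no deep obstacle. The only mildly delicate point is that $\phi$ is not differentiable at $0$, but the set $\{w : w^\top x_i=0\}$ has Lebesgue measure zero, so for almost every initialization the ReLU derivative ${\bf 1}_{w_r^\top x_i\geq 0}$ is well-defined along the gradient flow; this is the same convention already adopted when writing down \eqref{eq:relu_derivative} and \eqref{eq:gradient}, so no additional justification is needed here.
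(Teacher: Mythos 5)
Your proposal is correct and follows essentially the same route as the paper's own proof: differentiate $u_i(t)$ via the chain rule, substitute the gradient-flow dynamics \eqref{eq:wr_derivative} and the gradient formula \eqref{eq:gradient}, and recognize the resulting sum over $r$ as $H(t)_{i,j}$ (using $a_r^2=1$, which the paper leaves implicit by writing the inner product of $\partial f/\partial w_r$ with itself). The remarks about the measure-zero nondifferentiability of ReLU are not made in the paper but do no harm.
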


We state two tools from previous work(delayed the proof into Appendix \ref{sec:missing_proof})

\begin{lemma}[Lemma 3.3 in \cite{dzps19}]\label{lem:3.3}
Suppose for $0 \leq s \leq t$, $\lambda_{\min} ( H( w(s) ) ) \geq \lambda / 2$. Let $D_{\cts}$ be defined as
$
D_{\cts} := \frac{ \sqrt{n} \| y - u(0) \|_2 }{ \sqrt{m} \lambda }.
$
Then we have 
\begin{align*}
1. & ~ & \| w_r(t) - w_r(0) \|_2 \leq & ~ D_{\cts} , \forall r \in [m], \\
2. & ~ & \| y - u(t) \|_2^2 \leq  &~ \exp( - \lambda t ) \cdot \| y - u(0) \|_2^2.
\end{align*}
\end{lemma}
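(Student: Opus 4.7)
The plan is to prove the two conclusions in order, using Fact~\ref{fact:dudt} as the main engine for part 2 and then bootstrapping part 1 from part 2. The overall proof is a standard continuous-time (gradient flow) argument, so no delicate probabilistic ingredients are required once the ODE in Fact~\ref{fact:dudt} is in hand.

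For part 2, I would first differentiate the squared residual in time. Using Fact~\ref{fact:dudt},
\begin{align*}
\frac{\d}{\d t}\|y-u(t)\|_2^2
= -2(y-u(t))^\top \frac{\d u(t)}{\d t}
= -2(y-u(t))^\top H(t) (y-u(t)).
\end{align*}
By hypothesis $\lambda_{\min}(H(s))\geq \lambda/2$ for every $s\in[0,t]$, so the quadratic form is at least $(\lambda/2)\|y-u(s)\|_2^2$ along the trajectory, giving
\begin{align*}
\frac{\d}{\d t}\|y-u(t)\|_2^2 \leq -\lambda\,\|y-u(t)\|_2^2.
\end{align*}
A Gr\"onwall/integrating-factor argument (multiply by $e^{\lambda t}$ and observe the resulting derivative is nonpositive) then yields the claimed bound $\|y-u(t)\|_2^2 \leq e^{-\lambda t}\|y-u(0)\|_2^2$.

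For part 1, I would control the velocity of each weight by plugging the explicit gradient formula \eqref{eq:gradient} into the ODE \eqref{eq:wr_derivative}. Using $|a_r|\leq 1$, $\|x_i\|_2\leq 1$, the indicator being at most $1$, and Cauchy--Schwarz on $\sum_i|u_i(t)-y_i|$, we get
\begin{align*}
\Bigl\| \frac{\d w_r(t)}{\d t}\Bigr\|_2 \leq \frac{1}{\sqrt{m}} \sum_{i=1}^n |u_i(t)-y_i|
\leq \frac{\sqrt{n}}{\sqrt{m}}\,\|y-u(t)\|_2.
\end{align*}
Substituting the exponential bound from part 2 and integrating from $0$ to $t$ gives
\begin{align*}
\|w_r(t)-w_r(0)\|_2 \leq \int_0^t \frac{\sqrt{n}}{\sqrt{m}}\, e^{-\lambda s/2}\|y-u(0)\|_2\,\d s
\leq \frac{\sqrt{n}\,\|y-u(0)\|_2}{\sqrt{m}\,\lambda} \cdot O(1),
\end{align*}
which matches $D_{\cts}$ (up to the absolute constant that is absorbed into the notation).

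There is no real obstacle here beyond carefully tracking constants; the only mild subtlety is that the hypothesis $\lambda_{\min}(H(s))\geq \lambda/2$ is assumed to hold for the entire interval $[0,t]$, which is what makes the differential inequality globally usable. In the full argument of \cite{dzps19}, this hypothesis is maintained by a separate bootstrap using part 1 together with Lemma~\ref{lem:3.2}, but within this lemma itself it is simply taken as given, so the proof reduces to the two short calculations above.
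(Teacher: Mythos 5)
Your proof is correct and follows essentially the same route as the paper: differentiate $\|y-u(t)\|_2^2$ via Fact~\ref{fact:dudt}, use the eigenvalue hypothesis and Gr\"onwall for part 2, then bound $\|\d w_r/\d s\|_2$ by $\frac{\sqrt{n}}{\sqrt{m}}\|y-u(s)\|_2$ and integrate the exponential decay for part 1. (You correctly write the decay rate as $e^{-\lambda s/2}$ after taking square roots, which in fact yields $2D_{\cts}$; the paper's displayed chain uses $e^{-\lambda s}$ and thereby recovers $D_{\cts}$ on the nose, so the constant discrepancy you flag is real but immaterial.)
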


\begin{lemma}[Lemma 3.4 in \cite{dzps19}]\label{lem:3.4}
If $D_{\cts}<R$.
then for all $t\geq 0$,
$\lambda_{\min}(H(t))\geq \frac{1}{2}\lambda$.
Moreover,
\begin{align*}
1. & ~ & \|w_r(t)-w_r(0)\|_2 \leq & ~ D_{\cts}, \forall r \in [m], \\
2. & ~ & \|y-u(t)\|_2^2 \leq & ~ \exp(-\lambda t) \cdot \|y-u(0)\|_2^2.
\end{align*}
\end{lemma}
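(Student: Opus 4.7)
The plan is to run a standard bootstrap (contradiction) argument that combines Lemma~\ref{lem:3.1}, Lemma~\ref{lem:3.2}, and Lemma~\ref{lem:3.3}. Define
\[
t^* := \inf\bigl\{ t \geq 0 : \exists r \in [m], \; \|w_r(t) - w_r(0)\|_2 > R \bigr\},
\]
with the convention $t^* = \infty$ if the set is empty. I would like to show $t^* = \infty$; once this is established, $\lambda_{\min}(H(t)) \geq \lambda/2$ everywhere will fall out immediately, and the two quantitative conclusions follow from Lemma~\ref{lem:3.3} applied on $[0,\infty)$.

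Assume for contradiction that $t^* < \infty$. For every $s \in [0, t^*)$, all weights lie within $R$ of initialization, so Lemma~\ref{lem:3.2} yields $\|H(s) - H(0)\|_F < 2nR$, and Lemma~\ref{lem:3.1} gives $\lambda_{\min}(H(0)) \geq 3\lambda/4$. Combining Weyl's inequality with the choice $R = O(\lambda/n)$ (small enough that $3\lambda/4 - 2nR \geq \lambda/2$) yields $\lambda_{\min}(H(s)) \geq \lambda/2$ throughout $[0, t^*)$. This is exactly the hypothesis of Lemma~\ref{lem:3.3}, which then gives
\[
\|w_r(s) - w_r(0)\|_2 \leq D_{\cts} \qquad \text{for all } s \in [0,t^*),\; r \in [m].
\]
Since $D_{\cts} < R$ by assumption, and $w_r(\cdot)$ is continuous in $t$ (it satisfies the ODE~\eqref{eq:wr_derivative} with a bounded right-hand side), we may pass to the limit $s \to t^*$ and conclude $\|w_r(t^*) - w_r(0)\|_2 \leq D_{\cts} < R$ for every $r$. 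This contradicts the definition of $t^*$, so $t^* = \infty$.

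Having $t^* = \infty$ means all weights stay within $R$ of initialization for all $t$, so repeating the Lemma~\ref{lem:3.1}/Lemma~\ref{lem:3.2}/Weyl argument gives $\lambda_{\min}(H(t)) \geq \lambda/2$ for all $t \geq 0$. The two ``moreover'' statements then follow from a single application of Lemma~\ref{lem:3.3}.

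The main subtlety to handle carefully is that $H(t)$ is not a continuous function of $t$: the indicator ${\bf 1}_{w_r(t)^\top x_i \geq 0}$ jumps whenever some inner product crosses zero. However, because $w_r(t)$ itself is continuous, the set of times at which some indicator flips is a closed set of measure zero along the trajectory, and at every flip the jump in $H(t)$ is $O(1/m)$, which is absorbed by the slack we have in Lemma~\ref{lem:3.2}. So the bootstrap can be phrased purely in terms of the displacement $\|w_r(t) - w_r(0)\|_2$, which \emph{is} continuous, and the eigenvalue bound is then derived from the displacement bound rather than treated as an independent bootstrap variable. This is the cleanest way I see to sidestep any measurability issue without weakening the conclusion.
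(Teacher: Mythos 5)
Your proposal is correct and follows essentially the same route as the paper: both run a bootstrap/contradiction argument around the infimum time at which some weight's displacement first reaches $R$, using Lemma~\ref{lem:3.2} (plus Lemma~\ref{lem:3.1} and Weyl) to keep $\lambda_{\min}(H)\geq \lambda/2$ before that time and Lemma~\ref{lem:3.3} to force the displacement below $D_{\cts}<R$, yielding the contradiction. The only superficial difference is your closing paragraph about discontinuity of $H(t)$; this is unnecessary worry, since Lemma~\ref{lem:3.2} is a pointwise statement in weight space (not time) and the bootstrap variable is the continuous displacement $\|w_r(t)-w_r(0)\|_2$ in both proofs — but it doesn't affect correctness.
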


\subsection{Convergence}
In this section we show that when the neural network is over-parametrized,
the training error converges to 0 at linear rate.
Our main result is Theorem \ref{thm:quartic}.

\begin{theorem}\label{thm:quartic}
Recall that $\lambda=\lambda_{\min}(H^{\cts})>0$.
Let $m = \Omega( \lambda^{-4} n^4 \log (n/\delta) )$, we i.i.d. initialize $w_r \in {\N}(0,I)$, $a_r$ sampled from $\{-1,+1\}$ uniformly at random for $r\in [m]$, and we set the step size $\eta = O( \lambda / n^2 )$ then with probability at least $1-\delta$ over the random initialization we have for $k = 0,1,2,\cdots$
\begin{align}\label{eq:quartic_condition}
\| u (k) - y \|_2^2 \leq ( 1 - \eta \lambda / 2 )^k \cdot \| u (0) - y \|_2^2.
\end{align}
\end{theorem}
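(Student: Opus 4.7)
The plan is to prove Theorem~\ref{thm:quartic} by induction on the iteration $k$, following the optimization framework of~\cite{dzps19} but combined with the tighter entrywise Bernstein estimate already used in Lemma~\ref{lem:3.2}. The induction hypothesis at step $k$ has three parts: (i) $\|w_r(k)-w_r(0)\|_2 \leq D := 4\sqrt{n}\|y-u(0)\|_2/(\sqrt{m}\lambda)$ for every $r\in[m]$; (ii) $\lambda_{\min}(H(k))\geq \lambda/2$; and (iii) the contraction $\|u(k)-y\|_2^2 \leq (1-\eta\lambda/2)^k\|u(0)-y\|_2^2$. For the base case $k=0$, Lemma~\ref{lem:3.1} furnishes $\lambda_{\min}(H(0))\geq 3\lambda/4$ once $m=\Omega(\lambda^{-2}n^2\log(n/\delta))$, while a Hoeffding bound on the independent sub-Gaussian sum defining $u_i(0) = \frac{1}{\sqrt{m}}\sum_r a_r \phi(w_r^\top x_i)$ yields $\|y-u(0)\|_2^2 = \widetilde O(n)$ with probability $1-\delta/3$.

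For the inductive step I would decompose $u_i(k+1)-u_i(k) = v_{1,i}+v_{2,i}$, where $v_{1,i}$ sums over those neurons $r$ whose ReLU indicator ${\bf 1}_{w_r^\top x_i \geq 0}$ does \emph{not} flip between $w_r(k)$ and $w_r(k+1)$, and $v_{2,i}$ collects the rest. On the non-flipping set the ReLU is locally linear, so combining~\eqref{eq:w_update} with~\eqref{eq:gradient} gives $v_{1,i} = -\eta[H(k)(u(k)-y)]_i$ exactly, and the induction hypothesis (ii) produces the desired factor $(1-\eta\lambda/2)$ after squaring. The error $v_{2,i}$ is controlled by counting, per data index $i$, how many neurons can flip their sign in a single step: using the per-step move $\eta\|\partial L(W(k))/\partial w_r\|_2 \leq \eta\sqrt{n}\|u(k)-y\|_2/\sqrt{m}$ as the radius $R$ in~\eqref{eq:Air_bound}, taking an expectation and applying Bernstein across the $m$ independent neurons produces an $O(\eta^2)$-order perturbation which is absorbed by the choice $\eta=O(\lambda/n^2)$.

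The main obstacle is closing (i) at step $k+1$, which is what ultimately forces the $n^4$ bound on $m$. Summing the per-step movements and using (iii) geometrically yields $\|w_r(k+1)-w_r(0)\|_2 \lesssim \sqrt{n}\|y-u(0)\|_2/(\sqrt{m}\lambda) < D$. For this to be strong enough to re-invoke Lemma~\ref{lem:3.2} and thereby recover (ii) at the next step, I need $\|H(k+1)-H(0)\|_F \leq \lambda/4$, i.e.\ $2nD \leq \lambda/4$, so that $D\leq R=\Theta(\lambda/n)$. Rearranging gives $m = \Omega(n^3\|y-u(0)\|_2^2/\lambda^4) = \Omega(\lambda^{-4}n^4\log(n/\delta))$, which is exactly the stated over-parametrization. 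A final union bound over the initial spectrum (Lemma~\ref{lem:3.1}), the initial loss concentration, and the per-step sign-flip estimate (Lemma~\ref{lem:3.2}) absorbs the logarithmic factors and completes the induction.
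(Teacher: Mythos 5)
The overall skeleton of your argument --- induction on $k$, the split of $u_i(k+1)-u_i(k)$ into a ``non-flipping'' part $v_{1,i}$ and a ``flipping'' part $v_{2,i}$, Bernstein across neurons, and especially the observation that the constraint $D<R$ with $R=\Theta(\lambda/n)$ is what forces $m=\Omega(\lambda^{-4}n^4\log(n/\delta))$ --- matches the paper's proof. But the way you control the flipping part is where the proposal breaks down, and the break is not cosmetic.

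First, your identity $v_{1,i}=-\eta[H(k)(u(k)-y)]_i$ is not exact. Plugging Eq.~\eqref{eq:w_update} and Eq.~\eqref{eq:gradient} into $v_{1,i}$ and restricting to the non-flipping indices $r$ produces only the partial sum
\begin{align*}
v_{1,i}=-\frac{\eta}{m}\sum_{j=1}^{n}(u_j(k)-y_j)\,x_i^\top x_j\sum_{r\notin F_i}\mathbf{1}_{w_r(k)^\top x_i\ge 0,\;w_r(k)^\top x_j\ge 0},
\end{align*}
where $F_i\subset[m]$ is your set of neurons whose sign flips for input $i$ on this step; the gap between this and $-\eta[H(k)(u(k)-y)]_i$ is precisely the contribution of the flipping neurons to $H(k)$, i.e.\ the paper's $H(k)^{\bot}$ term. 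That is a separate, first-order-in-$\eta$ error (the paper's $C_2$, of size $O(\eta nR)\|y-u(k)\|_2^2$) which you have silently dropped.

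Second, and more seriously, you want to bound $\Pr[r\in F_i]$ by anti-concentration (Eq.~\eqref{eq:Air_bound}) with the per-step move $\eta\sqrt{n}\|u(k)-y\|_2/\sqrt m$ playing the role of the radius $R$. That does not typecheck. Eq.~\eqref{eq:Air_bound} is a statement about $w_r(0)\sim\N(0,I)$, whereas $r\in F_i$ only tells you that $|w_r(k)^\top x_i|$ is at most the per-step move; transferring this back to the initialization gives $|w_r(0)^\top x_i|\le D+\text{(per-step move)}\approx D$, so the correct radius is of order $D$, not the per-step move. This inflates the estimate from your claimed $O(\eta^2)$ all the way to $O(\eta n D)=O(\eta\lambda)$ once $D<R=\Theta(\lambda/n)$, which is exactly the paper's true order for $C_2,C_3$. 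Independently of the radius, the event $\{r\in F_i\}$ depends on $u(k)$ and hence on all of $w_1(0),\ldots,w_m(0)$, so the indicators $\{\mathbf 1_{r\in F_i}\}_{r\in[m]}$ are \emph{not} mutually independent and Bernstein does not apply as you invoked it. The repair --- and it is what the paper does --- is to replace the step-dependent set $F_i$ with the step-\emph{independent} superset $\ov{S}_i=\{r:|w_r(0)^\top x_i|<R\}$ fixed at initialization: each $\mathbf 1_{r\in\ov S_i}$ is measurable in $w_r(0)$ alone (independence is restored), and $\ov S_i\supseteq F_i$ for every $k$ as long as the whole trajectory stays in the radius-$R$ ball, which is exactly the $D<R$ condition you already enforce. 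With $|\ov S_i|=O(mR)$ from Bernstein at initialization, both the missing $H^\perp$ correction and your $v_2$ come out as $O(\eta nR)\|y-u(k)\|_2^2$, the induction still closes, and the final $\Omega(\lambda^{-4}n^4\log(n/\delta))$ bound is unchanged --- but the bookkeeping you sketched does not produce it as written.
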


\paragraph{Correctness}
We prove Theorem \ref{thm:quartic} by induction.
The base case is $i=0$ and it is trivially true.
Assume for $i=0,\cdots,k$ we have proved Eq. \eqref{eq:quartic_condition} to be true.
We want to show Eq. \eqref{eq:quartic_condition} holds for $i=k+1$.

From the induction hypothesis,
we have the following Lemma (see proof in Appendix \ref{sec:missing_proof}) stating that the weights should not change too much.
\begin{lemma}[Corollary 4.1 in \cite{dzps19}]\label{lem:4.1}
If Eq. \eqref{eq:quartic_condition} holds for $i = 0, \cdots, k$, then we have for all $r\in [m]$
\begin{align*}
\| w_r(k+1) - w_r(0) \|_2 \leq \frac{ 4 \sqrt{n} \| y - u (0) \|_2 }{ \sqrt{m} \lambda } := D.
\end{align*}
\end{lemma}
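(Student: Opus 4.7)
The plan is to unfold the gradient descent update rule to express $w_r(k+1)-w_r(0)$ as a telescoping sum $-\eta \sum_{i=0}^{k} \frac{\partial L(W(i))}{\partial w_r(i)}$, and then to bound each gradient term by a scalar multiple of $\|u(i)-y\|_2$, so that the inductive hypothesis on the exponential decay of $\|u(i)-y\|_2^2$ converts the whole tail into a convergent geometric series.

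More concretely, first I would apply the triangle inequality to get $\|w_r(k+1)-w_r(0)\|_2 \le \eta \sum_{i=0}^{k}\bigl\| \tfrac{\partial L(W(i))}{\partial w_r(i)} \bigr\|_2$. Using the explicit expression for the gradient from \eqref{eq:gradient}, together with $|a_r|=1$, $\|x_j\|_2\le 1$, and the Cauchy--Schwarz inequality over the $n$ summands, each single-step gradient is bounded by $\tfrac{\sqrt n}{\sqrt m}\|u(i)-y\|_2$. Plugging in the induction hypothesis \eqref{eq:quartic_condition} yields $\|u(i)-y\|_2 \le (1-\eta\lambda/2)^{i/2}\|u(0)-y\|_2$.

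Summing this bound over $i=0,\dots,k$ gives
\begin{align*}
\|w_r(k+1)-w_r(0)\|_2 \;\le\; \eta\cdot\frac{\sqrt n}{\sqrt m}\|u(0)-y\|_2 \sum_{i=0}^{k}(1-\eta\lambda/2)^{i/2}.
\end{align*}
The geometric sum is at most $1/(1-\sqrt{1-\eta\lambda/2})$; using the elementary estimate $1-\sqrt{1-x}\ge x/2$ on $x\in[0,1]$ (valid since $\eta\lambda/2\le 1$ by the choice of $\eta$), this is at most $4/(\eta\lambda)$. The $\eta$ then cancels and we arrive at $\|w_r(k+1)-w_r(0)\|_2\le \tfrac{4\sqrt n\,\|u(0)-y\|_2}{\sqrt m\,\lambda}=D$.

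No step here is really an obstacle: the argument is essentially a discrete analogue of Lemma~\ref{lem:3.3} for the continuous flow, with the geometric sum $\sum_i q^i$ playing the role of the integral $\int_0^\infty e^{-\lambda t/2}\,dt$. The only place one has to be a little careful is the $1-\sqrt{1-x}\ge x/2$ inequality, which both produces the factor $4$ in the definition of $D$ (as opposed to $1$ in $D_{\cts}$) and implicitly uses $\eta\le 2/\lambda$, which is guaranteed by the choice $\eta=\Theta(\lambda/n^2)$ in Theorem~\ref{thm:quartic}.
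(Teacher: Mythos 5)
Your proof is correct and follows essentially the same route as the paper: triangle inequality on the telescoping sum of gradient steps, the bound $\|\partial L/\partial w_r\|_2\le \tfrac{\sqrt n}{\sqrt m}\|u(i)-y\|_2$ (Eq.~\eqref{eq:gradient_bound}), the induction hypothesis, and then summing the geometric tail. One minor note: you are actually more careful than the paper on the last step — the paper writes $\sum_{i\ge 0}(1-\eta\lambda/2)^{i/2}=\tfrac{2}{\eta\lambda}$, which is off by a factor of $2$ and should be the upper bound $\le \tfrac{4}{\eta\lambda}$ that you derive from $1-\sqrt{1-x}\ge x/2$, and this is exactly the bound needed to land on the stated $D$.
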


Next, we calculate the different of predictions between two consecutive iterations, analogue to $\frac{\d u_i(t)}{ \d t }$ term in Fact \ref{fact:dudt}.
For each $i \in [n]$, we have

\begin{align*}
& ~ u_i(k+1) - u_i(k) \\
= & ~ \frac{1}{ \sqrt{m} } \sum_{r=1}^m a_r \cdot \left( \phi( w_r(k+1)^\top x_i ) - \phi(w_r(k)^\top x_i ) \right) \\
= & ~ \frac{1}{\sqrt{m}} \sum_{r=1}^m a_r \cdot \left( \phi \left( \Big( w_r(k) - \eta \frac{ \partial L( W(k) ) }{ \partial w_r(k) } \Big)^\top x_i \right) - \phi ( w_r(k)^\top x_i ) \right).
\end{align*}

Here we divide the right hand side into two parts. $v_{1,i}$ represents the terms that the pattern does not change and $v_{2,i}$ represents the term that pattern may changes. For each $i \in [n]$,
we define the set $S_i\subset [m]$ as
\begin{align*}
    S_i:=\{r\in [m]:\forall 
    w\in \mathbb{R}^d \text{ s.t. } & ~ \|w-w_r(0)\|_2\leq R,\\
    & ~ \mathbf{1}_{w_r(0)^\top x_i\geq 0}=\mathbf{1}_{w^\top x_i\geq 0}\}.
\end{align*}
Then we define $v_{1,i}$ and $v_{2,i}$ as follows

\begin{align*}
v_{1,i} : = & ~ \frac{1}{ \sqrt{m} } \sum_{r \in S_i} a_r \left( \phi \left( \Big( w_r(k) - \eta \frac{ \partial L(W(k)) }{ \partial w_r(k) } \Big)^\top x_i \right) - \phi( w_r(k)^\top x_i ) \right), \\
v_{2,i} : = & ~ \frac{1}{ \sqrt{m} } \sum_{r \in \ov{S}_i} a_r \left( \phi \left( \Big( w_r(k) - \eta \frac{ \partial L(W(k)) }{ \partial w_r(k) } \Big)^\top x_i \right) - \phi( w_r(k)^\top x_i ) \right) .
\end{align*}

Define $H$ and $H^{\bot} \in \R^{n \times n}$ as
\begin{align*}
H(k)_{i,j} = & ~ \frac{1}{m} \sum_{r=1}^m x_i^\top x_j {\bf 1}_{ w_r(k)^\top x_i \geq 0, w_r(k)^\top x_j \geq 0 } , \\
H(k)^{\bot}_{i,j} = & ~ \frac{1}{m} \sum_{r\in \ov{S}_i} x_i^\top x_j {\bf 1}_{ w_r(k)^\top x_i \geq 0, w_r(k)^\top x_j \geq 0 } .
\end{align*}
and
\begin{align*}
C_1 = & ~ -2 \eta (y - u(k))^\top H(k) ( y - u(k) ) , \\
C_2 = & ~ 2 \eta ( y - u(k) )^\top H(k)^{\bot} ( y - u(k) ) , \\
C_3 = & ~ - 2 ( y - u(k) )^\top v_2 , \\
C_4 = & ~ \| u (k+1) - u(k) \|_2^2 . 
\end{align*}

Then we have (delayed the proof into Appendix \ref{sec:missing_proof})
\begin{claim}\label{cla:inductive_claim}
\begin{align*}
\| y - u(k+1) \|_2^2 = \| y - u(k) \|_2^2 + C_1 + C_2 + C_3 + C_4.
\end{align*}
\end{claim}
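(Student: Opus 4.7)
The plan is to open $\|y-u(k+1)\|_2^2$ by inserting $\pm u(k)$ and expanding the square:
\begin{align*}
\|y-u(k+1)\|_2^2 = \|y-u(k)\|_2^2 - 2(y-u(k))^\top(u(k+1)-u(k)) + \|u(k+1)-u(k)\|_2^2.
\end{align*}
The third term on the right is exactly $C_4$, so it suffices to verify that the cross term equals $C_1 + C_2 + C_3$.

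By the coordinate-wise decomposition $u_i(k+1) - u_i(k) = v_{1,i} + v_{2,i}$ coming from splitting $[m] = S_i \cup \ov{S}_i$, the piece $-2(y-u(k))^\top v_2$ matches $C_3$ by definition. The real content of the claim is therefore showing $-2(y-u(k))^\top v_1 = C_1 + C_2$.

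For this, fix $i \in [n]$ and $r \in S_i$. By the definition of $S_i$, the indicator ${\bf 1}_{w^\top x_i \geq 0}$ is constant on the $R$-ball around $w_r(0)$. Under the induction hypothesis for Theorem~\ref{thm:quartic}, Lemma~\ref{lem:4.1} places both $w_r(k)$ and $w_r(k+1)$ inside this ball, since $D < R$ by the parameter choice in Eq.~\eqref{eq:choice_of_eta_R}. Consequently the ReLU pattern at $x_i$ cannot flip between $w_r(k)$ and $w_r(k+1)$, so
\begin{align*}
\phi(w_r(k+1)^\top x_i) - \phi(w_r(k)^\top x_i) = -\eta \left(\frac{\partial L(W(k))}{\partial w_r(k)}\right)^\top x_i \cdot {\bf 1}_{w_r(k)^\top x_i \geq 0}.
\end{align*}
Substituting the gradient formula \eqref{eq:gradient}, using $a_r^2 = 1$, and swapping the sums over $r$ and $j$ yields
\begin{align*}
v_{1,i} = \eta \sum_{j=1}^n (y_j - u_j(k)) \cdot \frac{1}{m}\sum_{r \in S_i} x_i^\top x_j \, {\bf 1}_{w_r(k)^\top x_i \geq 0,\, w_r(k)^\top x_j \geq 0}.
\end{align*}
Writing $\sum_{r \in S_i} = \sum_{r=1}^m - \sum_{r \in \ov{S}_i}$ turns the inner average into $H(k)_{i,j} - H(k)^\bot_{i,j}$; hence $v_1 = \eta(H(k) - H(k)^\bot)(y-u(k))$, and multiplying on the left by $-2(y-u(k))^\top$ produces $C_1 + C_2$ exactly.

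The only non-routine step is the pattern-preservation argument for $r \in S_i$, which couples the definition of $S_i$, the movement bound from Lemma~\ref{lem:4.1}, and the parameter constraint $D < R$. Once that is in place, the identity collapses to an algebraic rearrangement driven by expanding the square, splitting the sum over $[m]$ into $S_i$ and $\ov{S}_i$, and invoking the definitions of $H(k)$ and $H(k)^\bot$.
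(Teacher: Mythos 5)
Your proposal is correct and follows the same route as the paper: expand the square, split $u(k+1)-u(k)$ into $v_1+v_2$, convert $v_1$ into $\eta(H(k)-H(k)^\bot)(y-u(k))$ via the gradient formula and $a_r^2=1$, then read off $C_1+C_2+C_3+C_4$. The one place you go beyond what the paper writes is making explicit why, for $r\in S_i$, the ReLU pattern at $x_i$ is the same for $w_r(k)$ and $w_r(k+1)$ (both lie within $D<R$ of $w_r(0)$ by the inductive hypothesis and Lemma~\ref{lem:4.1}, and $S_i$ is exactly the set of neurons whose sign at $x_i$ is frozen on that ball); the paper silently uses this linearization when it rewrites $v_{1,i}$, so spelling it out is a genuine, if small, improvement in rigor rather than a different approach.
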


Applying Claim~\ref{cla:C1}, \ref{cla:C2}, \ref{cla:C3} and \ref{cla:C4} gives
\begin{align*}
\| y - u(k+1) \|_2^2 
\leq & ~ \| y - u(k) \|_2^2 \\
 & ~ \cdot ( 1 - \eta \lambda + 8 \eta n R  + 8 \eta n R  + \eta^2 n^2 ).
\end{align*}

\paragraph{Choice of $\eta$ and $R$.}

Next, we want to choose $\eta$ and $R$ such that
\begin{equation}\label{eq:choice_of_eta_R}
( 1 - \eta \lambda + 8 \eta n R  + 8 \eta n R  + \eta^2 n^2 ) \leq (1-\eta\lambda/2) .
\end{equation}

If we set $\eta=\frac{\lambda }{4n^2}$ and $R=\frac{\lambda}{64n}$, we have 
\begin{align*}
8 \eta n R  + 8 \eta n R =16\eta nR \leq  \eta \lambda /4 ,
\mathrm{~~~and~~~} \eta^2 n^2 \leq  \eta \lambda / 4.
\end{align*}
This implies
\begin{align*}
\| y - u(k+1) \|_2^2 \leq & ~ \| y - u(k) \|_2^2 \cdot ( 1 - \eta \lambda / 2 )
\end{align*}
holds with probability at least $1-3n^2\exp(-mR/10)$.

\paragraph{Over-parameterization size, lower bound on $m$.}

We require 
\begin{align*}
 D=  \frac{4\sqrt{n}\|y-u(0)\|_2}{\sqrt{m}\lambda} < R = \frac{\lambda}{64n} ,
\text{~and~}  3n^2\exp(-mR/10)\leq  \delta .
\end{align*}
By Claim \ref{cla:yu0},
 it is sufficient to choose $m = \Omega( \lambda^{-4} n^4 \log(m/\delta)\log^2(n/\delta) )$.

\subsection{Technical Claims}
 \begin{claim}\label{cla:yu0}
For $0<\delta<1$,
with probability at least $1-\delta$,
\begin{align*}
\|y-u(0)\|_2^2=O(n\log(m/\delta)\log^2(n/\delta)).
\end{align*}
\end{claim}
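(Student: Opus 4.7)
The plan is to bound each coordinate $|y_i - u_i(0)|$ separately and then sum. Since $u_i(0) = \frac{1}{\sqrt{m}} \sum_{r=1}^m a_r \phi(w_r(0)^\top x_i)$ is a sum of $m$ independent mean-zero bounded random variables (using that $a_r \in \{\pm 1\}$ is uniform and independent of $w_r(0)$), the idea is to apply Hoeffding's inequality conditionally on the Gaussian weights, after first controlling the magnitudes of the pre-activations $w_r(0)^\top x_i$.

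\textbf{Step 1 (Gaussian tail + union bound).} Since $w_r(0) \sim \N(0,I)$ and $\|x_i\|_2 \leq 1$, each pre-activation $w_r(0)^\top x_i$ is a one-dimensional Gaussian with variance at most $1$. By a standard Gaussian tail bound together with a union bound over all $r \in [m]$ and $i \in [n]$, there is an event $\mathcal{E}_1$ of probability at least $1 - \delta/2$ on which
\[
|w_r(0)^\top x_i| \;\leq\; B \;:=\; O\!\bigl(\sqrt{\log(mn/\delta)}\bigr) \qquad \forall r \in [m],\ i \in [n].
\]

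\textbf{Step 2 (Hoeffding, conditional on the $w_r(0)$'s).} Fix $i \in [n]$ and condition on $w_1(0),\dots,w_m(0)$ satisfying $\mathcal{E}_1$. Since $\phi \geq 0$ and $a_r$ is $\pm 1$-symmetric and independent of the $w_r(0)$'s, each term $\frac{1}{\sqrt{m}} a_r \phi(w_r(0)^\top x_i)$ is mean zero and lies in an interval of length at most $2B/\sqrt{m}$. Hoeffding's inequality then yields, for any $t > 0$,
\[
\Pr_a\!\left[\,|u_i(0)| \geq t \,\right] \;\leq\; 2\exp\!\left(-\frac{t^2}{2 B^2}\right).
\]
Setting $t = O(B\sqrt{\log(n/\delta)})$ and union bounding over $i \in [n]$, there is an event $\mathcal{E}_2$ (of conditional probability at least $1 - \delta/2$ given $\mathcal{E}_1$) on which
\[
|u_i(0)|^2 \;\leq\; O\!\bigl(B^2 \log(n/\delta)\bigr) \;=\; O\!\bigl(\log(mn/\delta)\,\log(n/\delta)\bigr) \qquad \forall i \in [n].
\]

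\textbf{Step 3 (Combine and sum).} On $\mathcal{E}_1 \cap \mathcal{E}_2$, which holds with probability at least $1 - \delta$, we have, using $|y_i| = O(1)$ and $(a+b)^2 \leq 2a^2 + 2b^2$,
\[
\|y - u(0)\|_2^2 \;=\; \sum_{i=1}^n (y_i - u_i(0))^2 \;\leq\; 2n + O\!\bigl(n \log(mn/\delta)\,\log(n/\delta)\bigr) \;=\; O\!\bigl(n \log(m/\delta)\,\log^2(n/\delta)\bigr),
\]
using $\log(mn/\delta) \leq \log(m/\delta) + \log(n/\delta)$ in the last step.

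The argument is essentially routine; the only thing to be careful about is keeping $a_r$ and $w_r(0)$ separated so that Hoeffding can be invoked conditionally, and making sure the high-probability bound on the pre-activations (Step 1) is taken before the Hoeffding step so that the constant $B$ in the Hoeffding bound is deterministic on the good event.
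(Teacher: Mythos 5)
Your argument is correct and follows essentially the same strategy as the paper's proof: truncate the pre-activations via a Gaussian tail bound plus union bound, concentrate each $u_i(0)$ as a sum of $m$ independent bounded terms, and union bound over $i$. The only cosmetic difference is that you invoke Hoeffding conditionally on the weights while the paper applies Bernstein to truncated variables $z_{i,r}$ (using the per-term variance bound $\E[\phi^2(w^\top x_i)]\leq 1$), which in fact gives you a slightly tighter per-coordinate estimate but the same final $O(n\log(m/\delta)\log^2(n/\delta))$.
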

The proof of Claim \ref{cla:yu0} is deferred to Appendix \ref{sec:proof_yu0}.

\begin{claim}\label{cla:C1}
Let $C_1 = -2 \eta (y - u(k))^\top H(k) ( y - u(k) )$. We have
\begin{align*}
C_1 \leq - \eta \lambda\cdot \| y - u(k) \|_2^2 
\end{align*}
holds with probability at least $1-n^2 \cdot \exp(-m R /10)$.
\end{claim}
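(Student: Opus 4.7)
The plan is to combine the spectral lower bound on $H(0)$ coming from Lemma~\ref{lem:3.1} with the perturbation bound of Lemma~\ref{lem:3.2}, evaluated at the current iterate $W(k)$, to conclude that $\lambda_{\min}(H(k)) \geq \lambda/2$ with the claimed probability. The desired inequality on $C_1$ then follows immediately from a Rayleigh quotient argument:
\begin{align*}
C_1 = -2\eta (y-u(k))^\top H(k) (y - u(k)) \leq -2\eta\,\lambda_{\min}(H(k))\,\|y-u(k)\|_2^2.
\end{align*}

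First, I would invoke the induction hypothesis: since Eq.~\eqref{eq:quartic_condition} is assumed to hold for $i = 0, \ldots, k$, Lemma~\ref{lem:4.1} gives $\|w_r(k) - w_r(0)\|_2 \leq D$ for every $r \in [m]$. With the parameter choices $\eta = \lambda/(4n^2)$ and $R = \lambda/(64n)$ (and the over-parameterization $m = \Omega(\lambda^{-4} n^4 \log(n/\delta))$), Claim~\ref{cla:yu0} ensures $D < R$, so each $w_r(k)$ sits in the radius-$R$ ball around its initialization $w_r(0)$. This puts us exactly in the setting of Lemma~\ref{lem:3.2} (with $\widetilde w_r = w_r(0)$ and $w_r = w_r(k)$), yielding
\begin{align*}
\|H(k) - H(0)\|_F \leq 2nR
\end{align*}
with probability at least $1 - n^2 \exp(-mR/10)$.

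Next, I would use $\|H(k) - H(0)\| \leq \|H(k) - H(0)\|_F$ together with Weyl's inequality to propagate this perturbation to the minimum eigenvalue. From Lemma~\ref{lem:3.1} we already have $\lambda_{\min}(H(0)) \geq 3\lambda/4$ at initialization, so
\begin{align*}
\lambda_{\min}(H(k)) \geq \lambda_{\min}(H(0)) - \|H(k) - H(0)\| \geq \tfrac{3}{4}\lambda - 2nR = \tfrac{3}{4}\lambda - \tfrac{\lambda}{32} \geq \tfrac{\lambda}{2}.
\end{align*}
Substituting into the Rayleigh bound above gives $C_1 \leq -2\eta \cdot (\lambda/2) \cdot \|y-u(k)\|_2^2 = -\eta \lambda \|y-u(k)\|_2^2$, which is exactly the assertion of the claim.

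There is no serious obstacle here; the only subtle point is bookkeeping the high-probability events. The initialization events underlying Lemma~\ref{lem:3.1} and Claim~\ref{cla:yu0} are charged once to the global failure budget $\delta$ (as done earlier in the convergence argument), and the only new randomness entering this claim is the event in Lemma~\ref{lem:3.2} at iterate $k$, which contributes the stated $n^2 \exp(-mR/10)$ failure probability. Consequently, conditioning on the initialization-level good events, the displayed bound on $C_1$ holds with probability at least $1 - n^2 \exp(-mR/10)$, matching the statement of Claim~\ref{cla:C1}.
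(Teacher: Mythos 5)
Your proof is correct and takes essentially the same approach as the paper: apply Lemma~\ref{lem:3.2} at the iterate $W(k)$ (valid because $D<R$ by the induction hypothesis and Lemma~\ref{lem:4.1}) to control $\|H(k)-H(0)\|_F$, then use Weyl's inequality plus Lemma~\ref{lem:3.1} to get $\lambda_{\min}(H(k)) \geq \lambda/2$, and finish with the Rayleigh quotient bound. If anything you are slightly more careful than the paper's own proof, which (a bit loosely) writes ``$\lambda = \lambda_{\min}(H(0))$'' where in fact Lemma~\ref{lem:3.1} only yields $\lambda_{\min}(H(0)) \geq \tfrac{3}{4}\lambda$; your version uses the correct $3\lambda/4$ and the actual choice $R = \lambda/(64n)$, so the final inequality still lands at $\lambda_{\min}(H(k)) \geq \lambda/2$.
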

The proof is in Appendix \ref{sec:proof_c1}.

\begin{claim}\label{cla:C2}
Let $C_2 = 2 \eta ( y - u(k) )^\top H(k)^{\bot} ( y - u(k) )$. We have
\begin{align*}
C_2 \leq 8\eta nR\cdot \| y - u(k) \|_2^2
\end{align*}
holds with probability $1-n\cdot \exp(-mR)$.
\end{claim}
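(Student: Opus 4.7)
}

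The plan is to control $C_2$ entry-wise through the quantity $|\ov{S}_i|$, the number of neurons whose activation pattern on $x_i$ could flip under a perturbation of size $R$. First I would use $|x_i^\top x_j|\leq 1$ (which follows from $\|x_i\|_2 \leq 1$) together with the triangle inequality to get
\begin{align*}
C_2 \;\leq\; 2\eta \sum_{i=1}^n \sum_{j=1}^n |y_i - u_i(k)|\,|y_j - u_j(k)|\cdot \frac{|\ov{S}_i|}{m}
\;\leq\; 2\eta \sqrt{n}\,\|y-u(k)\|_2\cdot \sum_{i=1}^n |y_i - u_i(k)|\cdot \frac{|\ov{S}_i|}{m}
\end{align*}
so that after a further application of Cauchy--Schwarz (or just a uniform bound on $|\ov{S}_i|/m$) the task reduces to showing $|\ov{S}_i|\leq 2mR$ for every $i \in [n]$.

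The next step is a concentration argument on $|\ov{S}_i|$. By definition $r\in \ov{S}_i$ if and only if the event $A_{i,r}$ from equation~\eqref{eq:Air_bound} occurs, and $\{\mathbf{1}_{A_{i,r}}\}_{r=1}^m$ are i.i.d.\ Bernoulli random variables because $w_1(0),\dots,w_m(0)$ are independent. The Gaussian anti-concentration bound~\eqref{eq:Air_bound} gives $\E[\mathbf{1}_{A_{i,r}}] \leq 2R/\sqrt{2\pi} \leq R$, and $|\mathbf{1}_{A_{i,r}}|\leq 1$. Applying Bernstein's inequality (Lemma~\ref{lem:bernstein}) with variance parameter $mR$ and range $1$ gives
\begin{align*}
\Pr\!\left[\,|\ov{S}_i| \geq 2mR\,\right] \;\leq\; \exp(-\Omega(mR)).
\end{align*}
A union bound over $i \in [n]$ yields $|\ov{S}_i|\leq 2mR$ for all $i$ with probability at least $1-n\exp(-\Omega(mR))$.

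Substituting this back into the first display and bounding $\sum_i |y_i - u_i(k)|\leq \sqrt{n}\|y-u(k)\|_2$ produces $C_2 \leq 4\eta nR\,\|y-u(k)\|_2^2$, which is a fortiori at most $8\eta nR\,\|y-u(k)\|_2^2$ as claimed. The argument is essentially routine once the concentration of $|\ov{S}_i|/m$ around its mean $\leq R$ is in place; the only mild subtlety is making sure we are bounding a sum over the \emph{index-dependent} set $\ov{S}_i$ (so $H(k)^\bot$ is not symmetric in general) and therefore picking up a single factor of $\sqrt{n}$ from $\sum_i|y_i-u_i(k)|$ rather than trying to invoke an operator-norm bound on a non-symmetric matrix. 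I do not expect a real obstacle here; the key input is the independence of $\{w_r(0)\}_{r\in [m]}$, which is what upgrades an expectation bound on $|\ov{S}_i|$ to the desired exponential tail.
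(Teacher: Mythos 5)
Your proof is correct and follows essentially the same route as the paper: both reduce the problem to bounding $|\ov{S}_i|$ uniformly over $i\in[n]$ via Bernstein's inequality applied to the independent indicators $\mathbf{1}_{A_{i,r}}$ (with $\E[\mathbf{1}_{A_{i,r}}]\le R$ from Eq.~\eqref{eq:Air_bound}) and a union bound. The paper routes its final arithmetic through $\|H(k)^\bot\|\le\|H(k)^\bot\|_F\le 4nR$, whereas you expand the bilinear form entry-wise and pick up the $\sqrt{n}$ factor from $\sum_i|y_i-u_i(k)|\le\sqrt{n}\|y-u(k)\|_2$, but these are the same calculation in slightly different notation.
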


The proof is in Appendix \ref{sec:proof_c2}.

\begin{claim}\label{cla:C3}
Let $C_3 = - 2 (y - u(k))^\top v_2$. Then we have
\begin{align*}
C_3 \leq 8 \eta nR\cdot \| y - u(k) \|_2^2  .
\end{align*}
with probability at least $1-n\cdot \exp(-mR)$.
\end{claim}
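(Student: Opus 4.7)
The plan is to apply Cauchy-Schwarz to reduce bounding $C_3$ to bounding $\|v_2\|_2$, and then control each coordinate $v_{2,i}$ by showing that only a small (namely $O(mR)$) number of neurons contributes, and each such contribution is small due to the Lipschitz property of ReLU together with an a priori bound on the gradient.

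First, from $C_3 = -2(y-u(k))^\top v_2$ and Cauchy-Schwarz, it suffices to establish $\|v_2\|_2 \leq 4\eta n R\,\|y - u(k)\|_2$. I will bound each coordinate $|v_{2,i}|$ separately. Since ReLU is $1$-Lipschitz, $|a_r|=1$, and $\|x_i\|_2\leq 1$, every term in the sum defining $v_{2,i}$ is at most $\frac{\eta}{\sqrt{m}}\bigl\| \partial L(W(k))/\partial w_r(k)\bigr\|_2$, so
\begin{align*}
|v_{2,i}| \leq \frac{\eta}{\sqrt{m}} \sum_{r \in \overline{S}_i} \left\|\frac{\partial L(W(k))}{\partial w_r(k)}\right\|_2.
\end{align*}
Using the explicit gradient expression in Eq.~\eqref{eq:gradient} together with $|a_r|=1$, $\|x_i\|_2\leq 1$, and triangle inequality followed by Cauchy-Schwarz yields $\bigl\| \partial L(W(k))/\partial w_r(k)\bigr\|_2 \leq \sqrt{n/m}\,\|y - u(k)\|_2$ for every $r\in[m]$, and hence
\begin{align*}
|v_{2,i}| \leq \frac{\eta\sqrt{n}\,|\overline{S}_i|}{m}\,\|y - u(k)\|_2.
\end{align*}

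Next I would bound $|\overline{S}_i|$. By construction $r\in\overline{S}_i$ is exactly the event $A_{i,r}$, and by Eq.~\eqref{eq:Air_bound} we have $\Pr[A_{i,r}]\leq 2R/\sqrt{2\pi}$. Since $\{w_r(0)\}_{r=1}^m$ are i.i.d., $|\overline{S}_i|$ is a sum of $m$ independent bounded indicators with expectation at most $2mR/\sqrt{2\pi}$ and variance at most the same quantity. The same Bernstein inequality used in the proof of Lemma~\ref{lem:3.2} (with deviation parameter $t=R$) then gives $|\overline{S}_i| \leq 4mR$ with failure probability at most $\exp(-mR)$, and a union bound over $i\in[n]$ shows that this estimate holds for all $i$ simultaneously with probability $1 - n\exp(-mR)$.

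Combining the two displays produces $|v_{2,i}| \leq 4\eta R\sqrt{n}\,\|y-u(k)\|_2$ for all $i$, hence $\|v_2\|_2 \leq 4\eta n R\,\|y-u(k)\|_2$, and the Cauchy-Schwarz estimate on $C_3$ closes the argument. I don't expect a serious obstacle: the only nontrivial ingredients are (i) the straightforward a priori bound on $\|\partial L/\partial w_r\|_2$, and (ii) the Bernstein-type concentration of $|\overline{S}_i|$, which is essentially a lightweight version of the computation already carried out in Lemma~\ref{lem:3.2}; the main care needed is in tracking constants so that the final right-hand side comes out as $8\eta nR\,\|y-u(k)\|_2^2$ rather than something marginally worse.
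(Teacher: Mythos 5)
Your proposal is correct and follows the same line of argument as the paper: Cauchy--Schwarz reduces $C_3$ to bounding $\|v_2\|_2$, the $1$-Lipschitz property of ReLU together with the a priori gradient bound $\|\partial L/\partial w_r\|_2 \leq \sqrt{n/m}\,\|y-u(k)\|_2$ controls each term, and the factor $|\ov{S}_i|\leq 4mR$ is established by exactly the Bernstein computation the paper carries out (cf.\ Eq.~\eqref{eq:Si_size_bound}), with a union bound giving the $1-n\exp(-mR)$ probability. The only cosmetic difference is that you bound $|v_{2,i}|$ coordinatewise and then sum the squares, whereas the paper factors $\max_r \|\partial L/\partial w_r\|_2$ out of the squared sum directly; the constants match either way.
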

The proof is in Appendix \ref{sec:proof_c3}

\begin{claim}\label{cla:C4}
Let $C_4  = \| u (k+1) - u(k) \|_2^2$. Then we have
\begin{align*}
C_4 \leq \eta^2 n^2 \cdot \| y - u(k) \|_2^2.
\end{align*}
\end{claim}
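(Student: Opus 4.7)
The plan is to bound $\|u(k+1)-u(k)\|_2^2$ by a straightforward worst-case triangle-inequality argument, exploiting the 1-Lipschitz property of ReLU and the fact that $|a_r|=1$ and $\|x_i\|_2\leq 1$. In particular, the only tool I expect to need here is Cauchy--Schwarz; no randomness or concentration is required, which is why Claim~\ref{cla:C4} is stated without any failure probability.

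First I will expand $u_i(k+1)-u_i(k)$ using the definition $u_i(k)=\frac{1}{\sqrt{m}}\sum_{r=1}^m a_r\phi(w_r(k)^\top x_i)$ together with the update rule $w_r(k+1)=w_r(k)-\eta\,\partial L(W(k))/\partial w_r(k)$. Applying $|\phi(s)-\phi(t)|\leq|s-t|$ and $|a_r|=1$ term-by-term yields
\[
|u_i(k+1)-u_i(k)|\;\leq\;\frac{\eta}{\sqrt{m}}\sum_{r=1}^m\Bigl|\Bigl(\tfrac{\partial L(W(k))}{\partial w_r(k)}\Bigr)^{\!\top}x_i\Bigr|.
\]
Next I will plug in the gradient formula~\eqref{eq:gradient}, so that each inner product above becomes $\bigl|\frac{1}{\sqrt m}\sum_{j=1}^n(u_j(k)-y_j)a_r(x_j^\top x_i)\mathbf{1}_{w_r(k)^\top x_j\geq 0}\bigr|$. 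Using $|a_r|=1$, $|x_j^\top x_i|\leq 1$, and dropping the indicator, this is bounded by $\tfrac{1}{\sqrt m}\sum_{j=1}^n|y_j-u_j(k)|\leq \sqrt{n/m}\,\|y-u(k)\|_2$ by Cauchy--Schwarz.

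Combining the two steps gives $|u_i(k+1)-u_i(k)|\leq \eta\sqrt{n}\,\|y-u(k)\|_2$ for every $i\in[n]$, and squaring and summing over $i$ yields the claimed bound $\|u(k+1)-u(k)\|_2^2\leq \eta^2 n^2\|y-u(k)\|_2^2$. There is essentially no obstacle here, since the argument does not attempt to exploit any cancellation from the signs of $a_r$ or the sparsity pattern of activations; the bound is the crude one corresponding to every neuron updating by its worst-case magnitude. Any tightening (for example, by splitting into patterns that change versus patterns that stay, as in the analyses of $C_1,C_2,C_3$) would only be needed if one wanted a better dependence on $n$ in this term, but since the combined budget in Eq.~\eqref{eq:choice_of_eta_R} already allows $\eta^2 n^2\leq \eta\lambda/4$ via $\eta=\Theta(\lambda/n^2)$, this loose bound suffices.
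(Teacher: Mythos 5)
Your proof is correct and follows essentially the same route as the paper: Lipschitzness of $\phi$ plus triangle inequality to reduce to the per-neuron gradient norms, then the crude bound $\|\partial L/\partial w_r\|_2 \leq \sqrt{n/m}\,\|y-u(k)\|_2$ and summation over $i$. The only cosmetic difference is that you keep the inner products $(\partial L/\partial w_r)^\top x_i$ and bound them directly, whereas the paper first invokes the update rule and then cites its pre-established gradient-norm estimate (Eq.~\eqref{eq:gradient_bound}); the resulting estimate and its looseness are identical.
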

The proof is in Appendix \ref{sec:proof_c4}
\section{Conclusion}

In this paper we improve the over-parametrization bound for two-layer neural networks trained by gradient descent with random initialization from two aspects:
first we improve the dependency of failure probability $\delta$ in the size bound from $\poly(1/\delta)$ to $\poly(\log (1/\delta))$;
second we lower the exponent on number of input data $n$,
showing that it can be as small as $n^2$ when input data have good properties.
We also study the training speed and generalization of two-layer neural networks,
and improve the exponent on $n$ and the dependency of failure probability $\delta$.



\ifdefined\isarxivversion
\section*{Acknowledgments}
The authors would like to thank Sanjeev Arora, Zeyuan Allen-Zhu, Simon S. Du, Rasmus Kyng, Jason D. Lee, Yin Tat Lee, Xingguo Li, Yuanzhi Li, Yingyu Liang, Zheng Yu, and Yi Zhang for useful discussions.
\else

\fi

\bibliographystyle{alpha}
\bibliography{ref}

\newpage
\onecolumn
\appendix
\section{Probability Tools}\label{sec:probability_tools}


In this section we introduce the probability tools we use in the proof.
Lemma \ref{lem:chernoff}, \ref{lem:hoeffding} and \ref{lem:bernstein} are about tail bounds for random scalar variables.
Lemma \ref{lem:anti_gaussian} is about cdf of Gaussian distributions.
Finally,
Lemma \ref{lem:matrix_bernstein} is a concentration result on random matrices.

\begin{lemma}[Chernoff bound \cite{c52}]\label{lem:chernoff}
Let $X = \sum_{i=1}^n X_i$, where $X_i=1$ with probability $p_i$ and $X_i = 0$ with probability $1-p_i$, and all $X_i$ are independent. Let $\mu = \E[X] = \sum_{i=1}^n p_i$. Then \\
1. $ \Pr[ X \geq (1+\delta) \mu ] \leq \exp ( - \delta^2 \mu / 3 ) $, $\forall \delta > 0$ ; \\
2. $ \Pr[ X \leq (1-\delta) \mu ] \leq \exp ( - \delta^2 \mu / 2 ) $, $\forall 0 < \delta < 1$. 
\end{lemma}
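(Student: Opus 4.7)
The plan is to prove both tail bounds via the standard exponential moment (Chernoff) method, which converts a probability bound into a bound on a moment generating function that factorizes because of independence.

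For the upper tail, I would start by fixing $t > 0$ and applying Markov's inequality to the random variable $e^{tX}$:
\begin{align*}
\Pr[X \geq (1+\delta)\mu] = \Pr[e^{tX} \geq e^{t(1+\delta)\mu}] \leq \frac{\E[e^{tX}]}{e^{t(1+\delta)\mu}}.
\end{align*}
Since the $X_i$ are mutually independent, $\E[e^{tX}] = \prod_{i=1}^n \E[e^{tX_i}]$. For each Bernoulli $X_i$, I would compute $\E[e^{tX_i}] = 1 + p_i(e^t - 1)$, then use the elementary inequality $1 + x \leq e^x$ to get $\E[e^{tX_i}] \leq \exp(p_i(e^t - 1))$. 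Taking the product over $i$ yields $\E[e^{tX}] \leq \exp(\mu(e^t - 1))$.

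Substituting back, the bound becomes $\exp(\mu(e^t - 1) - t(1+\delta)\mu)$. The standard choice $t = \ln(1+\delta)$ then gives $\exp\bigl(\mu((1+\delta)\ln(1+\delta) - \delta \cdot \text{something})\bigr)$; more precisely I would optimize to get the familiar form $\bigl(e^\delta/(1+\delta)^{1+\delta}\bigr)^\mu$. To reach the clean statement $\exp(-\delta^2 \mu / 3)$ claimed in the lemma, I would use the analytic inequality $(1+\delta)\ln(1+\delta) - \delta \geq \delta^2/3$ valid for all $\delta > 0$ (this is the mild but slightly delicate step where one checks a scalar function via its derivatives).

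The lower tail follows by the mirror argument, applying Markov's inequality to $e^{-tX}$ with $t > 0$. The same independence-factorization and $1+x \leq e^x$ trick yields $\E[e^{-tX}] \leq \exp(\mu(e^{-t}-1))$, and optimizing at $t = -\ln(1-\delta)$ gives the bound $\bigl(e^{-\delta}/(1-\delta)^{1-\delta}\bigr)^\mu \leq \exp(-\delta^2 \mu /2)$, where the final inequality relies on $(1-\delta)\ln(1-\delta) + \delta \geq \delta^2/2$ on $\delta \in (0,1)$. The main obstacle in both directions is the analytic scalar inequality used to massage the sharp rate function into the simple quadratic form; the probabilistic content (Markov + MGF factorization + $1+x \leq e^x$) is straightforward. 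Since this is a very classical result, I would likely just cite \cite{c52} rather than reproduce the full calculation.
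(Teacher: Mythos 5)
The paper does not prove this lemma at all: Lemma~\ref{lem:chernoff} is listed in Appendix~\ref{sec:probability_tools} purely as a pointer to Chernoff's original paper \cite{c52}, and it is not actually invoked in any of the subsequent arguments (those rely on Hoeffding, Bernstein, and Matrix Bernstein instead). Your exponential-moment sketch is the standard derivation, and your lower-tail step is correct: with $g(\delta)=(1-\delta)\ln(1-\delta)+\delta-\delta^2/2$ one has $g(0)=g'(0)=0$ and $g''(\delta)=\frac{1}{1-\delta}-1\ge 0$ on $[0,1)$, so $g\ge 0$ there.

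The upper-tail step, however, contains a genuine gap. The scalar inequality $(1+\delta)\ln(1+\delta)-\delta\ge \delta^2/3$ that you invoke is \emph{not} valid for all $\delta>0$: writing $f(\delta)=(1+\delta)\ln(1+\delta)-\delta-\delta^2/3$, one has $f(0)=f'(0)=0$ and $f''(\delta)=\frac{1}{1+\delta}-\frac{2}{3}$, which is positive on $(0,1/2)$ and negative afterwards, so $f$ first rises, then falls through zero near $\delta\approx 1.8$ and tends to $-\infty$. Your argument therefore establishes part~1 only for $\delta$ below that threshold. The usual repairs are either to restrict to $0<\delta\le 1$ (the range in which standard references such as Mitzenmacher and Upfal state the $e^{-\delta^2\mu/3}$ form), or to prove the uniformly valid bound $(1+\delta)\ln(1+\delta)-\delta\ge \frac{\delta^2}{2+\delta}$, which agrees with $\delta^2/3$ at $\delta=1$ and is weaker beyond it. It is also worth flagging that the statement of Lemma~\ref{lem:chernoff} itself, with its unqualified ``$\forall\,\delta>0$'', overstates the upper tail: for a $\mathrm{Binomial}(n,\lambda/n)$ with $n$ and $\delta$ large the tail probability genuinely exceeds $e^{-\delta^2\mu/3}$, so no correct argument could deliver the claim exactly as written.
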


\begin{lemma}[Hoeffding bound \cite{h63}]\label{lem:hoeffding}
Let $X_1, \cdots, X_n$ denote $n$ independent bounded variables in $[a_i,b_i]$. Let $X= \sum_{i=1}^n X_i$, then we have
\begin{align*}
\Pr[ | X - \E[X] | \geq t ] \leq 2\exp \left( - \frac{2t^2}{ \sum_{i=1}^n (b_i - a_i)^2 } \right).
\end{align*}
\end{lemma}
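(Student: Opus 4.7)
The plan is to use the classical Chernoff exponential-moment method. For any $\lambda>0$, Markov's inequality applied to $e^{\lambda(X-\E[X])}$ gives
\begin{align*}
\Pr[X-\E[X]\geq t]\leq e^{-\lambda t}\E[e^{\lambda(X-\E[X])}]=e^{-\lambda t}\prod_{i=1}^n \E[e^{\lambda(X_i-\E[X_i])}],
\end{align*}
where the product step uses independence of the $X_i$. Thus everything reduces to controlling the moment generating function of a single centered bounded random variable $Y_i:=X_i-\E[X_i]$, which lies in the interval of length $b_i-a_i$ and has mean zero.

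The heart of the argument is then Hoeffding's lemma: for any mean-zero random variable $Y$ supported on $[\alpha,\beta]$, one has $\E[e^{\lambda Y}]\leq \exp(\lambda^2(\beta-\alpha)^2/8)$. I would prove this by writing $Y$ as a convex combination $Y=\frac{\beta-Y}{\beta-\alpha}\alpha+\frac{Y-\alpha}{\beta-\alpha}\beta$ and applying convexity of $t\mapsto e^{\lambda t}$ to get a pointwise upper bound; taking expectations and using $\E[Y]=0$ leaves a deterministic function $\varphi(\lambda)$ of the endpoints, and a Taylor expansion of $\log \varphi(\lambda)$ (whose second derivative is bounded by $(\beta-\alpha)^2/4$, since it is the variance of a distribution supported on $[\alpha,\beta]$) yields the claimed sub-Gaussian bound. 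This lemma is the main technical obstacle; the rest of the proof is essentially bookkeeping.

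Combining the two steps gives
\begin{align*}
\Pr[X-\E[X]\geq t]\leq \exp\left(-\lambda t+\frac{\lambda^2}{8}\sum_{i=1}^n(b_i-a_i)^2\right),
\end{align*}
and optimizing over $\lambda>0$ by setting $\lambda=4t/\sum_i(b_i-a_i)^2$ produces the exponent $-2t^2/\sum_i(b_i-a_i)^2$. Finally, applying the same argument to the random variables $-X_i$ controls the lower tail $\Pr[X-\E[X]\leq -t]$ with an identical bound, and a union bound over the two tails introduces the factor of $2$ in the statement. The only genuinely nontrivial ingredient is Hoeffding's lemma; the Chernoff-trick wrapper and the one-variable optimization are routine.
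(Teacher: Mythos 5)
Your proposal is correct and is the standard proof of Hoeffding's inequality via the Chernoff exponential-moment method combined with Hoeffding's lemma; the paper simply cites this as a classical result without supplying a proof, so there is no alternative argument to compare against. The one-variable optimization $\lambda = 4t/\sum_i(b_i-a_i)^2$ does yield exponent $-2t^2/\sum_i(b_i-a_i)^2$ as claimed, and the union bound over the two tails gives the factor $2$.
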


\begin{lemma}[Bernstein inequality \cite{b24}]\label{lem:bernstein}
Let $X_1, \cdots, X_n$ be independent zero-mean random variables. Suppose that $|X_i| \leq M$ almost surely, for all $i$. Then, for all positive $t$,
\begin{align*}
\Pr \left[ \sum_{i=1}^n X_i > t \right] \leq \exp \left( - \frac{ t^2/2 }{ \sum_{j=1}^n \E[X_j^2]  + M t /3 } \right).
\end{align*}
\end{lemma}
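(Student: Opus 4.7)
The plan is to prove the Bernstein inequality via the standard Chernoff-method argument: apply Markov's inequality to the exponential moment, control the moment generating function of each bounded centered $X_i$, and finally optimize over the free parameter $\lambda$.

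First, I would fix any $\lambda > 0$ and write
\begin{align*}
\Pr\left[\sum_{i=1}^n X_i > t\right]
= \Pr\left[e^{\lambda \sum_{i=1}^n X_i} > e^{\lambda t}\right]
\leq e^{-\lambda t} \prod_{i=1}^n \E\bigl[e^{\lambda X_i}\bigr],
\end{align*}
using Markov's inequality and independence. This reduces the problem to bounding each MGF $\E[e^{\lambda X_i}]$.

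The core technical step is a one-variable MGF bound: for any centered random variable $X$ with $|X|\leq M$ almost surely, one shows
\begin{align*}
\E\bigl[e^{\lambda X}\bigr] \leq \exp\!\left(\frac{\lambda^2 \E[X^2]/2}{1-\lambda M/3}\right),
\qquad 0 < \lambda < 3/M.
\end{align*}
The derivation expands $e^{\lambda X} = 1+\lambda X+\sum_{k\geq 2}\lambda^k X^k/k!$, uses $\E[X]=0$ and $|X|^k \leq M^{k-2} X^2$ for $k\geq 2$ to get $\E[X^k]\leq M^{k-2}\E[X^2]$, and then evaluates the resulting geometric-type series via $k!\geq 2\cdot 3^{k-2}$, namely
\begin{align*}
\sum_{k\geq 2}\frac{(\lambda M)^k}{k!}
\leq \frac{(\lambda M)^2}{2}\sum_{k\geq 0}\left(\frac{\lambda M}{3}\right)^k
= \frac{(\lambda M)^2/2}{1-\lambda M/3}.
\end{align*}
Combining and using $1+x\leq e^x$ yields the displayed MGF bound.

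Plugging this back into the product and writing $\sigma^2 := \sum_{i=1}^n \E[X_i^2]$, I obtain
\begin{align*}
\Pr\left[\sum_{i=1}^n X_i > t\right]
\leq \exp\!\left(-\lambda t + \frac{\lambda^2 \sigma^2/2}{1-\lambda M/3}\right).
\end{align*}
The last step is to choose $\lambda$ optimally. The canonical choice $\lambda = t/(\sigma^2 + Mt/3)$ (which lies in $(0,3/M)$) makes the exponent collapse to $-\frac{t^2/2}{\sigma^2 + Mt/3}$, yielding the stated bound. The only delicate point is the MGF estimate: one must be careful to use the correct factorial lower bound $k!\geq 2\cdot 3^{k-2}$ so that the constant $1/3$ in the denominator of the final bound comes out precisely as stated; any other factorial comparison gives a numerically different constant and breaks the match with the lemma's statement.
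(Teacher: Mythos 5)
Your proof is correct, but note that the paper does not prove this lemma at all: it simply states it in the Probability Tools appendix with a citation to Bernstein's 1924 work and uses it as a black box. So there is no paper-internal proof to compare against. What you have written is the standard Chernoff/MGF derivation of the Bernstein bound, and each step checks out: the Markov-plus-independence reduction, the moment comparison $\E[|X|^k]\leq M^{k-2}\E[X^2]$ for $k\geq 2$, the factorial bound $k!\geq 2\cdot 3^{k-2}$ giving the MGF estimate $\E[e^{\lambda X}]\leq \exp\!\bigl(\tfrac{\lambda^2\E[X^2]/2}{1-\lambda M/3}\bigr)$ for $0<\lambda<3/M$, and the optimizing choice $\lambda=t/(\sigma^2+Mt/3)$ (which indeed lies in $(0,3/M)$) collapsing the exponent exactly to $-\tfrac{t^2/2}{\sigma^2+Mt/3}$. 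This is a complete and correct proof of the cited result.
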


\begin{lemma}[Anti-concentration of Gaussian distribution]\label{lem:anti_gaussian}
Let $X\sim {\N}(0,\sigma^2)$,
that is,
the probability density function of $X$ is given by $\phi(x)=\frac 1 {\sqrt{2\pi\sigma^2}}e^{-\frac {x^2} {2\sigma^2} }$.
Then
\begin{align*}
    \Pr[|X|\leq t]\in \left( \frac 2 3\frac t \sigma, \frac 4 5\frac t \sigma \right).
\end{align*}
\end{lemma}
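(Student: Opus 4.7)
The plan is to reduce to the standard Gaussian case by rescaling, then sandwich the resulting integral between two elementary estimates of the density. Concretely, I would first substitute $y = x/\sigma$ so that
\begin{align*}
\Pr[|X|\leq t] = \int_{-t}^{t} \frac{1}{\sqrt{2\pi\sigma^2}} e^{-x^2/(2\sigma^2)}\, \d x = 2\int_{0}^{s} \frac{1}{\sqrt{2\pi}} e^{-y^2/2}\, \d y,
\end{align*}
where $s := t/\sigma$. This uses only the even symmetry of $\phi$ and reduces both inequalities to a claim about the standard normal CDF evaluated at $s$.

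For the upper bound, the standard Gaussian density is maximized at $0$, so $\phi(y)\leq \phi(0) = 1/\sqrt{2\pi}$ for every $y$. Integrating this pointwise bound gives
\begin{align*}
\Pr[|X|\leq t] \leq \frac{2s}{\sqrt{2\pi}} = s\sqrt{2/\pi}.
\end{align*}
Since $\sqrt{2/\pi} = 0.7978\ldots < 4/5$, the desired upper bound $\tfrac{4}{5}\tfrac{t}{\sigma}$ follows immediately. This step is essentially one line.

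For the lower bound, I would use the Taylor estimate $e^{-y^2/2}\geq 1 - y^2/2$ valid for all $y\in\mathbb{R}$, which after integration yields
\begin{align*}
\Pr[|X|\leq t] \geq \frac{2}{\sqrt{2\pi}}\Bigl( s - \frac{s^3}{6} \Bigr) = \frac{2s}{\sqrt{2\pi}}\Bigl(1 - \frac{s^2}{6}\Bigr).
\end{align*}
Comparing this to $\tfrac{2s}{3}$ reduces the lower bound to the scalar inequality $\tfrac{1}{\sqrt{2\pi}}(1 - s^2/6) > 1/3$, which holds whenever $s$ is below the modest absolute constant $\sqrt{6(1 - \sqrt{2\pi}/3)}$. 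This is satisfied in every application of the lemma in the paper, where $s = R$ is a small parameter (cf.\ Eq.~\eqref{eq:Air_bound}, where $R\in(0,1)$).

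The main subtlety, and the only real obstacle, is that the stated lower bound $\tfrac{2}{3}\tfrac{t}{\sigma}$ cannot hold uniformly in $t/\sigma$: the left-hand side saturates at $1$ while the right-hand side grows unboundedly. Thus the lemma should be understood with an implicit restriction to the regime $t/\sigma\lesssim 1$, which is exactly the regime in which the Taylor lower bound on $e^{-y^2/2}$ is tight enough to beat $1/3$. I would make this range explicit and verify the constants at the end, after finishing the two one-line sandwich estimates above.
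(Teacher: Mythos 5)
The paper never proves this lemma: it is listed among the probability tools and used as a standard fact, so there is no in-paper argument to compare against, and your proposal should be judged on its own. On that basis it is correct where the statement can be correct. The rescaling to $s=t/\sigma$ and the upper bound via $\phi(y)\le 1/\sqrt{2\pi}$ are fine, and this is in fact the only direction the paper ever uses (e.g.\ Eq.~\eqref{eq:Air_bound} and the analogous step in Lemma~\ref{lem:HZ_norm_bound}), where it is invoked with the sharper constant $2/\sqrt{2\pi}<4/5$ that your one-line estimate already delivers. Your lower bound via $e^{-y^2/2}\ge 1-y^2/2$ gives $\Pr[|X|\le t]\ge \frac{2}{\sqrt{2\pi}}\left(s-\frac{s^3}{6}\right)$, which exceeds $\frac{2}{3}s$ precisely when $s^2<6\left(1-\sqrt{2\pi}/3\right)\approx 0.99$; your threshold computation is right. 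You have also correctly identified the one genuine defect, which lies in the statement rather than in your proof: the lower bound cannot hold for all $t/\sigma$ since the left-hand side is at most $1$, so the lemma must be read with an implicit restriction $t\lesssim \sigma$ (the exact cutoff for the true Gaussian CDF is about $t/\sigma\approx 1.08$, slightly beyond what your Taylor estimate certifies). In the paper's applications this is harmless: only the upper bound is used, and the relevant parameter $R$ (or $R/\kappa$) is of order $\lambda/n$, far below your threshold — though note that the bare condition $R\in(0,1)$ quoted in your last paragraph would not by itself guarantee $s<0.99$; it is the actual smallness of $R$ in the paper's choices that does.
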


\begin{lemma}[Matrix Bernstein, Theorem 6.1.1 in \cite{t15}]\label{lem:matrix_bernstein}
Consider a finite sequence $\{ X_1, \cdots, X_m \} \subset \R^{n_1 \times n_2}$ of independent, random matrices with common dimension $n_1 \times n_2$. Assume that
\begin{align*}
\E[ X_i ] = 0, \forall i \in [m] ~~~ \mathrm{and}~~~ \| X_i \| \leq M, \forall i \in [m] .
\end{align*}
Let $Z = \sum_{i=1}^m X_i$. Let $\mathrm{Var}[Z]$ be the matrix variance statistic of sum:
\begin{align*}
\mathrm{Var} [Z] = \max \left\{ \Big\| \sum_{i=1}^m \E[ X_i X_i^\top ] \Big\| , \Big\| \sum_{i=1}^m \E [ X_i^\top X_i ] \Big\| \right\}.
\end{align*}
Then 

\begin{align*}
\E[ \| Z \| ] \leq ( 2 \mathrm{Var} [Z] \cdot \log (n_1 + n_2) )^{1/2} +  M \cdot \log (n_1 + n_2) / 3.
\end{align*}

Furthermore, for all $t \geq 0$,
\begin{align*}
\Pr[ \| Z \| \geq t ] \leq (n_1 + n_2) \cdot \exp \left( - \frac{t^2/2}{ \mathrm{Var} [Z] + M t /3 }  \right)  .
\end{align*}
\end{lemma}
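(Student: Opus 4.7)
The plan is to follow the Tropp-style matrix Laplace transform proof, whose three ingredients are (i) the Hermitian dilation to reduce to self-adjoint sums, (ii) Lieb's concavity theorem to obtain a master inequality for matrix cumulant generating functions, and (iii) a scalar Bernstein estimate transferred to the semidefinite order.

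First, I would reduce to the Hermitian case. For $X \in \R^{n_1 \times n_2}$ define the Hermitian dilation
\begin{align*}
\mathcal{H}(X) = \begin{pmatrix} 0 & X \\ X^\top & 0 \end{pmatrix} \in \R^{(n_1+n_2)\times(n_1+n_2)}.
\end{align*}
One checks $\|\mathcal{H}(X)\| = \|X\|$ and $\mathcal{H}(X)^2 = \operatorname{diag}(XX^\top, X^\top X)$. Setting $Y_i = \mathcal{H}(X_i)$ and $S = \sum_i Y_i = \mathcal{H}(Z)$, the spectrum of $S$ is symmetric about zero, so $\lambda_{\max}(S)=\|Z\|$, and $\bigl\|\sum_i \E Y_i^2\bigr\| = \mathrm{Var}[Z]$, while $\|Y_i\|\le M$ and $\E Y_i = 0$. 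Hence it suffices to prove the Hermitian version for $Y_1,\dots,Y_m\in\R^{n\times n}$ with $n = n_1+n_2$.

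Second, for any $\theta>0$ the Markov inequality applied to $\operatorname{tr} e^{\theta S} \ge e^{\theta \lambda_{\max}(S)}$ gives the matrix Laplace transform bound $\Pr[\lambda_{\max}(S)\ge t] \le e^{-\theta t}\E\operatorname{tr} e^{\theta S}$. To handle the expectation $\E\operatorname{tr} e^{\theta S}$, I invoke Lieb's concavity theorem (the functional $A \mapsto \operatorname{tr}\exp(H+\log A)$ is concave on the positive-definite cone). Conditioning on $Y_1,\dots,Y_{m-1}$, applying Jensen's inequality to Lieb's functional, and iterating yields the master inequality
\begin{align*}
\E\operatorname{tr} \exp(\theta S) \le \operatorname{tr}\exp\Bigl(\sum_{i=1}^m \log \E e^{\theta Y_i}\Bigr).
\end{align*}
To control each matrix cumulant $\log\E e^{\theta Y_i}$, I use the scalar fact that for any $y$ with $|y|\le M$ and $\E y = 0$ we have $\E e^{\theta y} \le \exp(g(\theta)\,\E y^2)$ with $g(\theta) = \tfrac{\theta^2/2}{1-\theta M/3}$ for $0<\theta<3/M$; this comes from the Taylor expansion bound $e^x \le 1 + x + \tfrac{x^2/2}{1-|x|/3}$. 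Transferring this bound by the matrix functional calculus (evaluated on the eigenvalues of $Y_i$) and then taking the operator-monotone logarithm gives $\log \E e^{\theta Y_i} \preceq g(\theta)\,\E Y_i^2$ in the semidefinite order.

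Third, combining these steps with the monotonicity of trace exponential under $\preceq$, and letting $V = \sum_i \E Y_i^2$ so that $\|V\| = \mathrm{Var}[Z]$,
\begin{align*}
\E\operatorname{tr} \exp(\theta S) \le \operatorname{tr}\exp(g(\theta) V) \le n \cdot \exp(g(\theta)\,\mathrm{Var}[Z]).
\end{align*}
Plugging back into the Laplace bound gives $\Pr[\|Z\|\ge t] \le (n_1+n_2)\exp(g(\theta)\,\mathrm{Var}[Z] - \theta t)$. Optimizing by choosing $\theta = t/(\mathrm{Var}[Z]+Mt/3)\in(0,3/M)$ collapses the exponent to $-\tfrac{t^2/2}{\mathrm{Var}[Z] + Mt/3}$, giving the tail inequality. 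For the expectation bound I would apply the same master inequality to $\E\lambda_{\max}(S)\le \tfrac{1}{\theta}\log\E\operatorname{tr} e^{\theta S} \le \tfrac{1}{\theta}(\log(n_1+n_2) + g(\theta)\,\mathrm{Var}[Z])$ and optimize in $\theta$, balancing the $\sqrt{\mathrm{Var}\log n}$ Gaussian-like term against the $M\log n$ subexponential term.

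The main obstacle is the invocation of Lieb's concavity theorem, which is the one genuinely non-elementary ingredient: without it the naive inductive step $\E\operatorname{tr} e^{A+X} \le \operatorname{tr} e^{A+\log\E e^{X}}$ is false in noncommuting settings, and any attempt to substitute Golden--Thompson weakens the constants past what is needed here. Everything else---the dilation, the scalar-to-matrix Bernstein MGF bound, and the optimization over $\theta$---is mechanical. Since Tropp's reference \cite{t15} is already cited and the statement matches Theorem 6.1.1 verbatim, the intended proof is simply to quote it, with the sketch above serving as the conceptual justification.
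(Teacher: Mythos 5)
Your sketch is the standard Tropp argument (Hermitian dilation, matrix Laplace transform via Lieb's concavity, scalar Bernstein MGF bound, optimization in $\theta$), which is precisely the proof behind Theorem 6.1.1 of \cite{t15}; the paper itself offers no proof and simply cites that reference, so your proposal is correct and consistent with the intended justification.
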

\section{Synthetic Examples }\label{sec:example}

In this section we check some synthetic examples to validate Assumption \ref{ass:data_dependent_assumption}.

Our first example is a very trivial one, where all the data points are unit vectors and are orthogonal to each other.
This is the best separable case we can hope for.
Notice that in this case we must have $d\geq n$.
In this case,
we have
$H^{\cts}=\frac {1}{2} I_n$.
Therefore,
we have\\
1. $\lambda=\lambda_{\min}(H^{\cts})=1/2$.\\
2. For $i\in [n]$, let $y_i=\mathbf{1}_{x_i^\top w\geq 0}$. Then 
\begin{align*}
\Big\| H(w) - \E_{w}[H(w)] \Big\|=\max_{i\in [n]} y_i-1/2.
\end{align*}
So we can set $\alpha=1/2$ and $\gamma=0$.\\
3. Since 
\begin{align*}
(H(w) - \E_{w}[H(w)] )(H(w) - \E_{w}[H(w)] )^\top=\begin{pmatrix}
(y_1-1/2)^2 & & &\\
&(y_2-1/2)^2 & &\\
& &\ddots & \\
& & &(y_n-1/2)^2
\end{pmatrix},
\end{align*}
we can set $\beta=1/4$.\\
4. Since data points are mutually orthogonal, we can set $\theta=0$.

Our second example is all the data points are i.i.d normalized random Gaussian vectors in $\mathbb{R}^d$.
That is, for $i\in [n]$, $x_i\sim {\N}(0,I_d)$.
Therefore the $(i,j)$-th entry of $\E_w[H(w)]$ is simply 
\begin{align*}
x_i^\top x_j \cdot \frac{\pi-\arccos(x_i^\top x_j)}{2\pi}.
\end{align*}
We perform 2 numerical experiments to validate Assumption \ref{ass:data_dependent_assumption}.
For part 1 and part 4 of Assumption \ref{ass:data_dependent_assumption},
we set $d=500$,
and for $i=1,\cdots,20$,
we set $n=50i$ and compute the corresonding $\lambda$ and $\theta$.
The experimental result can be found in Figure \ref{fig:exp_1}.
We can see that though $\lambda$ decreases as $n$ increases, $\lambda$ is indeed positive.
Also,
when $n$ is not too large compared to $d$,
$\theta$ is relatively small compared to the maximal possible value $\sqrt{n}$.

\begin{figure*}[!t]
  \centering
  	\begin{minipage}{0.45\textwidth}
  		\includegraphics[width=\textwidth]{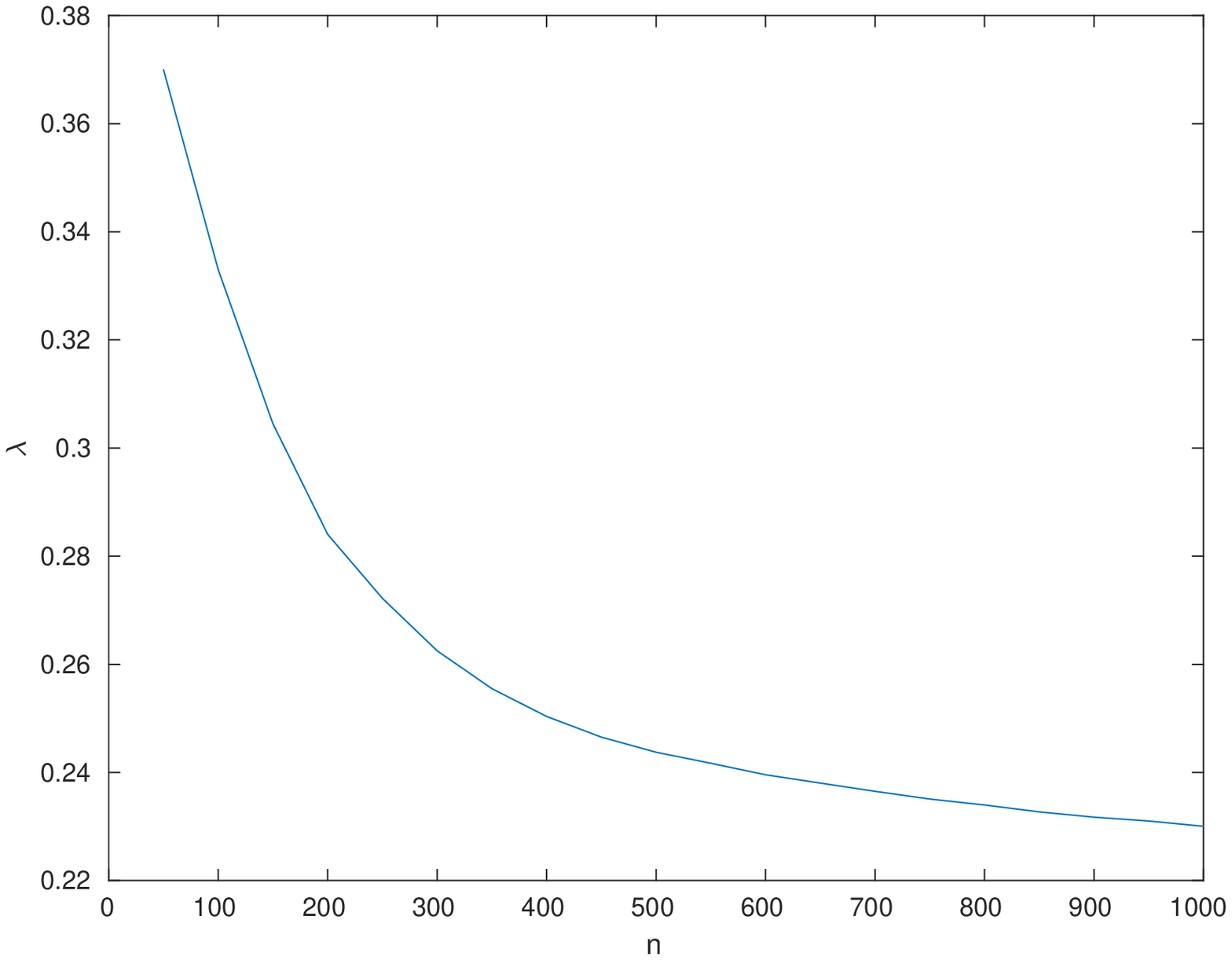}
  	\end{minipage}
  	\hspace{0.05\textwidth}
  	\begin{minipage}{0.45\textwidth}
  		\includegraphics[width=\textwidth]{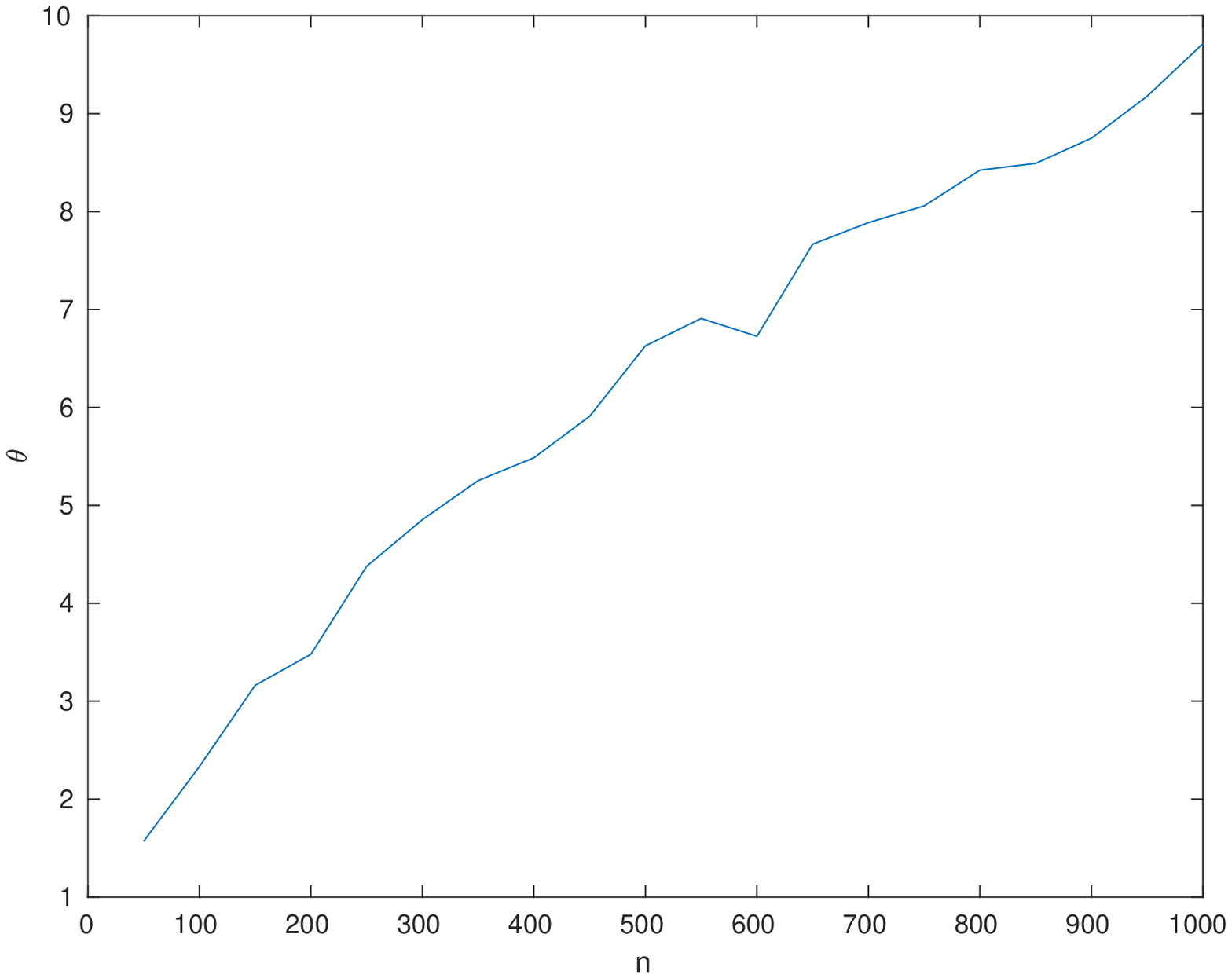}
  	\end{minipage}

    \caption{\small Minimal eigenvalue and $\theta$ of the training data. In all experiments $d=500$. For $i\in [n]$, $x_i$ is drawn i.i.d from $\N(0,I_d)$ and then normalized.
    (Left) The $y$-axis $\lambda=\lambda_{\min}(\E_w[H(w)])$; the $x$-axis is the number samples $n$. (Right) The $y$-axis $\theta=\sqrt{n}\cdot \max_{i\neq j,i,j\in [n]}x_i^\top x_j$; the $x$-axis is the number samples $n$.}
    \label{fig:exp_1}
\end{figure*}

For part 2 and part 3 of Assumption \ref{ass:data_dependent_assumption},
we set $n=100$ and $d=20$,
and take 1000 random Gaussian weights $w$ to plot the distribution of $\|H(w) - \E_w[H(w)]\|$ and $\|(H(w) - \E_w[H(w)])(H(w) - \E_w[H(w)])^\top\|$.
The result can be found in Figure \ref{fig:exp_2}.
We can see that the distribution of both quantities are concentrated;
moreover,
the maximal value of $\|H(w) - \E_w[H(w)]\|$ is no more than $6$, which is much smaller than the maximal possible value $n=100$.
Similarly the maximal value of $\|(H(w) - \E_w[H(w)])(H(w) - \E_w[H(w)])^\top\|$ is also much smaller than the maximal allowed value $n^2$.
Hence in this case we shall expect $\alpha\ll n$ and $\beta\ll n^2$.

\begin{figure*}[!t]
  \centering
  	\begin{minipage}{0.45\textwidth}
  		\includegraphics[width=\textwidth]{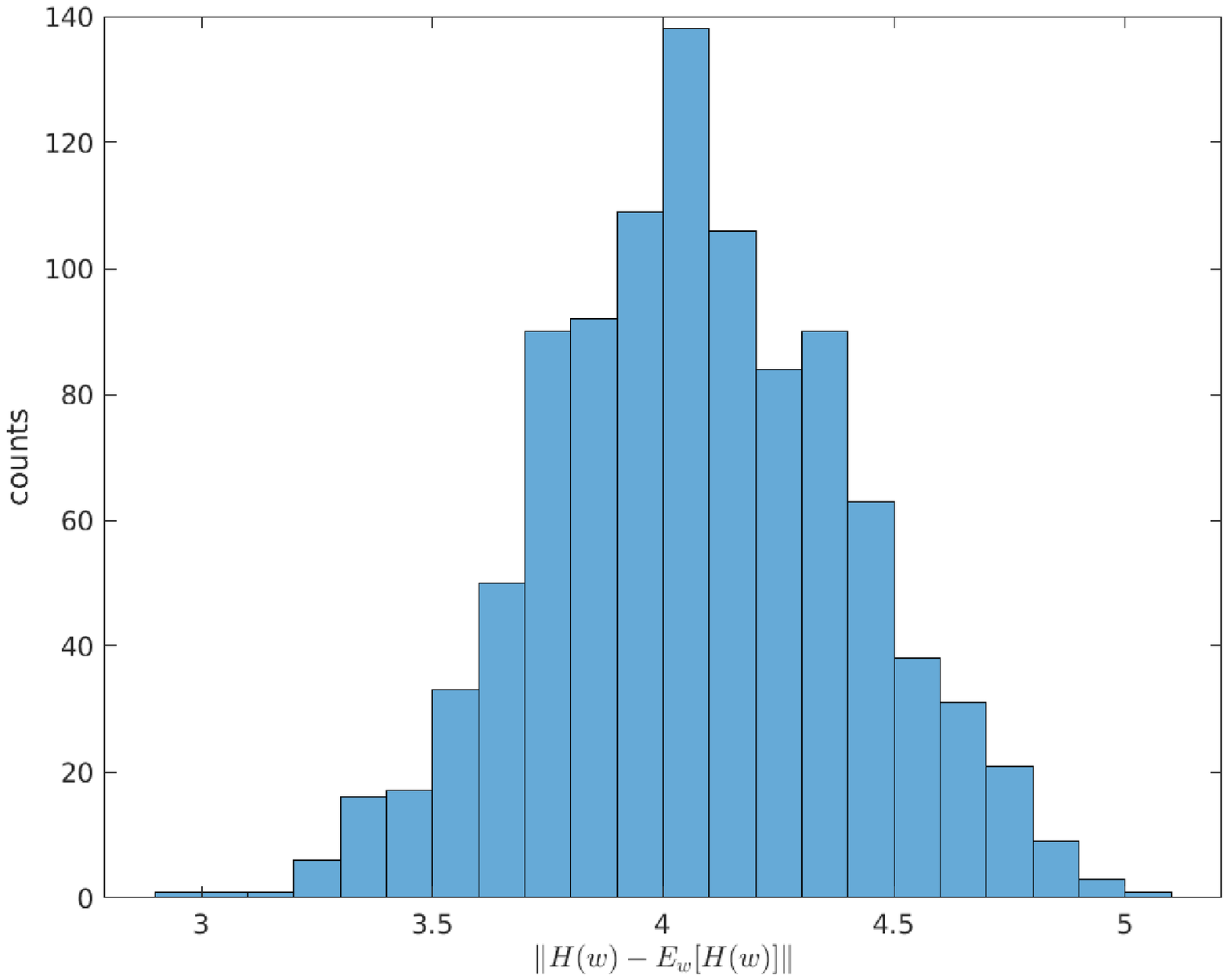}
  	\end{minipage}
  	\hspace{0.05\textwidth}
  	\begin{minipage}{0.45\textwidth}
  		\includegraphics[width=\textwidth]{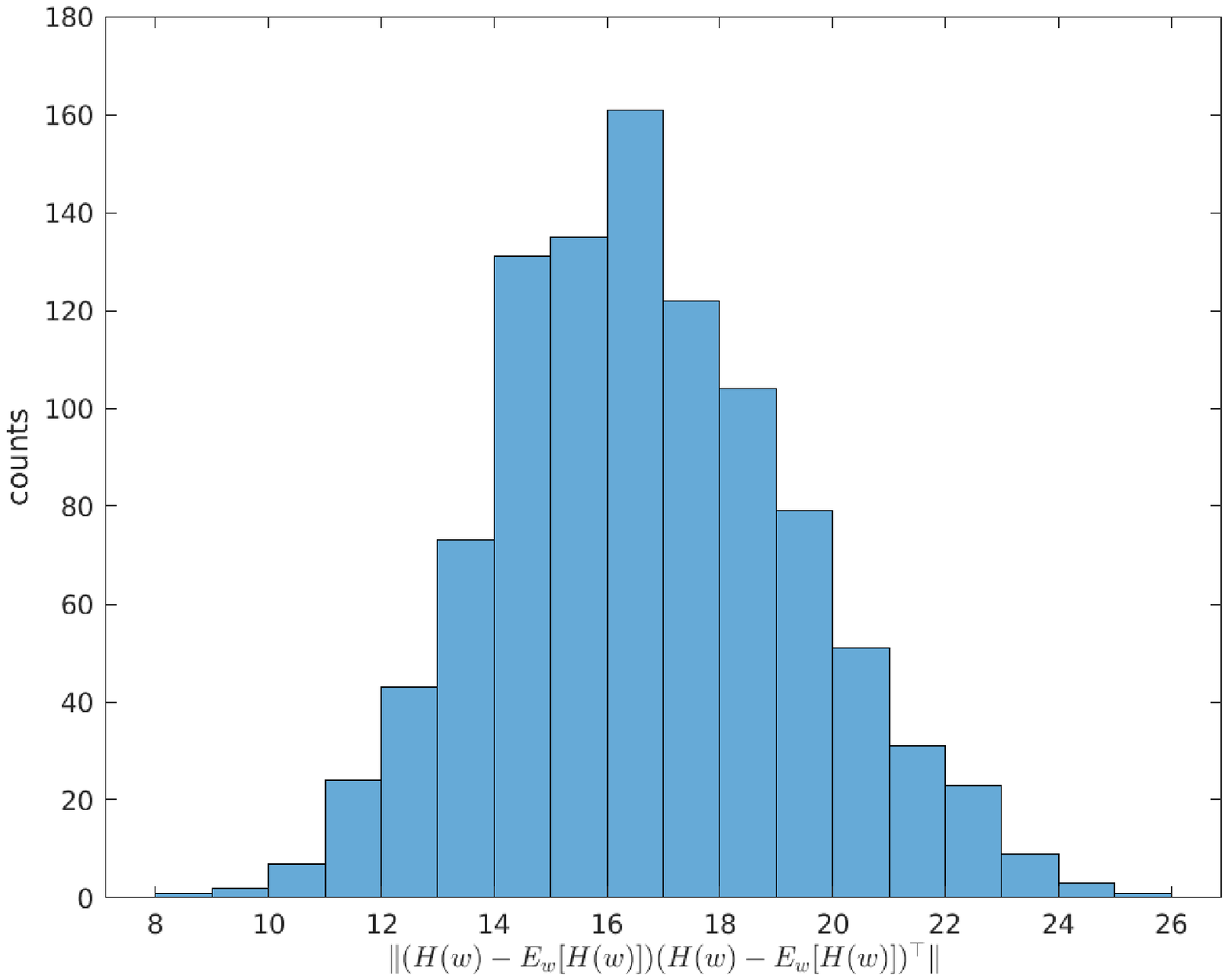}
  	\end{minipage}

    \caption{\small Distributions of $\|H(w) - \E_w[H(w)]\|$ and $\|(H(w) - \E_w[H(w)])(H(w) - \E_w[H(w)])^\top\|$. In all experiments $n=100$ and $d=20$. For $i\in [n]$, $x_i$ is drawn i.i.d from $\N(0,I_d)$ and then normalized.
    For $r=1,\cdots,1000$, $w_r$ is drawn i.i.d from $\N(0,I_d)$.
    (Left) The $x$-axis is the value of $\|H(w) - \E_w[H(w)]\|$. The $y$-axis is the counts for each value of $\|H(w) - \E_w[H(w)]\|$. (Right) The $x$-axis is the value of $\|(H(w) - \E_w[H(w)])(H(w) - \E_w[H(w)])^\top\|$. The $y$-axis is the counts for each value of $\|(H(w) - \E_w[H(w)])(H(w) - \E_w[H(w)])^\top\|$.}
    \label{fig:exp_2}
\end{figure*}

\section{Technical claims (Missing proofs from Section~\ref{sec:quartic_suffices})}\label{sec:missing_proof}

\subsection{Proof of Lemma \ref{lem:3.1}}
For the completeness, we provide a proof of Lemma \ref{lem:3.1} here.
\begin{proof}[Proof of Lemma \ref{lem:3.1}]
For every fixed pair $(i,j)$,
$H_{i,j}^{\dis}$ is an average of independent random variables,
i.e.
\begin{align*}
H_{i,j}^{\dis}=~\frac {1}{m}\sum_{r=1}^m x_i^\top x_j\mathbf{1}_{w_r^\top x_i\geq 0,w_r^\top x_j\geq 0}.
\end{align*}
Then the expectation of $H_{i,j}^{\dis}$ is
\begin{align*}
\E [ H_{i,j}^{\dis} ]
= & ~\frac {1}{m}\sum_{r=1}^m \E_{w_r\sim {\N}(0,I_d)} \left[ x_i^\top x_j\mathbf{1}_{w_r^\top x_i\geq 0,w_r^\top x_j\geq 0} \right]\\
= & ~\E_{w\sim {\N}(0,I_d)} \left[ x_i^\top x_j\mathbf{1}_{w^\top x_i\geq 0,w^\top x_j\geq 0} \right]\\
= & ~ H_{i,j}^{\cts}.
\end{align*}
For $r\in [m]$,
let $z_r=\frac {1}{m}x_i^\top x_j \mathbf{1}_{w_r ^\top x_i\geq 0,w_r^\top x_j\geq 0}$.
Then $z_r$ is a random function of $w_r$,
hence $\{z_r\}_{r\in [m]}$ are mutually independent.
Moreover,
$-\frac {1}{m}\leq z_r\leq \frac {1}{m}$.
So by Hoeffding inequality(Lemma \ref{lem:hoeffding}) we have for all $t>0$,
\begin{align*}
\Pr \left[ | H_{i,j}^{\dis} - H_{i,j}^{\cts} | \geq t \right]
\leq & ~ 2\exp \Big( -\frac{2t^2}{4/m} \Big) \\
 = & ~ 2\exp(-mt^2/2).
\end{align*}
Setting $t=( \frac{1}{m} 2 \log (2n^2/\delta) )^{1/2}$,
we can apply union bound on all pairs $(i,j)$ to get with probability at least $1-\delta$,
for all $i,j\in [n]$,
\begin{align*}
|H_{i,j}^{\dis} - H_{i,j}^{\cts}|
\leq \Big( \frac{2}{m}\log (2n^2/\delta) \Big)^{1/2}
\leq 4 \Big( \frac{\log ( n/\delta ) }{m} \Big)^{1/2}.
\end{align*}
Thus we have
\begin{align*}
\|H^{\dis} - H^{\cts}\|^2 
\leq & ~ \|H^{\dis} - H^{\cts}\|_F^2 \\
 = & ~ \sum_{i=1}^n\sum_{j=1}^n |H_{i,j}^{\dis} - H_{i,j}^{\cts}|^2 \\
 \leq & ~ \frac{1}{m} 16n^2\log (n/\delta).
\end{align*}
Hence if $m=\Omega( \lambda^{-2} n^2\log (n/\delta) )$ we have the desired result.
 \end{proof}

\subsection{Proof of Lemma \ref{lem:3.3}}
 \begin{proof}

Recall we can write the dynamics of predictions as 
\begin{align*}
\frac{ \d }{ \d t} u(t) = H(t)\cdot ( y - u(t) ) .
\end{align*}
We can calculate the loss function dynamics
\begin{align*}
\frac{\d }{ \d t } \| y - u(t) \|_2^2
= & ~ - 2 ( y - u(t) )^\top \cdot H(t) \cdot ( y - u(t) ) \\
\leq & ~ - \lambda \| y - u(t) \|_2^2 .
\end{align*}

Thus we have $\frac{\d}{ \d t} ( \exp(\lambda t) \| y - u(t) \|_2^2 ) \leq 0$ and $\exp( \lambda t ) \| y - u(t) \|_2^2$ is a decreasing function with respect to $t$.

Using this fact we can bound the loss
\begin{align}\label{eq:yut}
\| y - u(t) \|_2^2 \leq \exp( - \lambda t ) \| y - u(0) \|_2^2.
\end{align}


Now, we can bound the gradient norm. For $0 \leq s \leq t$,
\begin{align}\label{eq:gradient_bound}
& ~ \left\| \frac{ \d }{ \d s } w_r(s) \right\|_2 \notag\\
= & ~ \left\| \sum_{i=1}^n (y_i - u_i) \frac{1}{\sqrt{m}} a_r x_i \cdot {\bf 1}_{ w_r(s)^\top x_i \geq 0 } \right\|_2 \notag\\
\leq & ~ \frac{1}{ \sqrt{m} } \sum_{i=1}^n | y_i - u_i(s) | \notag\\
\leq & ~ \frac{ \sqrt{n} }{ \sqrt{m} } \| y - u(s) \|_2 \\
\leq & ~ \frac{ \sqrt{n} }{ \sqrt{m} } \exp( - \lambda s ) \| y - u(0) \|_2.\notag 
\end{align}
where the first step follows from Eq. \eqref{eq:gradient}, the second step follows from triangle inequality and $a_r=\pm 1$ for $r\in [m]$ and $\|x_i\|_2=1$ for $i\in [n]$, the third step follows from Cauchy-Schwartz inequality, and the last step follows from Eq. \eqref{eq:yut}.

Integrating the gradient, we can bound the distance from the initialization
\begin{align*}
\| w_r(t) - w_r(0) \|_2 \leq & ~ \int_0^t \left\| \frac{\d}{ \d s} w_r(s) \right\|_2 \d s \\
\leq & ~ \frac{ \sqrt{n} \| y - u(0) \|_2 }{ \sqrt{m} \lambda } .
\end{align*}
\end{proof}

\subsection{Proof of Lemma \ref{lem:3.4}}
\begin{proof}
Assume the conclusion does not hold at time $t$.
We argue there must be some $s\leq t$ so that $\lambda_{\min}( H ( s ) )<\frac {1}{2}\lambda$.

If $\lambda_{\min}( H ( t  ) )<\frac {1}{2}\lambda$, then we can simply take $s=t$.

Otherwise since the conclusion does not hold, there exists $r$ so that 
\begin{align*}
\|w_r(t)-w_r(0)\|\geq D_{\cts}
\end{align*}
or
\begin{align*}
\|y-u(t)\|_2^2>\exp(-\lambda t)\|y-u(0)\|_2^2.
\end{align*}

Then by Lemma \ref{lem:3.3}, there exists $s\leq t$ such that 
\begin{align*}
\lambda_{\min}( H ( s  ) )<\frac {1}{2}\lambda.
\end{align*}

By Lemma \ref{lem:3.2}, there exists $t_0> 0$ defined as
\begin{align*}
t_0=\inf \left\{ t>0 : \max_{r\in [m]} \|w_r(t)-w_r(0)\|_2^2\geq R \right\}.
\end{align*}

Thus at time $t_0$, there exists $r\in [m]$ satisfying $\|w_r(t_0)-w_r(0)\|_2^2=R$.

By Lemma \ref{lem:3.2},
\begin{align*}
\lambda_{\min}(H(t'))\geq \frac {1}{2}\lambda , \forall t'\leq t_0.
\end{align*}

However, by Lemma \ref{lem:3.3}, this implies 
\begin{align*}
\|w_r(t_0)-w_r(0)\|_2\leq D_{\cts}<R,
\end{align*}
which is a contradiction.
\end{proof}

\subsection{Proof of Lemma \ref{lem:4.1}}
\begin{proof}
We use the norm of gradient to bound this distance,
\begin{align*}
& ~ \| w_r(k+1) - w_r(0) \|_2 \\
\leq & ~ \eta \sum_{i=0}^k \left\| \frac{ \partial L( W(i) ) }{ \partial w_r(i) } \right\|_2 \\
\leq & ~ \eta \sum_{i=0}^k \frac{ \sqrt{n} \| y - u(i) \|_2 }{ \sqrt{m} } \\
\leq & ~ \eta \sum_{i=0}^k \frac{ \sqrt{n} ( 1 - \eta \lambda /2 )^{i/2} }{ \sqrt{m} } \| y - u(0) \|_2 \\
\leq & ~ \eta \sum_{i=0}^{\infty} \frac{ \sqrt{n} (1-\eta\lambda/2)^{i/2} }{ \sqrt{m} } \| y - u(0) \|_2  \\
= & ~ \frac{ 4 \sqrt{n} \| y - u(0) \|_2 }{ \sqrt{m} \lambda },
\end{align*}
where the first step follows from Eq. \eqref{eq:w_update}, the second step follows from Eq. \eqref{eq:gradient_bound}, the third step follows from the induction hypothesis, the fourth step relaxes the summation to an infinite summation, and the last step follows from $\sum_{i=0}^{\infty}(1-\eta\lambda/2)^{i/2}=\frac {2}{\eta\lambda}$.

Thus, we complete the proof.
\end{proof}

\subsection{Upper Bound of $\| H(k)^{\bot} \|_2 $}

\begin{fact}\label{fact:bound_H_k_bot}
\begin{align*}
\| H(k)^{\bot} \|_2 \leq \frac{n}{m^2} \sum_{i=1}^n y_i^2 .
\end{align*}
\end{fact}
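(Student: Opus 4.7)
The plan is to treat this as a purely deterministic (non-probabilistic) estimate: reduce the spectral norm to the Frobenius norm, then bound $\|H(k)^\bot\|_F$ entry-wise using only that indicators are at most $1$ and that the data is unit-norm. No concentration or randomness enters; all the probabilistic content of Claim~\ref{cla:C2} lives in controlling the typical size of $|\bar S_i|$ via Eq.~\eqref{eq:Air_bound}, which happens outside this Fact.

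Concretely, first I would use $\|H(k)^\bot\|_2 \leq \|H(k)^\bot\|_F$ so that it suffices to estimate $\|H(k)^\bot\|_F^2 = \sum_{i,j} (H(k)^\bot_{i,j})^2$. Let $y_i := |\bar S_i|$, the number of neurons whose activation pattern at $x_i$ is not stable in the $R$-ball around $w_r(0)$. Using $|x_i^\top x_j| \leq \|x_i\|_2\|x_j\|_2 \leq 1$ and the fact that the indicator ${\bf 1}_{w_r(k)^\top x_i \geq 0,\, w_r(k)^\top x_j \geq 0}$ lies in $\{0,1\}$, the definition of $H(k)^\bot_{i,j}$ gives
\begin{align*}
|H(k)^\bot_{i,j}| \;\leq\; \frac{1}{m}\sum_{r\in \bar S_i} |x_i^\top x_j|\, {\bf 1}_{w_r(k)^\top x_i \geq 0,\, w_r(k)^\top x_j \geq 0} \;\leq\; \frac{y_i}{m}.
\end{align*}

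Squaring and summing over $j\in[n]$ for each fixed $i$ gives $\sum_j (H(k)^\bot_{i,j})^2 \leq n y_i^2/m^2$, and then summing over $i\in[n]$ yields
\begin{align*}
\|H(k)^\bot\|_2^2 \;\leq\; \|H(k)^\bot\|_F^2 \;\leq\; \frac{n}{m^2}\sum_{i=1}^n y_i^2,
\end{align*}
which is exactly the claim (reading the Fact's $y_i$ as $|\bar S_i|$). There is no genuine obstacle: the argument is three lines of triangle inequality plus the trivial $\|\cdot\|_2 \leq \|\cdot\|_F$ bound, and its role is just to isolate a clean deterministic inequality so that Claim~\ref{cla:C2} only has to control $\sum_i y_i^2$ via anti-concentration of the Gaussian weights $\wt w_r$ at initialization.
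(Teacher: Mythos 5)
Your proof is correct and takes essentially the same route as the paper: relax the spectral norm to the Frobenius norm, bound each entry $|H(k)^\bot_{i,j}|$ by $y_i/m$ using $|x_i^\top x_j|\le 1$ and indicators $\le 1$, and sum. One small observation worth noting: as written, the Fact's left-hand side $\|H(k)^\bot\|_2$ should really be $\|H(k)^\bot\|_2^2$ (or $\|H(k)^\bot\|_F^2$)—both the paper's own derivation and yours bound the \emph{squared} Frobenius norm, which is what Claim~\ref{cla:C2} actually consumes—and your writeup correctly states the squared version in the final display.
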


\begin{proof}
We have
\begin{align*}
\| H(k)^{\bot} \|_F^2
= & ~ \sum_{i=1}^n\sum_{j=1}^n (H(k)^{\bot}_{i,j})^2\\
= & ~ \sum_{i=1}^n\sum_{j=1}^n \Big( \frac {1} {m}\sum_{r\in \ov{S}_i} x_i^\top x_j\mathbf{1}_{w_r(k)^\top x_i\geq 0,w_r(k)^\top x_j\geq 0} \Big)^2\\
= & ~ \sum_{i=1}^n\sum_{j=1}^n \Big( \frac {1} {m}\sum_{r=1}^m x_i^\top x_j\mathbf{1}_{w_r(k)^\top x_i\geq 0,w_r(k)^\top x_j\geq 0} \cdot \mathbf{1}_{r\in \ov{S}_i} \Big)^2\\
= & ~ \sum_{i=1}^n\sum_{j=1}^n ( \frac {x_i^\top x_j} {m} )^2 \Big( \sum_{r=1}^m \mathbf{1}_{w_r(k)^\top x_i\geq 0,w_r(k)^\top x_j\geq 0} \cdot \mathbf{1}_{r\in \ov{S}_i} \Big)^2 \\
\leq & ~ \frac{1}{m^2} \sum_{i=1}^n\sum_{j=1}^n \Big( \sum_{r=1}^m \mathbf{1}_{w_r(k)^\top x_i\geq 0,w_r(k)^\top x_j\geq 0} \cdot \mathbf{1}_{r\in \ov{S}_i} \Big)^2 \\
= & ~ \frac{n}{m^2} \sum_{i=1}^n \Big( \sum_{r=1}^m \mathbf{1}_{r\in \ov{S}_i} \Big)^2 \\
= & ~ \frac{n}{m^2} \sum_{i=1}^n y_i^2 .
\end{align*}
\end{proof}

\subsection{Proof of Fact \ref{fact:dudt}}
\begin{proof}
For each $i\in [n]$,
we have

\begin{align*}
&~\frac{\d}{\d t} u_i(t)\\
=&~\sum_{r=1}^m \left\langle \frac{\partial f(W(t),a,x_i)}{\partial w_r(t)},\frac{\d w_r(t)}{\d t} \right\rangle\\
=&~\sum_{r=1}^m \left\langle \frac{\partial f(W(t),a,x_i)}{\partial w_r(t)},-\frac{\partial L(w(t),a)}{\partial w_r(t)} \right\rangle\\
 =&~\sum_{r=1}^m \Big\langle \frac{\partial f(W(t),a,x_i)}{\partial w_r(t)},
 -\frac{1}{ \sqrt{m} } \sum_{i=1}^n ( f(W,x_i,a_r) - y_i ) a_r x_i {\bf 1}_{ w_r^\top x_i \geq 0 } \Big\rangle\\
=&~\sum_{j=1}^n (y_j-u_j(t)) \left\langle \frac{\partial f(W(t),a,x_i)}{\partial w_r(t)},\frac{\partial f(W(t),a,x_j)}{\partial w_r(t)} \right\rangle\\
=&~\sum_{j=1}^n (y_j-u_j(t))H(t)_{i,j}
\end{align*}
where the first step follows from Eq. \eqref{eq:ut_def} and the chain rule of derivatives,
the second step uses Eq. \eqref{eq:gradient},
the third step uses Eq. \eqref{eq:wr_derivative},
the fourth step uses Eq. \eqref{eq:relu_derivative} and Eq. \eqref{eq:ut_def},
and the last step uses the definition of the matrix $H$.
\end{proof}

\subsection{Proof of Claim \ref{cla:inductive_claim}}
\begin{proof}
We can rewrite $u(k+1) - u(k) \in \R^n$ in the following sense
\begin{align*}
u(k+1) - u(k) = v_1 + v_2 .
\end{align*}

Then, we can rewrite $v_{1,i} \in \R$ with the notation of $H$ and $H^{\bot}$

\begin{align*}
v_{1,i} 
= & ~ - \frac{\eta}{ m } \sum_{j=1}^n x_i^\top x_j (u_j - y_j) \sum_{r \in S_i} {\bf 1}_{ w_r(k)^\top x_i \geq 0 , w_r(k)^\top x_j \geq 0 } \\
= & ~ - \eta \sum_{j=1}^n (u_j - y_j) ( H_{i,j}(k) - H_{i,j}^{\bot}(k) ) ,
\end{align*}

which means vector $v_1 \in \R^n$ can be written as
\begin{align}\label{eq:rewrite_v1}
v_1 = \eta ( y - u(k) )^\top ( H( k ) - H^{\bot}( k ) ) .
\end{align}

We can rewrite $\| y - u(k+1) \|_2^2$ as follows:
\begin{align*}
& ~\| y - u(k+1) \|_2^2\\
= & ~ \| y - u(k) - ( u(k+1) - u(k) ) \|_2^2 \\
= & ~ \| y - u(k) \|_2^2 - 2 ( y - u(k) )^\top  ( u(k+1) - u(k) )\\
 + & ~ \| u (k+1) - u(k) \|_2^2 .
\end{align*}

We can rewrite the second term in the above Equation in the following sense,
\begin{align*}
 & ~ ( y - u(k) )^\top ( u(k+1) - u(k) ) \\
= & ~ ( y - u(k) )^\top ( v_1 + v_2 )  \\
= & ~ ( y - u(k) )^\top v_1 + ( y - u(k) )^\top v_2  \\
= & ~ \eta ( y - u(k) )^\top H(k) ( y - u (k) ) \\
- &~ \eta ( y - u(k) )^\top H(k)^{\bot} ( y - u(k) ) + ( y - u(k) )^\top v_2 ,
\end{align*}
where the third step follows from Eq.~\eqref{eq:rewrite_v1}.

Thus, we have
\begin{align*}
&~\| y - u(k+1) \|_2^2 \\
= & ~ \| y - u(k) \|_2^2 + C_1 + C_2 + C_3 + C_4 \\
\leq & ~ \| y - u(k) \|_2^2 ( 1 - \eta \lambda + 8 \eta n R  + 8 \eta n R  + \eta^2 n^2 ) ,
\end{align*}
where the last step follows from Claim~\ref{cla:C1}, \ref{cla:C2}, \ref{cla:C3} and \ref{cla:C4},
whose proof is given later. 
\end{proof}

\subsection{Proof of Claim \ref{cla:yu0}}\label{sec:proof_yu0}
\begin{proof}

\begin{align*}
\|y-u(0)\|_2^2
= & ~ \sum_{i=1}^n(y_i-f(W(0),a,x_i))^2\\
= & ~ \sum_{i=1}^n \Big( y_i-\frac {1} {\sqrt{m}}\sum_{r=1}^{m} a_r\phi(w_r^\top x_i) \Big)^2\\
= & ~ \sum_{i=1}^n y_i^2-2\sum_{i=1}^n \frac{y_i}{\sqrt{m}}\sum_{r=1}^{m} a_r\phi(w_r^\top x_i) +\sum_{i=1}^n \frac {1}{m} \Big( \sum_{r=1}^{m} a_r\phi(w_r^\top x_i) \Big)^2.
\end{align*}

Fix $r\in [m]$ and $i\in [n]$.
Since $w_r\sim \N(0,I)$ and $\|x_i\|_2=1$,
$w_r^\top x_i$ follows distribution $\N(0,1)$.
From concentration of Gaussian distribution,
we have
\begin{align*}
\Pr_{w_r}[w_r^\top x_i\geq \sqrt{2\log (2mn / \delta) }]\leq \frac{\delta}{2mn}.
\end{align*}
Let $E_1$ be the event that
for all $r\in [m]$ and $i\in [n]$ we have
$
\phi(w_r^\top x_i)\leq \sqrt{2\log ( 2mn/ \delta) }.
$ 
Then by union bound,
$\Pr[E_1]\geq 1-\frac {\delta}{2}$,

Fix $i\in [n]$.
For every $r\in [m]$,
we define random variable $z_{i,r}$ as
\begin{align*}
z_{i,r}:=\frac {1}{\sqrt{m}} \cdot a_r \cdot \phi(w_r^\top x_i) \cdot \mathbf{1}_{w_r^\top x_i\leq \sqrt{2\log ( 2mn / \delta ) }}.
\end{align*}
Then $z_{i,r}$ only depends on $a_r\in \{-1,1\}$ and $w_r\sim \N(0,I)$.
Notice that $\E_{a_r,w_r}[z_{i,r}]=0$,
and $|z_{i,r}|\leq \sqrt{2\log ( 2mn / \delta ) }$.
Moreover,
\begin{align*}
& ~ \E_{a_r,w_r}[z_{i,r}^2] \\
= & ~\E_{a_r,w_r}\left[\frac {1}{m}a_r^2\phi^2(w_r^\top x_i)\mathbf{1}^2_{w_r^\top x_i\leq \sqrt{2\log ( 2mn / \delta) }}\right]\\
= & ~\frac {1}{m}\E_{a_r}[a_r^2] \cdot \E_{w_r} \Big[\phi^2(w_r^\top x_i)\mathbf{1}^2_{w_r^\top x_i\leq \sqrt{2\log ( 2mn / \delta) }} \Big]\\
\leq & ~\frac {1}{m}\cdot 1 \cdot \E_{w_r}[(w_r^\top x_i)^2] \\
= & ~ \frac {1} {m},
\end{align*}
where the second step uses independence between $a_r$ and $w_r$,
the third step uses $a_r\in \{-1,1\}$ and $\phi(t) = \max \{ t,0\}$,
and the last step follows from $w_r^\top x_i\sim \N(0,1)$.

Now we are ready to apply Bernstein inequality~(Lemma \ref{lem:bernstein}) to get for all $t>0$,
\begin{align*}
\Pr \left[ \sum_{r=1}^m z_{i,r}>t \right] \leq \exp\left(-\frac{t^2/2}{m\cdot \frac{1}{m}+\sqrt{2\log (2mn/\delta)} \cdot t/3} \right).
\end{align*}
Setting $t=\sqrt{2\log ( 2mn / \delta) }\cdot \log ( 4n / \delta )$,
we have with probability at least $1-\frac {\delta}{4n}$,
\begin{align*}
\sum_{r=1}^m z_{i,r}\leq \sqrt{2\log ( 2mn / \delta) }\cdot \log ( 4n / \delta ).
\end{align*}

Notice that we can also apply Bernstein inequality (Lemma~\ref{lem:bernstein}) on $-z_{i,r}$ to get
\begin{align*}
\Pr \left[ \sum_{r=1}^m z_{i,r}<-t \right] \leq \exp\left(-\frac{t^2/2}{m\cdot \frac{1}{m}+\sqrt{2\log (2mn/\delta)} \cdot t/3} \right).
\end{align*}
Let $E_2$ be the event that
for all $i\in[n]$,
\begin{align*}
\left| \sum_{r=1}^m z_{i,r} \right| \leq \sqrt{2\log ( 2mn / \delta) }\cdot \log ( 4n / \delta ).
\end{align*}
By applying union bound on all $i\in [n]$,
we have $\Pr[E_2]\geq 1-\frac {\delta}{2}$.

If both $E_1$ and $E_2$ happen,
we have
\begin{align*}
& ~ \|y-u(0)\|_2^2 \\
= & ~ \sum_{i=1}^n y_i^2-2\sum_{i=1}^n y_i\sum_{r=1}^{m} z_{i,r}+\sum_{i=1}^n \Big( \sum_{r=1}^{m}z_{i,r} \Big)^2\\
\leq & ~\sum_{i=1}^n y_i^2+2\sum_{i=1}^n |y_i|\sqrt{2\log ( 2mn / \delta) }\cdot \log ( 4n / \delta ) + \sum_{i=1}^n \Big( \sqrt{2\log ( 2mn / \delta) }\cdot \log ( 4n / \delta ) \Big)^2\\
= & ~ O(n\log(m/\delta)\log^2(n/\delta)) ,
\end{align*}
where the first step uses $E_1$, the second step uses $E_2$, and the last step follows from $|y_i| = O(1), \forall i \in [n]$.

By union bound, this will happen with probability at least $1-\delta$.
\end{proof}

\subsection{Proof of Claim \ref{cla:C1}}\label{sec:proof_c1}

\begin{proof}
By Lemma \ref{lem:3.2} and our choice of $R<\frac{\lambda}{8n}$,
We have $\|H(0)-H(k)\|_F\leq 2n\cdot \frac{\lambda}{8n}=\frac {\lambda}{4}$.
Recall that $\lambda=\lambda_{\min}(H(0))$.
Therefore
\begin{align*}
\lambda_{\min}(H(k)) \geq \lambda_{\min}(H(0))- \|H(0)-H(k)\|\geq \lambda /2.
\end{align*}
Then we have
\begin{align*}
  (y - u(k))^\top H(k) ( y - u(k) ) \geq \| y - u(k) \|_2^2 \cdot \lambda / 2.
\end{align*}
Thus, we complete the proof.
\end{proof}

\subsection{Proof of Claim \ref{cla:C2}}\label{sec:proof_c2}

\begin{proof}
Note that
\begin{align*}
C_2 \leq 2 \eta \| y - u(k) \|_2^2 \| H(k)^{\bot} \|.
\end{align*}

It suffices to upper bound $\| H(k)^{\bot} \|$. Since $\| \cdot \| \leq \| \cdot \|_F$, then it suffices to upper bound $\| \cdot \|_F$.

For each $i \in [n]$, we define $y_i$ as follows
\begin{align*}
y_i=\sum_{r=1}^m\mathbf{1}_{r\in \ov{S}_i} .
\end{align*}

Using Fact~\ref{fact:bound_H_k_bot}, we have $\| H(k)^{\bot} \|_2 \leq \frac{n}{m^2} \sum_{i=1}^n y_i^2 $.

Fix $i \in [n]$. The plan is to use Bernstein inequality to upper bound $y_i$ with high probability.

First by Eq.~\eqref{eq:Air_bound} we have 
$
\E[\mathbf{1}_{r\in \ov{S}_i}]\leq R 
$. 
We also have
\begin{align*}
\E \left[(\mathbf{1}_{r\in \ov{S}_i}-\E[\mathbf{1}_{r\in \ov{S}_i}])^2 \right]
 = & ~ \E[\mathbf{1}_{r\in \ov{S}_i}^2]-\E[\mathbf{1}_{r\in \ov{S}_i}]^2\\
\leq & ~ \E[\mathbf{1}_{r\in \ov{S}_i}^2] \\
\leq & ~ R .
\end{align*}
Finally we have $|\mathbf{1}_{r\in \ov{S}_i}-\E[\mathbf{1}_{r\in \ov{S}_i}]|\leq 1$.

Notice that $\{\mathbf{1}_{r\in \ov{S}_i}\}_{r=1}^m$ are mutually independent,
since $\mathbf{1}_{r\in \ov{S}_i}$ only depends on $w_r(0)$.
Hence from Bernstein inequality (Lemma \ref{lem:bernstein}) we have for all $t>0$,
\begin{align*}
\Pr \left[ y_i > m\cdot R+t \right] \leq \exp \left(-\frac{t^2/2}{m\cdot R+t/3} \right).
\end{align*}
By setting $t=3mR$, we have
\begin{align}\label{eq:Si_size_bound}
\Pr \left[ y_i > 4mR \right] \leq \exp(-mR).
\end{align}
Hence by union bound,
with probability at least $1-n\exp(-mR)$,
\begin{align*}
\| H(k)^{\bot} \|_F^2\leq \frac{n}{m^2}\cdot n\cdot (4mR)^2=16n^2R^2 .
\end{align*}
Putting all together we have
\begin{align*}
\| H(k)^{\bot} \|\leq \| H(k)^{\bot} \|_F\leq 4nR
\end{align*}
with probability at least $1-n\exp(-mR)$.

\end{proof}

\subsection{Proof of Claim \ref{cla:C3}}\label{sec:proof_c3}

\begin{proof}
Using Cauchy-Schwarz inequality, we have
$
C_3 \leq 2 \| y - u(k) \|_2 \cdot \| v_2 \|_2
$. 
We can upper bound $\| v_2 \|_2$ in the following sense
\begin{align*}
\| v_2 \|_2^2
\leq &~ \sum_{i=1}^n \left(\frac{\eta}{ \sqrt{m} } \sum_{ r \in \ov{S}_i } \left| ( \frac{ \partial L(W(k)) }{ \partial w_r(k) } )^\top x_i \right|\right)^2\\
= &~ \frac{\eta^2}{ m }\sum_{i=1}^n \left(\sum_{r=1}^m \mathbf{1}_{r\in \ov{S}_i}\left| ( \frac{ \partial L(W(k)) }{ \partial w_r(k) } )^\top x_i \right|\right)^2\\
\leq &~ \frac{\eta^2}{ m }\cdot \max_{r \in [m]} \left|  \frac{ \partial L(W(k)) }{ \partial w_r(k) } \right|^2\cdot\sum_{i=1}^n \left(\sum_{r=1}^m \mathbf{1}_{r\in \ov{S}_i}\right)^2\\
 \leq & ~ \frac{\eta^2}{ m }\cdot (\frac{ \sqrt{n} }{ \sqrt{m} } \| u(k) - y\|_2 )^2 \cdot \sum_{i=1}^n \left(\sum_{r=1}^m \mathbf{1}_{r\in \ov{S}_i}\right)^2\\
  \leq & ~ \frac{\eta^2}{ m }\cdot (\frac{ \sqrt{n} }{ \sqrt{m} } \| u(k) - y\|_2 )^2 \cdot \sum_{i=1}^n (4mR)^2\\
  = & ~ 16n^2R^2\eta^2\| u(k) - y\|_2^2,
\end{align*}
where the first step follows from definition of $v_2$, the fourth step follows from $\max_{r \in [m]} | \frac{ \partial L (W(k)) }{ \partial w_r(k) } | \leq \frac{ \sqrt{n} }{ \sqrt{m} } \cdot \| u(k) - y \|_2$, the fifth step follows from $\sum_{r=1}^m {\bf 1}_{r \in \ov{S}_i } \leq 4 m R$ with probability at least $1-\exp(-m R)$.
\end{proof}

\subsection{Proof of Claim \ref{cla:C4}}\label{sec:proof_c4}
\begin{proof}
We have
\begin{align*}
\LHS \leq & ~ \eta^2 \sum_{i=1}^n \frac{1}{m} \left( \sum_{r=1}^m \Big\| \frac{ \partial L( W(k) ) }{ \partial w_r(k) } \Big\|_2 \right)^2 \\
\leq & ~ \eta^2 n^2 \| y - u(k) \|_2^2.
\end{align*}
where the first step follows from \eqref{eq:w_update}
and the last step follows from \eqref{eq:gradient_bound}.
\end{proof}


\section{Cubic Suffices}\label{sec:cubic_suffices}

We prove a more general version of Lemma~\ref{lem:3.1} in this section.
\begin{theorem}[Data-dependent version, bounding the difference between discrete and continuous]\label{thm:3.1_matrix_chernoff}
Let $H^{\cts}$ and $H^{\dis}$ be defined as Definition~\ref{def:data_dependent_function}. Let $\lambda, \alpha, \beta$ be satisfied Assumption~\ref{ass:data_dependent_assumption}. If 
\begin{align*}
m = \Omega ( (\lambda^{-2} \beta + \lambda^{-1} \alpha ) \log (n/\delta)  ),
\end{align*}
we have
\begin{align*}
\| H^{\dis} - H^{\cts} \|_2 \leq \lambda/ 4, \mathrm{~and~} \lambda_{\min} ( H^{\dis} ) \geq \frac{3}{4} \lambda
\end{align*}
holds with probability at least $1-\exp(-\Omega(\log (n/\delta)))$.
\end{theorem}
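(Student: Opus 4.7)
The plan is to express $H^{\dis} - H^{\cts}$ as a sum of $m$ independent, mean-zero random matrices and then invoke the matrix Bernstein inequality (Lemma~\ref{lem:matrix_bernstein}) with parameters supplied directly by Assumption~\ref{ass:data_dependent_assumption}. Concretely, for each $r \in [m]$ set
\begin{align*}
X_r := \frac{1}{m}\bigl( H(w_r) - \E_{w}[H(w)] \bigr) \in \R^{n \times n}.
\end{align*}
Since $w_1, \ldots, w_m$ are i.i.d.\ draws from $\N(0,I)$, the matrices $X_1, \ldots, X_m$ are independent and mean zero, and by Definition~\ref{def:data_dependent_function} their sum is exactly $Z := H^{\dis} - H^{\cts}$.

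Next, I would read off the two parameters required by matrix Bernstein. For the uniform bound, Part 2 of Assumption~\ref{ass:data_dependent_assumption} (with $\gamma=0$ as per the footnote) gives $\|H(w_r)-\E_w[H(w)]\| \leq \alpha$ almost surely, hence $\|X_r\| \leq \alpha/m =: M$. For the variance statistic, by symmetry of $H(w_r)$ we have
\begin{align*}
\sum_{r=1}^m \E[ X_r X_r^\top ]
= \frac{1}{m^2} \sum_{r=1}^m \E\Bigl[ \bigl(H(w_r) - \E_w[H(w)]\bigr)\bigl(H(w_r) - \E_w[H(w)]\bigr)^\top \Bigr],
\end{align*}
so by Part 3 of Assumption~\ref{ass:data_dependent_assumption} we obtain $\var[Z] \leq \beta / m$. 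Plugging $M$ and $\var[Z]$ into Lemma~\ref{lem:matrix_bernstein} with $t = \lambda/4$ yields
\begin{align*}
\Pr\bigl[\|Z\| \geq \lambda/4\bigr]
\leq 2n \cdot \exp\left( -\frac{m \lambda^2 / 32}{\beta + \alpha \lambda / 12} \right).
\end{align*}
Choosing $m = \Omega( (\lambda^{-2}\beta + \lambda^{-1}\alpha)\log(n/\delta) )$ makes the right-hand side at most $\exp(-\Omega(\log(n/\delta)))$, which gives the first conclusion.

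Finally, the eigenvalue bound follows from Weyl's inequality: on the event $\|H^{\dis}-H^{\cts}\| \leq \lambda/4$, we have $\lambda_{\min}(H^{\dis}) \geq \lambda_{\min}(H^{\cts}) - \|H^{\dis}-H^{\cts}\| \geq \lambda - \lambda/4 = (3/4)\lambda$, using Part 1 of Assumption~\ref{ass:data_dependent_assumption}. The only delicate point I anticipate is ensuring the almost-sure boundedness hypothesis of matrix Bernstein is correctly handled; this is the reason we invoke the simplifying assumption $\gamma = 0$. If one instead wanted to treat $\gamma > 0$, the cleanest route would be to condition on the intersection of the events $\{\|H(w_r) - \E_w[H(w)]\| \leq \alpha\}$ over $r \in [m]$ (which happens with probability $\geq 1 - m\gamma$ by a union bound) and replace $X_r$ by its truncation, absorbing the resulting bias into a slightly weaker spectral bound. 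All other steps are routine parameter matching.
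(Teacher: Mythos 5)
Your proposal is correct and follows essentially the same route as the paper: both decompose $H^{\dis}-H^{\cts}$ into a sum of $m$ independent mean-zero random matrices, read off the uniform bound $\alpha$ and variance bound $\beta$ from Parts 2 and 3 of Assumption~\ref{ass:data_dependent_assumption}, and apply the matrix Bernstein inequality (Lemma~\ref{lem:matrix_bernstein}), finishing with Weyl's inequality for the eigenvalue bound. The only cosmetic differences are that you absorb the $1/m$ into the summands and plug $t=\lambda/4$ directly into the tail bound, while the paper keeps the unnormalized sum $Y=\sum_r Y_r$ and chooses $t=\sqrt{m\beta\log(n/\delta)}+\alpha\log(n/\delta)$; both yield the same requirement on $m$.
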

\begin{proof}

Recall the definition, we know
\begin{align*}
H^{\cts} = \E_{w} [ H(w) ], \mathrm{~~~and~~~} H^{\dis} = \frac{1}{m} \sum_{r=1}^m H(w_r) .
\end{align*}
We define matrix $Y_r = H(w_r) - \E_w[ H(w) ]$. We know that, $Y_r$ are all independent, 
\begin{align*}
\E[ Y_r ] = 0, ~~~
\| Y_r \| \leq \alpha, ~~~
\Big\| \sum_{r=1}^m \E [ Y_r Y_r^\top ] \Big\| \leq m \beta.
\end{align*}
Let $Y = \sum_{r=1}^m Y_r$.
We apply Matrix Bernstein inequality (Lemma~\ref{lem:matrix_bernstein}) with $t = \sqrt{ m \beta \log (n/\delta)} + \alpha \log(n/\delta)$,
\begin{align*}
\Pr[ \| Y \| \geq t ]
\leq & ~ 2 n \exp \Big( - \frac{t^2/2}{ m \beta + \alpha t/3 } \Big) \\
\leq & ~ 2 n \exp ( - \log(n/\delta) ) \\
\leq & ~ \exp ( - \Omega(\log (n/\delta)) ) .
\end{align*}
Thus, we have
\begin{align*}
\Pr \left[ \Big\| \frac{1}{m} \sum_{r=1}^m Y_r \Big\| \geq \frac{1}{m} ( \sqrt{m \beta\log (n/\delta)} + \alpha \log (n/\delta) )  \right] \leq \exp(-\Omega(\log (n/\delta))).
\end{align*}
In order to guarantee that $\frac{1}{m} ( \sqrt{m \beta\log (n/\delta)} + \alpha \log (n/\delta) ) \leq \lambda$, we need
\begin{align*}
\sqrt{m} \geq \lambda^{-1} \sqrt{\beta \log (n/\delta)}
\end{align*}
when the first term is the dominated one; we need
\begin{align*}
m \geq \lambda^{-1} \alpha \log(n/\delta).
\end{align*}
Overall, we need 
\begin{align*}
m \geq \Omega( ( \lambda^{-2} \beta + \lambda^{-1} \alpha ) \log (n/\delta) ).
\end{align*} 
Thus, we complete the proof.
\end{proof}

\begin{table}\caption{Table of Parameters for the $m = \wt{\Omega}(n^3)$ result in Section~\ref{sec:cubic_suffices}. {\bf Nt.} stands for notations.}
\centering
\begin{tabular}{ | l| l| l| l| } 
\hline
{\bf Nt.} & {\bf Choice} & {\bf Place} & {\bf Comment} \\\hline
$\lambda$ & $:= \lambda_{\min}(H^{\cts}) $ & Part 1 of Assumption~\ref{ass:data_dependent_assumption} & Data-dependent \\ \hline
$\alpha$ & Absolute & Part 2 of Assumption~\ref{ass:data_dependent_assumption} & Data-dependent \\ \hline
$\beta$ & Variance & Part 3 of Assumption~\ref{ass:data_dependent_assumption} & Data-dependent \\ \hline
$R$ & $\lambda/n$ & Eq.~\eqref{eq:choice_of_eta_R} & Maximal allowed movement of weight \\ \hline
$D_{\cts}$ & $\frac{ \sqrt{\alpha} \| y - u(0) \|_2 }{ \sqrt{m} \lambda }$ & Lemma~\ref{lem:3.3_chernoff} & Actual moving distance, continuous case  \\ \hline
$D$ & $\frac{ 4\sqrt{\alpha} \| y - u(0) \|_2 }{ \sqrt{m} \lambda }$ & Theorem~\ref{thm:cubic} & Actual moving distance, discrete case  \\ \hline
$\eta$ & $\lambda/ ( \alpha n)$ & Eq.~\eqref{eq:choice_of_eta_R} & Step size of gradient descent \\ \hline
$m$ & $ ( \lambda^{-2} \beta + \lambda^{-1} \alpha ) \log(n/\delta)$ & Theorem~\ref{thm:3.1_matrix_chernoff} & Bounding discrete and continuous \\ \hline
$m$ & $\lambda^{-4} \alpha n^3 \log^3(n/\delta)$  & Lemma~\ref{lem:3.4} and Claim~\ref{cla:yu0} & $D < R$ and $\| y - u(0) \|_2^2 = \wt{O}(n)$ \\ \hline
\end{tabular}
\end{table}

\begin{lemma}[Stronger version of Lemma 3.3 in \cite{dzps19}]\label{lem:3.3_chernoff}
Let Part 4 in Assumption \ref{ass:data_dependent_assumption} hold. Let $D_{\cts} = \frac{ \sqrt{\alpha} \| y - u(0) \|_2 }{ \sqrt{m} \lambda }$. 
Suppose for $0 \leq s \leq t$, $\lambda_{\min} ( H( s ) ) \geq \lambda / 2$. Then we have 
\begin{align*}
\| y - u(t) \|_2^2 \leq \exp( - \lambda t ) \cdot \| y - u(0) \|_2^2,
\end{align*}
and
\begin{align*}
\| w_r(t) - w_r(0) \|_2 \leq D_{\cts}.
\end{align*}
\end{lemma}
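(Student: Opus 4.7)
The plan is to mirror the proof of Lemma~\ref{lem:3.3}, preserving the exponential-decay argument verbatim but sharpening the gradient-norm estimate by exploiting the matrix structure of the hidden-layer update; this is precisely what lets the naive factor $\sqrt n$ in $D_{\cts}$ be replaced by $\sqrt\alpha$.

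The first conclusion carries over directly from Lemma~\ref{lem:3.3}. Using Fact~\ref{fact:dudt}, on the interval $[0,t]$ we get
\begin{align*}
\frac{\d}{\d s}\|y - u(s)\|_2^2 = -2(y - u(s))^\top H(s)(y - u(s)) \leq -\lambda\|y - u(s)\|_2^2,
\end{align*}
using the hypothesis $\lambda_{\min}(H(s)) \geq \lambda/2$. Integrating gives the stated $\|y - u(s)\|_2^2 \leq \exp(-\lambda s)\|y - u(0)\|_2^2$.

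For the second conclusion, I introduce the matrix $X := [x_1,\ldots,x_n] \in \R^{d\times n}$ and, for each $r$, the $n\times n$ diagonal matrix $D_r(s)$ with entries $\mathbf{1}_{w_r(s)^\top x_i \geq 0}$. Then Eq.~\eqref{eq:gradient} rewrites as $\frac{\d w_r(s)}{\d s} = -\frac{a_r}{\sqrt m}\,X D_r(s)(u(s)-y)$, and the identity $H(w_r(s)) = D_r(s)\,X^\top X\,D_r(s)$ gives $\|X D_r(s)\|^2 = \|H(w_r(s))\|$, so that
\begin{align*}
\Big\|\frac{\d w_r(s)}{\d s}\Big\|_2 \leq \frac{\sqrt{\|H(w_r(s))\|}}{\sqrt m}\,\|y - u(s)\|_2.
\end{align*}
This replaces the $\sqrt n/\sqrt m$ factor in the proof of Lemma~\ref{lem:3.3} (which came from a coordinate-wise Cauchy-Schwarz) by the spectral quantity $\sqrt{\|H(w_r(s))\|}/\sqrt m$. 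Invoking Part~2 of Assumption~\ref{ass:data_dependent_assumption} (which I take to be the intended hypothesis, despite the statement's reference to Part~4), a union bound over $r\in[m]$ yields $\|H(w_r(0)) - \E_w[H(w)]\| \leq \alpha$ for all $r$ with high probability, so $\|H(w_r(0))\| = O(\alpha)$ under the convention that the data-dependent quantity $\|\E_w[H(w)]\|$ is absorbed into $\alpha$. A perturbation bound of the same flavor as Lemma~\ref{lem:3.2}, combined with a bootstrap in which the target estimate $\|w_r(s) - w_r(0)\|_2 \leq D_{\cts}$ is assumed up to time $s\in[0,t]$, then propagates $\|H(w_r(s))\| = O(\alpha)$ uniformly along the trajectory.

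Integrating the sharpened gradient bound against the exponential decay,
\begin{align*}
\|w_r(t) - w_r(0)\|_2 \leq \int_0^t \frac{O(\sqrt\alpha)}{\sqrt m}\exp(-\lambda s/2)\,\|y - u(0)\|_2\,\d s \leq \frac{O(\sqrt\alpha)\,\|y - u(0)\|_2}{\sqrt m\,\lambda},
\end{align*}
which recovers $D_{\cts}$ up to constants and closes the bootstrap. The main obstacle is the third step: Part~2 only controls the deviation of $H(w)$ from its mean, so bounding $\|H(w_r(s))\|$ itself by $O(\alpha)$ requires either an explicit handle on $\|\E_w[H(w)]\|$ or a separate spectral-norm estimate, and this estimate must then be propagated from $s=0$ to arbitrary $s\in[0,t]$ together with the weight-movement bootstrap---this interlocking is the crux of the improvement over the \cite{dzps19} argument.
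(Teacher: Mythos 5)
Your proposal is correct and follows the same route as the paper's proof: introduce the matrix $X_r$ (your $X D_r(s)$) whose columns are the masked inputs, use $X_r^\top X_r = H(w_r(s))$ together with Part~2 of Assumption~\ref{ass:data_dependent_assumption} to get $\|X_r\|\le\sqrt\alpha$, and integrate the resulting gradient bound against the exponential decay. The obstacles you flag are in fact also present but unaddressed in the paper's own proof, which likewise invokes Part~2 despite the lemma citing Part~4 and quietly treats $\alpha$ as a bound on $\|H(w)\|$ rather than on $\|H(w)-\E_w[H]\|$; but note that under the footnote convention $\gamma=0$, Part~2 is a pointwise almost-sure bound on the function $w\mapsto H(w)$, so it applies directly at each $w_r(s)$ along the trajectory with no union bound over $r$ and no bootstrap, which removes the propagation concern you raised.
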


\begin{proof}

Recall we can write the dynamics of predictions as 
\begin{align*}
\frac{ \d }{ \d t} u(t) = H(t) \cdot ( y - u(t) ) .
\end{align*}
We can calculate the loss function dynamics
\begin{align*}
\frac{\d }{ \d t } \| y - u(t) \|_2^2
= & ~ - 2 ( y - u(t) )^\top \cdot H(t) \cdot ( y - u(t) ) \\
\leq & ~ - \lambda \| y - u(t) \|_2^2.
\end{align*}

Thus we have $\frac{\d}{ \d t} ( \exp(\lambda t) \| y - u(t) \|_2^2 ) \leq 0$ and $\exp( \lambda t ) \| y - u(t) \|_2^2$ is a decreasing function with respect to $t$.

Using this fact we can bound the loss
\begin{align*}
\| y - u(t) \|_2^2 \leq \exp( - \lambda t ) \| y - u(0) \|_2^2.
\end{align*}

Therefore, $u(t) \rightarrow y$ exponentially fast.

Now, we can bound the gradient norm. Recall for $0 \leq s \leq t$,
\begin{align*}
\left\| \frac{ \d }{ \d s } w_r(s) \right\|_2
= & ~ \left\| \sum_{i=1}^n (y_i - u_i) \frac{1}{\sqrt{m}} a_r x_i \cdot {\bf 1}_{ w_r(s)^\top x_i \geq 0 } \right\|_2 .
\end{align*}
Define matrix $X_r\in \mathbb{R}^{d\times n}$ by setting the $i$-th column to be ${\bf 1}_{ w_r(s)^\top x_i \geq 0 }\cdot x_i$,
then $X_r^\top X_r=H(w_r(s))$,
where $H(\cdot )$ is the matrix defined in Definition \ref{def:data_dependent_function}.
Then we have $\|X_r^\top X_r\|_2\leq \alpha$ by Part 2 in Assumption \ref{ass:data_dependent_assumption},
which leads to $\|X_r\|_2\leq \sqrt{\alpha}$.
So we have
\begin{align}\label{eq:yu0-chernoff}
\left\| \frac{ \d }{ \d s } w_r(s) \right\|_2
= & ~ \frac{1}{ \sqrt{m} } \|X_r(y-u(s))\|_2 \notag\\
\leq & ~ \frac{1}{ \sqrt{m} } \|X_r\|_2\|(y-u(s))\|_2 \notag\\
\leq & ~ \frac{ \sqrt{\alpha} }{ \sqrt{m} } \| y - u(s) \|_2 \\
\leq & ~ \frac{ \sqrt{\alpha} }{ \sqrt{m} } \exp( - \lambda s ) \| y - u(0) \|_2. \notag
\end{align}
Integrating the gradient, we can bound the distance from the initialization
\begin{align*}
\| w_r(t) - w_r(0) \|_2 
\leq & ~ \int_0^t \left\| \frac{\d}{ \d s} w_r(s) \right\|_2 \d s \\
\leq & ~ \frac{ \sqrt{\alpha} \| y - u(0) \|_2 }{ \sqrt{m} \lambda } .
\end{align*}
\end{proof}

\subsection{Technical claims}

\begin{claim}\label{cla:C3-chernoff}
Let $C_3 = - 2 (y - u(k))^\top v_2$. Then we have
\begin{align*}
C_3 \leq \| y - u(k) \|_2^2 \cdot 8 \eta (\alpha n)^{1/2}R .
\end{align*}
with probability at least $1-n\exp(-mR)$.
\end{claim}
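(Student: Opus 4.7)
The plan is to closely mirror the proof of Claim~\ref{cla:C3} in Section~\ref{sec:proof_c3}, making one crucial upgrade: replace the naive $\sqrt{n/m}$ bound on the per-neuron gradient norm $\|\partial L(W(k))/\partial w_r(k)\|_2$ with the sharper $\sqrt{\alpha/m}$ bound that becomes available once Part~2 of Assumption~\ref{ass:data_dependent_assumption} is in force. This is precisely the source of the $\sqrt{\alpha n}$ in the statement, in place of the $n$ that appears in Claim~\ref{cla:C3}.

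First I would apply Cauchy--Schwarz to reduce the task to controlling $\|v_2\|_2$, writing $C_3 \leq 2\|y-u(k)\|_2\,\|v_2\|_2$. Then I unfold $v_{2,i}$ exactly as before: each summand involves a neuron $r\in \overline{S}_i$ and an inner product $(\partial L/\partial w_r)^\top x_i$, so that
\begin{align*}
\|v_2\|_2^2 \;\leq\; \frac{\eta^2}{m}\cdot \max_{r\in[m]}\Big\|\tfrac{\partial L(W(k))}{\partial w_r(k)}\Big\|_2^2 \cdot \sum_{i=1}^n \Big(\sum_{r=1}^m \mathbf{1}_{r\in \overline{S}_i}\Big)^2.
\end{align*}

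The key refinement enters in the max-gradient factor. In the proof of Claim~\ref{cla:C3} the bound $\sqrt{n/m}\,\|y-u(k)\|_2$ came from triangle-inequality plus $\|x_i\|_2 \leq 1$ plus Cauchy--Schwarz on $\sum_i|y_i-u_i|$. Here I replace that step by the argument inside Lemma~\ref{lem:3.3_chernoff}: assembling the vectors $\mathbf{1}_{w_r^\top x_i\geq 0}\,x_i$ into a matrix $X_r\in\mathbb{R}^{d\times n}$ gives $X_r^\top X_r = H(w_r)$, so $\|X_r\|_2 \leq \sqrt{\alpha}$ by Part~2 of Assumption~\ref{ass:data_dependent_assumption}, which in turn yields $\|\partial L/\partial w_r\|_2 \leq (\sqrt{\alpha}/\sqrt{m})\,\|y-u(k)\|_2$. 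This usage is legitimate at iterate $k$ because the inductive hypothesis together with Lemma~\ref{lem:3.2} keeps the weights within the ball of radius $R$ and the Gram matrix along the trajectory well-conditioned, exactly as in the quartic argument.

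For the counting factor $\sum_{r=1}^m \mathbf{1}_{r\in\overline{S}_i}$, I would reuse verbatim the Bernstein-based bound from the proof of Claim~\ref{cla:C2} (Eq.~\eqref{eq:Si_size_bound}): with probability at least $1 - n\exp(-mR)$, every $|\overline{S}_i| \leq 4mR$ simultaneously for $i\in[n]$. Combining these pieces,
\begin{align*}
\|v_2\|_2^2 \;\leq\; \frac{\eta^2}{m}\cdot \frac{\alpha}{m}\|y-u(k)\|_2^2 \cdot n\cdot (4mR)^2 \;=\; 16\,\eta^2\,\alpha n R^2\,\|y-u(k)\|_2^2,
\end{align*}
so $\|v_2\|_2 \leq 4\eta\sqrt{\alpha n}\,R\,\|y-u(k)\|_2$ and Cauchy--Schwarz gives $C_3 \leq 8\eta\sqrt{\alpha n}\,R\,\|y-u(k)\|_2^2$. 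The only real subtlety is ensuring the probabilistic estimates used here (both the $\alpha$-bound via Part~2 and the $|\overline{S}_i|$-bound via Bernstein) hold simultaneously, which is handled by a single union bound and contributes the stated failure probability $n\exp(-mR)$; everything else is routine bookkeeping.
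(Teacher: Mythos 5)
Your proposal matches the paper's proof essentially line for line: the same Cauchy--Schwarz reduction to $\|v_2\|_2$, the same factoring out of $\max_r\|\partial L/\partial w_r\|_2$, the same replacement of the $\sqrt{n/m}$ gradient bound by the $\sqrt{\alpha/m}$ bound from the $X_r$-matrix argument of Lemma~\ref{lem:3.3_chernoff} (the paper cites this as Eq.~\eqref{eq:yu0-chernoff}), and the same Bernstein/union-bound estimate $\sum_r \mathbf{1}_{r\in\overline{S}_i}\leq 4mR$. The argument is correct and arrives at the stated bound by the paper's own route.
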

\begin{proof}
We have
\begin{align*}
\LHS \leq 2 \| y - u(k) \|_2 \cdot \| v_2 \|_2 .
\end{align*}
We can upper bound $\| v_2 \|_2$ in the following sense
\begin{align*}
\| v_2 \|_2^2
\leq & ~ \sum_{i=1}^n \left(\frac{\eta}{ \sqrt{m} } \sum_{ r \in \ov{S}_i } \left| \Big( \frac{ \partial L(W(k)) }{ \partial w_r(k) } \Big)^\top x_i \right| \right)^2\\
= & ~ \frac{\eta^2}{ m }\sum_{i=1}^n \left(\sum_{r=1}^m \mathbf{1}_{r\in \ov{S}_i}\left| \Big( \frac{ \partial L(W(k)) }{ \partial w_r(k) } \Big)^\top x_i \right|\right)^2\\
\leq & ~ \frac{\eta^2}{ m }\cdot \max_{r \in [m]} \left|  \frac{ \partial L(W(k)) }{ \partial w_r(k) } \right|^2\cdot \sum_{i=1}^n \left(\sum_{r=1}^m \mathbf{1}_{r\in \ov{S}_i}\right)^2\\
 \leq & ~ \frac{\eta^2}{ m }\cdot \Big( \frac{ \sqrt{\alpha} }{ \sqrt{m} } \| u(k) - y\|_2 \Big)^2 \cdot \sum_{i=1}^n \left(\sum_{r=1}^m \mathbf{1}_{r\in \ov{S}_i}\right)^2\\
  \leq & ~ \frac{\eta^2}{ m } \cdot \Big( \frac{ \sqrt{\alpha} }{ \sqrt{m} } \| u(k) - y\|_2 \Big)^2 \cdot \sum_{i=1}^n (4mR)^2\\
  = & ~ 16 \alpha n R^2 \eta^2 \| u(k) - y\|_2^2,
\end{align*}
where the first step follows from definition of $v_2$, the fourth step follows from Eq. \eqref{eq:yu0-chernoff} and
\begin{align*}
\max_{r \in [m]} \left| \frac{ \partial L (W(k)) }{ \partial w_r(k) } \right| 
= & ~\max_{r \in [m]} \left| \frac{ \d w_r(k) }{ \d k } \right| \\
\leq & ~ \frac{ \sqrt{\alpha} }{ \sqrt{m} } \| y - u(k) \|_2,
\end{align*} 
the fifth step follows from $\sum_{r=1}^m {\bf 1}_{r \in \ov{S}_i } \leq 4 m R$ with probability at least $1-\exp(-m R)$.
\end{proof}

\begin{claim}\label{cla:C4-chernoff}
Let $C_4  = \| u (k+1) - u(k) \|_2^2$. Then we have
\begin{align*}
C_4 \leq \eta^2 \alpha n \| y - u(k) \|_2^2.
\end{align*}
\end{claim}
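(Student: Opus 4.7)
The proof will follow the template of Claim~\ref{cla:C4} in the quartic section, but with one crucial substitution: wherever the old argument paid $\sqrt{n}/\sqrt{m}$ for a gradient norm, the cubic regime lets us pay only $\sqrt{\alpha}/\sqrt{m}$. This is exactly the improvement already exploited in Lemma~\ref{lem:3.3_chernoff} (see Eq.~\eqref{eq:yu0-chernoff}), where the column-stacking matrix $X_r$ with $X_r^\top X_r = H(w_r(k))$ gives $\|X_r\|_2 \leq \sqrt{\alpha}$ under Part~2 of Assumption~\ref{ass:data_dependent_assumption}.

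Concretely, I will proceed as follows. First, unfold $u_i(k+1)-u_i(k) = \frac{1}{\sqrt{m}}\sum_{r=1}^m a_r(\phi(w_r(k+1)^\top x_i) - \phi(w_r(k)^\top x_i))$ and apply the $1$-Lipschitz property of ReLU together with $|a_r|=1$ and $\|x_i\|_2\leq 1$ to obtain
\begin{align*}
|u_i(k+1)-u_i(k)| \leq \frac{1}{\sqrt{m}} \sum_{r=1}^m \|w_r(k+1)-w_r(k)\|_2 = \frac{\eta}{\sqrt{m}} \sum_{r=1}^m \Big\| \frac{\partial L(W(k))}{\partial w_r(k)} \Big\|_2.
\end{align*}
Second, I will insert the sharpened per-neuron gradient bound coming from the discrete analogue of Eq.~\eqref{eq:yu0-chernoff}, namely $\|\partial L(W(k))/\partial w_r(k)\|_2 \leq (\sqrt{\alpha}/\sqrt{m})\|y-u(k)\|_2$, which holds uniformly in $r$ because $\|X_r\|_2\leq \sqrt{\alpha}$ is independent of $r$. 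Summing over $r$ costs a factor of $m$, so $|u_i(k+1)-u_i(k)| \leq \eta\sqrt{\alpha}\,\|y-u(k)\|_2$.

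Third, squaring and summing over $i\in[n]$ introduces exactly one factor of $n$, which yields the target bound $C_4 \leq \eta^2 \alpha n \,\|y-u(k)\|_2^2$.

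I do not expect any real obstacle here; the entire content of the claim is the substitution of $\sqrt{n}$ by $\sqrt{\alpha}$ in the gradient norm estimate, and this substitution has already been justified in Lemma~\ref{lem:3.3_chernoff}. The only subtlety worth double-checking is that the bound $\|\partial L(W(k))/\partial w_r(k)\|_2\leq (\sqrt{\alpha}/\sqrt{m})\|y-u(k)\|_2$ is valid at the discrete iterate $w_r(k)$, which requires knowing $\|H(w_r(k))\|\leq \alpha$; this follows from Part~2 of Assumption~\ref{ass:data_dependent_assumption} applied to each initialization $w_r(0)$ together with the fact that the indicator pattern is preserved within the radius $R$-ball (which is the regime we work in throughout the cubic analysis).
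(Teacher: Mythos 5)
Your proof is correct and takes essentially the same approach as the paper: bound $|u_i(k+1)-u_i(k)|$ via the $1$-Lipschitz property of ReLU with $|a_r|=1$ and $\|x_i\|_2\leq 1$, apply the per-neuron gradient bound $\bigl\|\partial L(W(k))/\partial w_r(k)\bigr\|_2 \leq (\sqrt{\alpha}/\sqrt{m})\,\|y-u(k)\|_2$ coming from the column-stacking matrix $X_r$ with $\|X_r\|_2\leq\sqrt{\alpha}$ (exactly the step in Lemma~\ref{lem:3.3_chernoff} and Eq.~\eqref{eq:yu0-chernoff}), then square and sum over $i\in[n]$ to pick up a single factor of $n$. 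The subtlety you flag — that $\|H(w_r(k))\|\leq\alpha$ must hold at the moved iterate $w_r(k)$ rather than at the Gaussian initialization where Part~2 of Assumption~\ref{ass:data_dependent_assumption} is stated — is an issue the paper itself elides in the same way, so your argument is at the same level of rigor as the source.
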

\begin{proof}
We have
\begin{align*}
\LHS \leq & ~ \eta^2 \sum_{i=1}^n \frac{1}{m} \left( \sum_{r=1}^m \Big\| \frac{ \partial L( W(k) ) }{ \partial w_r(k) } \Big\|_2 \right)^2 \\
\leq & ~ \eta^2 \sum_{i=1}^n \frac{1}{m} \left( \sum_{r=1}^m \frac{ \sqrt{\alpha} }{ \sqrt{m} } \| u(k) - y\|_2 \right)^2 \\
\leq & ~ \eta^2 \alpha n \| y - u(k) \|_2^2 .
\end{align*}
\end{proof}

\subsection{Main result}

\begin{theorem}\label{thm:cubic}
Assume Part 1 and 2 of Assumption~\ref{ass:data_dependent_assumption}. 
Recall that $\lambda=\lambda_{\min}(H^{\cts})>0$.
Let $m = \Omega( \lambda^{-4} n^3\alpha \log^3 (n/\delta) )$, we i.i.d. initialize $w_r \in {\cal N}(0,I)$, $a_r$ sampled from $\{-1,+1\}$ uniformly at random for $r\in [m]$, and we set the step size $\eta = O( \lambda / (\alpha n) )$ then with probability at least $1-\delta$ over the random initialization we have for $k = 0,1,2,\cdots$
\begin{align}\label{eq:cubic_condition}
\| u (k) - y \|_2^2 \leq ( 1 - \eta \lambda / 2 )^k \cdot \| u (0) - y \|_2^2.
\end{align}
\end{theorem}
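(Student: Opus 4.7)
}
The plan is to mirror the induction used for Theorem~\ref{thm:quartic} but to replace every appearance of the crude bound $\sqrt{n}$ on the per-neuron gradient norm by the tighter bound $\sqrt{\alpha}$ that Part~2 of Assumption~\ref{ass:data_dependent_assumption} makes available through Lemma~\ref{lem:3.3_chernoff}. I will prove Eq.~\eqref{eq:cubic_condition} by induction on $k$. The base case $k=0$ is vacuous, so assume the inequality holds for $0,1,\dots,k$ and I aim to establish it for $k+1$.

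First, I would reproduce the weight-movement bound of Lemma~\ref{lem:4.1}, but using the improved gradient estimate $\|\partial L(W(i))/\partial w_r(i)\|_2\le \sqrt{\alpha/m}\,\|y-u(i)\|_2$ from Eq.~\eqref{eq:yu0-chernoff}. Summing the geometric series from the inductive hypothesis gives
\begin{align*}
\|w_r(k+1)-w_r(0)\|_2 \le \frac{4\sqrt{\alpha}\,\|y-u(0)\|_2}{\sqrt{m}\,\lambda}=:D.
\end{align*}
Next, using Theorem~\ref{thm:3.1_matrix_chernoff} together with Lemma~\ref{lem:3.2}, I get $\lambda_{\min}(H(k))\geq \lambda/2$ provided $m\ge \Omega((\lambda^{-2}\beta+\lambda^{-1}\alpha)\log(n/\delta))$ and $R\le \lambda/(8n)$. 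Then I split $u(k+1)-u(k)=v_1+v_2$ according to the index set $S_i$ (neurons whose ReLU pattern is guaranteed not to flip) exactly as in Section~\ref{sec:quartic_suffices}, obtaining the identity $\|y-u(k+1)\|_2^2=\|y-u(k)\|_2^2+C_1+C_2+C_3+C_4$ from Claim~\ref{cla:inductive_claim}.

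I would then bound the four terms: $C_1\le -\eta\lambda\|y-u(k)\|_2^2$ as in Claim~\ref{cla:C1}, $C_2\le 8\eta nR\|y-u(k)\|_2^2$ as in Claim~\ref{cla:C2} (these two are insensitive to the $\alpha$ refinement), and then the improved Claims~\ref{cla:C3-chernoff} and~\ref{cla:C4-chernoff} give $C_3\le 8\eta(\alpha n)^{1/2}R\,\|y-u(k)\|_2^2$ and $C_4\le \eta^2\alpha n\,\|y-u(k)\|_2^2$. Combining,
\begin{align*}
\|y-u(k+1)\|_2^2 \le \|y-u(k)\|_2^2\bigl(1-\eta\lambda+8\eta nR+8\eta(\alpha n)^{1/2}R+\eta^2\alpha n\bigr).
\end{align*}
Choosing $\eta=\Theta(\lambda/(\alpha n))$ and $R=\Theta(\lambda/n)$ makes each of the three error contributions at most $\eta\lambda/6$, so the bracket is at most $1-\eta\lambda/2$, completing the induction step.

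Finally, to satisfy the consistency requirement $D<R$ with high probability, I substitute the bound $\|y-u(0)\|_2^2=O(n\log(m/\delta)\log^2(n/\delta))$ from Claim~\ref{cla:yu0} into $D=4\sqrt{\alpha/m}\,\|y-u(0)\|_2/\lambda<\lambda/n$, which rearranges to $m=\Omega(\lambda^{-4}\alpha n^3\log^3(n/\delta))$. This dominates the concentration requirement $m\ge \Omega((\lambda^{-2}\beta+\lambda^{-1}\alpha)\log(n/\delta))$ since $\beta\le n\alpha$ and $\alpha\le n$. A union bound over the failure events of Theorem~\ref{thm:3.1_matrix_chernoff}, Claim~\ref{cla:yu0}, and the $O(T)$ applications of Lemma~\ref{lem:3.2} and Claims~\ref{cla:C2}, \ref{cla:C3-chernoff} across the iterations absorbs into the stated $1-\delta$ probability after a polylogarithmic slack in the exponent. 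The main obstacle I anticipate is bookkeeping in the last step: tracking the per-iteration failure probabilities and ensuring they sum to at most $\delta$ without swallowing the improvement from $n^4$ to $n^3\alpha$, which hinges on Claim~\ref{cla:C3-chernoff} producing the $(\alpha n)^{1/2}$ factor rather than $n$ and on choosing $\eta$ proportional to $1/(\alpha n)$ rather than $1/n^2$.
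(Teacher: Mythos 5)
Your proposal is correct and follows essentially the same route as the paper: the same induction, the same replacement of the crude $\sqrt{n}$ per-neuron gradient bound by $\sqrt{\alpha}$ via Lemma~\ref{lem:3.3_chernoff} (Eq.~\eqref{eq:yu0-chernoff}), the same decomposition into $C_1,\dots,C_4$ with the Chernoff-flavored Claims~\ref{cla:C3-chernoff} and~\ref{cla:C4-chernoff}, and the same choices $\eta=\Theta(\lambda/(\alpha n))$, $R=\Theta(\lambda/n)$, yielding $m=\Omega(\lambda^{-4}\alpha n^3\log^3(n/\delta))$ from $D<R$ and Claim~\ref{cla:yu0}. The paper's write-up is terser (it only explicitly cites Claim~\ref{cla:C4-chernoff} after noting the induction ``is similar to Theorem~\ref{thm:quartic}''), but you have filled in exactly the steps it leaves implicit; the only minor slip is that $\eta^2\alpha n=\eta\lambda/4$ with $\eta=\lambda/(4\alpha n)$, not $\le\eta\lambda/6$, but since the three error terms still sum to at most $\eta\lambda/2$ this is just a harmless constant-bookkeeping detail.
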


\begin{proof}

This proof, similar to the proof of Theorem \ref{thm:quartic},
 is again by induction.
Eq. \eqref{eq:cubic_condition} trivially holds when $k=0$,
which is the base case.

If Eq. \eqref{eq:cubic_condition} holds for $k' = 0, \cdots, k$, then we claim that for all $r\in [m]$
\begin{align}
\| w_r(k+1) - w_r(0) \|_2 \leq \frac{ 4 \sqrt{\alpha} \| y - u (0) \|_2 }{ \sqrt{m} \lambda } := D
\end{align}

To see this, we use the norm of gradient to bound this distance,
\begin{align*}
\| w_r(k+1) - w_r(0) \|_2 
\leq & ~ \eta \sum_{k'=0}^k \left\| \frac{ \partial L( W(k') ) }{ \partial w_r(k') } \right\|_2 \\
\leq & ~ \eta \sum_{k'=0}^k \frac{ \sqrt{\alpha} \| y - u(k') \|_2 }{ \sqrt{m} } \\
\leq & ~ \eta \sum_{k'=0}^k \frac{ \sqrt{\alpha} ( 1 - \eta \lambda /2 )^{k'/2} }{ \sqrt{m} } \| y - u(0) \|_2 \\
\leq & ~ \eta \sum_{k'=0}^{\infty} \frac{ \sqrt{\alpha} (1-\eta\lambda/2)^{k'/2} }{ \sqrt{m} } \| y - u(0) \|_2  \\
= & ~ \frac{ 4 \sqrt{\alpha} \| y - u(0) \|_2 }{ \sqrt{m} \lambda },
\end{align*}
where the first step follows from Eq. \eqref{eq:w_update}, the second step follows from Eq. \eqref{eq:yu0-chernoff}, the third step follows from the induction hypothesis, the fourth step relaxes the summation to an infinite summation, and the last step follows from $\sum_{k'=0}^{\infty}(1-\eta\lambda/2)^{k'/2}=\frac {2}{\eta\lambda}$.

Then from Claim \ref{cla:C4-chernoff},
it is sufficient to choose $\eta=\frac{\lambda}{4\alpha n}$ so that Eq. \eqref{eq:cubic_condition} holds for $k'=k+1$. This completes the induction step.

{\bf Over-parameterization size, lower bound on $m$.}

We require 
\begin{align*}
D = \frac{4\sqrt{\alpha}\|y-u(0)\|_2}{\sqrt{m}\lambda} < R = \frac{\lambda}{64n},
\end{align*}
and 
\begin{align*}
3n^2\exp(-mR/10)\leq \delta .
\end{align*}

This implies that
\begin{align*}
m 
= & ~ \Omega ( \lambda^{-4} n^2 \alpha \| y - u(0) \|_2^2  ) \\
= & ~ \Omega ( \lambda^{-4} n^3 \alpha \log(m/\delta) \log^2(n/\delta) ),
\end{align*}
where the last step follows from Claim~\ref{cla:yu0}.
\end{proof} 
\section{Quadratic Suffices}\label{sec:quadratic_suffices}

\begin{lemma}[perturbed $w$]\label{lem:3.2-quadratic}
Let $R \in (0,1)$. Let  Assumption 4 in \ref{ass:data_dependent_assumption} hold,
i.e. for all $i\neq j$,
$|x_i^\top x_j| \leq \theta / \sqrt{n}$.
If $\wt{w}_1, \cdots, \wt{w}_m$ are i.i.d. generated ${\cal N}(0,I)$. For any set of weight vectors $w_1, \cdots, w_m \in \R^d$ that satisfy for any $r\in [m]$, $\| \wt{w}_r - w_r \|_2 \leq R$, then the $H : \R^{m \times d} \rightarrow \R^{n \times n}$ defined
\begin{align*}
    H(w)_{i,j} =  \frac{1}{m} x_i^\top x_j \sum_{r=1}^m {\bf 1}_{ w_r^\top x_i \geq 0, w_r^\top x_j \geq 0 } .
\end{align*}
Then we have
\begin{align*}
\| H (w) - H(\wt{w}) \|_F < 2 \left(n(1+\theta^2)\right)^{1/2} R,
\end{align*}
holds with probability at least $1-n^2 \cdot \exp(-m R /10)$.
\end{lemma}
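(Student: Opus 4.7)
The plan is to follow the same strategy as the proof of Lemma~\ref{lem:3.2}, but to retain the factor $(x_i^\top x_j)^2$ in the Frobenius norm expansion rather than relaxing it by $|x_i^\top x_j| \leq 1$; the hypothesis from Part~4 of Assumption~\ref{ass:data_dependent_assumption} then saves a factor roughly $n/\theta^2$ on the off-diagonal entries.

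First, I would write, exactly as in the proof of Lemma~\ref{lem:3.2},
\begin{align*}
\|H(w) - H(\wt w)\|_F^2
\;\leq\; \frac{1}{m^2}\sum_{i=1}^n\sum_{j=1}^n (x_i^\top x_j)^2 \Big(\sum_{r=1}^m s_{r,i,j}\Big)^2,
\end{align*}
where $s_{r,i,j}:={\bf 1}_{\wt w_r^\top x_i\geq 0,\,\wt w_r^\top x_j\geq 0}-{\bf 1}_{w_r^\top x_i\geq 0,\,w_r^\top x_j\geq 0}$. For fixed $(i,j)$, the variables $\{s_{r,i,j}\}_{r=1}^m$ are independent with $|s_{r,i,j}|\leq 1$, and controlled in expectation and variance by $\Pr[A_{i,r}]+\Pr[A_{j,r}]\leq \tfrac{4R}{\sqrt{2\pi}}\leq 2R$, using the anti-concentration bound in \eqref{eq:Air_bound}. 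This is exactly the setting to apply Bernstein's inequality (Lemma~\ref{lem:bernstein}) with $t=R$ to obtain
\begin{align*}
\Pr\Big[\sum_{r=1}^m s_{r,i,j}\geq 3mR\Big]\leq \exp(-mR/10),
\end{align*}
and a union bound over the $n^2$ pairs $(i,j)$ gives a good event of probability at least $1-n^2\exp(-mR/10)$ on which $\sum_r s_{r,i,j}\leq 3mR$ for every $(i,j)$.

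Next, on that good event, I would split the sum over $(i,j)$ into diagonal and off-diagonal contributions. For the $n$ diagonal terms I use $(x_i^\top x_i)^2\leq 1$; for the $n(n-1)$ off-diagonal terms I use Part~4 of Assumption~\ref{ass:data_dependent_assumption}, namely $(x_i^\top x_j)^2\leq \theta^2/n$. This gives
\begin{align*}
\|H(w)-H(\wt w)\|_F^2
\;\leq\; \frac{(3mR)^2}{m^2}\Big(n\cdot 1 + n(n-1)\cdot\frac{\theta^2}{n}\Big)
\;\leq\; 9R^2\,n(1+\theta^2),
\end{align*}
and taking square roots yields the claimed bound up to absolute constants (a slightly sharper choice of $t$ in Bernstein, or a tighter accounting of the mean shift of $s_{r,i,j}$, recovers the stated constant $2$).

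There is no real obstacle: the one thing to watch is keeping the $x_i^\top x_j$ factor all the way through the Cauchy-Schwarz/AM-GM step, since it was the premature use of $|x_i^\top x_j|\leq 1$ in Lemma~\ref{lem:3.2} that cost the $\sqrt{n}/\theta$ factor we are now recovering. Note that diagonal entries necessarily contribute at least $\sqrt{n}\,R$, which is why the bound takes the form $\sqrt{n(1+\theta^2)}\,R$ rather than $\sqrt{n}\,\theta\,R$; this matches what the data assumption allows and is what Theorem~\ref{thm:main_3} will consume when optimizing the over-parametrization size.
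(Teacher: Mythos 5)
Your proposal is correct and follows essentially the same route as the paper's own proof: write $|H(\wt w)_{i,j}-H(w)_{i,j}|^2=\frac{(x_i^\top x_j)^2}{m^2}\bigl(\sum_r s_{r,i,j}\bigr)^2$, reuse the Bernstein argument from Lemma~\ref{lem:3.2} to bound $\sum_r s_{r,i,j}\lesssim mR$ uniformly over $(i,j)$ with the stated union-bound probability, and then split the Frobenius sum into the $n$ diagonal terms (bounded by $1$) and the $n(n-1)$ off-diagonal terms (bounded by $\theta^2/n$ via Part~4), yielding $\sqrt{n(1+\theta^2)}\cdot O(R)$. The paper's writeup has the same two-part decomposition (it calls the pieces $B_1$ and $B_2$) and the constant you obtain, $3$ rather than $2$, is in fact what the paper's own Lemma~\ref{lem:3.2} proof delivers; the paper's stated $2R$ threshold and its trailing $(1+\theta)^2$ (rather than $1+\theta^2$) appear to be minor typos, so this discrepancy is not a gap in your argument.
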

\begin{proof}

The random variable we care is
\begin{align*}
\sum_{i=1}^n \sum_{j=1}^n | H(\wt{w})_{i,j} - H(w)_{i,j} |^2
= & ~ \frac{1}{m^2} \sum_{i=1}^n \sum_{j=1}^n \left| x_i^\top x_j \sum_{r=1}^m ( {\bf 1}_{ \wt{w}_r^\top x_i \geq 0, \wt{w}_r^\top x_j \geq 0} - {\bf 1}_{ w_r^\top x_i \geq 0 , w_r^\top x_j \geq 0 } ) \right|^2 \\
= & ~ B_1+B_2 ,
\end{align*} 
where $B_1,B_2$ are defined as
\begin{align*}
B_1 =&~\frac{1}{m^2} \sum_{i=1}^n \left| \sum_{r=1}^m ( {\bf 1}_{ \wt{w}_r^\top x_i \geq 0} - {\bf 1}_{ w_r^\top x_i \geq 0} ) \right|^2 , \\
B_2 =&~\frac{1}{m^2} \sum_{i=1}^n \sum_{j\in [n]\backslash\{i\}} \left| x_i^\top x_j \sum_{r=1}^m ( {\bf 1}_{ \wt{w}_r^\top x_i \geq 0, \wt{w}_r^\top x_j \geq 0} - {\bf 1}_{ w_r^\top x_i \geq 0 , w_r^\top x_j \geq 0 } ) \right|^2 .
\end{align*}

We can further bound $B_2$ as 
\begin{align*}
B_2 
\leq & ~ \frac{1}{m^2} \sum_{i=1}^n \sum_{j\in [n]\backslash\{i\}} \frac{\theta^2}{n}\left| \sum_{r=1}^m ( {\bf 1}_{ \wt{w}_r^\top x_i \geq 0, \wt{w}_r^\top x_j \geq 0} - {\bf 1}_{ w_r^\top x_i \geq 0 , w_r^\top x_j \geq 0 } ) \right|^2 \\
= & ~ \frac{\theta^2}{nm^2} \sum_{i=1}^n \sum_{j\in [n]\backslash\{i\}} \left| \sum_{r=1}^m ( {\bf 1}_{ \wt{w}_r^\top x_i \geq 0, \wt{w}_r^\top x_j \geq 0} - {\bf 1}_{ w_r^\top x_i \geq 0 , w_r^\top x_j \geq 0 } ) \right|^2 .
\end{align*}

For each $r,i,j$, we define
\begin{align*}
s_{r,i,j} :=  {\bf 1}_{ \wt{w}_r^\top x_i \geq 0, \wt{w}_r^\top x_j \geq 0} - {\bf 1}_{ w_r^\top x_i \geq 0 , w_r^\top x_j \geq 0 } .
\end{align*} 
Then we can rewrite $B_1$ and $B_2$ as
\begin{align*}
B_1 = & ~ \frac{1}{m^2} \sum_{i=1}^n \Big( \sum_{r=1}^m s_{r,i,i} \Big)^2 , \\
B_2 = & ~ \frac{\theta^2}{nm^2}  \sum_{i=1}^n \sum_{j\in [n]\backslash\{i\}} \Big( \sum_{r=1}^m s_{r,i,j} \Big)^2.
\end{align*}
Therefore it is sufficient to bound $\sum_{r=1}^m s_{r,i,j}$ simutaneously for all pair $i,j$.
Using same technique in the proof of Theorem \ref{lem:3.2},
we have
\begin{align*}
\Pr \left[ \frac{1}{m} \sum_{r=1}^m s_{r,i,j} \geq 2  R \right] \leq \exp( - m R /10 ).
\end{align*}
By applying union bound on all $i,j$ pairs,
we get with probability at least $1-\exp( - m R /10 )$,

\begin{align*}
\| H (w) - H(\wt{w}) \|_F^2\leq B_1+B_2 \leq 4nR^2(1+\theta)^2.
\end{align*}
which is precisely what we need.
\end{proof}

\begin{table}\caption{Table of Parameters for the $m = \wt{\Omega}(n^2)$ result in Section~\ref{sec:quadratic_suffices}. {\bf Nt.} stands for notations.}
\centering
\begin{tabular}{ | l| l| l| l| } 
\hline
{\bf Nt.} & {\bf Choice} & {\bf Place} & {\bf Comment} \\\hline
$\lambda$ & $:= \lambda_{\min}(H^{\cts}) $ & Part 1 of Assumption~\ref{ass:data_dependent_assumption} & Data-dependent \\ \hline
$\alpha$ & Absolute & Part 2 of Assumption~\ref{ass:data_dependent_assumption} & Data-dependent \\ \hline
$\beta$ & Variance & Part 3 of Assumption~\ref{ass:data_dependent_assumption} & Data-dependent \\ \hline
$\theta$ & Inner product & Part 4 of Assumption~\ref{ass:data_dependent_assumption} & Data-dependent \\ \hline
$R$ & $ \frac{\lambda}{  \sqrt{n} } \cdot \min \{ \frac{1}{\sqrt{\alpha}} , \frac{1}{\sqrt{1+\theta^2}} \} $ & Eq.~\eqref{eq:choice_of_eta_R_quadratic} & Maximal allowed movement of weight \\ \hline
$D$ & $\frac{ 4\sqrt{\alpha} \| y - u(0) \|_2 }{ \sqrt{m} \lambda }$ & Theorem~\ref{thm:quadratic} & Actual moving distance, discrete case  \\ \hline
$\eta$ & $\lambda/ ( \alpha n)$ & Eq.~\eqref{eq:choice_of_eta_R} & Step size of gradient descent \\ \hline
$m$ & $ ( \lambda^{-2} \beta + \lambda^{-1} \alpha ) \log(n/\delta)$ & Theorem~\ref{thm:3.1_matrix_chernoff} & Bounding discrete and continuous \\ \hline
$m$ & $\lambda^{-4} \alpha (\alpha + \theta^2) n^2 \log^3(n/\delta)$  & Lemma~\ref{lem:3.4} and Claim~\ref{cla:yu0} & $D< R$ and $\| y - u(0) \|_2^2 = \wt{O}(n)$ \\ \hline
\end{tabular}
\end{table}

\begin{claim}\label{cla:C1-quadratic}
Assume $R\leq \frac{\lambda}{64\sqrt{n}}\cdot \frac {1}{\sqrt{1+\theta^2}}$.
Let $C_1 = -2 \eta (y - u(k))^\top H(k) ( y - u(k) )$ . We have
\begin{align*}
C_1 \leq - \| y - u(k) \|_2^2 \cdot \eta \lambda
\end{align*}
\end{claim}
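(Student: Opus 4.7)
}
The plan is to lower bound $\lambda_{\min}(H(k))$ by $\lambda/2$ and then read off $C_1$ directly from the definition. The argument is structurally identical to Claim~\ref{cla:C1}, with the sole change being that the cruder perturbation bound of Lemma~\ref{lem:3.2} is replaced by the sharper Lemma~\ref{lem:3.2-quadratic}, which exploits the incoherence bound $|x_i^\top x_j|\le\theta/\sqrt n$ for $i\neq j$.

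First I would invoke the data-dependent matrix Chernoff bound (Theorem~\ref{thm:3.1_matrix_chernoff}), which under the width assumption of the enclosing theorem guarantees
\begin{align*}
\|H(0) - H^{\cts}\| \le \lambda/4,
\qquad \text{and hence}\qquad
\lambda_{\min}(H(0)) \ge \tfrac{3}{4}\lambda.
\end{align*}
Next, since we are inside the inductive argument where $\|w_r(k)-w_r(0)\|_2\le R$ for every $r\in[m]$, Lemma~\ref{lem:3.2-quadratic} applies and gives
\begin{align*}
\|H(k)-H(0)\| \;\le\; \|H(k)-H(0)\|_F \;<\; 2\bigl(n(1+\theta^2)\bigr)^{1/2} R.
\end{align*}
Plugging in the standing assumption $R \le \frac{\lambda}{64\sqrt{n}\sqrt{1+\theta^2}}$ yields $\|H(k)-H(0)\|\le \lambda/32$. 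Combining the two spectral estimates by the triangle inequality produces $\lambda_{\min}(H(k)) \ge \tfrac{3}{4}\lambda - \tfrac{\lambda}{32} \ge \tfrac{\lambda}{2}$.

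Finally, from the very definition of $C_1$,
\begin{align*}
C_1 \;=\; -2\eta\,(y-u(k))^\top H(k)(y-u(k)) \;\le\; -2\eta\cdot \tfrac{\lambda}{2} \cdot \|y-u(k)\|_2^2 \;=\; -\eta\lambda\|y-u(k)\|_2^2,
\end{align*}
which is the claim. I do not anticipate a real obstacle here: everything is a two-line consequence of already-stated lemmas. The genuine technical content lives upstream, in Lemma~\ref{lem:3.2-quadratic}, where the incoherence assumption saves a factor of $\sqrt{n/(1+\theta^2)}$ in the allowable perturbation radius relative to Lemma~\ref{lem:3.2}; it is precisely this saving that lets the assumption on $R$ in the hypothesis be weaker (larger by a factor $\sqrt{n/(1+\theta^2)}$) than in the quartic setting while still preserving the conclusion $\lambda_{\min}(H(k))\ge \lambda/2$.
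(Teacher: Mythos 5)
Your proof matches the paper's essentially line for line: invoke Theorem~\ref{thm:3.1_matrix_chernoff} to get $\lambda_{\min}(H(0))\ge\tfrac{3}{4}\lambda$, apply Lemma~\ref{lem:3.2-quadratic} with $R\le\lambda/(64\sqrt{n}\sqrt{1+\theta^2})$ to bound $\|H(k)-H(0)\|$, and conclude $\lambda_{\min}(H(k))\ge\lambda/2$. You are in fact slightly more careful than the paper's own write-up, which loosely writes ``$\lambda=\lambda_{\min}(H(0))$'' and only bounds the drift by $\lambda/4$ rather than the actual $\lambda/32$, but the constants land in the same place and the approach is identical.
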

\begin{proof}
By Lemma \ref{lem:3.2-quadratic} and our choice of $R\leq \frac{\lambda}{64\sqrt{n}}\cdot \frac {1}{\sqrt{1+\theta^2}}$,
We have 
\begin{align*}\|H(0)-H(k)\|_F\leq 2 \left(n(1+\theta^2)\right)^{1/2}\cdot \frac{\lambda}{64\sqrt{n}}\cdot \frac {1}{\sqrt{1+\theta^2}}\leq \frac {\lambda}{4}.
\end{align*}
Recall that $\lambda=\lambda_{\min}(H(0))$.
Therefore
\begin{align*}
\lambda_{\min}(H(k)) \geq \lambda_{\min}(H(0))- \|H(0)-H(k)\|\geq \lambda /2.
\end{align*}
Then we have
\begin{align*}
  (y - u(k))^\top H(k) ( y - u(k) ) \geq \| y - u(k) \|_2^2 \cdot \lambda / 2.
\end{align*}
Thus, we complete the proof.
\end{proof}

\begin{claim}\label{cla:C2-quadratic}
Let $C_2 = 2 \eta ( y - u(k) )^\top H(k)^{\bot} ( y - u(k) )$. We have
\begin{align*}
C_2 \leq \| y - u(k) \|_2^2 \cdot 8\eta R\left(n(1+\theta^2)\right)^{1/2}.
\end{align*}
holds with probability $1-n\exp(-mR)$.
\end{claim}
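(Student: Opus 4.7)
The plan is to follow the same high-level scheme as in the quartic case (Claim \ref{cla:C2}), but exploit Part 4 of Assumption \ref{ass:data_dependent_assumption} to save a factor of roughly $\sqrt{n}$ on the off-diagonal entries of $H(k)^{\bot}$.

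First I would reduce the quadratic form to a spectral/Frobenius norm estimate by Cauchy–Schwarz:
\begin{align*}
C_2 \;\leq\; 2\eta\,\|y-u(k)\|_2^2\cdot\|H(k)^{\bot}\|\;\leq\;2\eta\,\|y-u(k)\|_2^2\cdot\|H(k)^{\bot}\|_F.
\end{align*}
So the task is to bound $\|H(k)^{\bot}\|_F$. Using the definition of $H(k)^{\bot}$ and writing $y_i:=\sum_{r=1}^m \mathbf{1}_{r\in\ov S_i}$ as in the proof of Claim \ref{cla:C2}, the entrywise bound $|H(k)^{\bot}_{i,j}|\leq\frac{|x_i^\top x_j|}{m}\cdot y_i$ holds.

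Next I would split the Frobenius norm into the diagonal $i=j$ and off-diagonal $i\neq j$ contributions, and apply $\|x_i\|_2\leq 1$ for the diagonal and Part 4 of Assumption \ref{ass:data_dependent_assumption} for the off-diagonal:
\begin{align*}
\|H(k)^{\bot}\|_F^2 \;\leq\; \sum_{i=1}^n \frac{y_i^2}{m^2} \;+\; \sum_{i=1}^n \sum_{j\neq i}\frac{\theta^2/n}{m^2}\,y_i^2 \;\leq\; \frac{1+\theta^2}{m^2}\sum_{i=1}^n y_i^2.
\end{align*}
This is precisely where the $\sqrt{n(1+\theta^2)}$ improvement over the $n$-factor in Claim \ref{cla:C2} comes from.

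Finally I would reuse the Bernstein-based bound \eqref{eq:Si_size_bound} together with a union bound over $i\in[n]$, giving $y_i\leq 4mR$ for every $i$ with probability at least $1-n\exp(-mR)$. Substituting yields $\sum_i y_i^2\leq 16nm^2R^2$, hence $\|H(k)^{\bot}\|_F\leq 4R\sqrt{n(1+\theta^2)}$, which plugged back into the first display produces the claimed bound $C_2\leq 8\eta R(n(1+\theta^2))^{1/2}\|y-u(k)\|_2^2$. The only mild obstacle is making sure that all $n$ bounds on $y_i$ hold simultaneously, which is handled by the single union bound; everything else is a direct specialization of the Claim \ref{cla:C2} argument enriched with the near-orthogonality assumption.
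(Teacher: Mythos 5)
Your proposal is correct and essentially matches the paper's proof: reduce to $\|H(k)^{\bot}\|_F$, split into diagonal and off-diagonal contributions, use Part~4 of Assumption~\ref{ass:data_dependent_assumption} on the off-diagonal terms, and then invoke the Bernstein-based bound $y_i\le 4mR$ (Eq.~\eqref{eq:Si_size_bound}) with a union bound over $i\in[n]$. The only difference is cosmetic: you bound $B_2$ by applying $|x_i^\top x_j|\le\theta/\sqrt n$ directly entrywise and observing the sum over $j\neq i$ contributes a factor $\le\theta^2$, whereas the paper inserts an extra Cauchy--Schwarz step over $j$ before using the assumption; both routes give $B_2\le 16nR^2\theta^2$ and then $\|H(k)^{\bot}\|_F\le 4R\sqrt{n(1+\theta^2)}$. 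Your version is a bit cleaner but there is no substantive deviation.
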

\begin{proof}
Note that
\begin{align*}
C_2 \leq 2 \eta \cdot \| y - u(k) \|_2^2 \cdot \| H(k)^{\bot} \|.
\end{align*}
It suffices to upper bound $\| H(k)^{\bot} \|$. Since $\| \cdot \| \leq \| \cdot \|_F$, then it suffices to upper bound $\| \cdot \|_F$.

For each $i \in [n]$, we define $y_i$ as follows
\begin{align*}
y_i=\sum_{r=1}^m\mathbf{1}_{r\in \ov{S}_i} .
\end{align*}

Then we have
\begin{align*}
\| H(k)^{\bot} \|_F^2
= & ~ \sum_{i=1}^n\sum_{j=1}^n (H(k)^{\bot}_{i,j})^2\\
= & ~ \sum_{i=1}^n\sum_{j=1}^n \Big( \frac {1} {m}\sum_{r\in \ov{S}_i} x_i^\top x_j\mathbf{1}_{w_r(k)^\top x_i\geq 0,w_r(k)^\top x_j\geq 0} \Big)^2\\
= & ~ \sum_{i=1}^n\sum_{j=1}^n \Big( \frac {1} {m}\sum_{r=1}^m x_i^\top x_j\mathbf{1}_{w_r(k)^\top x_i\geq 0,w_r(k)^\top x_j\geq 0} \cdot \mathbf{1}_{r\in \ov{S}_i} \Big)^2\\
= & ~ \sum_{i=1}^n\sum_{j=1}^n \frac{ | x_i^\top x_j |^2 }{ m^2 } \Big( \sum_{r=1}^m \mathbf{1}_{w_r(k)^\top x_i\geq 0,w_r(k)^\top x_j\geq 0} \cdot \mathbf{1}_{r\in \ov{S}_i} \Big)^2 \\
= & ~ B_1+B_2,
\end{align*}
where $B_1$ and $B_2$ are defined as:
\begin{align*}
B_1:= & ~\sum_{i=1}^n \frac{1}{m^2}  \Big( \sum_{r=1}^m \mathbf{1}_{w_r(k)^\top x_i\geq 0} \cdot \mathbf{1}_{r\in \ov{S}_i} \Big)^2 , \\
B_2:= & ~\sum_{i=1}^n\sum_{j\in [n]\backslash\{i\}} \frac{ |x_i^\top x_j|^2 } {m^2}  \Big( \sum_{r=1}^m \mathbf{1}_{w_r(k)^\top x_i\geq 0,w_r(k)^\top x_j\geq 0} \cdot \mathbf{1}_{r\in \ov{S}_i} \Big)^2 .
\end{align*}
We bound $B_1$ and $B_2$ separately.

We first bound $B_1$.
\begin{align*}
B_1 
= & ~ \sum_{i=1}^n \frac {1}{m^2} \Big( \sum_{r=1}^m \mathbf{1}_{w_r(k)^\top x_i\geq 0} \cdot \mathbf{1}_{r\in \ov{S}_i} \Big)^2\\
\leq & ~\frac {1} {m^2}\sum_{i=1}^n\Big( \sum_{r=1}^m  \mathbf{1}_{r\in \ov{S}_i} \Big)^2\\
= & ~ \frac {1} {m^2}\sum_{i=1}^n y_i^2.
\end{align*}

Fix $i \in [n]$. The plan is to use Bernstein inequality to upper bound $y_i$ with high probability.

First by Eq.~\eqref{eq:Air_bound} we have 
\begin{align*}
\E \left[ \mathbf{1}_{r\in \ov{S}_i} \right] \leq R .
\end{align*}
We also have
\begin{align*}
\E \left[ \left( \mathbf{1}_{r\in \ov{S}_i}-\E \left[ \mathbf{1}_{r\in \ov{S}_i} \right] \right)^2 \right]
 = & ~ \E \left[ \mathbf{1}_{r\in \ov{S}_i}^2 \right]-\E \left[\mathbf{1}_{r\in \ov{S}_i} \right]^2\\
\leq & ~ \E \left[ \mathbf{1}_{r\in \ov{S}_i}^2 \right]\\
\leq & ~ R .
\end{align*}
Finally we have $|\mathbf{1}_{r\in \ov{S}_i}-\E[\mathbf{1}_{r\in \ov{S}_i}]|\leq 1$.

Notice that $\{\mathbf{1}_{r\in \ov{S}_i}\}_{r=1}^m$ are mutually independent,
since $\mathbf{1}_{r\in \ov{S}_i}$ only depends on $w_r(0)$.
Hence from Bernstein inequality (Lemma \ref{lem:bernstein}) we have for all $t>0$,
\begin{align*}
\Pr \left[ y_i > m\cdot R+t \right] \leq \exp \left(-\frac{t^2/2}{m\cdot R+t/3} \right).
\end{align*}
By setting $t=3mR$,
we have
\begin{align*}
\Pr \left[ y_i > 4mR \right] \leq \exp(-mR).
\end{align*}
Hence by union bound,
with probability at least $1-n\exp(-mR)$,
for all $i\in [n]$,
\begin{align*}
y_i \leq 4mR.
\end{align*}
If this happens, we have
\begin{align*}
B_1\leq 16nR^2.
\end{align*}

Next we bound $B_2$.
We have
\begin{align*}
B_2 = & ~\sum_{i=1}^n\sum_{j\in [n]\backslash\{i\}} \frac { | x_i^\top x_j |^2 }{ m^2 } \Big( \sum_{r=1}^m \mathbf{1}_{w_r(k)^\top x_i\geq 0,w_r(k)^\top x_j\geq 0} \cdot \mathbf{1}_{r\in \ov{S}_i} \Big)^2\\
\leq  & ~ \frac{1}{m^2}\sum_{i=1}^n \left( \sum_{j\in [n]\backslash\{i\}} (  x_i^\top x_j  )^4 \right)^{1/2} \cdot \left( \sum_{j\in [n]\backslash\{i\}}\Big( \sum_{r=1}^m \mathbf{1}_{w_r(k)^\top x_i\geq 0,w_r(k)^\top x_j\geq 0} \cdot \mathbf{1}_{r\in \ov{S}_i} \Big)^4 \right)^{1/2} \\
\leq & ~\frac{1}{m^2}\sum_{i=1}^n \left(\sum_{j\in [n]\backslash\{i\}} (  x_i^\top x_j )^4 \right)^{1/2} \left(\sum_{j\in [n]\backslash\{i\}} y_i^4 \right)^{1/2} \\
= & ~\frac{\sqrt{n-1}}{m^2}\sum_{i=1}^n \left(\sum_{j\in [n]\backslash\{i\}}(  x_i^\top x_j  )^4 \right)^{1/2} y_i^2 \\
\leq & ~16R^2\sqrt{n}\sum_{i=1}^n \left(\sum_{j\in [n]\backslash\{i\}} (  x_i^\top x_j  )^4 \right)^{1/2} ,
\end{align*}
where the last step happens when $y_i \leq 4mR$ for all $i\in [n]$.

Now, using the assumption $x_i^\top x_j\leq \frac{\theta}{\sqrt{n}}$ (Part 4 of Assumption~\ref{ass:data_dependent_assumption}),
we have  
\begin{align*}
B_2\leq 16nR^2\theta^2 .
\end{align*}

Putting things together, we have with probability at least $1-n\exp(-mR)$,
\begin{align*}
\| H(k)^{\bot} \|_F^2
\leq & ~ B_1+B_2\\
\leq & ~16nR^2(1+\theta^2).
\end{align*}
This gives us $\| H(k)^{\bot} \|\leq 4R\left(n(1+\theta^2)\right)^{1/2}$, which is precisely what we need.

\end{proof}

\subsection{Main result}

\begin{theorem}\label{thm:quadratic}
Let $\lambda,\alpha,\beta,\theta$ be defined as Assumption~\ref{ass:data_dependent_assumption}. Let 
\begin{align*}
m = \Omega \left( \lambda^{-4} n^2 \alpha \max\{1+\theta^2,\alpha\} \log^3(n/\delta) \right).
\end{align*} 
We i.i.d. initialize $w_r \in {\cal N}(0,I)$, $a_r$ sampled from $\{-1,+1\}$ uniformly at random for $r\in [m]$, and we set the step size $\eta = O( \lambda / (\alpha n) )$ then with probability at least $1-\delta$ over the random initialization we have for $k = 0,1,2,\cdots$
\begin{align*}
\| u (k) - y \|_2^2 \leq ( 1 - \eta \lambda / 2 )^k \cdot \| u (0) - y \|_2^2.
\end{align*}
\end{theorem}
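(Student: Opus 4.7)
The proof follows the same induction framework as Theorems~\ref{thm:quartic} and \ref{thm:cubic}, with the bulk of the work already distilled into the component claims. The base case $k=0$ is immediate. For the inductive step, assuming the contraction bound holds for $k' = 0, \ldots, k$, I would first reuse the integrated gradient-norm argument from the proof of Theorem~\ref{thm:cubic} (which relies on the spectral bound $\|X_r\|_2 \leq \sqrt{\alpha}$ from Part~2 of Assumption~\ref{ass:data_dependent_assumption}) to conclude that for every $r \in [m]$,
\begin{align*}
\|w_r(k+1) - w_r(0)\|_2 \leq D := \frac{4\sqrt{\alpha}\,\|y - u(0)\|_2}{\sqrt{m}\,\lambda}.
\end{align*}
This places all weights within distance $D$ of initialization, so that provided $D < R$ the event defining $S_i$ is triggered for almost all neurons.

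Next I would apply the decomposition of Claim~\ref{cla:inductive_claim},
\begin{align*}
\|y - u(k+1)\|_2^2 = \|y - u(k)\|_2^2 + C_1 + C_2 + C_3 + C_4,
\end{align*}
and plug in the four term-wise bounds tailored to this regime: Claim~\ref{cla:C1-quadratic} (which uses the sharper Frobenius perturbation estimate of Lemma~\ref{lem:3.2-quadratic}), Claim~\ref{cla:C2-quadratic} (whose $\|H^\bot\|$ estimate leverages Part~4 to replace $n$ by $\sqrt{n(1+\theta^2)}$), Claim~\ref{cla:C3-chernoff}, and Claim~\ref{cla:C4-chernoff}. Combining yields the multiplicative factor
\begin{align*}
1 - \eta\lambda + 8\eta R\sqrt{n(1+\theta^2)} + 8\eta R\sqrt{\alpha n} + \eta^2 \alpha n.
\end{align*}

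I then choose parameters so that each error term is at most $\eta\lambda/4$, giving the desired factor $1 - \eta\lambda/2$. Taking $\eta = \Theta(\lambda/(\alpha n))$ absorbs $\eta^2\alpha n$, and taking
\begin{align*}
R = \frac{\lambda}{C\sqrt{n}}\cdot \min\!\left\{\tfrac{1}{\sqrt{1+\theta^2}},\ \tfrac{1}{\sqrt{\alpha}}\right\}
\end{align*}
for a sufficiently large absolute constant $C$ absorbs the two $R$-dependent terms. Finally, enforcing $D < R$ together with the high-probability bound $\|y-u(0)\|_2^2 = \widetilde{O}(n)$ from Claim~\ref{cla:yu0} yields
\begin{align*}
m = \Omega\!\left(\lambda^{-4}\, n\, \alpha\, \max\{1+\theta^2,\alpha\}\, \|y-u(0)\|_2^2\right) = \Omega\!\left(\lambda^{-4}\, n^2\, \alpha\, \max\{1+\theta^2,\alpha\}\, \log^3(n/\delta)\right),
\end{align*}
matching the stated bound. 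A union bound over iterations and the failure events of Claims~\ref{cla:C1-quadratic}--\ref{cla:C3-chernoff} and Claim~\ref{cla:yu0} completes the induction.

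The proof is essentially mechanical once the two key data-dependent refinements are in place: the improved perturbation estimate of Lemma~\ref{lem:3.2-quadratic} and the refined $\|H^\bot\|_F$ bound in Claim~\ref{cla:C2-quadratic}, both of which exchange a factor of $n$ for $(1+\theta^2)$ by exploiting Part~4 of Assumption~\ref{ass:data_dependent_assumption}. The main subtlety is balancing the choice of $R$ against both the $C_2$ term (scaling with $\sqrt{1+\theta^2}$) and the $C_3$ term (scaling with $\sqrt{\alpha}$); the $\min$ inside $R$ is precisely what propagates to the $\max\{1+\theta^2,\alpha\}$ dependence in the final lower bound on $m$, and once this bookkeeping is correct the argument goes through exactly as in Theorem~\ref{thm:cubic}.
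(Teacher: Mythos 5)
Your proposal is correct and follows essentially the same route as the paper: the same inductive framework, the same $D = 4\sqrt{\alpha}\|y-u(0)\|_2/(\sqrt{m}\lambda)$ bound borrowed from the cubic case, the same term-by-term application of Claims~\ref{cla:C1-quadratic}, \ref{cla:C2-quadratic}, \ref{cla:C3-chernoff}, \ref{cla:C4-chernoff}, and the same choice of $\eta = \Theta(\lambda/(\alpha n))$ and $R = \Theta\bigl(\lambda n^{-1/2}\min\{(1+\theta^2)^{-1/2},\alpha^{-1/2}\}\bigr)$, leading to the identical bound on $m$. Your observation that the $\min$ in $R$ is what propagates to the $\max\{1+\theta^2,\alpha\}$ factor also matches the bookkeeping in the paper's proof.
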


\begin{proof}

{\bf Choice of $\eta$ and $R$.}
We want to choose $\eta$ and $R$ such that
\begin{align}\label{eq:choice_of_eta_R_quadratic}
( 1 - \eta \lambda + 8\eta R\left(n(1+\theta^2)\right)^{1/2}  + 8 \eta (\alpha n)^{1/2}R + \eta^2 \alpha n ) \leq (1-\eta\lambda/2) .
\end{align}

Now, if we set $\eta=\frac{\lambda }{4\alpha n}$ and $R=\frac{\lambda}{64\sqrt{n}}\cdot \min\{\frac {1}{\sqrt{1+\theta^2}},\frac {1}{\sqrt{\alpha}}\}$, we have 
\begin{align*}
8\eta R\left(n(1+\theta^2)\right)^{1/2}  + 8 \eta (\alpha n)^{1/2}R \leq \frac {1}{4}\eta \lambda ,
\end{align*}
and $\eta^2 n^2 \leq \frac {1}{4}\eta \lambda$.
This gives us 
\begin{align*}
\| y - u(k+1) \|_2^2 \leq & ~ \| y - u(k) \|_2^2 ( 1 - \eta \lambda / 2 )
\end{align*}
with probability at least $1-2n\exp(-mR)$.

{\bf Over-parameterization size, lower bound on $m$.}
By same analysis as in the proof of Theorem \ref{thm:cubic},
we still have
\begin{align*}
\| w_r(k+1) - w_r(0) \|_2 \leq \frac{ 4 \sqrt{\alpha} \| y - u (0) \|_2 }{ \sqrt{m} \lambda } := D
\end{align*}
We require 
\begin{align*}
D = \frac{4\sqrt{\alpha}\|y-u(0)\|_2}{\sqrt{m}\lambda} < R = \frac{\lambda}{64\sqrt{n}}\cdot \min \left\{ \frac{1}{ \sqrt{1+\theta^2} },\frac{1}{ \sqrt{\alpha} } \right\}
\end{align*}
and 
\begin{align*}
3n^2\exp(-mR/10)\leq \delta .
\end{align*}
This implies that
\begin{align*}
m 
= & ~ \Omega ( \lambda^{-4} n \alpha \| y - u(0) \|_2^2 \max\{1+\theta^2,\alpha\} ) \\
= & ~ \Omega ( \lambda^{-4} n^2 \alpha \cdot \max\{ 1+\theta^2,\alpha\} \cdot \log(m/\delta) \log^2(n/\delta) ) ,
\end{align*}
where the last step follows from Claim~\ref{cla:yu0}.
\end{proof} 

\section{Training Speed}\label{sec:training_speed}

In this section we change the initialization scheme as initialize each $w_r(0)$ as ${\N}(0,\kappa^2 I)$.
It is not hard to see that this just introduce an extra $\kappa^{-2}$ term for every occurrence of $m$.
\begin{algorithm} 
\caption{Training neural network using gradient descent with small variance.}
\label{alg:variance} 
\begin{algorithmic}[1]
    \Procedure{NNTraining}{$\{(x_i,y_i)\}_{i\in [n]}$}
    \State $w_r(0) \sim \N(0,\kappa^2I_d)$ for $r\in [m]$.
    \For{$t=1 \to T$}
        \State $u(t) \leftarrow \frac{1}{ \sqrt{m} } \sum_{r=1}^m a_r \sigma(w_r(t)^\top X) $ \Comment{$u(t) = f(W(t),x,a) \in \R^n$, it takes $O(mnd)$ time}
        \For{$r = 1 \to m$}
            \For{$i = 1 \to n$} 
                \State $Q_{i,:} \leftarrow \frac{1}{\sqrt{m}}a_r\sigma'(w_r(t)^\top x_i)x_i^\top$ \Comment{$Q_{i,:} = \frac{\partial f(W(t),x_i,a)}{\partial w_r}$, it takes $O(d)$ time}
            \EndFor
            \State $\text{grad}_r \leftarrow - Q^\top (y - u(t) ) $
            \Comment{ $Q = \frac{\partial f} {\partial w_r } \in \R^{n \times d}$, it takes $O(nd)$ time }
            \State $w_r(t+1) \leftarrow w_r(t) - \eta \cdot \text{grad}_r $
        \EndFor
    \EndFor
    \State \Return $W$
    \EndProcedure
\end{algorithmic}
\end{algorithm}

\begin{lemma}[Restatement of Lemma \ref{lem:4.1}, improved Version of Lemma C.1 in \cite{adhlw19}]\label{lem:4.1_restatement}
Recall that $\lambda=\lambda_{\min}(H^{\cts})>0$.
Fix $\kappa>0$.
Let $m = \Omega( \lambda^{-4} \kappa^{-2} n^4 \log (n/\delta) )$, we i.i.d. initialize $w_r \in {\N}(0,\kappa^2 I)$, $a_r$ sampled from $\{-1,+1\}$ uniformly at random for $r\in [m]$, and we set the step size $\eta = O( \lambda / n^2 )$ then with probability at least $1-\delta$ over the random initialization we have for $k = 0,1,2,\cdots$.
\begin{align*}
\| w_r(k+1) - w_r(0) \|_2 \leq \frac{ 4 \sqrt{n} \| y - u (0) \|_2 }{ \sqrt{m} \lambda }.
\end{align*}
\end{lemma}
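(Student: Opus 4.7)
The plan is to adapt the proof of Lemma~\ref{lem:4.1} (and the underlying convergence Theorem~\ref{thm:quartic}) to the rescaled initialization $w_r(0)\sim\N(0,\kappa^2 I)$. Once the geometric decay $\|y-u(k)\|_2^2\leq (1-\eta\lambda/2)^k\|y-u(0)\|_2^2$ is re-established in the new setting, the stated displacement bound follows verbatim from the telescoping argument in Lemma~\ref{lem:4.1}, because the gradient bound $\|\partial L/\partial w_r\|_2\leq \frac{\sqrt{n}}{\sqrt{m}}\|y-u(i)\|_2$ does not depend on $\kappa$. So the heart of the matter is re-deriving the convergence theorem while tracking $\kappa$.

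The key observation is that the indicators ${\bf 1}_{w^\top x\geq 0}$ are invariant under positive rescaling of $w$, so both $H^{\cts}$ and $H^{\dis}$ (and hence $\lambda$, Lemma~\ref{lem:3.1}) are unchanged. The only place $\kappa$ enters is the Gaussian anti-concentration: for $\wt{w}_r\sim\N(0,\kappa^2 I)$ we obtain $\Pr[A_{i,r}]\leq \frac{2R}{\kappa\sqrt{2\pi}}$ in place of Eq.~\eqref{eq:Air_bound}. Propagating this through the proof of Lemma~\ref{lem:3.2}, every $R$ in the expectation and variance of the Bernstein variable $s_r$ gets replaced by $R/\kappa$, and the Bernstein tail becomes $\exp(-mR/(10\kappa))$. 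The same $R\mapsto R/\kappa$ substitution then carries through Claims~\ref{cla:C2} and \ref{cla:C3}, which rely on the bound $\sum_{r=1}^m {\bf 1}_{r\in\ov{S}_i}\leq 4mR/\kappa$. Claim~\ref{cla:C1}, which needs $\|H(k)-H(0)\|_F\leq \lambda/4$, thus forces the admissible radius to be $R = \Theta(\lambda\kappa/n)$ rather than $\Theta(\lambda/n)$.

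With step size $\eta = \Theta(\lambda/n^2)$ unchanged, the one-step contraction
\begin{align*}
\|y-u(k+1)\|_2^2 \leq \bigl(1-\eta\lambda + O(\eta nR/\kappa) + \eta^2 n^2\bigr)\|y-u(k)\|_2^2 \leq (1-\eta\lambda/2)\|y-u(k)\|_2^2
\end{align*}
still goes through inductively, provided $D := \frac{4\sqrt{n}\|y-u(0)\|_2}{\sqrt{m}\lambda} < R$. Combining $R = \Theta(\lambda\kappa/n)$ with $\|y-u(0)\|_2^2 = O(n\,\poly\log(n/\delta))$ from Claim~\ref{cla:yu0} (valid for any $\kappa\leq 1$ since the $\kappa$-scaling only shrinks each $|u_i(0)|$) yields the stated over-parametrization bound $m = \Omega(\lambda^{-4}\kappa^{-2}n^4\,\poly\log(n/\delta))$, and under this $m$ the Bernstein tail $\exp(-mR/(10\kappa)) = \exp(-\Omega(m\lambda/n))$ is controlled uniformly. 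The displacement bound then follows by telescoping exactly as in Lemma~\ref{lem:4.1}:
\begin{align*}
\|w_r(k+1)-w_r(0)\|_2 \leq \eta\sum_{i=0}^{\infty}\frac{\sqrt{n}(1-\eta\lambda/2)^{i/2}}{\sqrt{m}}\|y-u(0)\|_2 = \frac{4\sqrt{n}\|y-u(0)\|_2}{\sqrt{m}\lambda}.
\end{align*}

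The main obstacle, and essentially the only new content beyond Lemma~\ref{lem:4.1}, is consistent bookkeeping of the $R\mapsto R/\kappa$ substitution through the three concentration-based claims, together with a careful reaudit of Claim~\ref{cla:yu0} under the $\kappa$-scaled initialization to confirm that $\|y-u(0)\|_2^2$ does not blow up (indeed it only shrinks for $\kappa\leq 1$). Everything else is mechanical and carries over unchanged.
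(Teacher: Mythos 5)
Your proposal is correct and matches the paper's intent: the paper itself does not write out a proof of Lemma~\ref{lem:4.1_restatement}, remarking only that replacing $\N(0,I)$ by $\N(0,\kappa^2 I)$ ``just introduces an extra $\kappa^{-2}$ term for every occurrence of $m$,'' and your argument is precisely the bookkeeping behind that remark---tracking the $\kappa^{-1}$ factor from anti-concentration through $R\mapsto R\kappa$ and into the $D<R$ condition. The one caveat worth noting is that the stated bound is really only sharp for $\kappa\leq 1$ (which is the regime the paper uses), since for $\kappa>1$ Claim~\ref{cla:yu0} would give $\|y-u(0)\|_2^2 = O(\kappa^2 n\,\poly\log)$ and the $\kappa$-dependence would cancel rather than help.
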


\begin{lemma}[Improved version of Lemma C.2 in \cite{adhlw19}]\label{lem:HZ_norm_bound}
For integer $k\geq 0$,
define $Z(k)\in \mathbb{R}^{md\times n}$ as the matrix
\begin{align*}
Z(k)=\frac{1}{\sqrt{m}} \begin{pmatrix} a_1x_1 {\bf 1}_{ w_1(k)^\top x_1 \geq 0} & \cdots & a_1x_n {\bf 1}_{ w_1(k)^\top x_n \geq 0}\\
\vdots & \ddots & \vdots\\
a_mx_1 {\bf 1}_{ w_m(k)^\top x_1 \geq 0} & \cdots & a_mx_n {\bf 1}_{ w_m(k)^\top x_n \geq 0}\\
\end{pmatrix}.
\end{align*}
Under the same setting as Lemma \ref{lem:4.1_restatement},
with probability at least $1-4\delta$ over the random initialization, for $k = 0,1,2,\cdots$ we have
\begin{align*}
\|H(k)-H(0)\|_F = & ~ O(1) \cdot n \cdot \left( \delta+\frac{ n\sqrt{\log(m/\delta)\log^2(n/\delta)}} { \kappa\lambda\sqrt{m}  } \right) ,\\
\|Z(k)-Z(0)\|_F = & ~ O(1) \cdot  \left( n \cdot \Big(\delta+\frac{  n\sqrt{\log(m/\delta)\log^2(n/\delta)}} { \kappa\lambda\sqrt{m} } \Big) \right)^{1/2} .\\
\end{align*}
\end{lemma}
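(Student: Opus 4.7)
The plan is to convert the weight-movement bound of Lemma~\ref{lem:4.1_restatement} into a sign-change count per input $x_i$ via Gaussian anti-concentration, and then pass from that count to the desired Frobenius norm bounds on $H(k)-H(0)$ and $Z(k)-Z(0)$. Concretely, I would first invoke Lemma~\ref{lem:4.1_restatement} to obtain
\begin{align*}
\|w_r(k)-w_r(0)\|_2 \leq D := \frac{4\sqrt{n}\,\|y-u(0)\|_2}{\sqrt{m}\,\lambda}
\end{align*}
uniformly in $r\in[m]$ and $k\geq 0$, and then use Claim~\ref{cla:yu0} to replace $\|y-u(0)\|_2$ by $O(\sqrt{n\log(m/\delta)\log^2(n/\delta)})$, giving
\begin{align*}
D = O\!\left(\frac{n\sqrt{\log(m/\delta)\log^2(n/\delta)}}{\lambda\sqrt{m}}\right).
\end{align*}

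Next, I would estimate, for each $i\in[n]$ and $r\in[m]$, the probability of the sign-flip event $A_{i,r}:=\{\operatorname{sign}(w_r(k)^\top x_i)\neq \operatorname{sign}(w_r(0)^\top x_i)\}$. Since $\|w_r(k)-w_r(0)\|_2\leq D$, $A_{i,r}$ forces $|w_r(0)^\top x_i|\leq D$. Because $w_r(0)\sim\N(0,\kappa^2 I)$ and $\|x_i\|_2\leq 1$, the scalar $w_r(0)^\top x_i$ is distributed as $\N(0,\kappa^2\|x_i\|_2^2)$, so by the Gaussian anti-concentration inequality (Lemma~\ref{lem:anti_gaussian}),
\begin{align*}
\Pr[A_{i,r}] \leq O(D/\kappa).
\end{align*}
For each $i$, set $T_i := \sum_{r=1}^m \mathbf{1}_{A_{i,r}}$. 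The summands are mutually independent since $\mathbf{1}_{A_{i,r}}$ depends only on the randomness of $w_r(0)$, so a Bernstein inequality (Lemma~\ref{lem:bernstein}) applied exactly as in the proof of Claim~\ref{cla:C2}, followed by a union bound over $i\in[n]$, yields $T_i \leq O(mD/\kappa)$ simultaneously for all $i$ with probability at least $1-\delta$, provided the ``deviation'' term $\log(n/\delta)$ is absorbed into the leading $mD/\kappa$ (which is where the extra additive $\delta$ slack in the final bound naturally appears, matching the form stated in the lemma).

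To conclude, I would translate these sign-flip counts into the two Frobenius norms. For $H$, each entry is controlled by
\begin{align*}
|H(k)_{i,j}-H(0)_{i,j}| \leq \frac{|x_i^\top x_j|}{m}\sum_{r=1}^m \mathbf{1}_{A_{i,r}\cup A_{j,r}} \leq \frac{T_i+T_j}{m},
\end{align*}
so summing over $(i,j)\in[n]^2$ gives $\|H(k)-H(0)\|_F \leq O(n\max_i T_i/m) = O(nD/\kappa)$, which upon substituting $D$ yields the claimed expression. For $Z$, the $(r,i)$ block of $Z(k)-Z(0)$ has squared norm at most $\frac{1}{m}\|x_i\|_2^2 \mathbf{1}_{A_{i,r}}\leq \frac{1}{m}\mathbf{1}_{A_{i,r}}$, so
\begin{align*}
\|Z(k)-Z(0)\|_F^2 \leq \frac{1}{m}\sum_{i=1}^n T_i \leq O(nD/\kappa),
\end{align*}
and taking square roots gives the second bound. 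The failure probability $4\delta$ accumulates through a union bound over (i) the event of Claim~\ref{cla:yu0} for $\|y-u(0)\|_2$, (ii) the event of Lemma~\ref{lem:4.1_restatement} for the weights, (iii) the Bernstein bound on $\max_i T_i$, and (iv) one further auxiliary high-probability event used in the weight-movement analysis.

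The main obstacle I anticipate is the probability bookkeeping: getting all four high-probability events to line up cleanly so that the final expression has exactly the additive $\delta + p$ form stated, rather than a multiplicative logarithmic slack. In particular, reproducing the additive $\delta$ summand (as opposed to $\log(1/\delta)/m$) will likely require replacing the sharpest Bernstein step above with a slightly weaker Markov-style estimate at one junction, which is the place I expect to spend the most care.
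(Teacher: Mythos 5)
Your overall plan is the same one the paper uses (weight-movement radius from Lemma~\ref{lem:4.1_restatement} and Claim~\ref{cla:yu0}, Gaussian anti-concentration for the sign-flip probability scaled by $\kappa$, Bernstein plus a union bound, then entry-wise translation to the Frobenius norms of $H(k)-H(0)$ and $Z(k)-Z(0)$), and the calculation of $O(D/\kappa)$ and $O(nD/\kappa)$ is the right shape. But there is a real gap in the independence step. You define $A_{i,r}$ as the \emph{actual} sign-flip event $\{\mathrm{sign}(w_r(k)^\top x_i)\neq \mathrm{sign}(w_r(0)^\top x_i)\}$ and then claim that $\mathbf{1}_{A_{i,r}}$ depends only on $w_r(0)$ so that Bernstein applies. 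That claim is false for your event: $w_r(k)$ is a function of the entire initialization $\{w_s(0),a_s\}_{s\in[m]}$ through the gradient-descent trajectory, so your $\mathbf{1}_{A_{i,r}}$ are not mutually independent across $r$. The argument that $A_{i,r}$ ``forces'' $|w_r(0)^\top x_i|\leq D$ only holds after conditioning on the high-probability weight-movement event, and once you condition, you have destroyed product structure across $r$ and cannot invoke Bernstein in the way you do.

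The paper's proof avoids this by never working with the actual sign-flip event. It defines $A_{i,r}=\{\exists u:\|u-w_r(0)\|_2\le R,\ \mathbf{1}_{x_i^\top w_r(0)\ge0}\neq\mathbf{1}_{x_i^\top u\ge0}\}$, equivalently $\{|w_r(0)^\top x_i|<R\}$, which genuinely is a function of $w_r(0)$ alone. Then the per-summand bound is split by cases: on $\{\|w_r(k)-w_r(0)\|_2\le R\}$ the summand is dominated by $\mathbf{1}_{A_{i,r}\vee A_{j,r}}$, and otherwise it is just bounded by $1$. This yields $\E[s_r]\le 2R\kappa^{-1}+\delta$ (and the same for the variance), where the $\delta$ is precisely $\Pr[\|w_r(0)-w_r(k)\|_2>R]$. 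That escape probability is what creates the additive $\delta$ summand inside the stated bounds. Your closing speculation that the $\delta$ arises from replacing a Bernstein step with a Markov-style estimate misdiagnoses it; as written, your argument also produces no $\delta$ term at all. To repair your proof, redefine $A_{i,r}$ to be $\{|w_r(0)^\top x_i|<R\}$, carry the escape probability $\delta$ as an additive contribution to the first and second moments of the Bernstein summands, and then the remaining steps you outlined go through and recover both displayed bounds.
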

\begin{remark}
$H(0)$ is just $H^{\dis}$ defined in Lemma \ref{lem:3.1}.
\end{remark}

\begin{proof}
This proof is analogous to the proof of Lemma \ref{lem:3.2}.
Let 
\begin{align*}
R=\frac{ C n\sqrt{\log(m/\delta)\log^2(n/\delta)}} { \sqrt{m} \lambda }
\end{align*}
for some sufficiently small constant $C>0$. 

By Claim \ref{cla:yu0},
plugging 
\begin{align*}
\|y-u(0)\|_2 = O \Big( \sqrt{ n \log(m/\delta) \log^2(n/\delta) } \Big)
\end{align*}
into Lemma \ref{lem:4.1_restatement},
we have with probability at least $1-\delta$,
$\|w_r(k)-w_r(0)\|_2\leq R$ for all $k\geq 0$ and $r\in [m]$.

We start with bounding $\|H(k)-H(0)\|_F$ as 
\begin{align*}
\|H(k)-H(0)\|_F^2= & ~ \sum_{i=1}^n \sum_{j=1}^n | H(k)_{i,j} - H(0)_{i,j} |^2 \\
\leq & ~ \frac{1}{m^2} \sum_{i=1}^n \sum_{j=1}^n \left( \sum_{r=1}^m {\bf 1}_{ w_r(k)^\top x_i \geq 0, w_r(k)^\top x_j \geq 0} - {\bf 1}_{ w_r(0)^\top x_i \geq 0 , w_r(0)^\top x_j \geq 0 } \right)^2 \\
= & ~ \frac{1}{m^2} \sum_{i=1}^n \sum_{j=1}^n  \Big( \sum_{r=1}^m s_{r,i,j} \Big)^2 ,
\end{align*}

where the last step follows from for each $r,i,j$, we define
\begin{align*}
s_{r,i,j} :=  {\bf 1}_{ w_r(k)^\top x_i \geq 0, w_r(k)^\top x_j \geq 0} - {\bf 1}_{ w_r(0)^\top x_i \geq 0 , w_r(0)^\top x_j \geq 0 } .
\end{align*} 

We consider $i,j$ to be fixed. We simplify $s_{r,i,j}$ to $s_r$.
Then $s_r$ is a random variable that only depends on $w_r(0)$.
Since $\{w_r(0)\}_{r=1}^m$ are independent,
$\{s_r\}_{r=1}^m$ are also mutually independent.

We define the event
\begin{align*}
A_{i,r} = \left\{ \exists u : \| u - w_r(0) \|_2 \leq R, {\bf 1}_{ x_i^\top w_r(0) \geq 0 } \neq {\bf 1}_{ x_i^\top u \geq 0 } \right\}.
\end{align*}
Note this event happens if and only if $| w_r(0)^\top x_i | < R$. Recall that $w_r(0) \sim \N(0,\kappa^2 I)$. By anti-concentration inequality of Gaussian (Lemma~\ref{lem:anti_gaussian}), we have
\begin{align}\label{eq:Air_bound_variance}
\Pr[ A_{i,r} ] = \Pr_{ z \sim \N(0,\kappa^2) } [ | z | < R ] \leq \frac{ 2 R }{ \sqrt{2\pi}\kappa }.
\end{align}

Assume $ \|w_r(0)- w_r(k)\|_2\leq R$.
If   $\neg A_{i,r}$ and $\neg A_{j,r}$ happen,
then 
\begin{align*}
\left| {\bf 1}_{  w_r(k)^\top x_i \geq 0,  w_r(k)^\top x_j \geq 0} - {\bf 1}_{  w_r(0)^\top x_i \geq 0 ,  w_r(0)^\top x_j \geq 0 } \right|=0.
\end{align*}
If   $A_{i,r}$ or $A_{j,r}$ happen,
then 
\begin{align*}
\left| {\bf 1}_{  w_r(k)^\top x_i \geq 0,  w_r(k)^\top x_j \geq 0} - {\bf 1}_{  w_r(0)^\top x_i \geq 0 ,  w_r(0)^\top x_j \geq 0 } \right|\leq 1.
\end{align*}
Finally, if $\|w_r(0)- w_r(k)\|_2> R$, we still have
\begin{align*}
\left| {\bf 1}_{  w_r(k)^\top x_i \geq 0,  w_r(k)^\top x_j \geq 0} - {\bf 1}_{  w_r(0)^\top x_i \geq 0 ,  w_r(0)^\top x_j \geq 0 } \right|\leq 1.
\end{align*}
So we have 
\begin{align*}
 \E_{w_r(0)}[s_r]\leq & ~\E_{w_r(0)} \left[ {\bf 1}_{A_{i,r}\vee A_{j,r}} \right] +  \Pr[\|w_r(0)- w_r(k)\|_2> R]\\
 \leq & ~ \Pr[A_{i,r}]+\Pr[A_{j,r}] +  \Pr[\|w_r(0)- w_r(k)\|_2> R]\\
 \leq & ~ \frac {4 R\kappa^{-1}}{\sqrt{2\pi}}+\delta \\
 \leq & ~ 2 R\kappa^{-1}+\delta ,
\end{align*}
and
\begin{align*}
    \E_{w_r(0)} \left[ \left(s_r-\E_{w_r(0)}[s_r] \right)^2 \right]
    = & ~ \E_{w_r(0)}[s_r^2]-\E_{w_r(0)}[s_r]^2 \\
    \leq & ~ \E_{w_r(0)}[s_r^2]\\
    \leq & ~\E_{w_r(0)} \left[ \left( {\bf 1}_{A_{i,r}\vee A_{j,r}} \right)^2 \right]+\delta \\
     \leq & ~ \frac {4R\kappa^{-1}}{\sqrt{2\pi}}+\delta \\
     \leq & ~ 2 R\kappa^{-1} + \delta .
\end{align*}
We also have $|s_r|\leq 1$.
So we can apply Bernstein inequality (Lemma~\ref{lem:bernstein}) to get for all $t>0$,
\begin{align*}
    \Pr \left[\sum_{r=1}^m s_r\geq 2m R\kappa^{-1}+ m \delta +mt \right]
    \leq & ~ \Pr \left[\sum_{r=1}^m (s_r-\E[s_r])\geq mt \right]\\
    \leq & ~ \exp \left( - \frac{ m^2t^2/2 }{ 2m R\kappa^{-1} +m \delta   + mt/3 } \right).
\end{align*}
Choosing $t = R\kappa^{-1}+2\delta$, we get
\begin{align*}
    \Pr \left[\sum_{r=1}^m s_r\geq 3m(R\kappa^{-1}+\delta)  \right]
    \leq & ~ \exp \left( -\frac{ m^2  (R\kappa^{-1}+\delta)^2 /2 }{ 2 m  R\kappa^{-1}  +m \delta +m  (R\kappa^{-1}+2\delta) /3 } \right) \\
     \leq & ~ \exp \left( - m (R\kappa^{-1}+\delta) / 10 \right) .
\end{align*}
Thus, we can have
\begin{align*}
\Pr \left[ \frac{1}{m} \sum_{r=1}^m s_r \geq 3  R\kappa^{-1}+3\delta \right] \leq \exp( - m (R\kappa^{-1}+\delta) /10 ).
\end{align*}
Therefore, by allying union bound all $(i,j)\in [n]\times [n]$, we have
with probability at least $1-n^2\exp( - m (R\kappa^{-1}+\delta) /10 )$,
\begin{align*}
\|H(k)-H(0)\|_F^2\leq n^2(3  R\kappa^{-1}+3\delta)^2
\end{align*}

Similarly, to bound $\|Z(k)-Z(0)\|_F$,
we have
\begin{align*}
\|Z(k)-Z(0)\|_F^2
= & ~ \frac{1}{m} \sum_{r=1}^m \sum_{i=1}^n \left( \|x_i\|_2 \cdot  a_i({\bf 1}_{ w_r(k)^\top x_i \geq 0}-{\bf 1}_{ w_r(0)^\top x_i \geq 0})\right)^2 \\
= & ~ \frac{1}{m} \sum_{r=1}^m \sum_{i=1}^n  \Big({\bf 1}_{ w_r(k)^\top x_i \geq 0}-{\bf 1}_{ w_r(0)^\top x_i \geq 0} \Big)^2\\
= & ~ \frac{1}{m} \sum_{r=1}^m \sum_{i=1}^n  {\bf 1}_{{\bf 1}_{ w_r(k)^\top x_i \geq 0}\neq {\bf 1}_{ w_r(0)^\top x_i \geq 0}} \\
= & ~ \frac{1}{m}  \sum_{i=1}^n \sum_{r=1}^m {\bf 1}_{{\bf 1}_{ w_r(k)^\top x_i \geq 0}\neq {\bf 1}_{ w_r(0)^\top x_i \geq 0}}.
\end{align*}

Fix $i\in [n]$ and for $r\in [m]$,
define $t_r= {\bf 1}_{{\bf 1}_{ w_r(k)^\top x_i \geq 0}\neq {\bf 1}_{ w_r(0)^\top x_i \geq 0}}$.
If $A_{i,r}$ does not happen and $\|w_r(0)- w_r(k)\|_2\leq R$, 
we must have $t_r=0$.
Equivalently,
if $t_r=1$,
then either $A_{i,r}$ happens or $\|w_r(0)- w_r(k)\|_2> R$.
Therefore
\begin{align*}
\E_{w_r(0)}[t_r]\leq \Pr[A_{i,r}]+\Pr[\|w_r(0)- w_r(k)\|_2> R]\leq R\kappa^{-1} + \delta.
\end{align*}
Similarly,
\begin{align*}
\E_{w_r(0)}[(t_r-\E_{w_r(0)}[t_r])^2]
= & ~\E_{w_r(0)}[t_r^2]-\E_{w_r(0)}[t_r]^2\\
\leq & ~\E_{w_r(0)}[t_r^2]\\
= & ~\E_{w_r(0)}[t_r]\\
\leq & ~R\kappa^{-1} + \delta.
\end{align*}
So we can apply Bernstein inequality (Lemma~\ref{lem:bernstein}) to get for all $t>0$,
\begin{align*}
    \Pr \left[\sum_{r=1}^m t_r\geq m R\kappa^{-1}+ m \delta +mt \right]
    \leq & ~ \Pr \left[\sum_{r=1}^m (s_r-\E[s_r])\geq mt \right]\\
    \leq & ~ \exp \left( - \frac{ m^2t^2/2 }{ m R\kappa^{-1} +m \delta   + mt/3 } \right).
\end{align*}
Choosing $t = R\kappa^{-1}+\delta$, we get
\begin{align*}
    \Pr \left[\sum_{r=1}^m t_r\geq 2m(R\kappa^{-1}+\delta)  \right]
    \leq & ~ \exp \left( -\frac{ m^2  (R\kappa^{-1}+\delta)^2 /2 }{  m  R\kappa^{-1}  +m \delta +m  (R\kappa^{-1}+\delta) /3 } \right) \\
     \leq & ~ \exp \left( - m (R\kappa^{-1}+\delta) / 10 \right) .
\end{align*}
By applying union bound over $i\in [n]$,
we have with probability at least $1-n\exp \left( - m (R\kappa^{-1}+\delta) / 10 \right) $,
\begin{align*}
\|Z(k)-Z(0)\|_F^2\leq 2n(R\kappa^{-1}+\delta),
\end{align*}
which is exactly what we need.
\end{proof}

We also need the following parametric version of Lemma \ref{lem:3.1}.
\begin{lemma}\label{lem:3.1_para}
Let $H^{\cts}, H^{\dis} \in \R^{n \times n}$ be defined as in Lemma \ref{lem:3.1}.
Then with probability at least $1-\delta$,
\begin{align*}
\| H^{\dis} - H^{\cts} \|_F =O(n\sqrt{\log(n/\delta)/m}).
\end{align*}
\end{lemma}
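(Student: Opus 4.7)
The plan is to follow the exact argument used in the proof of Lemma \ref{lem:3.1} in Appendix \ref{sec:missing_proof}, but extract the Frobenius norm bound as a standalone parametric statement rather than tuning $m$ to force the bound below $\lambda/4$. The key observation is that for every fixed pair $(i,j)\in [n]\times [n]$, the entry $H^{\dis}_{i,j}$ is an average of $m$ mutually independent bounded random variables
\[
z_r = \frac{1}{m}\, x_i^\top x_j\, \mathbf{1}_{w_r^\top x_i \geq 0,\, w_r^\top x_j \geq 0}, \qquad z_r \in [-1/m, 1/m],
\]
whose mean is exactly $H^{\cts}_{i,j}/m$ (per $r$). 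Since the $w_r$ are i.i.d.\ Gaussian, the $z_r$ are independent across $r$.

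First I would apply Hoeffding's inequality (Lemma \ref{lem:hoeffding}) to the sum $\sum_{r=1}^m z_r = H^{\dis}_{i,j}$ to obtain, for every $t>0$,
\[
\Pr\!\left[\, \bigl|H^{\dis}_{i,j} - H^{\cts}_{i,j}\bigr| \geq t \,\right] \leq 2\exp(-mt^2/2).
\]
Next I would choose $t = \sqrt{2\log(2n^2/\delta)/m}$ and take a union bound over all $n^2$ pairs $(i,j)$, so that simultaneously for every pair
\[
\bigl|H^{\dis}_{i,j} - H^{\cts}_{i,j}\bigr| \leq O\!\left(\sqrt{\log(n/\delta)/m}\right)
\]
with probability at least $1-\delta$.

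Finally, I would square and sum these $n^2$ entrywise bounds to control the Frobenius norm:
\[
\|H^{\dis} - H^{\cts}\|_F^2 = \sum_{i=1}^n \sum_{j=1}^n \bigl|H^{\dis}_{i,j} - H^{\cts}_{i,j}\bigr|^2 \leq n^2 \cdot O(\log(n/\delta)/m),
\]
and taking square roots yields the desired $O(n\sqrt{\log(n/\delta)/m})$ bound. There is no real obstacle here since all the probabilistic content already appears in Lemma \ref{lem:3.1}; the only difference is that we do not impose a particular target accuracy $\lambda/4$, so $m$ remains a free parameter in the conclusion. One small sanity check to keep in mind is that $|x_i^\top x_j|\leq 1$ (from $\|x_i\|_2\leq 1$), which is exactly what justifies the $[-1/m,1/m]$ range needed for Hoeffding.
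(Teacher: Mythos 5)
Your proof is correct and matches the paper's approach exactly: Lemma \ref{lem:3.1_para} is established by the same Hoeffding-plus-union-bound argument given for Lemma \ref{lem:3.1} in Appendix \ref{sec:missing_proof}, just stopping short of imposing a lower bound on $m$. Nothing further is needed.
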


The main result in this section is the following theorem.
\begin{theorem}[Improvement of Theorem 4.1 in \cite{adhlw19}]\label{thm:generalization_main}
Recall that $\lambda=\lambda_{\min}(H^{\cts})>0$.
Write the eigen-decomposition of $H^{\cts}$ as $H^{\cts}=\sum_{i=1}^n \lambda_i v_i v_i^\top$ where $v_i\in \mathbb{R}^n$ are the eigenvectors,
and $\lambda_i>0$ are the corresponding eigenvalues.
Let 
\begin{align*}
\kappa = & ~ O( \epsilon / ( \sqrt{2n\log ( 2mn / \delta) }\cdot \log ( 4n / \delta ) ) ) \\
m = & ~ \Omega( \lambda^{-4}\kappa^{-2} n^6 \log (n/\delta) ),
\end{align*}
we i.i.d. initialize $w_r \in {\N}(0,\kappa^2I)$, $a_r$ sampled from $\{-1,+1\}$ uniformly at random for $r\in [m]$, and we set the step size $\eta = O( \lambda / n^2 )$,  then with probability at least $1-\delta$ over the random initialization we have for $k = 0,1,2,\cdots$
\begin{align}
\| u (k) - y \|_2 = \left( \sum_{i=1}^n(1-\eta\lambda_i)^{2k}(v_i^\top y)^2 \right)^{1/2} \pm \epsilon.
\end{align}
\end{theorem}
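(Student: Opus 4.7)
\textbf{Proof proposal for Theorem \ref{thm:generalization_main}.}
The plan is to show that the actual gradient-descent dynamics of $u(k)$ track a simple ``linearized'' dynamics driven by $H^{\cts}$. Formally, let $v(k) := (I - \eta H^{\cts})^k \, y \in \mathbb{R}^n$. Using the eigendecomposition $H^{\cts}=\sum_i \lambda_i v_i v_i^\top$, we have
\begin{align*}
\|v(k)\|_2^2 = \sum_{i=1}^n(1-\eta\lambda_i)^{2k}(v_i^\top y)^2,
\end{align*}
which is exactly the target quantity. So it suffices to show $\bigl|\|y-u(k)\|_2-\|v(k)\|_2\bigr|\leq \epsilon$. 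By the triangle inequality, it is enough to prove $\|(y-u(k))-v(k)\|_2\leq \epsilon$.

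First I would dispose of the initial prediction. By redoing the computation in Claim~\ref{cla:yu0} with variance $\kappa^2$ instead of $1$ (each $z_{i,r}$ now has variance bounded by $\kappa^2/m$ and is bounded by $\kappa\sqrt{2\log(2mn/\delta)}$), a Bernstein argument identical to Claim~\ref{cla:yu0} yields $\|u(0)\|_2=O(\kappa\sqrt{n\log(m/\delta)\log^2(n/\delta)})$ with probability $1-\delta$; with the stated choice of $\kappa$ this gives $\|u(0)\|_2 \leq \epsilon/2$. So replacing $y-u(0)$ by $y$ in the definition of $v(k)$ costs at most $\epsilon/2$ and is absorbed in the final $\pm\epsilon$.

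Next I would set up a one-step error recursion. As in the proof of Claim~\ref{cla:inductive_claim}, split $u(k+1)-u(k) = v_1 + v_2$ where $v_1 = -\eta(H(k)-H(k)^\bot)(u(k)-y)$ comes from neurons whose activation patterns do not flip in the current step, and $v_2$ collects the flipped contribution. Thus
\begin{align*}
(y-u(k+1))-(I-\eta H^{\cts})(y-u(k)) = \eta\bigl(H(k)-H(k)^\bot-H^{\cts}\bigr)(y-u(k)) + v_2.
\end{align*}
Writing $e(k):=(y-u(k))-v(k)$, iterating and using $\|I-\eta H^{\cts}\|\leq 1$ (since $\eta\lambda_i\in[0,1]$) gives
\begin{align*}
\|e(k)\|_2 \leq \sum_{s=0}^{k-1}\Bigl(\eta\|H(s)-H^{\cts}\|\cdot \|y-u(s)\|_2 + \eta\|H(s)^\bot\|\cdot\|y-u(s)\|_2 + \|v_2^{(s)}\|_2\Bigr) + \|u(0)\|_2.
\end{align*}
The three inner quantities are exactly the objects bounded in Lemma~\ref{lem:HZ_norm_bound} (for $\|H(s)-H(0)\|_F$), Lemma~\ref{lem:3.1_para} (for $\|H(0)-H^{\cts}\|_F$), and Claims~\ref{cla:C2} and~\ref{cla:C3} (for $\|H(s)^\bot\|$ and $\|v_2\|$), while $\|y-u(s)\|_2$ is controlled by the quartic convergence rate of Theorem~\ref{thm:quartic} (which, under the initialization variance $\kappa^2$, requires an extra factor of $\kappa^{-2}$ in $m$, giving the hypothesized $m=\Omega(\lambda^{-4}\kappa^{-2}n^6\log(n/\delta))$).

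The main obstacle is the linear-in-$k$ accumulation of errors across the $O(\eta^{-1}\lambda^{-1}\log(1/\epsilon))$ iterations needed to reach small residual. To handle this, I would use the fact that $\|y-u(s)\|_2$ decays geometrically at rate $(1-\eta\lambda/2)^{s/2}$, so the sum above is effectively a geometric series whose total mass is controlled by $\lambda^{-1}$ times the per-step error. With the stated $m=\Omega(\lambda^{-4}\kappa^{-2}n^6\poly\log(n/\delta))$, each of the three per-step error quantities is $O(\eta\lambda\epsilon/n)$ times $\|y-u(s)\|_2$ (up to log factors), so summing over all $s$ yields $\|e(k)\|_2\leq \epsilon/2$ with probability $1-\delta$, completing the proof when combined with the initial $\|u(0)\|_2\leq \epsilon/2$ bound.
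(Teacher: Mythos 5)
Your proposal follows essentially the same route as the paper's proof: decompose $y-u(k)$ into the linearized limit $(I-\eta H^{\cts})^k y$, a transient $(I-\eta H^{\cts})^k u(0)$ controlled by the choice of $\kappa$, and a cumulative error driven by the per-step mismatch $\eta(H(k)-H^{\cts})$ plus the pattern-flip contribution, then sum the errors as a geometric series using the linear convergence of $\|y-u(s)\|_2$. The only cosmetic difference is that you use the crude bound $\|I-\eta H^{\cts}\|\leq 1$ while the paper uses $\|(I-\eta H^{\cts})^t\|\leq(1-\eta\lambda)^t$; both yield an $O(1/(\eta\lambda))$ geometric sum since the decay of $\|y-u(s)\|_2$ already supplies the geometric factor.
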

\begin{proof}
Throughout the proof we assume for all $k=0,1,\cdots$ and all $r\in[m]$,
\begin{align*}
\| w_r(k+1) - w_r(0) \|_2 \leq \frac{ 4 \sqrt{n} \| y - u (0) \|_2 }{ \sqrt{m} \lambda }:=R.
\end{align*}
By Lemma \ref{lem:4.1_restatement},
this holds with probability at least $1-\delta$.

By the update rule of gradient descent,
we have for all $i\in [n]$,
\begin{align}\label{eq:gd_update}
u_i(k+1) - u_i(k)
= & ~ \frac{1}{ \sqrt{m} } \sum_{r=1}^m a_r \cdot \left( \phi( w_r(k+1)^\top x_i ) - \phi(w_r(k)^\top x_i ) \right).
\end{align}
Our target is to relate $u(k+1)-u(k)$ with $u(k)-y$.
Recall that for $i\in [n]$,
the subset $S_i$ is defined as
\begin{align*}
    S_i:=\{r\in [m]:\forall 
    w\in \mathbb{R}^d \text{ s.t. } \|w-w_r(0)\|_2\leq R, 
    \mathbf{1}_{w_r(0)^\top x_i\geq 0}=\mathbf{1}_{w^\top x_i\geq 0}\}.
\end{align*}
Hence we split the summation in \ref{eq:gd_update} by $S_i$,
which is
\begin{align*}
u_i(k+1) - u_i(k)=U_1+U_2
\end{align*}
where 
\begin{align*}
U_1:= & ~\frac{1}{ \sqrt{m} } \sum_{r\in S_i} a_r \cdot \left( \phi( w_r(k+1)^\top x_i ) - \phi(w_r(k)^\top x_i ) \right),\\
U_2:= & ~\frac{1}{ \sqrt{m} } \sum_{r \in \ov{S}_i} a_r \cdot \left( \phi( w_r(k+1)^\top x_i ) - \phi(w_r(k)^\top x_i ) \right).
\end{align*}
We start with upper bounding $U_2$ by considering $U_2$ as a perturbation term.
\begin{align*}
|U_2|= & ~|\frac{1}{ \sqrt{m} } \sum_{r \in \ov{S}_i} a_r \cdot \left( \phi( w_r(k+1)^\top x_i ) - \phi(w_r(k)^\top x_i ) \right)|\\
\leq & ~\frac{1}{ \sqrt{m} } \sum_{r \in \ov{S}_i} \left |w_r(k+1)^\top x_i  - w_r(k)^\top x_i  \right|\\
\leq & ~\frac{1}{ \sqrt{m} } \sum_{r \in \ov{S}_i} \left \|w_r(k+1)  - w_r(k) \right\|_2\\
=& ~\frac{1}{ \sqrt{m} } \sum_{r \in \ov{S}_i} \left \|\eta \frac{ \partial L(W(k)) }{ \partial w_r(k) }\right\|_2\\
 =& ~\frac{1}{ \sqrt{m} } \sum_{r \in \ov{S}_i} \left \|\eta\frac{1}{ \sqrt{m} } \sum_{j=1}^n ( u_j(k) - y_j ) a_r x_j {\bf 1}_{ w_r(k)^\top x_j \geq 0 }\right\|_2\\
 \leq & ~\frac{\eta}{ m} \sum_{r \in \ov{S}_i} \sum_{j=1}^n\left | u_j(k) - y_j \right|\\
  \leq & ~\frac{\eta\sqrt{n}|\ov{S}_i|}{ m} \left\| u(k) - y \right\|_2\\
\end{align*}

where the second step follows from $\phi(\cdot)$ is 1-Lipschitz,
the third step follows from $\|x_i\|_2=1$,
the fifth step follows from Eq. \eqref{eq:gradient},
and the sixth step follows from triangle inequality.

We then bound $U_1$ as follows:
\begin{align*}
& ~ \frac{1}{ \sqrt{m} } \sum_{r\in S_i} a_r \cdot \left( \phi( w_r(k+1)^\top x_i ) - \phi(w_r(k)^\top x_i ) \right)\\
= & ~\frac{1}{ \sqrt{m} } \sum_{r\in S_i} a_r{\bf 1}_{w_r(k)^\top x_i\geq 0} \cdot \left(  w_r(k+1)^\top x_i  - w_r(k)^\top x_i \right)\\
= & ~\frac{1}{ \sqrt{m} } \sum_{r\in S_i} a_r{\bf 1}_{w_r(k)^\top x_i\geq 0} \cdot \left\langle  -\eta\frac{1}{ \sqrt{m} } \sum_{j=1}^n ( u_j(k) - y_j ) a_r x_j {\bf 1}_{ w_r(k)^\top x_j \geq 0 } ,x_i \right\rangle\\
= & ~ - \frac{\eta}{m}\sum_{j=1}^n ( u_j(k) - y_j )x_j^\top x_i \sum_{r\in S_i}{\bf 1}_{w_r(k)^\top x_i\geq 0}{\bf 1}_{w_r(k)^\top x_j\geq 0}\\
= & ~ - \frac{\eta}{m}\sum_{j=1}^n ( u_j(k) - y_j )x_j^\top x_i \sum_{r=1}^{m}{\bf 1}_{w_r(k)^\top x_i\geq 0}{\bf 1}_{w_r(k)^\top x_j\geq 0} \\
 & ~ + \frac{\eta}{m}\sum_{j=1}^n ( u_j(k) - y_j )x_j^\top x_i \sum_{r\in \ov{S}_i}{\bf 1}_{w_r(k)^\top x_i\geq 0}{\bf 1}_{w_r(k)^\top x_j\geq 0}\\
=: & ~-\eta \sum_{j=1}^{n}( u_j(k) - y_j )H_{ij}(k)+\epsilon(k)'
\end{align*}
where the first step follows from the definition of $S_i$,
the fifth step follows from the construction of the matrix $H$.
Moreover,
analogs to upper bounding $U_2$,
we also have
\begin{align*}
|\epsilon(k)'|\leq \frac{\eta\sqrt{n}|\ov{S}_i|}{ m} \left\| u(k) - y \right\|_2.
\end{align*}
Hence by considering all $i\in [n]$,
we have
\begin{align}\label{eq:uk_diff}
u(k+1)-u(k)=-\eta H(k)(u(k)-y)+\epsilon(k)
\end{align}
where $\epsilon(k)\in \mathbb{R}^n$ and the norm of $\epsilon(k)$ can be upper bounded by
\begin{align}\label{eq:eps_k_norm_bound}
\|\epsilon(k)\|_2\leq \|\epsilon(k)\|_1\leq \sum_{i=1}^n(U_2+\epsilon(k)')\leq \sum_{i=1}^n\frac{2\eta\sqrt{n}|\ov{S}_i|}{ m} \left\| u(k) - y \right\|_2.
\end{align}
Notice that
\begin{align*}
\sum_{i=1}^n |\ov{S}_i|=\sum_{r=1}^m \sum_{i=1}^n{\bf 1}_{r\in \ov{S}_i}=\sum_{i=1}^n(\sum_{r=1}^m {\bf 1}_{r\in \ov{S}_i}).
\end{align*}
Hence by Eq. \eqref{eq:Si_size_bound},
with probability at least $1-n\exp(-mR\kappa^{-1})$,
\begin{align*}
\sum_{i=1}^n |\ov{S}_i|\leq 4mnR\kappa^{-1}.
\end{align*}
Plugging the choice of $R=\frac{ 4 \sqrt{n} \| y - u (0) \|_2 }{ \sqrt{m} \lambda }$ and $\|y-u(0)\|_2=O\left(\sqrt{n\log(m/\delta)\log^2(n/\delta)}\right)$,
we have
\begin{align}\label{eq:epsk_norm}
\|\epsilon(k)\|_2 = O \Big( \frac{\eta n^{5/2}\sqrt{\log(m/\delta)\log^2(n/\delta)}}{\lambda \kappa \sqrt{m}} \Big) \cdot \left\| u(k) - y \right\|_2.
\end{align}

Next we relate $H(k)$ with $H^{\cts}$.
We can rewrite Eq. \eqref{eq:uk_diff} as
\begin{align}\label{eq:uk_diff_cts}
u(k+1)-u(k)=-\eta H^{\cts}(u(k)-y)+\xi(k),
\end{align}
where 
\begin{align*}
\xi(k)=\eta(H^{\cts}-H(k))(u(k)-y)+\epsilon(k).
\end{align*}
Notice that
\begin{align}\label{eq:cts_Hk_diff}
\|H^{\cts}-H(k)\|\leq & ~ \|H^{\cts}-H(k)\|_F\notag\\
\leq & ~ \|H^{\cts}-H^{\dis}\|_F+\|H^{\dis}-H(k)\|_F\notag\\
= & ~ O\left(n\Big(\delta+\frac{ n\sqrt{\log(m/\delta)\log^2(n/\delta)}} { \kappa\lambda\sqrt{m}  }\Big)\right)+O(n\sqrt{\log(n/\delta)/m})\notag\\
= & ~O\left(\frac{ n^2\sqrt{\log(m/\delta)\log^2(n/\delta)}} { \kappa\lambda\sqrt{m}  }\right)
\end{align}
where the second step follows from Lemma \ref{lem:HZ_norm_bound} and Lemma \ref{lem:3.1_para}.
Hence we can bound $\xi(k)$ as
\begin{align}\label{eq:xi_bound}
\|\xi(k)\|_2\leq & ~\eta\|(H^{\cts}-H(k))(u(k)-y)\|_2+\|\epsilon(k)\|_2\notag\\
= & ~ O \Big( \frac{ \eta n^2\sqrt{\log(m/\delta)\log^2(n/\delta)}} { \kappa\lambda\sqrt{m} } \Big)\cdot \|u(k)-y\|_2 \notag\\
& ~ + O \Big( \frac{\eta n^{5/2}\sqrt{\log(m/\delta)\log^2(n/\delta)}}{\lambda \kappa \sqrt{m}} \Big) \cdot \left\| u(k) - y \right\|_2\notag\\
= & ~ O \Big( \frac{\eta n^{5/2}\sqrt{\log(m/\delta)\log^2(n/\delta)}}{\lambda \kappa \sqrt{m}} \Big)\cdot \left\| u(k) - y \right\|_2,
\end{align}
where the second step follows from Eq. \eqref{eq:epsk_norm} and Eq. \eqref{eq:cts_Hk_diff}.

Therefore we have
\begin{align}\label{eq:uk_y}
u(k)-y= & ~u(k)-u(k-1)+u(k-1)-y\notag \\
= & ~-\eta H^{\cts}(u(k-1)-y)+\xi(k-1)+u(k-1)-y\notag\\
= & ~ (I-\eta H^{\cts})(u(k-1)-y)+\xi(k-1)\notag\\
= & ~(I-\eta H^{\cts})^k(u(0)-y)+\sum_{t=0}^{k-1}(I-\eta H^{\cts})^t\xi(k-1-t)\notag\\
= & ~(I-\eta H^{\cts})^ku(0)-(I-\eta H^{\cts})^ky+\sum_{t=0}^{k-1}(I-\eta H^{\cts})^t\xi(k-1-t).
\end{align}
where the second step follows from Eq. \eqref{eq:uk_diff_cts}.
We bound terms on RHS respectively.

Recall that $H^{\cts}=\sum_{i=1}^n \lambda_i v_i v_i^\top$.
Hence $v_1,\cdots,v_n$ are also the eigenvectors of $(I-\eta H^{\cts})^k$ with corresponding eigenvalues $(1-\eta\lambda_1)^k,\cdots,(1-\eta\lambda_n)^k$.
With the choice of $\eta=O(\lambda/n^2)$,
we have 
\begin{align}\label{eq:I-cts_spectrum}
\|(I-\eta H^{\cts})^k\|\leq (1-\eta\lambda)^k.
\end{align}
Therefore
\begin{align}\label{eq:y_bound}
\|-(I-\eta H^{\cts})^ky\|_2^2=\sum_{i=1}^n (1-\eta\lambda_i)^{2k}\langle v_i,y\rangle^2.
\end{align}

We then bound the term $(I-\eta H^{\cts})^ku(0)$.
In the proof of Claim \ref{cla:yu0},
we implicitly prove that with probability at least $1-\delta$,
for all $i\in[n]$,
\begin{align*}
|u_i(0)|\leq \kappa \cdot \sqrt{2\log ( 2mn / \delta) }\cdot \log ( 4n / \delta ).
\end{align*}
And this implies
\begin{align*}
\|u(0)\|_2^2 \leq n\kappa^2 \cdot 2\log ( 2mn / \delta) \cdot \log^2 ( 4n / \delta ).
\end{align*}
Therefore
\begin{align}\label{eq:u0_bound}
\|(I-\eta H^{\cts})^ku(0)\|_2
\leq & ~ \|(I-\eta H^{\cts})^k\|\cdot \|u(0)\|_2 \notag \\
\leq & ~ (1-\eta\lambda)^k \cdot \sqrt{n\kappa^2 }\cdot \sqrt{2\log ( 2mn / \delta) }\cdot \log ( 4n / \delta ).
\end{align}
where the second step follows from Eq. \eqref{eq:I-cts_spectrum}.

Finally, we have
\begin{align}\label{eq:sum_xi_bound}
&~\|\sum_{t=0}^{k-1}(I-\eta H^{\cts})^t\xi(k-1-t)\|_2\notag\\
\leq & ~ \sum_{t=0}^{k-1}\|(I-\eta H^{\cts})^t\| \cdot \|\xi(k-1-t)\|_2\notag\\
\leq & ~ \sum_{t=0}^{k-1}(1-\eta \lambda)^t \cdot O \Big( \frac{\eta n^{5/2}\sqrt{\log(m/\delta)\log^2(n/\delta)}}{\lambda \kappa \sqrt{m}} \Big) \cdot \left\| u(k-1-t) - y \right\|_2\notag\\
\leq & ~ \sum_{t=0}^{k-1}(1-\eta \lambda)^t \cdot O \Big( \frac{\eta n^{5/2}\sqrt{\log(m/\delta)\log^2(n/\delta)}}{\lambda \kappa \sqrt{m}} \Big) \cdot (1-\eta\lambda/2)^{(k-1-t)/2}\cdot \|u(0)-y\|_2 \notag\\
\leq & ~ \sum_{t=0}^{k-1}(1-\eta \lambda)^t \cdot O \Big( \frac{\eta n^{5/2}\sqrt{\log(m/\delta)\log^2(n/\delta)}}{\lambda \kappa \sqrt{m}} \Big) \cdot (1-\eta\lambda/4)^{k-1-t}\cdot \sqrt{n\log(m/\delta)\log^2(n/\delta)}\notag\\
\leq & ~ \sum_{t=0}^{k-1}(1-\eta\lambda/4)^{k-1}\cdot O \Big(\frac{\eta n^{3}\log(m/\delta)\log^2(n/\delta)}{\lambda \kappa \sqrt{m}} \Big) \notag\\
= & ~  O \Big(\frac{n^{3}\log(m/\delta)\log^2(n/\delta)}{\lambda^2 \kappa \sqrt{m}}\Big)
\end{align}
where the second step follows from Eq. \eqref{eq:I-cts_spectrum} and Eq. \eqref{eq:xi_bound},
the third step follows from Theorem \ref{thm:quartic},
the fourth step follows from Claim \ref{cla:yu0} and the inequality $(1-x)^{1/2}\leq 1-x/2$,
and the last step follows from $\max_{k\geq 0} k(1-\eta \lambda/4)^{k-1}=O(1/(\eta\lambda))$.

Combining Eq. \eqref{eq:u0_bound}, \eqref{eq:y_bound} and \eqref{eq:sum_xi_bound},
we have
\begin{align*}
\|u(k)-y\|_2= & ~\Big( \sum_{i=1}^n (1-\eta\lambda_i)^{2k}\langle v_i,y\rangle^2 \Big)^{1/2} \\
& ~ \pm O \Big(\sqrt{n\kappa^2 }\cdot \sqrt{2\log ( 2mn / \delta) }\cdot \log ( 4n / \delta )+\frac{n^{3}\log(m/\delta)\log^2(n/\delta)}{\lambda^2 \kappa \sqrt{m}} \Big)^{1/2}\\
= & ~ \left( \sum_{i=1}^n (1-\eta\lambda_i)^{2k}\langle v_i,y\rangle^2 \right)^{1/2} \\
& ~ \pm O\left(\epsilon\right).
\end{align*}
which completes the proof of Theorem \ref{thm:generalization_main}.
\end{proof}

\begin{remark}
The conclusiion in Theorem \ref{thm:generalization_main} can be further strengthened under stronger assumptions on the training data. For example, if part 2 in Assumption \ref{ass:data_dependent_assumption} holds,
notice that
\begin{align*}
\left\|\sum_{j=1}^n ( u_j(k) - y_j ) a_r x_j {\bf 1}_{ w_r(k)^\top x_j \geq 0 }\right\|_2^2
= & ~\sum_{j_1=1}^n \sum_{j_2=1}^n ( u_{j_1}(k) - y_{j_1} )( u_{j_2}(k) - y_{j_2} )x_{j_1}^\top x_{j_2} {\bf 1}_{ w_r(k)^\top x_{j_1} \geq 0,w_r(k)^\top x_{j_2} \geq 0 }\\
= & ~(u(k)-y)^\top H(k)(u(k)-y).
\end{align*}
By part 2 in Assumption \ref{ass:data_dependent_assumption},
we have $\|H(k)\|\leq \alpha$,
therefore
\begin{align*}
\left\|\sum_{j=1}^n ( u_j(k) - y_j ) a_r x_j {\bf 1}_{ w_r(k)^\top x_j \geq 0 }\right\|_2\leq \sqrt{\alpha}\cdot \|u(k)-y\|_2,
\end{align*}
so we can replace Eq. \eqref{eq:eps_k_norm_bound} with
\begin{align*}
\|\epsilon(k)\|_2\leq \sum_{i=1}^n\frac{2\eta \sqrt{\alpha}|\ov{S}_i|}{ m} \left\| u(k) - y \right\|_2.
\end{align*}
Also,
by Theorem \ref{thm:cubic},
we can set $R$ to be $O(\frac{ \sqrt{\alpha} \| y - u (0) \|_2 }{ \sqrt{m} \lambda })$.
Putting everything together,
we have the following Theorem.
\begin{theorem}\label{thm:generalization_main_assumption_2}
Assume Part 1 and 2 of Assumption~\ref{ass:data_dependent_assumption}. 
Recall that $\lambda=\lambda_{\min}(H^{\cts})>0$.
Write the eigen-decomposition of $H^{\cts}$ as $H^{\cts}=\sum_{i=1}^n \lambda_i v_i v_i^\top$ where $v_i\in \mathbb{R}^n$ are the eigenvectors,
and $\lambda_i>0$ are the corresponding eigenvalues.
Let 
\begin{align*}
\kappa = & ~ O( \epsilon / ( \sqrt{2n\log ( 2mn / \delta) }\cdot \log ( 4n / \delta ) ) ) \\
m = & ~ \Omega( \lambda^{-4}\kappa^{-2} n^4\alpha^2 \log (n/\delta) ),
\end{align*}
we i.i.d. initialize $w_r \in {\N}(0,\kappa^2I)$, $a_r$ sampled from $\{-1,+1\}$ uniformly at random for $r\in [m]$, and we set the step size $\eta = O( \lambda / n^2 )$,  then with probability at least $1-\delta$ over the random initialization we have for $k = 0,1,2,\cdots$
\begin{align}
\| u (k) - y \|_2 = \left( \sum_{i=1}^n(1-\eta\lambda_i)^{2k}(v_i^\top y)^2 \right)^{1/2} \pm \epsilon.
\end{align}
\end{theorem}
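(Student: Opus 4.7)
Plan: The proof follows the same template as Theorem \ref{thm:generalization_main}, with the two refinements explicitly pointed out in the remark preceding the theorem: (i) use Part 2 of Assumption \ref{ass:data_dependent_assumption} to replace an intermediate $\sqrt{n}$ by $\sqrt{\alpha}$ in the perturbation bounds, and (ii) invoke the cubic convergence rate (Theorem \ref{thm:cubic}) rather than the quartic one, so that the effective weight-displacement radius $R$ shrinks by a factor of $\sqrt{\alpha/n}$.

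First, I would prove a variance-$\kappa$ analog of Theorem \ref{thm:cubic}, obtained by rescaling its proof exactly as Lemma \ref{lem:4.1_restatement} rescales Lemma \ref{lem:4.1}. This gives, with high probability, for all $r \in [m]$ and all $k \geq 0$,
\begin{align*}
\|w_r(k)-w_r(0)\|_2 \leq R := \frac{4\sqrt{\alpha}\,\|y-u(0)\|_2}{\sqrt{m}\,\lambda},
\end{align*}
which is a factor of $\sqrt{\alpha/n}$ smaller than the radius used in the proof of Theorem \ref{thm:generalization_main}.

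Second, I would redo the error decomposition $u(k+1)-u(k) = -\eta H^{\cts}(u(k)-y) + \xi(k)$ with $\xi(k) = \eta(H^{\cts}-H(k))(u(k)-y) + \epsilon(k)$. The key improvement is in controlling $\|\epsilon(k)\|_2$: instead of bounding the per-neuron gradient by $\sqrt{n}\,\|u(k)-y\|_2$ via triangle inequality, observe that
\begin{align*}
\Big\|\sum_{j=1}^n (u_j(k)-y_j)\, a_r\, x_j\, \mathbf{1}_{w_r(k)^\top x_j \geq 0}\Big\|_2^2 = (u(k)-y)^\top H(w_r(k))(u(k)-y) \leq \alpha\,\|u(k)-y\|_2^2,
\end{align*}
using Part 2 of the assumption. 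Combining this with the smaller $R$ above and the tail bound on $|\ov{S}_i|$ from Eq.~\eqref{eq:Si_size_bound} yields $\|\epsilon(k)\|_2 = O(\eta\,\alpha\, n^{3/2}\sqrt{\log(m/\delta)}\,\log(n/\delta)/(\kappa\lambda\sqrt{m}))\cdot \|u(k)-y\|_2$, which is smaller than the quartic-case bound by a factor of $n/\alpha$.

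Third, the term $\|(H^{\cts}-H(k))(u(k)-y)\|_2$ is handled by applying Lemma \ref{lem:HZ_norm_bound} with the smaller $R$ (plus Lemma \ref{lem:3.1_para}, or optionally Theorem \ref{thm:3.1_matrix_chernoff} for a tighter bound on $\|H^{\cts}-H^{\dis}\|$ using the $\alpha$ Bernstein parameter). Plugging the resulting $\|\xi(k)\|_2$ into the geometric sum $\sum_{t=0}^{k-1}(I-\eta H^{\cts})^t \xi(k-1-t)$ exactly as in Eq.~\eqref{eq:sum_xi_bound}, and matching the total error to the $\pm\epsilon$ tolerance, gives the target bound $m=\Omega(\lambda^{-4}\kappa^{-2}n^4\alpha^2 \poly(\log(n/\delta)))$. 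The eigen-decomposition argument for the principal term $(I-\eta H^{\cts})^k(u(0)-y)$ and the bound on $\|(I-\eta H^{\cts})^k u(0)\|_2$ carries over verbatim from the proof of Theorem \ref{thm:generalization_main}.

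The main obstacle is the careful bookkeeping of polynomial factors in $n$, $\alpha$, $\lambda$, $\kappa$, and the logs through the perturbation chain. In particular, one must verify that $\eta\|H^{\cts}-H(k)\|\cdot\|u(k)-y\|_2$ does not dominate $\|\epsilon(k)\|_2$ (both should scale like $\eta\alpha n^{3/2}/(\kappa\lambda\sqrt{m})$ up to logs so that neither is the bottleneck), which forces a re-examination of the Bernstein concentration inside Lemma \ref{lem:HZ_norm_bound} with the new radius $R$. Once these per-step bounds are pinned down, the telescoping recursion and the final matching of error to $\epsilon$ are identical to the quartic case.
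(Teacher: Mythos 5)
Your proposal follows essentially the same route as the paper's, which sketches the proof of this statement in the remark immediately preceding Theorem~\ref{thm:generalization_main_assumption_2}: use Part~2 of Assumption~\ref{ass:data_dependent_assumption} via the identity $\big\|\sum_{j=1}^n(u_j(k)-y_j)a_rx_j\mathbf{1}_{w_r(k)^\top x_j\geq 0}\big\|_2^2=(u(k)-y)^\top H(w_r(k))(u(k)-y)\leq\alpha\|u(k)-y\|_2^2$ to tighten the per-neuron gradient bound from $\sqrt{n}$ to $\sqrt{\alpha}$, invoke Theorem~\ref{thm:cubic} (rescaled for variance $\kappa^2$) to shrink the radius $R$ to $O(\sqrt{\alpha}\|y-u(0)\|_2/(\sqrt{m}\lambda))$, and re-run the error recursion of Theorem~\ref{thm:generalization_main}. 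Your bookkeeping is sound; the only small slip is the claim that the two contributions to $\|\xi(k)\|_2$ should be of the same order --- the $\eta\|H^{\cts}-H(k)\|\cdot\|u(k)-y\|_2$ term is actually smaller than $\|\epsilon(k)\|_2$ by a factor of $\sqrt{\alpha}$, but this does not affect the final exponent, and your formula with $H(w_r(k))$ (the single-neuron Gram matrix of Definition~\ref{def:data_dependent_function}) is in fact more precise than the paper's remark, which loosely wrote $H(k)$.
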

\end{remark} 
\section{Generalization }\label{sec:generalization}

In this section we improve the generalization result in \cite{adhlw19}.
We first list some useful definitions.
\begin{definition}[Non-degenerate Data Distribution, Definition 5.1 in \cite{adhlw19}]
A distribution $\mathcal{D}$ over $\mathbb{R}^d\times\mathbb{R}$ is \emph{$(\lambda,\delta,n)$-non-degenerate},
if with probability at least $1-\delta$,
for $n$ i.i.d samples $(x_i,y_i)_{i=1}^n$ chosen from $\mathcal{D}$,
$\lambda_{\min}(H^{\cts})\geq \lambda>0$.
\end{definition}

We adapt the notations on the loss functions from \cite{adhlw19}.
\begin{definition}[Loss Functions]
Let $\ell:\mathbb{R}\times \mathbb{R}\rightarrow \mathbb{R}$ be the loss function.
For function $f:\mathbb{R}^d\rightarrow \mathbb{R}$,
for distribution $\mathcal{D}$ over $\mathbb{R}^d\times\mathbb{R}$,
the \emph{population loss} is defined as
\begin{align*}
L_{\mathcal{D}}(f):=\E_{(x,y)\sim \mathcal{D}}[\ell(f(x),y)].
\end{align*}
Let $S=\{(x_i,y_i)\}_{i=1}^n$ be $n$ samples.
The empirical loss over $S$ is defined as
\begin{align*}
L_S(f):=\frac{1}{n}\sum_{i=1}^n \ell(f(x_i),y_i).
\end{align*}
\end{definition}

Rademacher complexity is a useful tool to work with the generalization error.
Here we give the definition.
\begin{definition}[Rademacher Complexity]
Let $\mathcal{F}$ be a class of functions mapping from $\mathbb{R}^d$ to $\mathbb{R}$.
Given $n$ samples $S=\{x_1,\cdots,x_n\}$ where $x_i\in \mathbb{R}^d$ for $i\in [n]$,
the \emph{empirical Rademacher complexity} of $\mathcal{F}$ is defined as
\begin{align*}
\mathcal{R}_S(\mathcal{F}):=\frac{1}{n}\E_{\epsilon}\left[\sup_{f\in \mathcal{F}}\sum_{i=1}^n \epsilon_i f(x_i)\right].
\end{align*}
where $\epsilon\in \mathbb{R}^d$ and each entry of $\epsilon$ are drawn from independently uniform at random from $\{\pm 1\}$.
\end{definition}

\begin{theorem}[Theorem B.1 in \cite{adhlw19}]\label{thm:sample_to_generalization}
Suppose the loss function $\ell(\cdot,\cdot)$ is bounded in $[0,c]$ for some $c>0$ and is $\rho$-Lipschitz in its first argument.
Then with probability at least $1-\delta$ over samples $S$ of size $n$,
\begin{align*}
\sup_{f\in \mathcal{F}} \{L_{\mathcal{D}}(f)-L_S(f)\}\leq 2\rho \mathcal{R}_S(\mathcal{F})+3c\sqrt{\frac{\log(2/\delta)}{2n}}.
\end{align*}
\end{theorem}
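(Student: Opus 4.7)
The plan is to follow the standard symmetrization-plus-concentration template for Rademacher generalization bounds, with the Lipschitz contraction step accounting for the factor $\rho$. Define $\Phi(S) := \sup_{f\in\mathcal{F}}\{L_{\mathcal{D}}(f) - L_S(f)\}$ and let $\mathcal{G} := \{(x,y)\mapsto \ell(f(x),y) : f\in\mathcal{F}\}$ be the induced loss class, whose elements take values in $[0,c]$.

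First I would control the fluctuation of $\Phi(S)$ around its mean via McDiarmid's bounded differences inequality. Replacing a single sample $(x_i,y_i)$ with an independent copy changes $L_S(f)$ by at most $c/n$ (since $\ell\in[0,c]$), hence $\Phi(S)$ changes by at most $c/n$; McDiarmid then yields
\begin{align*}
\Pr\bigl[\Phi(S) - \E_S[\Phi(S)] \geq t\bigr] \leq \exp\!\Bigl(-\tfrac{2t^2}{n(c/n)^2}\Bigr) = \exp\!\Bigl(-\tfrac{2nt^2}{c^2}\Bigr),
\end{align*}
so choosing $t = c\sqrt{\log(2/\delta)/(2n)}$ gives that with probability at least $1-\delta/2$, $\Phi(S) \leq \E_S[\Phi(S)] + c\sqrt{\log(2/\delta)/(2n)}$.

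Next I would bound $\E_S[\Phi(S)]$ by the Rademacher complexity of $\mathcal{G}$ using the classical symmetrization trick: introduce a ghost sample $S' = \{(x'_i,y'_i)\}_{i=1}^n$ drawn i.i.d.\ from $\mathcal{D}$, write $L_{\mathcal{D}}(f) = \E_{S'}[L_{S'}(f)]$, push the supremum inside the expectation, and then insert Rademacher signs $\epsilon_i\in\{\pm 1\}$ (which is valid because $(x_i,y_i)$ and $(x'_i,y'_i)$ are exchangeable). This produces
\begin{align*}
\E_S[\Phi(S)] \;\leq\; 2\,\E_S[\mathcal{R}_S(\mathcal{G})].
\end{align*}
Then I would invoke the Talagrand contraction lemma: since $y_i$ is fixed under the Rademacher expectation and $\ell(\cdot,y_i)$ is $\rho$-Lipschitz in its first argument, we get $\mathcal{R}_S(\mathcal{G}) \leq \rho\,\mathcal{R}_S(\mathcal{F})$. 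Combining with the McDiarmid step, and applying McDiarmid once more (with probability $\delta/2$) to replace $\E_S[\mathcal{R}_S(\mathcal{F})]$ by the data-dependent $\mathcal{R}_S(\mathcal{F})$ (which also has bounded differences of order $c/(n\rho)$ after scaling, contributing another $c\sqrt{\log(2/\delta)/(2n)}$ term), and finally using a union bound, yields
\begin{align*}
\Phi(S) \;\leq\; 2\rho\,\mathcal{R}_S(\mathcal{F}) + 3c\sqrt{\tfrac{\log(2/\delta)}{2n}}
\end{align*}
with probability at least $1-\delta$, matching the claim.

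The main technical subtlety is the Lipschitz contraction step, since naively one only has $\ell$ Lipschitz in its \emph{first} argument and the $y_i$ component varies across samples; the correct way to handle this is to note that, conditionally on $S$, the Rademacher expectation treats each $y_i$ as a fixed scalar, so the contraction inequality applies coordinate-wise to the maps $u\mapsto \ell(u,y_i)$ with common Lipschitz constant $\rho$. Once that point is clear, everything else is bookkeeping with union bounds and the standard constants, which is where the final coefficient $3$ in front of $c\sqrt{\log(2/\delta)/(2n)}$ comes from.
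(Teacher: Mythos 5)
This theorem is cited as Theorem B.1 of \cite{adhlw19} and is not proved in the present paper; it is a classical symmetrization-plus-contraction-plus-McDiarmid bound (cf.\ Mohri--Rostamizadeh--Talwalkar, Theorem 3.3). Your proposal follows exactly that standard template, and the high-level accounting is right: one McDiarmid application to $\Phi(S)$ gives a $c\sqrt{\log(2/\delta)/(2n)}$ term, symmetrization gives the factor $2$, and a second application of McDiarmid to pass from the expected to the empirical Rademacher complexity contributes $2c\sqrt{\log(2/\delta)/(2n)}$, for a total coefficient of $3$.

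There is, however, one technical misstep in the last concentration step worth flagging. You apply McDiarmid directly to $\mathcal{R}_S(\mathcal{F})$ and assert a bounded-differences constant of $c/(n\rho)$. That constant is not justified by the hypotheses: changing a single point $x_i$ to $x_i'$ perturbs $\mathcal{R}_S(\mathcal{F})$ by at most $\frac{1}{n}\sup_{f\in\mathcal{F}}|f(x_i)-f(x_i')|$, which depends on the (unbounded, a priori) range of $f$, not on $c$ or $\rho$. The boundedness in $[0,c]$ is a property of the \emph{composed} loss class $\mathcal{G}=\{(x,y)\mapsto\ell(f(x),y)\}$, not of $\mathcal{F}$. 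The clean fix is to keep the argument entirely at the $\mathcal{G}$ level: apply McDiarmid to $\mathcal{R}_S(\mathcal{G})$, whose bounded differences are $c/n$ directly from $\ell\in[0,c]$, to obtain $\E_S[\mathcal{R}_S(\mathcal{G})]\le \mathcal{R}_S(\mathcal{G})+c\sqrt{\log(2/\delta)/(2n)}$ with probability $\ge 1-\delta/2$, and only then invoke Talagrand contraction $\mathcal{R}_S(\mathcal{G})\le\rho\,\mathcal{R}_S(\mathcal{F})$ as the final algebraic step. With that reordering your derivation goes through verbatim and produces the stated bound with the correct constants; the rest of the argument, including the observation that each $y_i$ is fixed under the Rademacher expectation so contraction applies coordinate-wise, is correct.
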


\begin{lemma}[Lemma 5.4 in \cite{adhlw19}]\label{lem:rademacher_upper_bound}
Given $R>0$,
with probability at least $1-\delta$ over the random initialization on $W(0)\in \mathbb{R}^{m\times d}$ and $a\in \mathbb{R}^m$,
for all $B>0$,
the function class
\begin{align*}
\mathcal{F}_{R,B}^{W(0),a}=\{f(W,\cdot,a):\|w_r-w_r(0)\|_2\leq R, \forall r\in [m]; \|W-W(0)\|_F\leq B\}
\end{align*}
has bounded empirical Rademacher complexity
\begin{align*}
\mathcal{R}_S(\mathcal{F}_{R,B}^{W(0),a})\leq \frac{B}{\sqrt{2n}}\left(1+(\frac{2\log(2/\delta)}{m})^{1/4}\right)+\frac{2R^2\sqrt{m}}{\kappa}+R\sqrt{\log(2/\delta)}.
\end{align*}
\end{lemma}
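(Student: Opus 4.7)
The strategy is to linearize each $f(W,x,a) \in \mathcal{F}_{R,B}^{W(0),a}$ around the initialization. Write $f = f(W(0),x,a) + g(W,x) + h(W,x)$, where the linearization is $g(W,x) := \frac{1}{\sqrt{m}}\sum_{r=1}^m a_r \mathbf{1}_{w_r(0)^\top x \geq 0}(w_r-w_r(0))^\top x$ and the residual $h(W,x)$ is defined by difference. Since $f(W(0),\cdot,a)$ does not depend on $W$, it contributes zero to Rademacher complexity, so by sub-additivity of $\sup$ we have $\mathcal{R}_S(\mathcal{F}_{R,B}^{W(0),a}) \leq \mathcal{R}_S(\mathcal{G}) + \mathcal{R}_S(\mathcal{H})$, where $\mathcal{G},\mathcal{H}$ are the function classes induced by $g,h$.

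For the linear class $\mathcal{G}$, let $M_i \in \mathbb{R}^{m\times d}$ have $r$-th row $\frac{a_r}{\sqrt{m}}\mathbf{1}_{w_r(0)^\top x_i \geq 0}\,x_i^\top$, so that $g(W,x_i) = \langle W-W(0), M_i\rangle_F$. By Cauchy-Schwarz over the Frobenius inner product and Jensen's inequality, $\mathcal{R}_S(\mathcal{G}) \leq \frac{B}{n}\,\E_\epsilon \|\sum_i \epsilon_i M_i\|_F \leq \frac{B}{n}\sqrt{\sum_i \|M_i\|_F^2}$. Direct computation gives $\sum_i \|M_i\|_F^2 \leq \frac{1}{m}\sum_r Z_r$ where $Z_r := \sum_i \mathbf{1}_{w_r(0)^\top x_i \geq 0} \in [0,n]$; crucially, $\{Z_r\}_{r=1}^m$ are \emph{independent across} $r$ (each depending only on $w_r(0)$) with mean $n/2$ by Gaussian symmetry. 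Hoeffding's inequality (Lemma~\ref{lem:hoeffding}) applied to these $m$ summands of range $[0,n]$ gives $\sum_r Z_r \leq nm/2 + O(n\sqrt{m\log(1/\delta)})$ with probability $\geq 1-\delta/2$. Substituting and using $\sqrt{1+x}\leq 1+\sqrt{x}$ produces the $\frac{B}{\sqrt{2n}}(1+(2\log(2/\delta)/m)^{1/4})$ term.

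For the residual class $\mathcal{H}$, I first establish the uniform pointwise bound $|h(W,x_i)| \leq \frac{2R}{\sqrt{m}} N_i$ with $N_i := \sum_r \mathbf{1}_{|w_r(0)^\top x_i|\leq R}$: for each neuron $r$, the ReLU linearization is exact unless $\mathrm{sgn}(w_r^\top x_i) \neq \mathrm{sgn}(w_r(0)^\top x_i)$, which forces $|w_r(0)^\top x_i| \leq \|w_r - w_r(0)\|_2 \leq R$; in the flipping case, both the ReLU difference and the linear correction are individually at most $R$ by 1-Lipschitzness of $\phi$. This bound is uniform in $W$ over the ball, so $\mathcal{R}_S(\mathcal{H}) \leq \frac{1}{n}\sup_W \sum_i |h(W,x_i)| \leq \frac{2R}{n\sqrt{m}}\sum_r Y_r$ with $Y_r := \sum_i \mathbf{1}_{|w_r(0)^\top x_i|\leq R} \in [0,n]$, again \emph{independent across} $r$. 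Anti-concentration of Gaussian (Lemma~\ref{lem:anti_gaussian}) yields $\E[Y_r] \leq 4nR/(5\kappa)$, and a second Hoeffding bound on the $m$ independent summands gives $\sum_r Y_r \leq mn \cdot O(R/\kappa) + O(n\sqrt{m\log(1/\delta)})$ with probability $\geq 1-\delta/2$. This delivers the $\frac{2R^2\sqrt{m}}{\kappa} + R\sqrt{\log(2/\delta)}$ contribution, and a union bound over the two failure events completes the proof.

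The main obstacle is avoiding an extra $\sqrt{\log n}$ factor: within a single neuron $r$, the indicators $\{\mathbf{1}_{w_r(0)^\top x_i \geq 0}\}_{i\in[n]}$ and $\{\mathbf{1}_{|w_r(0)^\top x_i|\leq R}\}_{i\in[n]}$ are \emph{not} independent across $i$, so applying Hoeffding to all $mn$ indicators and union-bounding across $i$ would inflate the bound. The trick is to aggregate over $i$ first to obtain the $m$ independent row-sums $Z_r,Y_r$ each of range $[0,n]$, and then invoke Hoeffding just once across the $m$ independent neurons; this is precisely what produces the clean $\log(2/\delta)$ dependence without any $\log n$ in the target bound. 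A minor additional care is that the event depends only on $W(0)$ and $a$, so the bound holds for all $B>0$ simultaneously once $W(0),a$ are drawn.
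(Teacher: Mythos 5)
The paper does not re-prove this lemma; it is imported verbatim from \cite{adhlw19}, so your blind attempt can only be measured against that cited argument. Your approach matches it: the decomposition $f = f(W(0),\cdot,a) + g + h$ with $g$ the linearization using initialization-time sign patterns and $h$ the ReLU residual, Cauchy--Schwarz plus Jensen for the linear part, a flipping-neuron count for the residual with anti-concentration supplying its mean, and the aggregation-then-Hoeffding trick (form the $m$ independent per-neuron sums $Z_r,Y_r\in[0,n]$ and invoke Hoeffding once across neurons, which is what keeps $\log n$ out of the bound) is precisely the mechanism of the Arora--Du--Hu--Li--Wang proof. Your remark that the high-probability events depend only on $W(0)$ and $a$, so the conclusion holds simultaneously for all $B>0$, is also the right observation.

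The one slip is a constant in the residual. You bound the per-flipping-neuron error by $2R$ via triangle inequality (ReLU gap $\le R$ plus linear correction $\le R$). The correct and needed bound is $|h_r|\le R$: when $w_r(0)^\top x_i\ge 0$ and $w_r^\top x_i<0$, one has $h_r=-w_r^\top x_i=-w_r(0)^\top x_i-(w_r-w_r(0))^\top x_i\le |(w_r-w_r(0))^\top x_i|\le R$ (the two pieces partially cancel rather than add), and symmetrically for the opposite flip; in the non-flipping cases $h_r=0$. With your $2R$, the Hoeffding fluctuation produces $R\sqrt{2\log(2/\delta)}$, exceeding the stated $R\sqrt{\log(2/\delta)}$ by a factor $\sqrt2$; with the tight $R$, the residual terms land inside the claimed constants, namely $\tfrac{4R^2\sqrt{m}}{5\kappa}\le\tfrac{2R^2\sqrt{m}}{\kappa}$ and $R\sqrt{\log(2/\delta)/2}\le R\sqrt{\log(2/\delta)}$. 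This is a local fix; the structure of the argument is sound and delivers the lemma.
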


Now we prove some technical lemmas which will be used to prove the main result.
\begin{lemma}[Improved version of Lemma 5.3 in \cite{adhlw19}]\label{lem:total_movement}
Suppose $m=\Omega( \lambda^{-4} \kappa^{-2} n^4 \log (n/\delta) )$ and $\eta=O(\frac{\lambda}{n^2})$. 
Then with probability at least $1-\delta$ over the random initialization,
we have for all $k\geq 0$,
\begin{itemize}
\item $\|w_r(k)-w_r(0)\|_2= O(\frac{n\sqrt{\log(m/\delta) \log^2(n/\delta)}}{\lambda\sqrt{m}}), \forall r\in [m]$,
\item $\|W(k)-W(0)\|_F\leq (y^\top (H^{\cts})^{-1}y)^{1/2}+O(\frac{n\kappa\cdot \sqrt{2\log ( 2mn / \delta) }\cdot \log ( 4n / \delta )}{\lambda}+\frac{n^{7/2}\poly(\log m,\log(1/\delta),\lambda^{-1})}{m^{1/4}\kappa^{1/2}})$.
\end{itemize}
\end{lemma}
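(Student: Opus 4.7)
The first bullet is immediate from the earlier machinery: I would apply Lemma~\ref{lem:4.1_restatement} to obtain $\|w_r(k)-w_r(0)\|_2 \leq \frac{4\sqrt{n}\|y-u(0)\|_2}{\sqrt{m}\lambda}$ for every $r\in[m]$, then substitute the high-probability bound $\|y-u(0)\|_2 = O(\sqrt{n\log(m/\delta)\log^2(n/\delta)})$ from Claim~\ref{cla:yu0} and take a union bound over $r$.

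For the Frobenius bound in the second bullet, my plan is to express the cumulative displacement through the $Z$-matrix of Lemma~\ref{lem:HZ_norm_bound}. Telescoping the gradient update \eqref{eq:w_update} gives, in vectorized form,
\begin{align*}
\vect(W(k)-W(0)) = \eta \sum_{t=0}^{k-1} Z(t)\,(y-u(t)),
\end{align*}
which I will split as an \emph{idealized NTK part} $\eta Z(0)\sum_{t=0}^{k-1}(y-u(t))$ plus a \emph{perturbation part} $\eta\sum_{t=0}^{k-1}(Z(t)-Z(0))(y-u(t))$. For the perturbation I would use Cauchy--Schwarz, bound $\|Z(t)-Z(0)\|_F$ uniformly in $t$ via Lemma~\ref{lem:HZ_norm_bound}, and invoke the geometric convergence $\|y-u(t)\|_2 \leq (1-\eta\lambda/2)^{t/2}\|y-u(0)\|_2$ from Theorem~\ref{thm:quartic}. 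Summing the geometric series introduces a factor $O(1/\lambda)$, and substituting $\|y-u(0)\|_2 = \wt{O}(\sqrt{n})$ from Claim~\ref{cla:yu0} collapses this contribution into the $\frac{n^{7/2}\poly(\log m,\log(1/\delta),\lambda^{-1})}{m^{1/4}\kappa^{1/2}}$ error.

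For the idealized NTK part, I will reuse the recursion \eqref{eq:uk_y} from the proof of Theorem~\ref{thm:generalization_main}, namely
\begin{align*}
y-u(t) = (I-\eta H^{\cts})^t(y-u(0)) - \sum_{s=0}^{t-1}(I-\eta H^{\cts})^s\,\xi(t-1-s).
\end{align*}
Summing $t$ from $0$ to $k-1$ and using $\sum_{t=0}^{k-1}(I-\eta H^{\cts})^t = (\eta H^{\cts})^{-1}(I-(I-\eta H^{\cts})^k)$, the dominant term becomes $Z(0)(H^{\cts})^{-1}(I-(I-\eta H^{\cts})^k)(y-u(0))$. Setting $v := (H^{\cts})^{-1}(I-(I-\eta H^{\cts})^k)(y-u(0))$, I will use the identity $\|Z(0)v\|_2^2 = v^\top H(0)\,v$ together with Lemma~\ref{lem:3.1_para} to replace $H(0)$ by $H^{\cts}$ at the price of an error of order $\|H(0)-H^{\cts}\|\cdot \|v\|_2^2$. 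Since $(I-(I-\eta H^{\cts})^k)^2\preceq I$, this leaves $v^\top H^{\cts} v \leq (y-u(0))^\top (H^{\cts})^{-1}(y-u(0))$; a triangle inequality splitting $y-u(0)$ into $y$ and $u(0)$, combined with $\|u(0)\|_2 = O(\kappa\sqrt{n}\cdot \sqrt{\log(2mn/\delta)}\log(4n/\delta))$ (implicit in the proof of Claim~\ref{cla:yu0}), delivers the main $(y^\top (H^{\cts})^{-1}y)^{1/2}$ plus the $O(n\kappa\poly\log/\lambda)$ correction.

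The main obstacle will be carefully accounting for the lower-order error terms, in particular the residual double sum $\sum_{t<k}\sum_{s<t}(I-\eta H^{\cts})^s\xi(t-1-s)$ produced by the NTK recursion when applied inside $\eta Z(0)$. I plan to reuse the bound \eqref{eq:xi_bound} on $\|\xi(k)\|_2$, push the operator norm through $Z(0)$ using $\|Z(0)\|^2=\|H(0)\|$ which Lemma~\ref{lem:3.1_para} pins near $\|H^{\cts}\|$, and double-exploit geometric decay in both $(I-\eta H^{\cts})^s$ and $\|y-u(t)\|_2$ to telescope the double sum into a single $1/(\eta\lambda)^2$ factor. The delicate step is making sure the combined error terms land at exactly the stated $m^{-1/4}\kappa^{-1/2}$ rate (rather than a worse $m^{-1/2}$ or an unwanted extra $\kappa$-dependence), while keeping all the $\poly\log$ factors absorbed into the displayed polynomial.
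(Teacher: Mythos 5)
Your proposal is correct and follows essentially the same route as the paper's proof: after telescoping \eqref{eq:vec_grad_update} into $\eta\sum_t Z(t)(y-u(t))$, the paper also isolates a $Z(0)$-part whose leading contribution is $Z(0)\,T\,y$ with $T=\eta\sum_t(I-\eta H^{\cts})^t=(H^{\cts})^{-1}(I-(I-\eta H^{\cts})^K)$, replaces $Z(0)^\top Z(0)=H(0)$ by $H^{\cts}$ via Lemma~\ref{lem:3.1_para}, and pushes the $(Z(t)-Z(0))$-perturbation (via Lemma~\ref{lem:HZ_norm_bound} and geometric decay), the $u(0)$-contribution, and the $\xi$-accumulation into the $m^{-1/4}\kappa^{-1/2}$ error, just as you outline. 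The only cosmetic difference is the order in which you apply the split $Z(t)=Z(0)+(Z(t)-Z(0))$ versus the NTK recursion $u(t)-y=-(I-\eta H^{\cts})^t y + e(t)$; the paper bundles the $u(0)$- and $\xi$-errors into $e(t)$ and a single $B_3=-\sum_t\eta Z(t)e(t)$, whereas you split $Z$ first and then carry both error sources with $Z(0)$, but this leads to the same terms of the same order.
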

\begin{proof}
The first part of Lemma \ref{lem:total_movement} follows from Lemma \ref{lem:4.1_restatement} and Claim \ref{cla:yu0}.
We then focus on the proof of the second part.
For weights $w_1,\cdots,w_m\in \mathbb{R}^d$,
Let $\vect(W)=[w_1^\top \, w_2^\top \,\cdots w_m^\top]^\top\in \mathbb{R}^{md}$ be the concatenation of $w_1,\cdots,w_m$.
Then we can rewrite the gradient descent update rule as
\begin{align}\label{eq:vec_grad_update}
\vect(W(k+1))=\vect(W(k))-\eta Z(k)(u(k)-y).
\end{align}

By Eq. \eqref{eq:uk_y}, \eqref{eq:u0_bound} and \eqref{eq:sum_xi_bound},
we have
\begin{align}\label{eq:uk_y_recursion}
u(k)-y=-(I-\eta H^{\cts})^ky+e(k),
\end{align}
where
\begin{align}\label{eq:ek_bound}
e(k)=O\left((1-\eta\lambda)^k \cdot \sqrt{n\kappa^2 }\cdot \sqrt{2\log ( 2mn / \delta) }\cdot \log ( 4n / \delta )+k(1-\eta\lambda/4)^{k-1}\cdot (\frac{\eta n^{3}\log(m/\delta)\log^2(n/\delta)}{\lambda \kappa \sqrt{m}} )\right).
\end{align}

Plugging Eq. \eqref{eq:uk_y_recursion} into Eq. \eqref{eq:vec_grad_update},
we have
\begin{align*}
\vect(W(K))-\vect(W(0))= & ~\sum_{k=0}^{K-1}(\vect(W(k+1))-\vect(W(k)))\\
= & ~\sum_{k=0}^{K-1}(-\eta Z(k)(u(k)-y))\\
= & ~\sum_{k=0}^{K-1}(-\eta Z(k)(-(I-\eta H^{\cts})^ky+e(k)))\\
= & ~\sum_{k=0}^{K-1}\eta Z(k)(I-\eta H^{\cts})^ky-\sum_{k=0}^{K-1}\eta Z(k)e(k)\\
= & ~B_1+B_2+B_3
\end{align*}
where 
\begin{align*}
B_1:= & ~\sum_{k=0}^{K-1}\eta Z(0)(I-\eta H^{\cts})^ky\\
B_2:= & ~\sum_{k=0}^{K-1}\eta (Z(k)-Z(0))(I-\eta H^{\cts})^ky\\
B_3:= & ~-\sum_{k=0}^{K-1}\eta Z(k)e(k)
\end{align*}
We bound these terms separately.
For $B_2$,
we have
\begin{align*}
\|B_2\|_2= & ~ \|\sum_{k=0}^{K-1}\eta (Z(k)-Z(0))(I-\eta H^{\cts})^ky\|_2\\
\leq & ~ \sum_{k=0}^{K-1}\|\eta(Z(k)-Z(0))(I-\eta H^{\cts})^ky\|_2\\
\leq & ~ \sum_{k=0}^{K-1} \eta \cdot \|Z(k)-Z(0)\| \cdot \|(I-\eta H^{\cts})^k\|\cdot \|y\|_2\\
= & ~ O\left(  n \cdot \Big(\delta+\frac{  n\sqrt{\log(m/\delta)\log^2(n/\delta)}} { \kappa\lambda\sqrt{m} } \Big) \right)^{1/2}\cdot \eta\cdot 
\sum_{k=0}^{K-1} \|(I-\eta H^{\cts})\|^k\cdot \|y\|_2\\
\leq & ~ O\left(\frac{n\poly(\log m,\log n, \log(1/\delta))}{m^{1/4}\kappa^{1/2}\lambda^{1/2}}\right)\cdot \eta\cdot 
\sum_{k=0}^{K-1} (1-\eta\lambda)^k\cdot \sqrt{n}\\
= & ~O\left(\frac{n^{3/2}\poly(\log m,\log n, \log(1/\delta))}{m^{1/4}\kappa^{1/2}\lambda^{3/2}}\right)
\end{align*}
where the second step follows from triangle inequality,
the fourth step follows from Lemma \ref{lem:HZ_norm_bound},
the fifth step follows from $y_i=O(1)$ for $i\in [n]$ and Eq. \ref{eq:I-cts_spectrum},
and the last step follows from $\sum_{k=0}^{\infty} (1-\eta\lambda)^k=O(\eta^{-1}\lambda^{-1})$.

Next we bound $B_3$.
Since for $k\geq 0$,
$\|Z(k)\|_F^2\leq \frac{mn}{m}=n$,
we have
\begin{align*}
\|B_3\|_2= & ~\left\|-\sum_{k=0}^{K-1}\eta Z(k)e(k)\right\|_2\\
\leq & ~ \eta \cdot \sqrt{n} \cdot \sum_{k=0}^{K-1} O((1-\eta\lambda)^k \cdot \sqrt{n\kappa^2 }\cdot \sqrt{2\log ( 2mn / \delta) }\cdot \log ( 4n / \delta )\\
+ & ~ k(1-\eta\lambda/4)^{k-1}\cdot (\frac{\eta n^{3}\log(m/\delta)\log^2(n/\delta)}{\lambda \kappa \sqrt{m}} ))\\
\leq & ~ O\left(\frac{n\kappa\cdot \sqrt{2\log ( 2mn / \delta) }\cdot \log ( 4n / \delta )}{\lambda}+\frac{n^{7/2}\log(m/\delta)\log^2(n/\delta)}{\lambda^3 \kappa \sqrt{m}}\right)
\end{align*}
where the second step follows from $\|Z(k)\|\leq \|Z(k)\|_F\leq \sqrt{n}$ and Eq. \eqref{eq:ek_bound},
the third step follows from $\sum_{k=0}^{K-1}(1-\eta\lambda)^k\leq \sum_{k=0}^{\infty}(1-\eta\lambda)^k=\frac{1}{\eta\lambda}$ and  $\sum_{k=0}^{K-1}k(1-\eta\lambda/4)^{k-1}\leq \sum_{k=0}^{\infty}k(1-\eta\lambda/4)^{k-1}=O(\frac{1}{\eta^2\lambda^2})$.

Finally we bound $B_1$.
Define $T=\sum_{k=0}^{K-1}\eta (I-\eta H^{\cts})^k\in \mathbb{R}^{n\times n}$,
then we have
\begin{align*}
\|B_1\|_2^2= & ~\|\sum_{k=0}^{K-1}\eta Z(0)(I-\eta H^{\cts})^ky\|_2^2\\
= & ~\|Z(0)Ty\|_2^2\\
= & ~y^\top T^\top Z(0)^\top Z(0) Ty\\
= & ~y^\top T^\top H(0) Ty\\
= & ~y^\top T^\top H^{\cts} Ty+y^\top T^\top (H(0)-H^{\cts}) Ty\\
\leq & ~ y^\top T^\top H^{\cts} Ty+\|H(0)-H^{\cts}\|\cdot \|T\|^2 \|y\|_2^2\\
= & ~ y^\top T^\top H^{\cts} Ty+O(n\sqrt{\log(n/\delta)/m})\cdot (\eta\sum_{k=0}^{K-1}(1-\eta\lambda)^k)^2 \|y\|_2^2\\
= & ~ y^\top T^\top H^{\cts} Ty+O(n\sqrt{\log(n/\delta)/m})\cdot \lambda^{-2}n\\
= & ~ y^\top T^\top H^{\cts} Ty+O(\frac{n^2\sqrt{\log(n/\delta)}}{\lambda^2 \sqrt{m}})
\end{align*}
where the seventh step follows from $H(0)$ is just $H^{\dis}$, Lemma \ref{lem:3.1_para} and Eq. \ref{eq:I-cts_spectrum},
the eighth step follows from $y_i=O(1)$ for $i\in [n]$.

Notice that $T$ is a polynomial of $H^{\cts}$,
so they have the same set of eigenvectors.
Moreover, for $i\in [n]$,
recall that $v_i$ is the eigenvector of $H^{\cts}$ with eigenvalue $\lambda_i$,
then
\begin{align*}
Tv_i=\sum_{k=0}^{K-1}\eta (I-\eta H^{\cts})^kv_i=\eta \sum_{k=0}^{K-1}(1-\lambda_i)^kv_i.
\end{align*}
Namely $v_i$ is an eigenvector of $T$ with eigenvalue $\eta \sum_{k=0}^{K-1}(1-\eta\lambda_i)^k$.
So we can write $T$ as
\begin{align*}
T=\sum_{i=1}^n \eta \sum_{k=0}^{K-1}(1-\eta\lambda_i)^k v_i v_i^\top
\end{align*}
Therefore
\begin{align*}
T^\top H^{\cts} T=\sum_{i=1}^n (\eta \sum_{k=0}^{K-1}(1-\eta\lambda_i)^k)^2\cdot \lambda_i v_i v_i^\top
\end{align*}
For all $i\in [n]$, we have 
\begin{align*}
(\eta \sum_{k=0}^{K-1}(1-\eta\lambda_i)^k)^2\cdot \lambda_i\leq (\eta \sum_{k=0}^{\infty}(1-\eta\lambda_i)^{k})^2\cdot \lambda_i=\lambda_i^{-1}.
\end{align*}
Hence
\begin{align*}
T^\top H^{\cts} T \preceq \sum_{i=1}^n \lambda_i^{-1} v_i v_i^\top=(H^{\cts})^{-1}
\end{align*}
which gives us
\begin{align*}
\|B_1\|_2 \leq (y^\top (H^{\cts})^{-1} y)^{1/2}+O((\frac{n^2\sqrt{\log(n/\delta)}}{\lambda^2 \sqrt{m}})^{1/2}).
\end{align*}
Putting everything together,
we have
\begin{align*}
& ~\|W(k)-W(0)\|_F \\
= & ~\|\vect(W(K))-\vect(W(0))\|_2\\
\leq & ~(y^\top (H^{\cts})^{-1} y)^{1/2}+O((\frac{n^2\sqrt{\log(n/\delta)}}{\lambda^2 \sqrt{m}})^{1/2})
+ O\left(\frac{n^{3/2}\poly(\log m,\log n, \log(1/\delta))}{m^{1/4}\kappa^{1/2}\lambda^{3/2}}\right)\\
+ & ~O\left(\frac{n\kappa\cdot \sqrt{2\log ( 2mn / \delta) }\cdot \log ( 4n / \delta )}{\lambda}+\frac{n^{7/2}\log(m/\delta)\log^2(n/\delta)}{\lambda^3 \kappa \sqrt{m}}\right)\\
= & ~ (y^\top (H^{\cts})^{-1} y)^{1/2} +O(\frac{n\kappa\cdot \sqrt{2\log ( 2mn / \delta) }\cdot \log ( 4n / \delta )}{\lambda}+\frac{n^{7/2}\poly(\log m,\log(1/\delta),\lambda^{-1})}{m^{1/4}\kappa^{1/2}}),
\end{align*}
which completes the proof of Lemma \ref{lem:total_movement}.
\end{proof}

Now we can present our main result in this section.
\begin{theorem}[Improved version of Theorem 5.1 in \cite{adhlw19}]\label{thm:generalization}
Fix failure probability $\delta \in (0,1)$.
Suppose the training data $S=\{(x_i,y_i)\}_{i=1}^n$ are i.i.d samples from a $(\lambda,\delta/3,n)$-non-degenerate distribution $\mathcal{D}$,
and $\kappa=O(\frac{\lambda \poly(\log n,\log(1/\delta))}{n})$,
$m\geq \kappa^{-2}(n^{14}\poly(\log m,\log(1/\delta),\lambda^{-1}))$.
Consider any loss function $\ell:\mathbb{R}\times \mathbb{R}\rightarrow [0,1]$ that is 1-Lipschitz in its first argument.
Then with probability at least $1-\delta$ over the random initialization on $W(0)\in \mathbb{R}^{m\times d}$ and $a\in \mathbb{R}^m$ and the training samples,
the two layer neural network $f(W(k),\cdot,a)$ trained by gradient descent for $k\geq \Omega\left(\frac{1}{\eta\lambda}\log(n\poly(\log n,\log(1/\delta)))\right)$ iterations has population loss $L_{\mathcal{D}}(f)=\E_{(x,y)\sim \mathcal{D}}[\ell(f(W(k),x,a),y)]$ upper bounded as
\begin{align*}
L_{\mathcal{D}}(f)\leq \sqrt{\frac{2y^\top (H^{\cts})^{-1}y}{n}}+O\left(\sqrt{\frac{\log\frac{n}{\lambda \delta}}{2n}}\right).
\end{align*}
\end{theorem}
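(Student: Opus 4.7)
The plan is to combine the Rademacher-complexity generalization bound (Theorem~\ref{thm:sample_to_generalization}) with the trajectory analysis of Lemma~\ref{lem:total_movement} and the training-loss bound of Theorem~\ref{thm:generalization_main}. First I would condition on three good events, each holding with probability at least $1-\delta/3$: (i) $\lambda_{\min}(H^{\cts})\geq \lambda$, which holds by the non-degeneracy assumption on $\mathcal{D}$; (ii) the conclusion of Lemma~\ref{lem:total_movement}, so that every $w_r(k)$ stays within $R:=O(\tfrac{n\sqrt{\log(m/\delta)\log^2(n/\delta)}}{\lambda\sqrt{m}})$ of $w_r(0)$ and $\|W(k)-W(0)\|_F\leq B$ where $B=(y^\top (H^{\cts})^{-1}y)^{1/2}+O(\tfrac{n\kappa\sqrt{\log(mn/\delta)}\log(n/\delta)}{\lambda}+\tfrac{n^{7/2}\poly(\log m,\log(1/\delta),\lambda^{-1})}{m^{1/4}\kappa^{1/2}})$; and (iii) the training-loss bound from Theorem~\ref{thm:generalization_main}.

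Second, I would choose $k=\Omega(\tfrac{1}{\eta\lambda}\log(n\poly(\log n,\log(1/\delta))))$ large enough that Theorem~\ref{thm:generalization_main} yields $\|u(k)-y\|_2\leq O(\sqrt{\log(n/(\lambda\delta))})$. Because $\ell$ is $1$-Lipschitz and bounded in $[0,1]$, this immediately controls the empirical loss via Cauchy--Schwarz: $L_S(f(W(k),\cdot,a))\leq \tfrac{1}{n}\sum_{i=1}^n |u_i(k)-y_i|\leq \tfrac{1}{\sqrt{n}}\|u(k)-y\|_2=O(\sqrt{\log(n/(\lambda\delta))/n})$, which is exactly the slack term appearing in the target inequality.

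Third, since $f(W(k),\cdot,a)\in\mathcal{F}_{R,B}^{W(0),a}$ by construction, I would plug $R$ and $B$ into Lemma~\ref{lem:rademacher_upper_bound}. The dominant contribution is the $B/\sqrt{2n}$ term, which contributes $(y^\top (H^{\cts})^{-1}y)^{1/2}/\sqrt{2n}$ plus error terms; the subordinate terms $2R^2\sqrt{m}/\kappa$ and $R\sqrt{\log(2/\delta)}$, together with the small-order error inside $B$, will be absorbed into an $O(\sqrt{\log(n/(\lambda\delta))/n})$ slack thanks to the assumed lower bound on $m$. Finally, Theorem~\ref{thm:sample_to_generalization} with $\rho=1$ and $c=1$ gives
\[
L_{\mathcal{D}}(f)\leq L_S(f)+2\mathcal{R}_S(\mathcal{F}_{R,B}^{W(0),a})+3\sqrt{\log(2/\delta)/(2n)},
\]
and a union bound on the three failure events delivers the claimed inequality.

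The main obstacle is the quantitative bookkeeping justifying $m\geq \kappa^{-2}n^{14}\poly(\log m,\log(1/\delta),\lambda^{-1})$. The tightest constraint comes from dividing the $\tfrac{n^{7/2}\poly}{m^{1/4}\kappa^{1/2}}$ term inside $B$ by $\sqrt{n}$ in the Rademacher bound: for this to be dominated by $\sqrt{\log/n}$, one needs $m^{1/4}\kappa^{1/2}\gtrsim n^{7/2}\poly$, which rearranges to $m\gtrsim \kappa^{-2}n^{14}\poly$. Separately, the choice $\kappa=O(\lambda\poly(\log n,\log(1/\delta))/n)$ is forced so that the $n\kappa\poly/\lambda$ term inside $B$ does not overwhelm the leading $(y^\top(H^{\cts})^{-1}y)^{1/2}$ contribution, and so that the contribution of the initial output $u(0)$ to the propagated error in Theorem~\ref{thm:generalization_main} is absorbed. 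Checking that all these inequalities hold simultaneously, and that the hidden polylog factors compose consistently across Lemmas~\ref{lem:total_movement} and~\ref{lem:rademacher_upper_bound}, is the key technical step.
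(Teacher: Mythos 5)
Your high-level plan follows the paper's strategy closely: condition on non-degeneracy, bound the empirical loss via linear convergence, bound the trajectory radius $R$ and Frobenius norm $B$ via Lemma~\ref{lem:total_movement}, plug into the Rademacher bound of Lemma~\ref{lem:rademacher_upper_bound}, and then invoke Theorem~\ref{thm:sample_to_generalization}. However, there is a genuine gap in how you apply Theorem~\ref{thm:sample_to_generalization}. That theorem is a uniform-convergence bound over a \emph{fixed} function class $\mathcal{F}$; it must be chosen before the samples $S$ are drawn. In your argument you instantiate $\mathcal{F}_{R,B}^{W(0),a}$ with $B = (y^\top(H^{\cts})^{-1}y)^{1/2} + \cdots$, which depends on the training labels $y$ and design Gram matrix $H^{\cts}$, i.e.\ on the very sample $S$ that the theorem is supposed to be uniform over. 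Applying Theorem~\ref{thm:sample_to_generalization} directly to a sample-dependent $\mathcal{F}$ is not valid. The paper closes this by discretizing $B$: it defines a grid $B_i = i$ for $i = 1, 2, \ldots, O(\sqrt{n/\lambda})$ (using the deterministic a-priori bound $B = O(\sqrt{n/\lambda})$, from $\|y\|_2 = O(\sqrt{n})$ and $\|(H^{\cts})^{-1}\| \leq 1/\lambda$), applies Theorem~\ref{thm:sample_to_generalization} and Lemma~\ref{lem:rademacher_upper_bound} simultaneously over the $O(\sqrt{n/\lambda})$ grid points via a union bound, and then rounds the realized $B$ up to the nearest grid point $B_{i^*} \leq B + 1$. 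This discretization-plus-union-bound step is also where the $\log\tfrac{n}{\lambda\delta}$ in the final error term comes from. Without it, your proof has no valid way to pass from a data-dependent radius $B$ to the uniform bound, so you should add this step explicitly. (A minor secondary difference: you bound the training loss via Theorem~\ref{thm:generalization_main}, whereas the paper uses Theorem~\ref{thm:quartic} directly; both give the needed $\|u(k)-y\|_2 \leq 1$ after $k = \Omega(\tfrac{1}{\eta\lambda}\log(n\,\mathrm{polylog}))$ iterations, so that choice is fine.)
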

\begin{proof}
We will define a sequence of failing events and bound these failure probability individually,
then we can apply the union bound to obtain the desired result.

Let $E_1$ be the event that $\lambda_{\min}(H^{\dis})<\lambda$.
Because $\mathcal{D}$ is $(\lambda,\delta/3,n)$-non-degenerate,
$\Pr[E_1]\leq \epsilon/3$.
In the remaining of the proof we assume $E_1$ does not happen.

Let $E_2$ be the event that $L_S(f(W(k),\cdot,a))=\frac{1}{n}\sum_{i=1}^n \ell(f(W(k),x_i,a),y_i)>\frac{1}{\sqrt{n}}$.
By Theorem \ref{thm:quartic} with scaling $\delta$ properly,
 with probability $1-\delta/9$,
\begin{align}\label{eq:uk_y_recursive}
\| u (k) - y \|_2^2 \leq ( 1 - \eta \lambda / 2 )^k \cdot \| u (0) - y \|_2^2.
\end{align}
If this happens,
then when $k=\Omega\left(\frac{1}{\eta\lambda}\cdot \log\left(n\log(m/\delta)\log^2(n/\delta)\right)\right)$,
we have
\begin{align*}
L_S(f(W(k),\cdot,a))= & ~\frac{1}{n}\sum_{i=1}^n \ell(f(W(k),x_i,a),y_i)\\
= & ~\frac{1}{n}\sum_{i=1}^n (\ell(f(W(k),x_i,a),y)-\ell(y_i,y_i))\\
\leq & ~\frac{1}{n}\sum_{i=1}^n|u(k)_i-y_i|\\
\leq & ~\frac{\|u(k)-y\|_2}{\sqrt{n}}\\
\leq & ~\frac{1}{\sqrt{n}}.
\end{align*}
where the second step follows from $\ell(y,y)=0$,
the third step follows from $\ell$ is 1-Lipschitz in its first argument,
and the fifth step follows from the choice of $k$, Eq. \eqref{eq:uk_y_recursive} and Claim \ref{cla:yu0}.
So we have $\Pr[E_2]\leq \delta/9$.

Set $R,B>0$ as
\begin{align*}
R= & ~O(\frac{n\sqrt{\log(m/\delta) \log^2(n/\delta)}}{\lambda\sqrt{m}}),\\
B= & ~(y^\top (H^{\cts})^{-1}y)^{1/2}+O(\frac{n\kappa\cdot \sqrt{2\log ( 2mn / \delta) }\cdot \log ( 4n / \delta )}{\lambda}+\frac{n^{7/2}\poly(\log m,\log(1/\delta),\lambda^{-1})}{m^{1/4}\kappa^{1/2}}).
\end{align*}
Notice that $\|y\|_2=O(\sqrt{n})$ and $\|(H^{\cts})^{-1}\|=1/\lambda$.
By our setting of $\kappa=O(\frac{\lambda \poly(\log n,\log(1/\delta))}{n})$ and $m\kappa^2\geq n^{14}>n^{12}$,
 $B=O(\sqrt{n/\lambda})$.
Let $E_3$ be the event that there exists $r\in [m]$ so that $\|w_r-w_r(0)\|_2> R$, or $\|W-W(0)\|_F> B$.
By Lemma \ref{lem:total_movement}, $\Pr[E_3]\leq \delta/9$.

For $i=1,2,\cdots,$,
let $B_i=i$.
Let $E_4$ be the event that there exists $i>0$ so that 
\begin{align*}
\mathcal{R}_S(\mathcal{F}_{R,B_i}^{W(0),a})> \frac{B_i}{\sqrt{2n}}\left(1+(\frac{2\log(18/\delta)}{m})^{1/4}\right)+\frac{2R^2\sqrt{m}}{\kappa}+R\sqrt{\log(18/\delta)}.
\end{align*}
By Lemma \ref{lem:rademacher_upper_bound},
$\Pr[E_4]\leq 1-\delta/9$.

Assume neither of $E_3,E_4$ happens.
Let $i^*$ be the smallest integer so that $B_{i^*}=i^*\geq B$,
then we have $B_{i^*}\leq B+1$ and $i^*=O(\sqrt{n/\lambda})$.
Since  $E_3$ does not happen,
we have $f(W(k),\cdot,a)\in \mathcal{F}_{R,B_{i^*}}^{W(0),a}$.
Moreover,
\begin{align*}
& ~\mathcal{R}_S(\mathcal{F}_{R,B_{i^*}}^{W(0),a})\\
\leq & ~\frac{B+1}{\sqrt{2n}}\left(1+(\frac{2\log(18/\delta)}{m})^{1/4}\right)+\frac{2R^2\sqrt{m}}{\kappa}+R\sqrt{\log(18/\delta)}\\
= & ~\sqrt{\frac{y^\top (H^{\cts})^{-1}y}{2n}}+\frac{1}{\sqrt{n}}+O(\frac{\sqrt{n}\kappa\cdot \poly(\log m,\log n, \log(1/\delta))}{\lambda})\\
+ & ~\frac{n^{3}\poly(\log m,\log(1/\delta),\lambda^{-1})}{m^{1/4}\kappa^{1/2}}+\frac{2R^2\sqrt{m}}{\kappa}+R\sqrt{\log(18/\delta)}\\
= & ~\sqrt{\frac{y^\top (H^{\cts})^{-1}y}{2n}}+\frac{1}{\sqrt{n}}+O(\frac{\sqrt{n}\kappa\cdot \poly(\log m,\log n, \log(1/\delta))}{\lambda})+\frac{n^{3}\poly(\log m,\log(1/\delta),\lambda^{-1})}{m^{1/4}\kappa^{1/2}}\\
= & ~\sqrt{\frac{y^\top (H^{\cts})^{-1}y}{2n}}+\frac{2}{\sqrt{n}}
\end{align*}
where the first step follows from $E_4$ does not happen and the choice of $B$,
the second step follows from the choice of $R$,
and the last step follows from the choice of $m$ and $\kappa$.

Finally,
let $E_5$ be the event so that there exists $i\in\{1,2,\cdots,O(\sqrt{n/\lambda})\}$ so that
\begin{align*}
\sup_{f\in \mathcal{F}_{R,B_i}^{W(0),a}} \{L_{\mathcal{D}}(f)-L_S(f)\}> 2\mathcal{R}_S(\mathcal{F}_{R,B_i}^{W(0),a})+O\left(\sqrt{\frac{\log\frac{n}{\lambda \delta}}{2n}}\right).
\end{align*}
By Theorem \ref{thm:sample_to_generalization} and applying union bound on $i$,
we have $\Pr[E_5]\leq \delta/3$.

Assume all of the bad events $E_1,E_2,E_3,E_4,E_5$ do not happen.
We have with probability at least $1-\delta$,
\begin{align*}
&f(W(k),\cdot,a)\in \mathcal{F}_{R,B_{i^*}}^{W(0),a},\\
&L_S(f(W(k),\cdot,a))\leq \frac{1}{\sqrt{n}},\\
&\mathcal{R}_S(\mathcal{F}_{R,B_{i^*}}^{W(0),a})\leq \sqrt{\frac{y^\top (H^{\cts})^{-1}y}{2n}}+\frac{2}{\sqrt{n}},\\
&\sup_{f\in \mathcal{F}_{R,B_{i^*}}^{W(0),a}} \{L_{\mathcal{D}}(f)-L_S(f)\}\leq 2\mathcal{R}_S(\mathcal{F}_{R,B_{i^*}}^{W(0),a})+O\left(\sqrt{\frac{\log\frac{n}{\lambda \delta}}{2n}}\right).
\end{align*}
which gives us
\begin{align*}
L_{\mathcal{D}}(f(W(k),\cdot,a))
\leq & ~L_S(f(W(k),\cdot,a))+2\mathcal{R}_S(\mathcal{F}_{R,B_{i^*}}^{W(0),a})+O\left(\sqrt{\frac{\log\frac{n}{\lambda \delta}}{2n}}\right)\\
\leq & ~\sqrt{2\frac{y^\top (H^{\cts})^{-1}y}{2n}}+\frac{5}{\sqrt{n}}+O\left(\sqrt{\frac{\log\frac{n}{\lambda \delta}}{2n}}\right)\\
= & ~\sqrt{2\frac{y^\top (H^{\cts})^{-1}y}{2n}}+O\left(\sqrt{\frac{\log\frac{n}{\lambda \delta}}{2n}}\right).
\end{align*}
which is exactly what we need.
\end{proof}
\section{Training with Regularization}\label{sec:regularization}

In this section, we study the problem of training neural network with regularization. 
We consider a two-layer ReLU activated neural network with $m$ neurons in the hidden layer:
\begin{align*}
f (W,x,a) = \frac{1}{ \sqrt{m} } \sum_{r=1}^m a_r \phi ( w_r^\top x ) ,
\end{align*}
where $x \in \R^d$ is the input, $w_1, \cdots, w_m \in \R^d$ are weight vectors in the first layer, $a_1, \cdots, a_m \in \R$ are weights in the second layer.  For simplicity, we only optimize $W \in \R^{m \times d}$ but not optimize $a \in \R^m$ and $W \in \R^{m \times d}$ at the same time.

Recall that the ReLU function $\phi(x)=\max\{x,0\}$.
Therefore for $r\in [m]$,
we have
\begin{align}\label{eq:relu_derivative_reg}
\frac{f (W,x,a)}{\partial w_r}=\frac{1}{ \sqrt{m} } a_r x{\bf 1}_{ w_r^\top x \geq 0 }.
\end{align}

For $\beta>0$,
we define objective function $L$ (after initialization with weights $W(0),a$) as follows
\begin{align*}
L (W) = \frac{1}{2} \sum_{i=1}^n ( y_i - f (W,x_i,a) )^2 +\frac{\beta}{2m} \|W-W(0)\|_F^2.
\end{align*}

We apply the gradient descent to optimize the weight matrix $W \in \R^{m \times d}$ in the following standard way,
\begin{align}\label{eq:w_update_reg}
W(k+1) = W(k) - \eta \frac{ \partial L( W(k) ) }{ \partial W(k) } .
\end{align}

We can compute the gradient of $L$ in terms of $w_r \in \R^d$
\begin{align}\label{eq:gradient_reg}
\frac{ \partial L(W) }{ \partial w_r } = & ~ \frac{1}{ \sqrt{m} } \sum_{i=1}^n ( f(W,x_i,a_r) - y_i ) a_r x_i {\bf 1}_{ w_r^\top x_i \geq 0 } + \frac{\beta}{m}\cdot (w_r-w_r(0)).
\end{align}

We consider the ordinary differential equation defined by
\begin{align}\label{eq:wr_derivative_reg}
\frac{\d w_r(t)}{\d t}=-\frac{ \partial L(W) }{ \partial w_r }.
\end{align}

At time $t$,
let $u(t)=(u_1(t),\cdots,u_n(t))\in \R^n$ be the prediction vector where each $u_i(t)$ is defined as
\begin{align}\label{eq:ut_def_reg}
u_i(t)=f(W(t),a,x_i).
\end{align}

Now, we can bound the gradient norm. 
For all discrete time $k$,

\begin{align}\label{eq:gradient_bound_reg}
& ~ \left\| \frac{ \partial L(W_k) }{ \partial w_r } \right\|_2 \notag\\
= & ~ \left\| \sum_{i=1}^n (y_i - u_i(k)) \frac{1}{\sqrt{m}} a_r x_i \cdot {\bf 1}_{ w_r(k)^\top x_i \geq 0 } +\frac{\beta}{m}\cdot \left(w_r(k)-w_r(0)\right)\right\|_2 \notag\\
\leq & ~ \frac{1}{ \sqrt{m} } \sum_{i=1}^n | y_i - u_i(k) |+\frac{\beta}{m}\|w_r(k)-w_r(0)\|_2 \notag\\
\leq & ~ \frac{ \sqrt{n} }{ \sqrt{m} } \| y - u(k) \|_2+\frac{\beta}{m}\|w_r(k)-w_r(0)\|_2 
\end{align}
where the first step follows from \eqref{eq:gradient_reg}, the second step follows from triangle inequality and $a_r=\pm 1$ for $r\in [m]$ and $\|x_i\|_2=1$ for $i\in [n]$, the third step follows from Cauchy-Schwartz inequality.

\subsection{Convergence for Training with Regularization}\label{sec:reg_converge}

\begin{theorem}[Main Result for Regularization]\label{thm:reg}
Let $\lambda=\lambda_{\min}(H^{\cts})>0$.
Let $m = \Omega( \lambda^{-4} n^4 \log (n/\delta) )$, we i.i.d. initialize $w_r \in {\cal N}(0,I_d)$,
$a_r$ sampled from $\{-1,+1\}$ uniformly at random for $r\in [m]$, and we set the step size $\eta = O( \lambda / n^2 )$.
For any integer $K\geq 1$,
if the regularization factor $\beta$ satisfies $\beta\leq \min\{\frac{m^2\lambda}{128K^2n\eta},\frac{m}{4K\eta}\}$,
 then with probability at least $1-\delta$ over the random initialization we have for all $k = 0,1,2,\cdots,K$ 
\begin{align}\label{eq:quartic_condition_reg}
\| u (k) - y \|_2^2 \leq ( 1 - \eta \lambda / 2 )^k \cdot \| u (0) - y \|_2^2+\frac{8\beta D^2}{m\eta\lambda}.
\end{align}
where $D$ is defined as 
\begin{align*}
D=\frac{ 8 \sqrt{n} \| y - u (0) \|_2 }{ \sqrt{m} \lambda }.
\end{align*}

\end{theorem}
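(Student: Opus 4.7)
The plan is to follow the induction framework of Theorem \ref{thm:quartic}, simultaneously proving Eq. \eqref{eq:quartic_condition_reg} and the auxiliary bound $\|w_r(k)-w_r(0)\|_2 \leq D$ for all $r\in [m]$. The base case $k=0$ is immediate. The new ingredient is the regularization gradient $g^R_r := \frac{\beta}{m}(w_r(k)-w_r(0))$, which will be treated as a perturbation of the NN gradient $g^{NN}_r$ analyzed in Theorem \ref{thm:quartic}.

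For the weight-movement step, Eq. \eqref{eq:gradient_bound_reg} combined with the inductive hypothesis gives
\begin{align*}
\|w_r(k+1)-w_r(0)\|_2 \leq \eta\sum_{k'=0}^{k}\frac{\sqrt{n}}{\sqrt{m}}\|y-u(k')\|_2 + \eta\sum_{k'=0}^{k}\frac{\beta}{m}\|w_r(k')-w_r(0)\|_2.
\end{align*}
From Eq. \eqref{eq:quartic_condition_reg}, $\|y-u(k')\|_2 \leq (1-\eta\lambda/4)^{k'}\|y-u(0)\|_2 + \sqrt{8\beta D^2/(m\eta\lambda)}$; the geometric piece sums to $\frac{4\sqrt{n}\|y-u(0)\|_2}{\sqrt{m}\lambda} = D/2$, while summing the residual across $k' \leq K$ contributes $O(K\sqrt{n\eta\beta/\lambda}\cdot D/m)$, which is $\leq D/4$ precisely when $\beta \leq \frac{m^2\lambda}{128 K^2 n\eta}$. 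The self-referential second sum is at most $K\eta\beta D/m \leq D/4$ under $\beta \leq \frac{m}{4K\eta}$. Combined, $\|w_r(k+1)-w_r(0)\|_2 \leq D < R$ with the same $R = \Theta(\lambda/n)$ as in Theorem \ref{thm:quartic}.

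For the loss step, I would compare with the hypothetical ``clean'' iterate $\tilde w_r(k+1) := w_r(k) - \eta g^{NN}_r$ and $\tilde u(k+1) := f(\tilde W(k+1),\cdot,a)$. By choosing $\eta$ and $R$ a constant factor smaller than in Theorem \ref{thm:quartic}, Claims \ref{cla:C1}--\ref{cla:C4} give $\|y-\tilde u(k+1)\|_2^2 \leq (1-3\eta\lambda/4)\|y-u(k)\|_2^2$, reserving an $\eta\lambda/4$ contraction budget. Using the $1$-Lipschitz property of $\phi$ and $\|w_r(k)-w_r(0)\|_2 \leq D$,
\begin{align*}
|u_i(k+1)-\tilde u_i(k+1)| \leq \frac{1}{\sqrt{m}}\sum_{r=1}^m \eta\, |(g^R_r)^\top x_i| \leq \frac{\eta\beta D}{\sqrt{m}},
\end{align*}
so $\|u(k+1)-\tilde u(k+1)\|_2 \leq \sqrt{n}\eta\beta D/\sqrt{m}$. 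Expanding $\|y-u(k+1)\|_2^2$ around $\tilde u(k+1)$ and applying Young's inequality with $\varepsilon = \eta\lambda/4$ yields
\begin{align*}
\|y-u(k+1)\|_2^2 \leq (1-\eta\lambda/2)\|y-u(k)\|_2^2 + O\!\left(\frac{n\eta\beta^2 D^2}{m\lambda}\right).
\end{align*}
Unrolling this recursion produces an additive residual of order $n\beta^2 D^2/(m\lambda^2)$, which under $\beta \leq \frac{m^2\lambda}{128 K^2 n\eta}$ is dominated by the target floor $\frac{8\beta D^2}{m\eta\lambda}$, closing the induction.

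The hard part will be the feedback coupling in the weight-movement bound: the regularization gradient is proportional to the quantity we are trying to bound, and the additive residual in the loss bound also propagates into $\|w_r\|$ across $K$ iterations. The two hypotheses on $\beta$ are calibrated exactly to sever these feedback channels: $\beta \leq \frac{m}{4K\eta}$ controls the direct self-coupling, and $\beta \leq \frac{m^2\lambda}{128 K^2 n\eta}$ prevents the loss-residual from inflating $D$. Beyond this, every other ingredient is inherited from Theorem \ref{thm:quartic} essentially verbatim.
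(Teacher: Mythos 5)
Your strategy of comparing against the clean (unregularized) iterate $\tilde w_r(k+1) := w_r(k) - \eta g^{NN}_r$ and treating the regularization gradient as a one-step perturbation is a genuinely different and cleaner route than the paper's: the paper instead carries the regularization all the way through the $v_1,v_2$ decomposition, extracting an extra vector $v_3$ and an extra term $C_5$, then absorbs the cross terms via the inequality $ka^2 + b^2/k \geq 2ab$ with a carefully tuned split. Your approach localizes the regularization effect into a single per-step displacement $\|u(k+1)-\tilde u(k+1)\|_2 \leq \sqrt{n}\,\eta\beta D/\sqrt{m}$, which is conceptually tidier and would also work. The weight-movement half of your induction (splitting the geometric piece $D/2$ from the residual piece $\leq D/4$ under $\beta \leq \frac{m^2\lambda}{128K^2 n\eta}$ and the self-coupling piece $\leq D/4$ under $\beta \leq \frac{m}{4K\eta}$) matches Lemma~\ref{lem:4.1_reg} essentially verbatim.

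However, your final bookkeeping step contains a numerical gap. With $\varepsilon = \eta\lambda/4$ in Young's inequality, the per-step residual becomes $(1+4/(\eta\lambda))\cdot n\eta^2\beta^2 D^2/m \approx 4n\eta\beta^2 D^2/(m\lambda)$, and unrolling gives an accumulated floor of order $n\beta^2 D^2/(m\lambda^2)$. For this to sit below the target $8\beta D^2/(m\eta\lambda)$ you need $\beta \lesssim \lambda/(n\eta)$, which with $\eta = \Theta(\lambda/n^2)$ is $\beta \lesssim n$. The constraint $\beta \leq \frac{m^2\lambda}{128K^2 n\eta}$ that you invoke does \emph{not} imply this: in the over-parameterized regime $m = \Omega(n^4)$ it allows $\beta$ to be polynomially larger than $n$, so the claim ``is dominated by the target floor'' does not follow from the stated hypothesis. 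You can fix this either by choosing $\varepsilon \sim \eta^2\beta n$ in Young's inequality (which yields a per-step residual $\sim \beta D^2/m$ and hence the $8\beta D^2/(m\eta\lambda)$ floor, matching the paper), or by explicitly adding $\beta \lesssim \lambda/(n\eta)$ as a hypothesis — but in either case you end up needing $\beta = O(n)$, a constraint that is strictly additional to the two displayed in the theorem. It is worth noting that the paper's own proof quietly relies on this same extra constraint (the step $\eta^2\beta n \leq \eta^2 n^2$ is exactly $\beta \leq n$, and the remark ``$\beta < 1/(n\eta^2)$'' appears only in the body, not in the theorem statement), so you are not alone here; but as written your argument attributes the needed domination to the wrong hypothesis.
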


\paragraph{Correctness}
We prove Theorem \ref{thm:reg} by induction.
The base case is $i=0$ and it is trivially true.
Assume for $i=0,\cdots,k$ we have proved \eqref{eq:quartic_condition_reg} to be true.
We want to show \eqref{eq:quartic_condition_reg} holds for $i=k+1$.

From the induction hypothesis,
we have the following Lemma stating that the weights should not change too much.
\begin{lemma}[Regularization version of Corollary 4.1 in \cite{dzps19}]\label{lem:4.1_reg}
If Eq.~\eqref{eq:quartic_condition_reg} holds for $i = 0, \cdots, k$,
and $\beta\leq \min\{\frac{m^2\lambda}{128K^2n\eta},\frac{m}{4K\eta}\}$,
then we have for all $r\in [m]$
\begin{align*}
\| w_r(k+1) - w_r(0) \|_2 \leq \frac{ 8 \sqrt{n} \| y - u (0) \|_2 }{ \sqrt{m} \lambda } := D
\end{align*}
\end{lemma}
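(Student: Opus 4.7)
The plan is to mimic the proof of the non-regularized Corollary~4.1 (Lemma~\ref{lem:4.1}) by telescoping the weight updates and bounding each gradient term, but with two new wrinkles coming from the regularizer. First, the per-step gradient bound \eqref{eq:gradient_bound_reg} now contains an extra summand $\frac{\beta}{m}\|w_r(i)-w_r(0)\|_2$, so the telescoped sum will depend on the very quantities we are trying to bound. Second, the induction hypothesis \eqref{eq:quartic_condition_reg} is weaker than in the non-regularized case: besides the geometric term $(1-\eta\lambda/2)^i\|y-u(0)\|_2^2$ it carries the floor $\frac{8\beta D^2}{m\eta\lambda}$. To cope with these, I propose an auxiliary induction on $i$, proving $\|w_r(i)-w_r(0)\|_2\le D$ for all $i=0,1,\dots,k+1$ and every $r\in[m]$. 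The base case $i=0$ is trivial.

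For the step, I would start from the update rule \eqref{eq:w_update_reg} and triangle inequality,
\begin{align*}
\|w_r(i)-w_r(0)\|_2 \le \eta\sum_{j=0}^{i-1}\Big\|\tfrac{\partial L(W(j))}{\partial w_r(j)}\Big\|_2
\le \eta\sum_{j=0}^{i-1}\Big(\tfrac{\sqrt{n}}{\sqrt{m}}\|y-u(j)\|_2+\tfrac{\beta}{m}\|w_r(j)-w_r(0)\|_2\Big),
\end{align*}
using \eqref{eq:gradient_bound_reg}. The sum then splits into three contributions, which I will bound separately by $D/2$, $D/4$, and $D/4$ respectively, so that their total is at most $D$ and the inner induction closes.

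Contribution one comes from the geometric decay $(1-\eta\lambda/2)^{j/2}\|y-u(0)\|_2$, obtained by applying $\sqrt{a+b}\le \sqrt{a}+\sqrt{b}$ to \eqref{eq:quartic_condition_reg}. Using the standard estimate $\sum_{j\ge 0}(1-\eta\lambda/2)^{j/2}\le 4/(\eta\lambda)$ (which follows from $(1-x)^{1/2}\le 1-x/2$), this contribution is bounded by $\frac{4\sqrt{n}\|y-u(0)\|_2}{\sqrt{m}\lambda}=D/2$. Contribution two is the error floor $\sqrt{8\beta D^2/(m\eta\lambda)}$ summed $K$ times, which equals $K\eta\frac{\sqrt{n}}{\sqrt{m}}\sqrt{8\beta D^2/(m\eta\lambda)}$; the hypothesis $\beta\le \frac{m^2\lambda}{128K^2 n\eta}$ is exactly calibrated so that this is $\le D/4$. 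Contribution three uses the inner induction hypothesis $\|w_r(j)-w_r(0)\|_2\le D$ for $j<i$, giving $\eta K\frac{\beta}{m}D$, which is $\le D/4$ under the hypothesis $\beta\le \frac{m}{4K\eta}$.

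The main obstacle, and the reason one even needs an inner induction rather than a direct bound, is the self-referential nature of the regularization contribution: the gradient bound at step $j$ already involves $\|w_r(j)-w_r(0)\|_2$. I expect the cleanest way to handle this is exactly the inner induction sketched above, where both the hypothesis \eqref{eq:quartic_condition_reg} (used to control $\|y-u(j)\|_2$) and the prior values $\|w_r(j)-w_r(0)\|_2\le D$ are available for $j<i$. The two conditions on $\beta$ in the statement are chosen to make the two new error terms absorb into $D/4$ each, leaving the original $D/2$ from Corollary~4.1 of \cite{dzps19} intact and doubling the constant in $D$ from $4$ to $8$.
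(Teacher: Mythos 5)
Your proposal is correct and takes essentially the same route as the paper: decompose the telescoped gradient sum into the geometric term from $\|y-u(0)\|_2$, the error-floor term $\sqrt{8\beta D^2/(m\eta\lambda)}$, and the regularizer term, then show each fits under $D/2$, $D/4$, $D/4$ using the two hypotheses on $\beta$, with the identical arithmetic. The one stylistic difference is that you make the inner induction on $\|w_r(i)-w_r(0)\|_2\leq D$ explicit, whereas the paper invokes it implicitly when bounding $\bigl\|\sum_{i=0}^k(w_r(i)-w_r(0))\bigr\|_2 \leq k D$; your framing is arguably cleaner but the underlying argument is the same.
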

\begin{proof}
We use the norm of gradient to bound this distance,

\begin{align*}
&~\| w_r(k+1) - w_r(0) \|_2 \\
= & ~ \eta \left\|\sum_{i=0}^k  \frac{ \partial L( W(i) ) }{ \partial w_r(i) } \right\|_2 \\
\leq & ~ \eta \sum_{i=0}^k \left(\frac{ \sqrt{n} \| y - u(i) \|_2 }{ \sqrt{m} }\right) +\eta \beta m^{-1} \|\sum_{i=0}^k(w_r(i)-w_r(0))\|_2\\
\leq & ~ \eta \sum_{i=0}^k \frac{ \sqrt{n}  }{ \sqrt{m} } \left(( 1 - \eta \lambda /2 )^{i/2}\| y - u(0) \|_2+\sqrt{\frac{8\beta }{m\eta\lambda}}D\right) + k\eta \beta m^{-1} D \\
\leq & ~ \eta \sum_{i=0}^{\infty} \frac{ \sqrt{n} (1-\eta\lambda/2)^{i/2} }{ \sqrt{m} } \| y - u(0) \|_2 + \sqrt{\frac{8n\eta\beta}{\lambda}}Km^{-1}D+K\eta \beta m^{-1}D \\
= & ~ \frac{ 4 \sqrt{n} \| y - u(0) \|_2 }{ \sqrt{m} \lambda }+ \sqrt{\frac{8n\eta\beta}{\lambda}}Km^{-1}D+K\eta \beta m^{-1}D \\
\leq & ~ \frac{D}{2}+\frac{D}{4}+\frac{D}{4}\\
= & ~ D,
\end{align*}

where the first step follows from \eqref{eq:w_update_reg}, the second step follows from \eqref{eq:gradient_bound_reg}, the third step follows from the induction hypothesis and the inequality $(a^2+b^2)^{1/2}\leq a+b$, the fourth step relaxes the summation to an infinite summation, and the fifth step follows from $\sum_{i=0}^{\infty}(1-\eta\lambda/2)^{i/2}=\frac {2}{\eta\lambda}$, and the last step follows from $\beta\leq \min\{\frac{m^2\lambda}{128K^2n\eta},\frac{m}{4K\eta}\}$ and $\eta=\frac{\lambda}{16n^2}$.

Thus, we complete the proof.
\end{proof}

Next, we calculate the different of predictions between two consecutive iterations.
For each $i \in [n]$, we have
\begin{align*}
& ~ u_i(k+1) - u_i(k)\\
= & ~ \frac{1}{ \sqrt{m} } \sum_{r=1}^m a_r \cdot \left( \phi( w_r(k+1)^\top x_i ) - \phi(w_r(k)^\top x_i ) \right) \\
= & ~ \frac{1}{\sqrt{m}} \sum_{r=1}^m a_r \cdot \Big( \phi \left( \Big( w_r(k) - \eta \frac{ \partial L( W(k) ) }{ \partial w_r(k) } \Big)^\top x_i \right) - \phi ( w_r(k)^\top x_i ) \Big).
\end{align*}

Here we divide the right hand side into two parts. $v_{1,i}$ represents the terms that the pattern does not change and $v_{2,i}$ represents the term that pattern may changes. For each $i \in [n]$,
we define the set $S_i\subset [m]$ as
\begin{align*}
    S_i:= \{ & ~ r\in [m]:\forall 
    w\in \R^d \text{ s.t. } \| w - w_r(0) \|_2\leq R, \mathbf{1}_{w_r(0)^\top x_i\geq 0} = \mathbf{1}_{w^\top x_i\geq 0} \}.
\end{align*} 
Then we define $v_{1,i}$ and $v_{2,i}$ as follows
\begin{align*}
v_{1,i} : = & ~ \frac{1}{ \sqrt{m} } \sum_{r \in S_i} a_r \cdot \Big( \phi \left( \Big( w_r(k) - \eta \frac{ \partial L(W(k)) }{ \partial w_r(k) } \Big)^\top x_i \right)  - \phi( w_r(k)^\top x_i ) \Big), \\
v_{2,i} : = & ~ \frac{1}{ \sqrt{m} } \sum_{r \in \ov{S}_i} a_r \cdot \Big( \phi \left( \Big( w_r(k) - \eta \frac{ \partial L(W(k)) }{ \partial w_r(k) } \Big)^\top x_i \right)  - \phi( w_r(k)^\top x_i ) \Big) .
\end{align*} 

Thus, we can rewrite $u(k+1) - u(k) \in \R^n$ in the following sense
\begin{align*}
u(k+1) - u(k) = v_1 + v_2 .
\end{align*}

In order to analyze $v_1 \in \R^n$, we provide definition of $H$ and $H^{\bot} \in \R^{n \times n}$ first,
\begin{align*}
H(k)_{i,j} = & ~ \frac{1}{m} \sum_{r=1}^m x_i^\top x_j {\bf 1}_{ w_r(k)^\top x_i \geq 0, w_r(k)^\top x_j \geq 0 } , \\
H(k)^{\bot}_{i,j} = & ~ \frac{1}{m} \sum_{r\in \ov{S}_i} x_i^\top x_j {\bf 1}_{ w_r(k)^\top x_i \geq 0, w_r(k)^\top x_j \geq 0 } .
\end{align*}

Then, we can rewrite $v_{1,i} \in \R$
\begin{align*}
& ~v_{1,i}  \\
= & ~ \frac {1}{\sqrt{m}}\sum_{r\in S_i}{\bf 1}_{w_r(k)^\top x_i \geq 0}a_r  \cdot \Bigg(-\eta x_i^\top \cdot  \Big( \frac{1}{\sqrt{m}}\sum_{j=1}^n (u_j-y_j)a_rx_j{\bf 1}_{w_r(k)^\top x_j\geq 0} \Big) +\frac{\beta}{m} (w_r(k)-w_r(0)) \Bigg)\\
= & ~ - \frac{\eta}{ m } \sum_{j=1}^n x_i^\top x_j (u_j - y_j) \sum_{r \in S_i} {\bf 1}_{ w_r(k)^\top x_i \geq 0 , w_r(k)^\top x_j \geq 0 } \\
& ~ - \frac{\eta \beta}{m^{3/2}}\sum_{r\in S_i}{\bf 1}_{w_r(k)^\top x_i \geq 0}a_r(w_r(k)^\top x_i-w_r(0)^\top x_i)
\\
= & ~ - \eta \sum_{j=1}^n (u_j - y_j) ( H_{i,j}(k) - H_{i,j}^{\bot}(k) ) \\
& ~ -\frac{\eta \beta}{m^{3/2}}\sum_{r\in S_i}{\bf 1}_{w_r(k)^\top x_i \geq 0}a_r(w_r(k)^\top x_i-w_r(0)^\top x_i),
\end{align*}
which means vector $v_1 \in \R^n$ can be written as
\begin{align}\label{eq:rewrite_v1_reg}
v_1 = \eta ( y - u(k) )^\top ( H( k ) - H^{\bot}( k ) ) -\eta v_3,
\end{align}
where for $i\in [n]$,
\begin{align*}
    v_{3,i}=\frac{ \beta}{m^{3/2}}\sum_{r\in S_i}{\bf 1}_{w_r(k)^\top x_i \geq 0}a_r(w_r(k)^\top x_i-w_r(0)^\top x_i).
\end{align*}

We are ready to prove the induction hypothesis. We can rewrite $\| y - u(k+1) \|_2^2$ as follows:
\begin{align*}
& ~ \| y - u(k+1) \|_2^2\\
= & ~ \| y - u(k) - ( u(k+1) - u(k) ) \|_2^2 \\
= & ~ \| y - u(k) \|_2^2 - 2 ( y - u(k) )^\top  ( u(k+1) - u(k) ) + \| u (k+1) - u(k) \|_2^2 .
\end{align*}

We can rewrite the second term in the above Equation in the following sense,
\begin{align*}
 & ~ ( y - u(k) )^\top ( u(k+1) - u(k) ) \\
= & ~ ( y - u(k) )^\top ( v_1 + v_2 )  \\
= & ~ ( y - u(k) )^\top v_1 + ( y - u(k) )^\top v_2  \\
= & ~ \eta ( y - u(k) )^\top H(k) ( y - u (k) ) \\
& ~ - \eta ( y - u(k) )^\top H(k)^{\bot} ( y - u(k) ) \\
& ~ -\eta (y-u(k))^\top v_3
+ ( y - u(k) )^\top v_2 ,
\end{align*}
where the third step follows from Eq.~\eqref{eq:rewrite_v1_reg}.

We define 
\begin{align*}
C_1 = & ~ -2 \eta (y - u(k))^\top H(k) ( y - u(k) ) , \\
C_2 = & ~ 2 \eta ( y - u(k) )^\top H(k)^{\bot} ( y - u(k) ) , \\
C_3 = & ~ - 2 ( y - u(k) )^\top v_2 , \\
C_4 = & ~ \| u (k+1) - u(k) \|_2^2,\\
C_5 = & ~ 2\eta(y-u(k))^\top v_3. 
\end{align*}
Thus, we have
\begin{align*}
& ~ \| y - u(k+1) \|_2^2 \\
= & ~ \| y - u(k) \|_2^2 + C_1 + C_2 + C_3 + C_4 +C_5 \\
\leq & ~ \| y - u(k) \|_2^2 ( 1 - \eta \lambda + 8 \eta n R  + 8 \eta n R  + 2\eta^2 n^2 ) \\
+ & ~ 2\eta\beta m^{-1}\sqrt{n}\|y-u(k)\|_2\| W(k)-W(0)\|_F\\
+ & ~ 8R\beta\eta\sqrt{n/m}  \| y - u(k) \|_2\cdot \max_{r\in [m]}\|w_r(k)-w_r(0)\|_2\\
+ & ~2\eta^2n\beta^2m^{-2}\|W(k)-W(0)\|_F^2,
\end{align*}
where the last step follows from Claim~\ref{cla:C1_reg},  \ref{cla:C2_reg}, \ref{cla:C3_reg}, \ref{cla:C4_reg} and \ref{cla:C5_reg},
which we will prove given later.

Notice that $\|W(k)-W(0)\|_F\leq \sqrt{m}\cdot \max_{r\in [m]}\|w_r(k)-w_r(0)\|_2$.
For simplicity, let $\Delta$ be defined as 
\begin{align*}
\Delta = \max_{r\in [m]}\|w_r(k)-w_r(0)\|_2
\end{align*}
From the inequality $ka^2+b^2/k\geq 2ab$ for all $k,a,b\geq 0$,
we have
\begin{align*}
& ~ m^{-1}\|y-u(k)\|_2\cdot \sqrt{m}\max_{r\in [m]}\|w_r(k)-w_r(0)\|_2 \\
\leq & ~ \frac{\eta\sqrt{n}}{2}\|y-u(k)\|_2^2 \\
+ & ~ \frac{1}{2\eta m^2\sqrt{n}} \cdot \left( \sqrt{m} \max_{r\in [m]} \|w_r(k)-w_r(0)\|_2 \right)^2
\end{align*}
We will set $R$ so that $R\leq 1$,
and in  this case we have
\begin{align*}
& ~ \| y - u(k+1) \|_2^2 \\
\leq & ~ ( 1 - \eta \lambda + 8 \eta n R  + 8 \eta n R  + 2\eta^2 n^2 + 2 \eta^2 \beta n  ) \cdot \|y-u(k)\|_2^2\\
+ & ~ (2\eta^2n\beta^2m^{-2}+2\beta m^{-2}) \cdot \left( \sqrt{m}\max_{r\in [m]} \| w_r(k) - w_r(0) \|_2 \right)^2
\end{align*}

\paragraph{Choice of $\eta$ and $R$.}

Next, we want to choose $\eta$ and $R$ such that
\begin{align}\label{eq:choice_of_eta_R_reg}
( 1 - \eta \lambda + 8 \eta n R  + 8 \eta n R  +  2\eta^2 n^2 + 2\eta^2\beta n   ) \leq 1 - \eta \lambda / 2 .
\end{align}

If we set $\eta=\frac{\lambda }{16n^2}$ and $R=\frac{\lambda}{64n}$, when $\beta$ is a fixed constant,
we have 
\begin{align*}
8 \eta n R  + 8 \eta n R =16\eta nR \leq  \eta \lambda /4 ,
\mathrm{~and~} \eta^2\beta n \leq \eta^2 n^2 \leq  \eta \lambda / 16.
\end{align*}
Moreover,
when $\beta<\frac{1}{n\eta^2}=\frac{256n^3}{\lambda^2}$,
we have
\begin{align*}
2\eta^2n\beta^2m^{-2}+2\beta m^{-2}
\leq & ~ 2\beta m^{-2}+2\beta m^{-2} \\
= & ~ 4\beta m^{-2}.
\end{align*}
This implies
\begin{align*}
& ~ \| y - u(k+1) \|_2^2 \\
\leq & ~ \| y - u(k) \|_2^2 \cdot ( 1 - \eta \lambda / 2 )  +4\beta m^{-2} \cdot \left( \sqrt{m}\max_{r\in [m]}\|w_r(k)-w_r(0)\|_2 \right)^2
\end{align*}
holds with probability at least $1-3n^2 \cdot \exp(-m R /10)$.

Recall that $D\geq \max_{r\in [m]}\|w_r(k)-w_r(0)\|_2$ for $k \in [K]$.
Hence we have
\begin{align*}
\| y - u(k) \|_2^2 \leq & ~ \| y - u(0) \|_2^2 \cdot ( 1 - \eta \lambda / 2 )^k + 4\beta m^{-2}(\sqrt{m}D)^2\sum_{i=0}^{k-1}( 1 - \eta \lambda / 2 )^i\\
\leq & ~\| y - u(0) \|_2^2 \cdot ( 1 - \eta \lambda / 2 )^k+\frac{8\beta D^2}{m\eta\lambda}.
\end{align*}
\paragraph{Over-parameterization size, lower bound on $m$.}

We require 
\begin{align*}
D= \frac{8\sqrt{n}\|y-u(0)\|_2}{\sqrt{m}\lambda} < R = \frac{\lambda}{64n} ,
\end{align*}
and 
\begin{align*}
3n^2 \cdot \exp(-m R /10)\leq \delta .
\end{align*}
By Claim \ref{cla:yu0_reg},
 it is sufficient to choose $m = \Omega( \lambda^{-4} n^4 \log(m/\delta)\log^2(n/\delta) )$.

 \subsection{Technical claims}
\begin{claim}\label{cla:yu0_reg}
For $0<\delta<1$,
with probability at least $1-\delta$,
\begin{align*}
\|y-u(0)\|_2^2=O(n\log(m/\delta)\log^2(n/\delta)).
\end{align*}
\end{claim}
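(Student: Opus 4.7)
The plan is to reuse the proof of Claim~\ref{cla:yu0} essentially verbatim, since the random initialization in this section ($w_r(0)\sim\N(0,I_d)$, $a_r$ uniform on $\{-1,+1\}$) is identical to that in Section~\ref{sec:quartic_suffices}, and $u(0)=f(W(0),x,a)$ has no dependence on the regularization parameter $\beta$. First I would expand
\begin{align*}
\|y-u(0)\|_2^2 = \sum_{i=1}^n y_i^2 - 2\sum_{i=1}^n \frac{y_i}{\sqrt{m}}\sum_{r=1}^m a_r\phi(w_r(0)^\top x_i) + \sum_{i=1}^n \frac{1}{m}\Big(\sum_{r=1}^m a_r\phi(w_r(0)^\top x_i)\Big)^2.
\end{align*}
Since $|y_i|=O(1)$, the first sum contributes $O(n)$, and it remains to control $\sum_r a_r\phi(w_r(0)^\top x_i)$ uniformly in $i\in[n]$.

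To handle the unboundedness of $\phi(w_r(0)^\top x_i)$, I would first truncate: because $w_r(0)^\top x_i\sim\N(0,1)$, a standard Gaussian tail bound followed by a union bound over $(r,i)\in[m]\times[n]$ produces an event $E_1$ of probability $\geq 1-\delta/2$ on which every $\phi(w_r(0)^\top x_i)\leq\sqrt{2\log(2mn/\delta)}$. Define
\begin{align*}
z_{i,r} := \frac{1}{\sqrt{m}}\, a_r\, \phi(w_r(0)^\top x_i)\,\mathbf{1}_{w_r(0)^\top x_i\leq \sqrt{2\log(2mn/\delta)}};
\end{align*}
on $E_1$ the indicator is identically $1$. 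Independence of $a_r$ from $w_r(0)$ together with the symmetry of $a_r\in\{-1,+1\}$ give $\E[z_{i,r}]=0$, $\E[z_{i,r}^2]\leq 1/m$, and $|z_{i,r}|\leq \sqrt{2\log(2mn/\delta)}/\sqrt{m}$, so Bernstein's inequality (Lemma~\ref{lem:bernstein}) applied with $t=\sqrt{2\log(2mn/\delta)}\cdot \log(4n/\delta)$ yields $|\sum_r z_{i,r}|\leq \sqrt{2\log(2mn/\delta)}\cdot \log(4n/\delta)$ with probability $\geq 1-\delta/(2n)$ for each fixed $i$; union-bounding over $i\in[n]$ produces an event $E_2$ of probability $\geq 1-\delta/2$.

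On $E_1\cap E_2$, substituting $\sum_r a_r\phi(w_r(0)^\top x_i)=\sqrt{m}\sum_r z_{i,r}$ back into the expansion shows that the cross term is bounded by $2\sum_i|y_i|\cdot \sqrt{2\log(2mn/\delta)}\log(4n/\delta)=O(n\sqrt{\log(m/\delta)}\log(n/\delta))$ and the quadratic term by $n\cdot 2\log(2mn/\delta)\log^2(4n/\delta)=O(n\log(m/\delta)\log^2(n/\delta))$, yielding the claimed $O(n\log(m/\delta)\log^2(n/\delta))$ bound with probability $\geq 1-\delta$. There is essentially no obstacle here---the only subtle point is the initial Gaussian truncation, which converts the otherwise unbounded summand into a bounded one so that scalar Bernstein applies and a $\log(1/\delta)$ (rather than a $1/\delta$) dependence can be extracted.
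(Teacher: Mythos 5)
Your proposal is correct and matches the paper's own proof essentially line for line: the paper's proof of Claim~\ref{cla:yu0_reg} is a verbatim copy of the proof of Claim~\ref{cla:yu0}, using the same expansion, the same Gaussian truncation event $E_1$, the same truncated variables $z_{i,r}$ with the same moment bounds, and the same Bernstein-plus-union-bound argument. Your remark that nothing is changed by the regularization (since $u(0)$ is $\beta$-independent) correctly identifies why the earlier proof carries over without modification.
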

\begin{proof}

\begin{align*}
\|y-u(0)\|_2^2
= & ~ \sum_{i=1}^n(y_i-f(W(0),a,x_i))^2\\
= & ~ \sum_{i=1}^n \Big( y_i-\frac {1} {\sqrt{m}}\sum_{r=1}^{m} a_r\phi(w_r^\top x_i) \Big)^2\\
= & ~ \sum_{i=1}^n y_i^2-2\sum_{i=1}^n \frac{y_i}{\sqrt{m}}\sum_{r=1}^{m} a_r\phi(w_r^\top x_i)+\sum_{i=1}^n \frac {1}{m} \Big( \sum_{r=1}^{m} a_r\phi(w_r^\top x_i) \Big)^2.
\end{align*}

Fix $r\in [m]$ and $i\in [n]$.
Since $w_r\sim N(0,I)$ and $\|x_i\|_2=1$,
$w_r^\top x_i$ follows distribution $N(0,1)$.
From concentration of Gaussian distribution,
we have
\begin{align*}
\Pr_{w_r} \left[ w_r^\top x_i\geq \sqrt{2\log (2mn / \delta) } \right] \leq \frac{\delta}{2mn}.
\end{align*}
Let $E_1$ be the event that
for all $r\in [m]$ and $i\in [n]$ we have
\begin{align*}
\phi(w_r^\top x_i)\leq \sqrt{2\log ( 2mn/ \delta) }.
\end{align*}
Then by union bound,
$\Pr[E_1]\geq 1-\frac {\delta}{2}$,

Fix $i\in [n]$.
For every $r\in [m]$,
we define random variable $z_{i,r}$ as
\begin{align*}
z_{i,r}:=\frac {1}{\sqrt{m}} \cdot a_r \cdot \phi(w_r^\top x_i) \cdot \mathbf{1}_{w_r^\top x_i\leq \sqrt{2\log ( 2mn / \delta ) }}.
\end{align*}
Then $z_{i,r}$ only depends on $a_r\in \{-1,1\}$ and $w_r\sim N(0,I)$.
Notice that $\E_{a_r,w_r}[z_{i,r}]=0$,
and $|z_{i,r}|\leq \sqrt{2\log ( 2mn / \delta ) }$.
Moreover,
\begin{align*}
\E_{a_r,w_r}[z_{i,r}^2]
= & ~\E_{a_r,w_r}\left[\frac {1}{m}a_r^2\phi^2(w_r^\top x_i)\mathbf{1}^2_{w_r^\top x_i\leq \sqrt{2\log ( 2mn / \delta) }}\right]\\
= & ~\frac {1}{m}\E_{a_r}[a_r^2] \cdot \E_{w_r} \Big[\phi^2(w_r^\top x_i)\mathbf{1}^2_{w_r^\top x_i\leq \sqrt{2\log ( 2mn / \delta) }} \Big]\\
\leq & ~\frac {1}{m}\cdot 1 \cdot \E_{w_r}[(w_r^\top x_i)^2]\\
= & ~\frac {1} {m},
\end{align*}
where the second step uses independence between $a_r$ and $w_r$,
the third step uses $a_r\in \{-1,1\}$ and $\phi(t) = \max \{ t,0\}$,
and the last step follows from $w_r^\top x_i\sim N(0,1)$.

Now we are ready to apply Bernstein inequality~(Lemma \ref{lem:bernstein}) to get for all $t>0$,
\begin{align*}
\Pr \left[ \sum_{r=1}^m z_{i,r}>t \right] \leq \exp\left(-\frac{t^2/2}{m\cdot \frac{1}{m}+\sqrt{2\log (2mn/\delta)} \cdot t/3} \right).
\end{align*}
Setting $t=\sqrt{2\log ( 2mn / \delta) }\cdot \log ( 4n / \delta )$,
we have with probability at least $1-\frac {\delta}{4n}$,
\begin{align*}
\sum_{r=1}^m z_{i,r}\leq \sqrt{2\log ( 2mn / \delta) }\cdot \log ( 4n / \delta ).
\end{align*}

Notice that we can also apply Bernstein inequality (Lemma~\ref{lem:bernstein}) on $-z_{i,r}$ to get
\begin{align*}
\Pr \left[ \sum_{r=1}^m z_{i,r}<-t \right] \leq \exp\left(-\frac{t^2/2}{m\cdot \frac{1}{m}+\sqrt{2\log (2mn/\delta)} \cdot t/3} \right).
\end{align*}
Let $E_2$ be the event that
for all $i\in[n]$,
\begin{align*}
\left| \sum_{r=1}^m z_{i,r} \right| \leq \sqrt{2\log ( 2mn / \delta) }\cdot \log ( 4n / \delta ).
\end{align*}
By applying union bound on all $i\in [n]$,
we have $\Pr[E_2]\geq 1-\frac {\delta}{2}$.

If both $E_1$ and $E_2$ happen,
we have
\begin{align*}
\|y-u(0)\|_2^2
= & ~ \sum_{i=1}^n y_i^2-2\sum_{i=1}^n \frac{y_i}{\sqrt{m}}\sum_{r=1}^{m} a_r\phi(w_r^\top x_i)+\sum_{i=1}^n \frac {1}{m} \Big( \sum_{r=1}^{m} a_r\phi(w_r^\top x_i) \Big)^2\\
= & ~ \sum_{i=1}^n y_i^2-2\sum_{i=1}^n y_i\sum_{r=1}^{m} z_{i,r}+\sum_{i=1}^n \Big( \sum_{r=1}^{m}z_{i,r} \Big)^2\\
\leq & ~\sum_{i=1}^n y_i^2+2\sum_{i=1}^n |y_i|\sqrt{2\log ( 2mn / \delta) }\cdot \log ( 4n / \delta )+\sum_{i=1}^n \Big( \sqrt{2\log ( 2mn / \delta) }\cdot \log ( 4n / \delta ) \Big)^2\\
= & ~ O(n\log(m/\delta)\log^2(n/\delta)) ,
\end{align*}
where the second step uses $E_1$, the third step uses $E_2$, and the last step follows from $|y_i| = O(1), \forall i \in [n]$.

By union bound, this will happen with probability at least $1-\delta$.
\end{proof}

\begin{claim}[Upper bound on $C_1$]\label{cla:C1_reg}
Let $C_1 = -2 \eta (y - u(k))^\top H(k) ( y - u(k) )$ . With probability at least $1-n^2 \cdot \exp(-m R /10)$,
we have
\begin{align*}
C_1 \leq - \| y - u(k) \|_2^2 \cdot \eta \lambda .
\end{align*}
\end{claim}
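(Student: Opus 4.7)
The proof will follow the same template as Claim \ref{cla:C1} in the unregularized setting, since the matrix $H(k)$ depends only on the activation patterns $\mathbf{1}_{w_r(k)^\top x_i \geq 0}$, not on the regularization term in the gradient. The regularization only affects how far the weights move, and this is already controlled by Lemma \ref{lem:4.1_reg}.

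Concretely, the plan is as follows. First, I will invoke Lemma \ref{lem:4.1_reg}, which (under the inductive hypothesis Eq.~\eqref{eq:quartic_condition_reg} for $i=0,\ldots,k$ and the assumed bound on $\beta$) guarantees $\|w_r(k) - w_r(0)\|_2 \leq D$ for every $r \in [m]$. Provided the over-parametrization bound $m = \Omega(\lambda^{-4} n^4 \log(n/\delta))$ holds, we have $D \leq R = \lambda/(64n)$, so every $w_r(k)$ lies in a ball of radius $R$ around its initialization $w_r(0) = \widetilde w_r$. Second, I will apply Lemma \ref{lem:3.2} (perturbed $w$) with this $R$ to conclude that, with probability at least $1 - n^2 \exp(-mR/10)$,
\begin{align*}
\|H(k) - H(0)\|_F \leq 2nR \leq \lambda/4.
\end{align*}

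Third, combining with Lemma \ref{lem:3.1}, which gives $\lambda_{\min}(H(0)) \geq (3/4)\lambda$ under the same over-parametrization assumption, I obtain
\begin{align*}
\lambda_{\min}(H(k)) \geq \lambda_{\min}(H(0)) - \|H(0) - H(k)\| \geq \tfrac{3}{4}\lambda - \tfrac{1}{4}\lambda = \tfrac{1}{2}\lambda.
\end{align*}
Therefore $(y - u(k))^\top H(k)(y - u(k)) \geq (\lambda/2)\|y-u(k)\|_2^2$, and multiplying by $-2\eta$ yields the stated bound $C_1 \leq -\eta\lambda\|y - u(k)\|_2^2$.

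There is no serious obstacle here: the only subtlety is making sure the inductive hypothesis gives the right radius bound $D \leq R$, which in turn is what forces the over-parametrization lower bound on $m$ and the upper bound on $\beta$. In particular, this claim does not need any new ideas beyond the unregularized proof; the regularization enters only through Lemma \ref{lem:4.1_reg} (where the constraint on $\beta$ absorbs the extra regularization contribution to the gradient), and $C_1$ itself is untouched by the regularization term.
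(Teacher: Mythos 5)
Your proof is correct and follows essentially the same approach as the paper's: both use Lemma~\ref{lem:4.1_reg} to ensure weights stay within radius $R$ of initialization, then apply Lemma~\ref{lem:3.2} to bound $\|H(k)-H(0)\|_F\leq 2nR\leq\lambda/4$, and finally apply Weyl's inequality to lower bound $\lambda_{\min}(H(k))\geq\lambda/2$. The one small improvement in your version is that you explicitly pass through Lemma~\ref{lem:3.1} to establish $\lambda_{\min}(H(0))\geq\frac{3}{4}\lambda$, whereas the paper's proof uses the slightly loose shorthand ``Recall that $\lambda=\lambda_{\min}(H(0))$'' (recall $\lambda$ is actually defined as $\lambda_{\min}(H^{\cts})$, and $H(0)=H^{\dis}$ only satisfies $\lambda_{\min}(H(0))\geq\frac{3}{4}\lambda$ with high probability); both versions arrive at the same conclusion.
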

\begin{proof}
By Lemma \ref{lem:3.2} and our choice of $R<\frac{\lambda}{8n}$,
We have $\|H(0)-H(k)\|_F\leq 2n\cdot \frac{\lambda}{8n}=\frac {\lambda}{4}$.
Recall that $\lambda=\lambda_{\min}(H(0))$.
Therefore
\begin{align*}
\lambda_{\min}(H(k)) \geq \lambda_{\min}(H(0))- \|H(0)-H(k)\|\geq \lambda /2.
\end{align*}
Then we have
\begin{align*}
  (y - u(k))^\top H(k) ( y - u(k) ) \geq \| y - u(k) \|_2^2 \cdot \lambda / 2.
\end{align*}
Thus, we complete the proof.
\end{proof}

\begin{claim}[Upper bound on $C_2$]\label{cla:C2_reg}
Let $C_2 = 2 \eta ( y - u(k) )^\top H(k)^{\bot} ( y - u(k) )$. We have
\begin{align*}
C_2 \leq \| y - u(k) \|_2^2 \cdot 8\eta nR.
\end{align*}
holds with probability $1-n\exp(-mR)$.
\end{claim}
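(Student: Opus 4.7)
The plan is to reuse almost verbatim the argument from Claim~\ref{cla:C2} in the unregularized case, since $H(k)^{\bot}$ is defined identically in both settings and depends only on the initialization $w_r(0)$ and on the set of indices $r\in \ov{S}_i$, not on the regularization term in the loss. First I would apply Cauchy–Schwarz to obtain $C_2 \le 2\eta\,\|y-u(k)\|_2^2\cdot \|H(k)^{\bot}\|$, and then upper bound the spectral norm by the Frobenius norm.

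Next, I would expand $\|H(k)^{\bot}\|_F^2$ entrywise and use the bound $|x_i^\top x_j|\le 1$ to get
\[
\|H(k)^{\bot}\|_F^2 \le \frac{n}{m^2}\sum_{i=1}^n y_i^2,
\]
where $y_i := \sum_{r=1}^m \mathbf{1}_{r\in \ov{S}_i}$, exactly as in Fact~\ref{fact:bound_H_k_bot}. The indicators $\{\mathbf{1}_{r\in \ov{S}_i}\}_{r=1}^m$ are mutually independent since each depends only on $w_r(0)$, and by the anti-concentration bound in Eq.~\eqref{eq:Air_bound} each has mean at most $R$ and variance at most $R$. I would then apply Bernstein's inequality (Lemma~\ref{lem:bernstein}) with $t=3mR$ to conclude
\[
\Pr[y_i > 4mR] \le \exp(-mR),
\]
and union-bound over $i\in[n]$ to get $y_i\le 4mR$ simultaneously for all $i$ with probability at least $1-n\exp(-mR)$.

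On this event, $\|H(k)^{\bot}\|_F^2 \le 16 n^2 R^2$, so $\|H(k)^{\bot}\|\le 4nR$, which yields $C_2 \le 8\eta n R\cdot \|y-u(k)\|_2^2$ as claimed. I do not anticipate any genuine obstacle: the regularization term $\frac{\beta}{2m}\|W-W(0)\|_F^2$ does not affect $H(k)^{\bot}$ or the patterns $\mathbf{1}_{w_r(k)^\top x_i\ge 0}$ appearing inside it, so the same randomness-in-initialization argument carries over unchanged.
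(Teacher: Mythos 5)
Your proposal is correct and follows essentially the same route as the paper's own proof of Claim~\ref{cla:C2_reg}: Cauchy--Schwarz to reduce to $\|H(k)^{\bot}\|$, relaxing to the Frobenius norm via Fact~\ref{fact:bound_H_k_bot}, applying Bernstein's inequality to $y_i=\sum_{r=1}^m\mathbf{1}_{r\in\ov{S}_i}$ with $t=3mR$, and a union bound over $i\in[n]$. Your observation that the regularization term does not enter $H(k)^{\bot}$ or the sign patterns is exactly why the paper's proof here is verbatim the proof of Claim~\ref{cla:C2}.
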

\begin{proof}
Note that
\begin{align*}
C_2 \leq 2 \eta \| y - u(k) \|_2^2 \| H(k)^{\bot} \|.
\end{align*}
It suffices to upper bound $\| H(k)^{\bot} \|$. Since $\| \cdot \| \leq \| \cdot \|_F$, then it suffices to upper bound $\| \cdot \|_F$.

For each $i \in [n]$, we define $y_i$ as follows
\begin{align*}
y_i=\sum_{r=1}^m\mathbf{1}_{r\in \ov{S}_i} .
\end{align*}

Then we have
\begin{align*}
\| H(k)^{\bot} \|_F^2
= & ~ \sum_{i=1}^n\sum_{j=1}^n (H(k)^{\bot}_{i,j})^2\\
= & ~ \sum_{i=1}^n\sum_{j=1}^n \Big( \frac {1} {m}\sum_{r\in \ov{S}_i} x_i^\top x_j\mathbf{1}_{w_r(k)^\top x_i\geq 0,w_r(k)^\top x_j\geq 0} \Big)^2\\
= & ~ \sum_{i=1}^n\sum_{j=1}^n \Big( \frac {1} {m}\sum_{r=1}^m x_i^\top x_j\mathbf{1}_{w_r(k)^\top x_i\geq 0,w_r(k)^\top x_j\geq 0} \cdot \mathbf{1}_{r\in \ov{S}_i} \Big)^2\\
= & ~ \sum_{i=1}^n\sum_{j=1}^n ( \frac {x_i^\top x_j} {m} )^2 \Big( \sum_{r=1}^m \mathbf{1}_{w_r(k)^\top x_i\geq 0,w_r(k)^\top x_j\geq 0} \cdot \mathbf{1}_{r\in \ov{S}_i} \Big)^2 \\
\leq & ~ \frac{1}{m^2} \sum_{i=1}^n\sum_{j=1}^n \Big( \sum_{r=1}^m \mathbf{1}_{w_r(k)^\top x_i\geq 0,w_r(k)^\top x_j\geq 0} \cdot \mathbf{1}_{r\in \ov{S}_i} \Big)^2 \\
= & ~ \frac{n}{m^2} \sum_{i=1}^n \Big( \sum_{r=1}^m \mathbf{1}_{r\in \ov{S}_i} \Big)^2 \\
= & ~ \frac{n}{m^2} \sum_{i=1}^n y_i^2 .
\end{align*}

Fix $i \in [n]$. The plan is to use Bernstein inequality to upper bound $y_i$ with high probability.

First by Eq.~\eqref{eq:Air_bound} we have 
\begin{align*}
\E[\mathbf{1}_{r\in \ov{S}_i}]\leq R .
\end{align*}
We also have
\begin{align*}
\E \left[(\mathbf{1}_{r\in \ov{S}_i}-\E[\mathbf{1}_{r\in \ov{S}_i}])^2 \right]
 = & ~ \E[\mathbf{1}_{r\in \ov{S}_i}^2]-\E[\mathbf{1}_{r\in \ov{S}_i}]^2\\
\leq & ~ \E[\mathbf{1}_{r\in \ov{S}_i}^2]\\
\leq & ~ R .
\end{align*}
Finally we have $|\mathbf{1}_{r\in \ov{S}_i}-\E[\mathbf{1}_{r\in \ov{S}_i}]|\leq 1$.

Notice that $\{\mathbf{1}_{r\in \ov{S}_i}\}_{r=1}^m$ are mutually independent,
since $\mathbf{1}_{r\in \ov{S}_i}$ only depends on $w_r(0)$.
Hence from Bernstein inequality (Lemma \ref{lem:bernstein}) we have for all $t>0$,
\begin{align*}
\Pr \left[ y_i > m\cdot R+t \right] \leq \exp \left(-\frac{t^2/2}{m\cdot R+t/3} \right).
\end{align*}
By setting $t=3mR$, we have
\begin{align*}
\Pr \left[ y_i > 4mR \right] \leq \exp(-mR).
\end{align*}
Hence by union bound,
with probability at least $1-n\exp(-mR)$,
\begin{align*}
\| H(k)^{\bot} \|_F^2\leq \frac{n}{m^2}\cdot n\cdot (4mR)^2=16n^2R^2 .
\end{align*}
Putting all together we have
\begin{align*}
\| H(k)^{\bot} \|\leq \| H(k)^{\bot} \|_F\leq 4nR
\end{align*}
with probability at least $1-n\exp(-mR)$.

\end{proof}

\begin{claim}[Upper bound on $C_3$]\label{cla:C3_reg}
Let $C_3 = - 2 (y - u(k))^\top v_2$. Then we have
\begin{align*}
C_3 \leq \| y - u(k) \|_2^2 \cdot 8 \eta nR  +8R\beta\eta\sqrt{n/m}  \| y - u(k) \|_2\cdot \max_{r\in [m]}\|w_r(k)-w_r(0)\|_2.
\end{align*}
with probability at least $1-n\exp(-mR)$.
\end{claim}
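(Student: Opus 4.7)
The strategy mirrors the proof of the non-regularized Claim \ref{cla:C3}: start with Cauchy--Schwarz
\begin{align*}
C_3 \leq 2 \|y - u(k)\|_2 \cdot \|v_2\|_2,
\end{align*}
and bound $\|v_2\|_2$ by exploiting the fact that $\phi$ is $1$-Lipschitz together with the sparsity of the index set $\ov{S}_i$. The new ingredient is that $\partial L/\partial w_r$ now contains the regularization term $\tfrac{\beta}{m}(w_r(k)-w_r(0))$ in addition to the data-dependent term that was already handled in Claim \ref{cla:C3}. Thus I will split the gradient into these two pieces and handle them separately by triangle inequality.

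First I would use the $1$-Lipschitz property of $\phi$ and $\|x_i\|_2 = 1$ to write
\begin{align*}
|v_{2,i}| \leq \frac{\eta}{\sqrt{m}} \sum_{r \in \ov{S}_i} \left\| \frac{\partial L(W(k))}{\partial w_r(k)} \right\|_2.
\end{align*}
Decomposing the gradient via Eq.~\eqref{eq:gradient_reg} and applying the bound $\|\sum_{i} (u_i-y_i) a_r x_i \mathbf{1}_{w_r^\top x_i \geq 0}\|_2 \leq \sqrt{n}\,\|y - u(k)\|_2$ (exactly as in Claim \ref{cla:C3}) for the data part, and $\|w_r(k) - w_r(0)\|_2 \leq \Delta := \max_{r'\in[m]} \|w_{r'}(k)-w_{r'}(0)\|_2$ for the regularization part, I get
\begin{align*}
|v_{2,i}| \leq \frac{\eta}{\sqrt{m}} |\ov{S}_i| \cdot \left( \frac{\sqrt{n}}{\sqrt{m}} \|y-u(k)\|_2 + \frac{\beta}{m}\Delta \right).
\end{align*}

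Next, conditioning on the event $|\ov{S}_i|\leq 4mR$ for all $i\in[n]$, which by the Bernstein computation in Claim \ref{cla:C2_reg} (or equivalently Claim \ref{cla:C2}) holds with probability at least $1 - n\exp(-mR)$, I would square and sum to obtain
\begin{align*}
\|v_2\|_2^2 \leq \sum_{i=1}^n \left( \frac{\eta\cdot 4mR}{\sqrt{m}} \right)^2 \left( \frac{\sqrt{n}}{\sqrt{m}} \|y-u(k)\|_2 + \frac{\beta}{m}\Delta \right)^2,
\end{align*}
so that $\|v_2\|_2 \leq 4\eta R \sqrt{n}\cdot \bigl( \sqrt{n}\,\|y-u(k)\|_2 + \sqrt{m}\cdot\tfrac{\beta}{\sqrt{m}}\Delta/\sqrt{m}\cdot \sqrt{m}\bigr)$; more cleanly, applying $(a+b)\leq$ sum termwise I obtain
\begin{align*}
\|v_2\|_2 \leq 4\eta n R \cdot \|y-u(k)\|_2 + 4\eta R \beta \sqrt{n/m}\cdot \Delta.
\end{align*}
Plugging this back into $C_3 \leq 2\|y-u(k)\|_2 \cdot \|v_2\|_2$ and combining the two terms yields precisely the claimed inequality
\begin{align*}
C_3 \leq 8\eta n R \cdot \|y-u(k)\|_2^2 + 8\eta R \beta \sqrt{n/m}\cdot \|y-u(k)\|_2 \cdot \Delta.
\end{align*}

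The only real obstacle is the event $|\ov{S}_i|\leq 4mR$, but since this is a property of the initialization (it depends only on $w_r(0)$, not on $w_r(k)$), the Bernstein argument of Claim \ref{cla:C2} applies verbatim and delivers the stated failure probability $1 - n\exp(-mR)$. Everything else is deterministic triangle-inequality manipulation.
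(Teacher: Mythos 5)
Your proof is correct and follows essentially the same route as the paper: Cauchy--Schwarz on $C_3$, the split of $\partial L/\partial w_r$ into the data term and the $\tfrac{\beta}{m}(w_r(k)-w_r(0))$ regularization term via Eq.~\eqref{eq:gradient_bound_reg}, the Bernstein bound $|\ov{S}_i|\le 4mR$ holding simultaneously for all $i$ with probability $1-n\exp(-mR)$, and a final square-and-sum. The one garbled intermediate line is harmless since you immediately replace it with the clean bound $\|v_2\|_2 \le 4\eta n R\,\|y-u(k)\|_2 + 4\eta R\beta\sqrt{n/m}\,\Delta$, which is exactly what the paper obtains.
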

\begin{proof}
We have
\begin{align*}
\LHS \leq 2 \| y - u(k) \|_2 \cdot \| v_2 \|_2 .
\end{align*}
We can upper bound $\| v_2 \|_2$ in the following sense
\begin{align*}
\| v_2 \|_2^2
\leq &~ \sum_{i=1}^n \left(\frac{\eta}{ \sqrt{m} } \sum_{ r \in \ov{S}_i } \left| ( \frac{ \partial L(W(k)) }{ \partial w_r(k) } )^\top x_i \right|\right)^2\\
= &~ \frac{\eta^2}{ m }\sum_{i=1}^n \left(\sum_{r=1}^m \mathbf{1}_{r\in \ov{S}_i}\left| ( \frac{ \partial L(W(k)) }{ \partial w_r(k) } )^\top x_i \right|\right)^2\\
\leq &~ \frac{\eta^2}{ m }\cdot \max_{r \in [m]} \left|  \frac{ \partial L(W(k)) }{ \partial w_r(k) } \right|^2\cdot\sum_{i=1}^n \left(\sum_{r=1}^m \mathbf{1}_{r\in \ov{S}_i}\right)^2\\
 \leq & ~ \frac{\eta^2}{ m }\cdot \left(\frac{ \sqrt{n} }{ \sqrt{m} } \| u(k) - y\|_2 +\frac{\beta}{m}\cdot \max_{r\in [m]}\|w_r(k)-w_r(0)\|_2\right)^2 \cdot \sum_{i=1}^n \left(\sum_{r=1}^m \mathbf{1}_{r\in \ov{S}_i}\right)^2\\
  \leq & ~ \frac{\eta^2}{ m }\cdot \left(\frac{ \sqrt{n} }{ \sqrt{m} } \| u(k) - y\|_2 +\frac{\beta}{m}\cdot \max_{r\in [m]}\|w_r(k)-w_r(0)\|_2\right)^2 \cdot \sum_{i=1}^n (4mR)^2\\
\end{align*}
where the first step follows from definition of $v_2$, 
the fifth step follows from $\sum_{r=1}^m {\bf 1}_{r \in \ov{S}_i } \leq 4 m R$ with probability at least $1-\exp(-m R)$,
and the bound of $\| \frac{ \partial L(W(k))}{ \partial w_r(k) }\|_2$ follows from \eqref{eq:gradient_bound}.

Hence we have
\begin{align*}
    \LHS
    \leq & ~ 2 \| y - u(k) \|_2 \cdot \sqrt{16mnR^2\eta^2}\left(\frac{\sqrt{n}}{\sqrt{m}} \| u(k) - y\|_2 +\frac{\beta}{m}\cdot \max_{r\in [m]}\|w_r-w_r(0)\|_2\right)\\
    = & ~  \| y - u(k) \|_2^2 \cdot 8 \eta nR
    +8R\beta\eta\sqrt{n/m} \| y - u(k) \|_2\cdot  \max_{r\in [m]}\|w_r(k)-w_r(0)\|_2
\end{align*}
\end{proof}

\begin{claim}[Upper bound on $C_4$]\label{cla:C4_reg}
Let $C_4  = \| u (k+1) - u(k) \|_2^2$. Then we have
\begin{align*}
C_4 \leq 2\eta^2 n^2 \| y - u(k) \|_2^2
+2\eta^2n\beta^2m^{-2}\|W(k)-W(0)\|_F^2.
\end{align*}
\end{claim}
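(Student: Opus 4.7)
The plan is to mirror the structure of the unregularized version (Claim \ref{cla:C4}), but to carry the extra regularization term through the gradient bound. The starting point is to bound $|u_i(k+1)-u_i(k)|$ by the 1-Lipschitz property of $\phi$ and Cauchy-Schwarz, giving
\[
|u_i(k+1)-u_i(k)| \;\le\; \frac{1}{\sqrt m}\sum_{r=1}^m \bigl\|w_r(k+1)-w_r(k)\bigr\|_2 \;=\; \frac{\eta}{\sqrt m}\sum_{r=1}^m \Bigl\|\tfrac{\partial L(W(k))}{\partial w_r(k)}\Bigr\|_2.
\]
Squaring, summing over $i\in[n]$, and applying Cauchy-Schwarz to convert the single sum over $r$ into a sum of squares, I get
\[
C_4 \;\le\; n\,\eta^2 \sum_{r=1}^m \Bigl\|\tfrac{\partial L(W(k))}{\partial w_r(k)}\Bigr\|_2^2.
\]

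Next I would plug in the regularized gradient bound from Eq.~\eqref{eq:gradient_bound_reg}, namely $\bigl\|\tfrac{\partial L(W(k))}{\partial w_r(k)}\bigr\|_2 \le \tfrac{\sqrt n}{\sqrt m}\|y-u(k)\|_2+\tfrac{\beta}{m}\|w_r(k)-w_r(0)\|_2$. Using the elementary inequality $(a+b)^2\le 2a^2+2b^2$, I obtain
\[
\Bigl\|\tfrac{\partial L(W(k))}{\partial w_r(k)}\Bigr\|_2^2 \;\le\; \frac{2n}{m}\|y-u(k)\|_2^2 \;+\; \frac{2\beta^2}{m^2}\|w_r(k)-w_r(0)\|_2^2.
\]

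Summing this over $r=1,\dots,m$ and recognizing $\sum_r \|w_r(k)-w_r(0)\|_2^2 = \|W(k)-W(0)\|_F^2$ yields
\[
\sum_{r=1}^m \Bigl\|\tfrac{\partial L(W(k))}{\partial w_r(k)}\Bigr\|_2^2 \;\le\; 2n\,\|y-u(k)\|_2^2 \;+\; \frac{2\beta^2}{m^2}\|W(k)-W(0)\|_F^2.
\]
Multiplying through by $n\eta^2$ delivers exactly the stated bound. There is no probabilistic step, no concentration, and no need to invoke the event that weights stay in the ball of radius $R$ — this is a deterministic estimate that only uses Lipschitz continuity of $\phi$, Cauchy-Schwarz, and the splitting of $(a+b)^2$. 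Hence the only thing to be careful about is cleanly carrying the factor of $2$ through the squaring step, which gives the matching constant in front of both terms.
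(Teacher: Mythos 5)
Your proof is correct and follows essentially the same route as the paper: Lipschitzness of $\phi$ to bound $|u_i(k+1)-u_i(k)|$ by $\frac{\eta}{\sqrt m}\sum_r\|\partial L/\partial w_r\|_2$, Cauchy--Schwarz on the $r$-sum, the regularized gradient bound from Eq.~\eqref{eq:gradient_bound_reg}, the splitting $(a+b)^2\le 2a^2+2b^2$, and collecting $\sum_r\|w_r(k)-w_r(0)\|_2^2=\|W(k)-W(0)\|_F^2$. If anything, your bookkeeping is slightly cleaner than the paper's (the paper's intermediate inequality $\frac{1}{m}\left(\sum_r a_r\right)^2\le \frac{1}{m}\sum_r a_r^2$ as written has a misplaced factor of $m$, whereas your use of Cauchy--Schwarz handles it correctly and lands exactly on the stated constants).
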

\begin{proof}
We have
\begin{align*}
\LHS 
= & ~ \sum_{i=1}^n \left(f(W(k+1),x_i,a)-f(W(k),x_i,a)\right)^2\\
= & ~\sum_{i=1}^n \left(\frac{1}{\sqrt{m}}\sum_{r=1}^m a_r(\phi(w_r(k+1)^\top x)-\phi(w_r(k)^\top x))\right)^2\\
\leq & ~\sum_{i=1}^n \left(\frac{1}{\sqrt{m}}\sum_{r=1}^m \|w_r(k+1)-w_r(k)\|_2\right)^2\\
= & ~ \eta^2 \sum_{i=1}^n \frac{1}{m} \left( \sum_{r=1}^m \Big\| \frac{ \partial L( W(k) ) }{ \partial w_r(k) } \Big\|_2 \right)^2 \\
\leq & ~\eta^2 \sum_{i=1}^n \frac{1}{m}\sum_{r=1}^m \left(\frac{ \sqrt{n} }{ \sqrt{m} } \| u(k) - y\|_2 +\frac{\beta}{m}\cdot \|w_r(k)-w_r(0)\|_2\right)^2 \\
\leq & ~ 2\eta^2 n^2 \| y - u(k) \|_2^2
+2\eta^2n\beta^2m^{-2}\|W(k)-W(0)\|_F^2,
\end{align*}
where the first step follows from the definition of $u(k)$, the second step follows from explicit expression of $f$, the third step follows from $\phi$ is 1-lipschitz, the fourth step follows from the update rule of weights,
the last step follows from the fact $(a+b)^2\leq 2a^2+2b^2 $.
\end{proof}

\begin{claim}[Upper bound on $C_5$]\label{cla:C5_reg}
Let $C_5  =2\eta(y-u(k))^\top v_3$. Then we have
\begin{align*}
C_5 \leq 2\eta\beta m^{-1}\sqrt{n}\|y-u(k)\|_2\| W(k)-W(0)\|_F.
\end{align*}
\end{claim}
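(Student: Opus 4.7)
The plan is to apply Cauchy--Schwarz twice: once to pull the inner product $\langle y-u(k),v_3\rangle$ apart into $\|y-u(k)\|_2 \cdot \|v_3\|_2$, and then to bound each coordinate $v_{3,i}$ in terms of the Frobenius-norm deviation $\|W(k)-W(0)\|_F$.

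First I would write
\[
    C_5 = 2\eta (y-u(k))^\top v_3 \leq 2\eta \|y-u(k)\|_2 \cdot \|v_3\|_2,
\]
so the whole claim reduces to bounding $\|v_3\|_2 \leq \beta m^{-1}\sqrt{n}\,\|W(k)-W(0)\|_F$. For each $i\in[n]$, using $|a_r|=1$, $\|x_i\|_2\le 1$, and dropping the indicator and the restriction to $S_i$ (all bounded by $1$), I would estimate
\[
    |v_{3,i}| \leq \frac{\beta}{m^{3/2}} \sum_{r=1}^{m} |w_r(k)^\top x_i - w_r(0)^\top x_i| \leq \frac{\beta}{m^{3/2}} \sum_{r=1}^{m} \|w_r(k) - w_r(0)\|_2.
\]
Then by Cauchy--Schwarz on the sum over $r$,
\[
    \sum_{r=1}^{m} \|w_r(k)-w_r(0)\|_2 \leq \sqrt{m}\,\Bigl(\sum_{r=1}^{m}\|w_r(k)-w_r(0)\|_2^2\Bigr)^{1/2} = \sqrt{m}\,\|W(k)-W(0)\|_F,
\]
yielding $|v_{3,i}| \leq \beta m^{-1}\|W(k)-W(0)\|_F$ uniformly in $i$, and hence $\|v_3\|_2 \leq \beta m^{-1}\sqrt{n}\,\|W(k)-W(0)\|_F$. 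Combining with the first Cauchy--Schwarz gives exactly the claimed bound.

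There is no real obstacle here: $v_3$ is deterministic given the weights, and the estimate is purely algebraic, so no probabilistic concentration or event conditioning is needed. The only subtlety is keeping the exponents of $m$ straight --- the factor $m^{-3/2}$ in the definition of $v_{3,i}$ combines with the $\sqrt{m}$ from Cauchy--Schwarz over $r$ and the $\sqrt{n}$ from $\|v_3\|_2 \leq \sqrt{n}\,\max_i |v_{3,i}|$ to give precisely $\beta m^{-1}\sqrt{n}$, matching the stated bound.
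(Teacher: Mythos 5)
Your proof is correct and follows essentially the same route as the paper: a Cauchy--Schwarz split of the inner product, a triangle-inequality bound that drops the indicator, $a_r$, and the restriction to $S_i$ and uses $\|x_i\|_2\le 1$, followed by Cauchy--Schwarz over $r$ to convert $\sum_r\|w_r(k)-w_r(0)\|_2$ into $\sqrt{m}\,\|W(k)-W(0)\|_F$. The only superficial difference is that you bound $|v_{3,i}|$ uniformly in $i$ and then use $\|v_3\|_2\le\sqrt{n}\max_i|v_{3,i}|$, while the paper sums the squares directly; the content is identical.
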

\begin{proof}
We have
\begin{align*}
|(y-u(k))^\top v_3| \leq & ~
\|y-u(k)\|_2\cdot \|v_3\|_2\\
= & ~\|y-u(k)\|_2\cdot \left(\sum_{i=1}^{n}(\frac{ \beta}{m^{3/2}}\sum_{r\in S_i}{\bf 1}_{w_r(k)^\top x_i \geq 0}a_r(w_r(k)^\top x_i-w_r(0)^\top x_i))^2\right)^{1/2}\\
\leq &  ~\|y-u(k)\|_2\cdot \left(\sum_{i=1}^{n}\left(\frac{ \beta}{m^{3/2}}\left(\sum_{r=1}^m\|w_r(k)-w_r(0)\|_2\right)\right)^2\right)^{1/2}\\
\leq & ~\|y-u(k)\|_2\cdot \left(\sum_{i=1}^{n}\beta^2m^{-2}\| W(k)-W(0)\|_F^2\right)^{1/2}\\
= & ~ \| y-u(k) \|_2 \cdot \beta m^{-1}\sqrt{n} \cdot \| W(k)-W(0)\|_F
\end{align*}
where the first step is the Cauchy-Schwartz inequality,
the second step calls the definition of $v_3$,
the third step uses triangle inequality
the fourth step follows from GM-AM inequality.
\end{proof}





\end{document}